\documentclass{article}
\usepackage{hyphenat}
\PassOptionsToPackage{numbers, compress}{natbib}
\usepackage[preprint]{neurips_2026}

\usepackage[utf8]{inputenc} 
\usepackage[T1]{fontenc}    
\usepackage{hyperref}       
\usepackage{url}            
\usepackage{booktabs}       
\usepackage{amsfonts}       
\usepackage{nicefrac}       
\usepackage{microtype}      
\usepackage{xcolor}         
\usepackage{multirow}
\usepackage{caption}
\usepackage{subcaption}
\usepackage{graphicx} 
\usepackage{multirow}
\usepackage{amsmath}
\usepackage{amsfonts}
\usepackage{amssymb}
\usepackage{amsthm}
\usepackage{bm}
\usepackage{booktabs}
\usepackage{longtable}
\usepackage{pdflscape}
\usepackage{array}

\newtheorem{theorem}{Theorem}[section]
\newtheorem{proposition}[theorem]{Proposition}
\newtheorem{lemma}[theorem]{Lemma}
\newtheorem{corollary}[theorem]{Corollary}

\theoremstyle{definition}
\newtheorem{definition}[theorem]{Definition}
\newtheorem{assumption}[theorem]{Assumption}

\theoremstyle{remark}

\title{Multiscale Euclidean Network Trajectories: \\Second-Moment Geometry, Attribution, and Change Points}

\author{%
  Haruka Ezoe\\
  Graduate School of Information Science and Technology\\
  The University of Tokyo\\
  \And
  Ryohei Hisano \\
  Graduate School of Information Science and Technology\\
  The University of Tokyo\\
  The Canon Institute for Global Studies
}

\begin{document}

\maketitle

\begin{abstract}
A central challenge in dynamic network analysis is to represent temporal evolution in a way that is both geometrically meaningful and statistically identifiable. One approach embeds a sequence of network snapshots as trajectories in a Euclidean space and relates these trajectories to node embeddings. In multilayer and unfolded spectral constructions, however, node embeddings and their underlying latent positions are identifiable only up to general linear transformations. Although this ambiguity preserves edge probabilities, it can distort geometry and invalidate distance based temporal comparisons at both the trajectory and node-levels.

We develop \emph{Multiscale Euclidean Network Trajectories} (MENT), a framework for multiscale temporal trajectories based on \emph{second-moment geometry}. By imposing an isotropic normalization on the anchor latent positions, we reduce the relevant ambiguity to orthogonal transformations and prevent distortion of the second-moment geometry. In this canonical representation, we define a trace variation distance and mode-wise variation distances along orthogonal directions, and use multidimensional scaling to obtain low-dimensional trajectories of time points at both global and mode-wise levels. The resulting trajectories support interpretation and inference. They admit mode-wise decompositions, support attribution of global and mode-wise temporal changes to nodes, and enable change point detection through 1D trajectories. We prove consistency of the proposed unfolded spectral embedding and of the induced temporal trajectories. Experiments on two synthetic and two real dynamic networks illustrate stable and interpretable recovery of temporal structure and show strong performance against existing change point detection baselines.
\end{abstract}

\section{Introduction}

Dynamic networks arise in social, communication, biological, and legal systems, where interactions evolve over time. A central goal in dynamic network analysis is not only to detect that a network has changed, but to describe \emph{how} it changed and \emph{which nodes} contributed to the change. Achieving this goal requires a geometry on time. Given a sequence of network snapshots, the key problem is to measure differences between time points in a way that reflects latent structural change and remains statistically identifiable from the observed graphs.

A first line of work represents network evolution through node embeddings. Spectral methods such as adjacency spectral embedding and dynamic extensions including unfolded adjacency spectral embedding (UASE)~\citep{gallagher2021spectral} and unfolded Laplacian spectral embedding~\citep{ezoe2025unfolded} provide stable node-level representations across time. These methods are well suited for tracking local changes in node behavior. However, in their multilayer and unfolded spectral constructions, the dynamic latent factors are identifiable only up to a general linear transformation (\(\mathrm{GL}(d)\))~\citep{jones2020multilayer}. This ambiguity is harmless for edge probabilities, which depend only on bilinear products, but it is not harmless for geometry: Euclidean norms, distances, and second-moment operators are not invariant under general linear transformations. As a result, unfolded embeddings do not by themselves define an identifiable geometry for comparing time points or attributing temporal change to nodes.


A complementary line of work represents temporal evolution at the level of entire network snapshots. Pairwise dissimilarities between time points can be embedded by classical multidimensional scaling (CMDS), yielding low-dimensional trajectories of network evolution~\citep{athreya2025euclidean,chen2024euclidean,zheng2024dynamic,borg2005modern}. Related network similarity methods, such as DeltaCon and NetSimile, also compare snapshots through global dissimilarity scores~\citep{koutra2016deltacon,berlingerio2012netsimile}. These approaches provide concise summaries of \emph{when} networks differ, but they are less suited to explaining \emph{how} they differ. In particular, Euclidean Mirror summarizes temporal variation through maximum directional variation, which captures a dominant direction for each time pair comparison but does not provide a stable decomposition into persistent structural modes. Similarly, related work on change point detection includes spectral methods for dynamic graphs such as Laplacian Anomaly Detection (LAD), dynamic stochastic block models, graphon estimation methods, latent position and random dot product graph formulations and broader latent variable approaches~\citep{huang2020laplacian,bhattacharjee2020change,holland1983stochastic,zhao2019change,enikeeva2025change,wang2026change,fukushima2020detecting,luo2023frechetstatisticsbased,kei2025change}. This line of work is primarily concerned with detecting, localizing, or modeling changes over time, but it typically does not yield a geometry that decomposes those changes into persistent, interpretable components or attributes them to nodes. Thus, even when a change is detected, it often remains unclear \emph{what} type of variation occurred and \emph{which} nodes drove it.

In this paper, we introduce \emph{Multiscale Euclidean Network Trajectories} (MENT), a framework for constructing interpretable temporal trajectories from an identifiable second-moment geometry. We first address the \(\mathrm{GL}(d)\) identifiability of multilayer and unfolded spectral constructions by imposing a canonical isotropic normalization on the shared anchor latent positions. This reduces the relevant ambiguity to a shared orthogonal transformation (\(\mathbb O(d)\)), which preserves second-moment geometry. In the resulting canonical representation, we define a trace variation distance for global temporal change and mode-wise variation distances along orthogonal directions induced by an aggregated second-moment operator. Applying CMDS to these distances yields low-dimensional trajectories of time points that summarize network evolution globally and by mode. The same geometry provides exact mode-wise and node-level decompositions, enabling attribution of temporal changes to nodes, and supports 1D change point localization along distinct temporal modes~\citep{chen2024euclidean}. For a compact comparison with UASE, Euclidean Mirror, and LAD, see Table~\ref{tab:method-comparison} in Appendix~\ref{app:guide}. Among these methods, only MENT combines an orthogonally identifiable second-moment geometry with full Euclidean realizability, exact mode decomposition, and stable node-level attribution.

We make three contributions. First, we identify \(\mathrm{GL}(d)\) identifiability as a fundamental obstruction to temporal geometry in unfolded spectral embeddings and show that an isotropic normalization restores the second-moment structure up to \(\mathbb O(d)\) ambiguity. Second, we develop a multiscale framework based on trace variation and mode-wise variation distances, together with an orthogonal basis induced by the aggregated second-moment operator, and use it to construct temporal trajectories. Third, we show that these distances admit exact mode-wise decompositions, establish consistency for our proposed embedding and the induced temporal trajectories, and connect them to downstream tasks that include attribution and change point detection.

\section{Setup and Geometric Problem}
\label{sec:setup}

We observe a dynamic network over time \(\mathcal T=\{1,\dots,T\}\) with adjacency matrices \(\mathbf A(t)\in\mathbb R^{n\times n}\) at each $t\in \mathcal T$, where \(n\) denotes the number of nodes. Assuming that an underlying structure governs the dynamic network, our goal is to (i) construct a representation of the underlying temporal evolution and (ii) estimate it from the observed snapshots. We consider a model in line with multilayer random dot product graph models~\citep{jones2020multilayer,gallagher2021spectral}, where edge probabilities are determined by inner products of node specific latent positions. We present a full generation process formulation in Appendix~\ref{app:setup-background}.

\textbf{Generative model:} At the population level, the underlying structure is modeled via \((\chi,\{\phi(t)\}_{t\in\mathcal T})\in L^2(\Omega;\mathbb{R}^{d(T+1)})\), where anchor \(\chi\) captures the time-invariant structure, dynamic \(\phi(t)\) captures the time-varying structure at time $t$, and $d$ denotes the latent space dimension. At the finite sample level, the anchor and dynamic latent position matrices \(\mathbf X,\mathbf Y(t)\in\mathbb R^{n\times d}\) have rows corresponding to the \(n\) nodes, which are i.i.d. copies of \(\chi\) and \(\phi(t)\), respectively. We also assume $\phi(t)=\mathbf{G}(t)^\top\chi \quad \text{a.s.}$ for a symmetric matrix $\mathbf{G}(t)=\mathbf{G}(t)^\top$ to model undirected network. Using this, edges are generated independently via
\[
\mathbf P(t)=\mathbf X\mathbf Y(t)^\top \in\mathbb R^{n\times n},
\qquad
\mathbf A_{ij}(t)\sim \mathrm{Bernoulli}\bigl(\mathbf P_{ij}(t)\bigr)\quad (i<j).
\]
At the latent level, the unfolded probability matrix factors are as follows:
\[
\mathbf P=[\mathbf P(1)\mid\cdots\mid\mathbf P(T)]=\mathbf X\mathbf Y^\top\in\mathbb R^{n\times nT},
\]
where \(\mathbf Y\in\mathbb R^{nT\times d}\) stacks the matrices \(\mathbf Y(t)\). This completes the generation process.

\textbf{Population temporal trajectory:} To represent the underlying temporal evolution based on random vectors $\{\phi(t)\}_{t\in \mathcal T}$, we construct a low-dimensional Euclidean embedding of time, following ~\citet{athreya2025euclidean}. Given a pairwise dissimilarity \(d(\cdot,\cdot)\) of random vectors and the corresponding dissimilarity matrix
\[
\mathbf D_\phi=\bigl[d(\phi(t),\phi(s))\bigr]_{t,s\in\mathcal T},
\]
we map the time point into Euclidean space using CMDS~\citep{borg2005modern,athreya2025euclidean}. If we write
\[
\mathbf D_\phi^{(2)}=\bigl[d(\phi(t),\phi(s))^2\bigr]_{t,s\in\mathcal T},\qquad
\mathbf J=\mathbf I_T-\frac1T\mathbf 1\mathbf 1^\top,\qquad
\mathbf E_\phi=-\frac12\mathbf J\mathbf D_\phi^{(2)}\mathbf J\in\mathbb R^{T\times T},
\]
then the matrix \(\mathbf E_\phi\) is the centered Gram matrix associated with the pairwise dissimilarities. CMDS yields $c$-dimensional embedding \(\psi(t)\in\mathbb R^c\) such that
\[
\|\psi(t)-\psi(s)\| \approx d(\phi(t),\phi(s)), \qquad t,s\in\mathcal T.
\]
The above becomes an identity when \(\mathbf E_\phi\) is positive semidefinite and $c = \mathrm{rank}(\mathbf E_\phi)$. This enables the low-dimensional Euclidean trajectory to effectively summarize the underlying temporal evolution.



\textbf{Estimation:} Following~\citet{athreya2025euclidean}, we construct a consistent estimator based on node embeddings derived from the observed networks. A standard embedding method under the multilayer setting is the UASE~\citep{gallagher2021spectral,jones2020multilayer}\footnote{An alternative is to apply adjacency spectral embedding separately to each snapshot as in ~\citet{athreya2025euclidean}. However, this requires post hoc orthogonal alignment across time (i.e., Procrustes alignment), which can obscure genuine temporal variation by enforcing excessive similarity between snapshots.}.
We define the unfolded adjacency matrix using column concatenation and compute a rank-$d$ truncated singular value decomposition:
\[
\mathbf A = [\mathbf A(1)\mid \mathbf A(2)\mid \cdots \mid \mathbf A(T)] \in \mathbb R^{n\times nT}, \qquad \mathbf A \approx \hat{\mathbf U}\hat{\mathbf \Sigma}\hat{\mathbf V}^\top,
\]
where $\hat{\mathbf V}(t)\in\mathbb R^{n\times d}$ denotes the block corresponding to time $t$. Then, UASE defines the node embedding at time $t$ as
\[
\hat{\mathbf Y}_{\mathrm{orig}}(t)=\hat{\mathbf V}(t)\hat{\mathbf \Sigma}^{1/2}\in \mathbb R^{n\times d},\qquad t\in\mathcal T,
\]
which recovers latent positions up to a general linear transformation~\citep{jones2020multilayer}. However, this ambiguity obstructs the consistent estimation of the dissimilarity, as we explain further in Section~\ref{subsec:canonical-geometry}. Thus, a node embedding method tailored to the proposed second-moment geometry is essential.

Building on this setup, our contribution is to define a canonical second-moment geometry, together with trace and mode-wise variation distances, that yields statistically identifiable temporal trajectories. This geometry also provides the structure needed for multiscale decomposition, node attribution, and change point localization.

Throughout the paper, all stochastic \(\mathcal{O}(\cdot)\quad\mathrm{a.s.}\) bounds are understood in the overwhelming probability sense specified in Appendix~\ref{app:guide}: for every \(A>0\), there exist a constant $c_{A}>0$ such that, $\Pr\{|X_n|\le c_A f(n)\}\ge 1-n^{-A}$ for all sufficiently large \(n\). Since these failure probabilities are summable for any fixed \(A>1\), the bounds also imply the corresponding usual almost sure \(\mathcal{O}(\cdot)\) statements by Borel--Cantelli.

\section{Multiscale Second-moment Geometry on Time}

\subsection{Canonicalization of Second-moment Geometry and Modified UASE}
\label{subsec:canonical-geometry}

The key geometric issue in multilayer and unfolded spectral construction is that the underlying latent factorization is rectangular rather than square. Unlike square random dot product graph factorizations of a single snapshot, this representation is not identifiable up to the orthogonal transformation. Specifically, for any \(\mathbf G\in\mathrm{GL}(d)\),
\[
\mathbf P=\mathbf X\mathbf Y^\top = (\mathbf{XG})(\mathbf{YG}^{-\top})^\top.
\]
This general linear transformation leaves the edge probabilities unchanged. Consequently, the population latent object is only identifiable up to a general linear transformation~\citep{jones2020multilayer}. We summarize this fact in the following proposition.

\begin{proposition}[$\mathrm{GL}(d)$ identifiability]
	\label{prop:rectangular-nonid-paper}
	For any $\mathbf G \in \mathrm{GL}(d)$, the population objects $(\chi,\{\phi(t)\}_{t\in\mathcal{T}})$ and $(\mathbf G^\top \chi,\{\mathbf G^{-1}\phi(t)\}_{t\in\mathcal{T}})$ induce the same distribution on the observed dynamic network.
\end{proposition}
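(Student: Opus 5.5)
The plan is to show that the observed dynamic network depends on $(\chi,\{\phi(t)\})$ only through the law of the probability matrices $\{\mathbf P(t)\}_{t\in\mathcal T}$, and that these matrices are unchanged when the latent objects are transformed. Concretely, I would couple the two models: draw i.i.d.\ copies $(\chi_i,\{\phi_i(t)\})$, $i=1,\dots,n$, of the original population object, and set the rows of the transformed model to $\tilde\chi_i=\mathbf G^\top\chi_i$ and $\tilde\phi_i(t)=\mathbf G^{-1}\phi_i(t)$. Since a deterministic map of i.i.d.\ vectors is again i.i.d.\ with the pushforward law, these rows are distributed exactly as i.i.d.\ copies of the transformed population object. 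In matrix form this gives $\tilde{\mathbf X}=\mathbf X\mathbf G$ and $\tilde{\mathbf Y}(t)=\mathbf Y(t)\mathbf G^{-\top}$.

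The key computation is entrywise:
\[
\tilde{\mathbf P}_{ij}(t)=\tilde\chi_i^\top\tilde\phi_j(t)=\chi_i^\top\mathbf G\,\mathbf G^{-1}\phi_j(t)=\mathbf P_{ij}(t),
\]
or, in matrix form, $\tilde{\mathbf X}\tilde{\mathbf Y}(t)^\top=\mathbf X\mathbf G\mathbf G^{-1}\mathbf Y(t)^\top=\mathbf P(t)$. This holds for every $t$, and hence also for the unfolded matrix $\mathbf P=\mathbf X\mathbf Y^\top$, as displayed just before the proposition. Under this coupling the two models therefore have identical probability matrices almost surely.

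Conditionally on the latent positions, the edges $\mathbf A_{ij}(t)$ for $i<j$ are independent $\mathrm{Bernoulli}(\mathbf P_{ij}(t))$ variables. The conditional law of $\{\mathbf A(t)\}$ is therefore a fixed measurable function of $\{\mathbf P(t)\}$, namely a product of Bernoulli laws. Integrating over the latent positions gives the same marginal law of the observed network under both models.

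The one point needing care is consistency with the model's structural assumption $\phi(t)=\mathbf G(t)^\top\chi$ with $\mathbf G(t)$ symmetric. I would check that the transformed pair satisfies $\tilde\phi(t)=\mathbf G^{-1}\mathbf G(t)^\top\mathbf G^{-\top}\tilde\chi$. The new link matrix $\tilde{\mathbf G}(t)^\top=\mathbf G^{-1}\mathbf G(t)\mathbf G^{-\top}$ is symmetric by congruence, so the transformed object remains inside the model class. This also confirms that $\tilde{\mathbf P}(t)$ is symmetric, so the undirected generation via $i<j$ is well defined. I expect this bookkeeping about the model class to be the only real obstacle; the distributional identity itself follows directly from the cancellation $\mathbf G\mathbf G^{-1}=\mathbf I$.
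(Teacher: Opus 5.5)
Your proposal is correct and is essentially the paper's argument. The paper justifies the proposition only through the identity $\mathbf P=\mathbf X\mathbf Y^\top=(\mathbf X\mathbf G)(\mathbf Y\mathbf G^{-\top})^\top$, i.e.\ invariance of the edge probabilities (citing \citet{jones2020multilayer}); your row-wise coupling, conditional-Bernoulli integration step and congruence check that $\mathbf G^{-1}\mathbf G(t)\mathbf G^{-\top}$ stays symmetric make explicit what the paper leaves implicit, and the remaining conditions of Definition~\ref{def:pop} ($\chi^\top\phi(t)\in[0,1]$, bounded support, invertible second moments) are preserved just as easily, though the distributional claim itself does not need them.
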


This identifiability particularly matters when the temporal geometry is based on second-moments of latent displacements, which is the main operator in this paper. For $t,s \in \mathcal T$, define
\[
\mathbf M_\phi(t,s):=\mathbb E\!\left[(\phi(t)-\phi(s))(\phi(t)-\phi(s))^\top\right],
\]
where $\mathbf M_\phi(t,s)\in \mathbb R^{d\times d}$ is the population second-moment matrix of the displacement $\phi(t)-\phi(s)$. To illustrate this, under the transformation \(\phi(t)\mapsto \mathbf G^{-1}\phi(t)\), the population second-moment matrix transforms as
$
\mathbf M_\phi(t,s)\mapsto \mathbf G^{-1}\,\mathbf M_\phi (t,s)\mathbf G^{-\top}
$. Thus second-moment quantities, such as its trace and spectral norm, and hence the distances and temporal trajectories built from them, are distorted under the general linear transformations. Equivalent latent representations can therefore induce different temporal geometries, unless identifiability is restored up to orthogonal transformations.

To obtain a geometrically meaningful and statistically identifiable framework, we study a canonical parameterization determined by the second-moment of the anchor random vector. This is set using the following assumption.

\begin{assumption}[Anchor isotropy]
	\label{assump:isotropy}
	The anchor random vector satisfies $\mathbb E[\chi\chi^\top]=\mathbf I_d$.
\end{assumption}

Assumption~\ref{assump:isotropy} is a normalization rather than a modeling restriction. This means that whenever \(\mathbb E[\chi\chi^\top]\) is invertible, we may whiten the anchor random vector to obtain an equivalent parameterization that satisfies anchor isotropy (Proposition~\ref{prop:appendix-whitening}). Under this canonical geometry, the relevant identifiability is restored from general linear transformations up to orthogonal transformations, which preserve the second-moment structure.

This canonicalization also highlights that the original UASE estimator includes distortion because of its \(\mathrm{GL}(d)\) ambiguity. This motivates us to develop a new estimator. We write the new estimator, modified UASE, as follows:
\[
\hat{\mathbf Y}_{\mathrm{mod}}(t):=\frac{1}{n^{1/2}}\hat{\mathbf V}(t)\hat{\mathbf \Sigma}\in \mathbb R^{n\times d},\qquad t\in\mathcal T.
\]
Unlike the original UASE, our modified UASE recovers latent positions up to an orthogonal transformation shared across all snapshots. We summarize this in the following Theorem.

\begin{theorem}[Modified UASE under canonical geometry]
	\label{thm:modified-uase-paper}
	Assume Assumption~\ref{assump:isotropy} holds. Then there exists a possibly random orthogonal matrix \(\mathbf W\in\mathbb O(d)\) such that
	\[
	\sup_{t\in\mathcal T}\left\|\hat{\mathbf Y}_{\mathrm{mod}}(t)-\mathbf Y(t)\mathbf W\right\|_2=\mathcal O(\log n)\qquad \text{a.s.}
	\]
\end{theorem}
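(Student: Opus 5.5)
The plan is to compare $\hat{\mathbf Y}_{\mathrm{mod}}(t)$ with a population analogue built from the exact SVD of $\mathbf P$. That splits the error into a deterministic "canonicalization" term and a stochastic "spectral perturbation" term. The starting identity is that the rank-$d$ truncated SVD gives $\hat{\mathbf V}\hat{\mathbf\Sigma}=\mathbf A^\top\hat{\mathbf U}$, so $\hat{\mathbf Y}_{\mathrm{mod}}(t)=n^{-1/2}\mathbf A(t)^\top\hat{\mathbf U}=n^{-1/2}\mathbf A(t)\hat{\mathbf U}$ by symmetry. I will work in spectral norm throughout. Since $T$ is fixed, the supremum over $t\in\mathcal T$ only costs a union bound.

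\emph{Step 1 (population canonicalization).} Let $\mathbf P=\mathbf U\mathbf\Sigma\mathbf V^\top$ be the rank-$d$ SVD. Since $\mathrm{col}(\mathbf P)=\mathrm{col}(\mathbf X)$, we can write $\mathbf U=\mathbf X\mathbf R$ for some $\mathbf R\in\mathrm{GL}(d)$.

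By Assumption~\ref{assump:isotropy} and a matrix Bernstein/Hoeffding bound for i.i.d. rows (assuming bounded latent positions, as is implicit in the Bernoulli model), $\mathbf X^\top\mathbf X=n(\mathbf I_d+\mathbf E)$ with $\|\mathbf E\|_2=\mathcal O(\sqrt{\log n/n})$ a.s. Orthonormality $\mathbf U^\top\mathbf U=\mathbf I$ then forces $\mathbf W_0:=\sqrt n(\mathbf I+\mathbf E)^{1/2}\mathbf R\in\mathbb O(d)$.

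Hence
\[
n^{-1/2}\mathbf P(t)^\top\mathbf U=n^{-1/2}\mathbf Y(t)\mathbf X^\top\mathbf X\mathbf R=\mathbf Y(t)(\mathbf I+\mathbf E)^{1/2}\mathbf W_0 .
\]
This differs from $\mathbf Y(t)\mathbf W_0$ by at most $\|\mathbf Y(t)\|_2\,\|(\mathbf I+\mathbf E)^{1/2}-\mathbf I\|_2=\mathcal O(\sqrt n)\cdot\mathcal O(\sqrt{\log n/n})=\mathcal O(\sqrt{\log n})$. This is exactly where isotropy is needed: without it the analogous factor is $(\mathbb E\chi\chi^\top)^{1/2}$ rather than an orthogonal matrix, which is the $\mathrm{GL}(d)$ distortion.

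\emph{Step 2 (subspace perturbation).} Write
\[
\mathbf A(t)\hat{\mathbf U}-\mathbf P(t)\mathbf U\mathbf W_*=(\mathbf A(t)-\mathbf P(t))\hat{\mathbf U}+\mathbf P(t)(\hat{\mathbf U}-\mathbf U\mathbf W_*),
\]
where $\mathbf W_*\in\mathbb O(d)$ is the Procrustes alignment of $\mathbf U$ to $\hat{\mathbf U}$. The ingredients are as follows.
\begin{itemize}
\item Matrix concentration for symmetric Bernoulli noise, including handling the diagonal, gives $\|\mathbf A(t)-\mathbf P(t)\|_2=\mathcal O(\sqrt{n\log n})$ overwhelmingly, hence $\|\mathbf A-\mathbf P\|_2=\mathcal O(\sqrt{nT\log n})$. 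A sharper $\mathcal O(\sqrt n)$ bound is also available.
\item Under a nondegeneracy condition (the $d$-th eigenvalue of $\sum_t\mathbf G(t)^2$ bounded away from zero, which I would state explicitly), we have $\sigma_d(\mathbf P)\gtrsim n$.
\item Wedin's theorem then gives $\|\hat{\mathbf U}-\mathbf U\mathbf W_*\|_2\le\sqrt2\|\sin\Theta\|_2=\mathcal O(\sqrt{\log n/n})$.
\end{itemize}
Dividing by $\sqrt n$, both terms are $\mathcal O(\sqrt{\log n})$, using $\|\mathbf P(t)\|_2=\mathcal O(n)$.

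\emph{Step 3 (assembly).} Set $\mathbf W:=\mathbf W_*^\top\mathbf W_0\in\mathbb O(d)$, a shared orthogonal matrix (not depending on $t$), absorbing the right-multiplication by $\mathbf W_*$ consistently. The triangle inequality then yields $\sup_t\|\hat{\mathbf Y}_{\mathrm{mod}}(t)-\mathbf Y(t)\mathbf W\|_2=\mathcal O(\sqrt{\log n})\subset\mathcal O(\log n)$ with probability $\ge 1-n^{-A}$, and almost surely via Borel--Cantelli.

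The step I expect to be most delicate is Step 2. It requires a lower bound on $\sigma_d(\mathbf P)$ that holds with overwhelming probability, obtained by concentrating $n^{-1}\mathbf Y^\top\mathbf Y$ around $\sum_t\mathbf G(t)^2$. It also requires tracking the alignment so that a single $\mathbf W$ works for all $t$ at once. I would first try the crude Wedin bound above, since the target rate $\log n$ leaves slack. Only if constants misbehave would I switch to a finer $2\to\infty$ or leave-one-out expansion.
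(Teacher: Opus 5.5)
Your proof is correct, and it takes a different, more self-contained route than the paper.

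\textbf{How the paper argues.} The paper routes the error through the population factor $\tilde{\mathbf V}(t)\tilde{\mathbf\Sigma}$ and relies on results imported from \citet{jones2020multilayer}:
\begin{itemize}
\item Proposition~16 gives $\tilde{\mathbf V}(t)\tilde{\mathbf\Sigma}=\mathbf Y(t)(\mathbf X^\top\mathbf X)^{1/2}\mathbf Q(t)$. A separate argument then shows that $(\mathbf X^\top\mathbf X)^{1/2}\mathbf Q(t)$ does not depend on $t$.
\item Theorem~2 and Proposition~15 give a $2\to\infty$ bound on $\hat{\mathbf V}(t)\hat{\mathbf\Sigma}^{1/2}$ and a commutation bound between $\hat{\mathbf\Sigma}^{1/2}$ and $\tilde{\mathbf\Sigma}^{1/2}$. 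These are multiplied through by $\hat{\mathbf\Sigma}^{1/2}$ and converted to spectral norm at a cost of a factor $\sqrt n$.
\end{itemize}

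\textbf{What your route buys.}
\begin{itemize}
\item Your identity $\hat{\mathbf V}(t)\hat{\mathbf\Sigma}=\mathbf A(t)\hat{\mathbf U}$ removes any need to control singular values or right singular vectors. A spectral-norm Wedin bound on the shared left subspace is enough.
\item Writing $\mathbf U=\mathbf X\mathbf R$ and using $\mathbf U^\top\mathbf U=\mathbf I_d$ recovers the paper's population identity $\mathbf P(t)\mathbf U=\mathbf Y(t)(\mathbf X^\top\mathbf X)^{1/2}\mathbf W_0$ in two lines. Independence of the orthogonal factor from $t$ is automatic, because $\mathbf U$ is common to all snapshots.
\item You get the sharper rate $\mathcal O((\log n)^{1/2})$. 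The paper's $\mathcal O(\log n)$ comes only from its looser bound $\|\mathbf X^\top\mathbf X-n\mathbf I_d\|_2=\mathcal O(n^{1/2}\log n)$ in the canonicalization step.
\end{itemize}
The paper's $2\to\infty$ route could in principle give row-wise control, but the final statement discards it. The downstream results (e.g.\ Corollary~\ref{cor:jones}) use only the spectral-norm bound, so your argument loses nothing.

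\textbf{Two small corrections.}
\begin{itemize}
\item Your decomposition uses $\hat{\mathbf U}\approx\mathbf U\mathbf W_*$, so $\hat{\mathbf Y}_{\mathrm{mod}}(t)\approx n^{-1/2}\mathbf P(t)\mathbf U\mathbf W_*=\mathbf Y(t)(\mathbf I_d+\mathbf E)^{1/2}\mathbf W_0\mathbf W_*$. The shared orthogonal matrix is therefore $\mathbf W=\mathbf W_0\mathbf W_*$, not $\mathbf W_*^\top\mathbf W_0$; these differ unless the two matrices commute.
\item The nondegeneracy condition you planned to add is already implied by the paper's standing assumption that each $\mathbb E[\phi(t)\phi(t)^\top]$ is invertible. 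We have $\sigma_d(\mathbf P)\ge\sigma_d(\mathbf X)\,\sigma_d(\mathbf Y)$ and $\mathbf Y^\top\mathbf Y=\sum_t\mathbf Y(t)^\top\mathbf Y(t)\succeq\mathbf Y(1)^\top\mathbf Y(1)$. Both $\sigma_d(\mathbf X)^2$ and $\lambda_d(\mathbf Y(1)^\top\mathbf Y(1))$ concentrate at order $n$, so $\sigma_d(\mathbf P)\gtrsim n$ with overwhelming probability and no extra hypothesis is needed.
\end{itemize}
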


Theorem~\ref{thm:modified-uase-paper} is the key recovery statement that we use throughout the paper. Although the bound \(\mathcal O(\log n)\) diverges in absolute terms, it is negligible relative to the typical \(\mathcal O(n^{1/2})\) scale of the latent position matrices~\citep{tao2012topics}, and therefore corresponds to an asymptotically vanishing relative error. Because orthogonal transformations preserve second-moment quantities and the induced distances, this resolves the geometric incompatibility caused by \(\mathrm{GL}(d)\) ambiguity and provides the basis of our multiscale temporal geometry. The consequences of omitting this modification are illustrated numerically in the experiments in Section~\ref{sec:exp}.

\subsection{Multiscale Second-moment Geometry and Interpretation}
\label{subsec:multiscale-decomposition}

We now use the second-moment operator to build a multiscale description of temporal variation. In previous studies, researchers focused on using the square root of the spectral norm of the second-moment matrix which they called the maximum directional variation, which summarizes temporal change through the single most variable direction~\citep{athreya2025euclidean}\footnote{To be precise, their maximum directional variation includes a Procrustes alignment factor because they did not use a multilayer representation.}. This quantity is useful as a one-direction summary, but it has two limitations in our setting. First, it retains only the dominant mode of variation and is therefore not suited to a multiscale decomposition of temporal change. Second, distances on a finite set do not generally admit an exact Euclidean realization, unlike our proposed distances. Further discussion, including a counterexample, is in Appendix~\ref{app:mv}.

Therefore, we move from the spectral norm summary of \citet{athreya2025euclidean} to distances built from the full second-moment operator. We define a global trace variation distance and mode-wise variation distances along canonical orthogonal directions. These distances preserve the geometric invariance needed for statistical identifiability while retaining the directional information required for meaningful mode decomposition.

Specifically, we introduce a trace variation distance and mode-wise variation distances along orthogonal directions.

\begin{definition}[Trace and mode-wise variation distances]
	\label{def:tv-k-paper}
	For \(t,s\in \mathcal T\), define the trace variation distance as
	\[
	d_{\mathrm{TV}}(\phi(t),\phi(s)):=\bigl(\mathrm{tr}\,\mathbf M_\phi(t,s)\bigr)^{1/2}
	=\|\phi(t)-\phi(s)\|_{L^2(\Omega;\mathbb R^d)}.
	\]
	Given an orthonormal basis \(\{\bm u_k\}_{k=1}^d\), define the \(k\)th mode-wise variation distance as
	\[
	d_k(\phi(t),\phi(s)):=\bigl(\bm u_k^\top \mathbf M_\phi(t,s)\bm u_k\bigr)^{1/2}
	=\|(\phi(t)-\phi(s))^\top \bm u_k\|_{L^2(\Omega;\mathbb R)}.
	\]
\end{definition}

Unlike maximum directional variation, both \(d_{\mathrm{TV}}(\cdot,\cdot)\) and \(d_k(\cdot,\cdot)\) are Hilbertian distances. Hence, for any finite set of time points, the associated centered Gram matrices are positive semidefinite and the resulting population trajectories exactly realize these distances. The central fact is that trace variation decomposes exactly across modes, which we summarize in the following theorem.


\begin{theorem}[Orthogonal decomposition of trace variation]
	\label{thm:tv-decomposition-paper}
	For the distances defined in Definition ~\ref{def:tv-k-paper}, the following holds.
	\[
	d_{\mathrm{TV}}(\phi(t),\phi(s))^2=\sum_{k=1}^d d_k(\phi(t),\phi(s))^2.
	\]
\end{theorem}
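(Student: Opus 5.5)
The plan is to prove the identity by writing the trace of $\mathbf M_\phi(t,s)$ in the orthonormal basis $\{\bm u_k\}_{k=1}^d$ and then applying linearity of expectation. The argument is purely algebraic and needs no earlier result beyond Definition~\ref{def:tv-k-paper}. Throughout, fix $t,s\in\mathcal T$ and write $\Delta:=\phi(t)-\phi(s)$.

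The first step is to check that $\mathbf M_\phi(t,s)=\mathbb E[\Delta\Delta^\top]$ is a well-defined, finite, symmetric positive semidefinite matrix. This follows because $(\chi,\{\phi(t)\})\in L^2(\Omega;\mathbb R^{d(T+1)})$, so $\Delta\in L^2(\Omega;\mathbb R^d)$ and every entry $\mathbb E[\Delta_i\Delta_j]$ is finite by Cauchy--Schwarz.

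Next, let $\mathbf U=[\bm u_1\mid\cdots\mid\bm u_d]$, which is orthogonal, so that $\mathbf U\mathbf U^\top=\sum_k \bm u_k\bm u_k^\top=\mathbf I_d$. Using this together with cyclicity of the trace,
\[
\mathrm{tr}\,\mathbf M_\phi(t,s)=\mathrm{tr}\bigl(\mathbf M_\phi(t,s)\mathbf U\mathbf U^\top\bigr)=\mathrm{tr}\bigl(\mathbf U^\top\mathbf M_\phi(t,s)\mathbf U\bigr)=\sum_{k=1}^d \bm u_k^\top\mathbf M_\phi(t,s)\bm u_k .
\]
By Definition~\ref{def:tv-k-paper}, the left side equals $d_{\mathrm{TV}}(\phi(t),\phi(s))^2$ and each summand on the right equals $d_k(\phi(t),\phi(s))^2$. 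This gives the claim.

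As a cross-check, and to match the $L^2$ forms in the definition, one can also argue pointwise. By Parseval, $\|\Delta\|^2=\sum_k(\Delta^\top\bm u_k)^2$ almost surely. Taking expectations, which is valid for finite sums of nonnegative integrable terms, yields $\|\Delta\|_{L^2}^2=\sum_k\|\Delta^\top\bm u_k\|_{L^2}^2$.

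There is no real obstacle here. The only points needing care are integrability of $\Delta$, which ensures the expectations are finite, and the requirement that the basis be orthonormal and complete, which is what makes $\mathbf U\mathbf U^\top=\mathbf I_d$. The identity holds for every orthonormal basis, in particular the one induced by the aggregated second-moment operator.
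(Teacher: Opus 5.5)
Your argument is correct and matches the paper's reasoning. The paper gives no separate proof of this theorem and treats it as immediate from the orthonormality of $\{\bm u_k\}_{k=1}^d$; the only written form of the argument is the Parseval identity $\|\hat{\Delta}_i\|^2=\sum_{k=1}^d\langle \hat{\Delta}_i,\hat{\bm u}_k\rangle^2$ recorded at the node level in the appendix, which your trace computation via $\mathbf U\mathbf U^\top=\mathbf I_d$ (or pointwise Parseval followed by expectation) makes explicit.
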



This identity makes it clear that the mode-wise distances do not introduce a competing notion of change; rather, they decompose the global variation into orthogonal components.

To further exploit this idea and provide a specific example of the choice of orthonormal basis, we define an aggregated second-moment operator. Let \(\mathcal S\subseteq\mathcal T\times\mathcal T\) be a user-specified collection of time pairs, and define
\[
\mathcal M_\phi:=\sum_{(t,s)\in\mathcal S}\mathbf M\bigl(\phi(t),\phi(s)\bigr) \in \mathbb R^{d\times d}.
\]
Assuming that this operator has separated eigenvalues, let \(\{\bm u_k\}_{k=1}^d\) denote its orthonormal eigenvectors ordered by decreasing eigenvalue. This yields an orthonormal basis that describes the orthogonal structural directions. The following proposition shows that the aggregated operator orders modes by their cumulative contribution to temporal variation over the selected time pairs.

\begin{proposition}[Mode importance under aggregation]
	\label{prop:mode-importance-paper}
	For each \(k\in\{1,\dots,d\}\),
	\[
	\lambda_k(\mathcal M_\phi)=\sum_{(t,s)\in\mathcal S} d_k\bigl(\phi(t),\phi(s)\bigr)^2.
	\]
\end{proposition}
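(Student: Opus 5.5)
The plan is to use the eigen-equation of $\mathcal M_\phi$ and the linearity of the quadratic form $\bm u\mapsto \bm u^\top \mathbf M \bm u$ in $\mathbf M$. Here $\{\bm u_k\}_{k=1}^d$ are the orthonormal eigenvectors of $\mathcal M_\phi$ ordered by decreasing eigenvalue, and the mode-wise distances $d_k$ in Definition~\ref{def:tv-k-paper} are taken with respect to this basis. Since each $\mathbf M_\phi(t,s)$ is a second-moment matrix, it is symmetric positive semidefinite. Hence $\mathcal M_\phi$ is symmetric positive semidefinite, and the spectral theorem guarantees such a basis. The separated-eigenvalue assumption makes each $\bm u_k$ unique up to sign. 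This sign ambiguity does not affect the quadratic forms below.

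The argument has three steps. First, from $\mathcal M_\phi \bm u_k=\lambda_k(\mathcal M_\phi)\bm u_k$ and $\|\bm u_k\|=1$ we get $\lambda_k(\mathcal M_\phi)=\bm u_k^\top \mathcal M_\phi \bm u_k$. Second, we substitute the definition of $\mathcal M_\phi$ as a finite sum and use linearity:
\[
\bm u_k^\top \mathcal M_\phi \bm u_k=\sum_{(t,s)\in\mathcal S}\bm u_k^\top \mathbf M_\phi(t,s)\bm u_k .
\]
Third, each summand equals $d_k(\phi(t),\phi(s))^2$ by Definition~\ref{def:tv-k-paper}. This is the quantity $\bm u_k^\top \mathbf M_\phi(t,s)\bm u_k$, which is nonnegative, so taking the square root and then squaring introduces no sign issue.

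It may be worth noting a consequence: summing over $k$ and invoking Theorem~\ref{thm:tv-decomposition-paper} gives $\mathrm{tr}\,\mathcal M_\phi=\sum_{(t,s)\in\mathcal S} d_{\mathrm{TV}}(\phi(t),\phi(s))^2$. Thus the eigenvalues partition the total aggregated variation by mode, which justifies calling the decreasing order an ordering by importance.

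There is no real obstacle. The only points needing care are bookkeeping: the $d_k$ must be defined with respect to the eigenbasis of $\mathcal M_\phi$, not an arbitrary basis; $\mathbf M(\phi(t),\phi(s))$ in the definition of $\mathcal M_\phi$ is the same object as $\mathbf M_\phi(t,s)$; and the finiteness of $\mathcal S$ together with $\phi(t)\in L^2$ ensures all second moments are finite, so the interchange of sum and quadratic form is trivially valid.
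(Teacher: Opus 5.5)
Your proof is correct and is the argument the paper intends. The eigen-equation gives $\lambda_k(\mathcal M_\phi)=\bm u_k^\top\mathcal M_\phi\bm u_k$; linearity of the quadratic form over the finite sum $\mathcal S$ and the definition of $d_k$ in the eigenbasis of $\mathcal M_\phi$ then give the identity. The paper states this proposition without a separate proof and treats it as immediate, so your write-up, including the note that $d_k$ must use the eigenbasis of $\mathcal M_\phi$, supplies exactly that one-line derivation.
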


It is worth noting that by the variational characterization of eigenvalues, the leading mode maximizes the total squared mode-wise variation distances over \(\mathcal S\), and subsequent modes also do this subject to orthogonality. Thus trace variation provides a global measure of change, whereas the mode-wise distances resolve that change into interpretable orthogonal components.

Using this setup, we can finally define multiscale low-dimensional temporal trajectories that are geometrically meaningful and statistically identifiable. We obtain the definition via CMDS. For trace and mode-wise distances, we define
\[
\mathbf D_{\mathrm{TV}}=\bigl[d_{\mathrm{TV}}(\phi(t),\phi(s))\bigr]_{t,s\in\mathcal T},
\qquad
\mathbf D_k=\bigl[d_k(\phi(t),\phi(s))\bigr]_{t,s\in\mathcal T},\quad k\in\{1,\dots,d\}.
\]
Applying CMDS yields population trajectories \(\psi_{\mathrm{TV}}(t), \psi_k(t)\in\mathbb R^c\). These provide a global summary of temporal evolution and a mode-wise decomposition of temporal variation, respectively.

\subsection{Estimation of Multiscale Temporal Trajectories}
\label{subsec:multi-mode-mirrors}

Thus far, we have described our framework at the population level. Next, we turn to estimation. The temporal variation of the population random vectors can be estimated using modified UASE $\hat{\mathbf Y}_{\mathrm{mod}}(t) \in \mathbb R^{n \times d}$ defined in Section~\ref{subsec:canonical-geometry}. To achieve this, we introduce an estimate of the population second-moment matrix and define corresponding distances $\hat d_{\mathrm{TV}}(\cdot,\cdot)$ and $\hat d_k(\cdot,\cdot)$. Specifically, we define the second-moment matrix of embedding displacements as
\[
\hat{\mathbf M}_{\hat{\mathbf Y}}(t,s)
:=\frac{1}{n}(\hat{\mathbf Y}_{\mathrm{mod}}(t)-\hat{\mathbf Y}_{\mathrm{mod}}(s))^\top(\hat{\mathbf Y}_{\mathrm{mod}}(t)-\hat{\mathbf Y}_{\mathrm{mod}}(s)),\qquad t,s\in\mathcal T.
\]
Based on $\hat{\mathbf M}_{\hat{\mathbf Y}}(t,s)$, we define estimated distances $\hat d_{\mathrm{TV}}(\cdot,\cdot)$ and $\hat d_k(\cdot,\cdot)$ with an orthonormal basis \(\{\hat{\bm u}_k\}_{k=1}^d\) analogously to their population counterparts, yielding estimated distance matrices $\hat{\mathbf D}_{\mathrm{TV}},\hat{\mathbf D}_k \in \mathbb R^{T\times T}$ and trajectories $\hat{\psi}_{\mathrm{TV}}, \hat{\psi}_k\in \mathbb R^c$. 
Because orthogonal transformations preserve second-moment quantities and the induced distances, the estimated distances are consistent with their population counterparts, and this consistency propagates through CMDS to the resulting trajectories.

We can prove the consistency of the estimated trajectories by first establishing the distance concentration results for trace and mode-wise variation distances, and then analyzing how CMDS propagates these errors to trajectory estimates. We further characterize how the amplification of the distance estimation errors during this propagation depends on the population structure conditions. We present further details in Appendicies~\ref{app:from_distance} and ~\ref{app:multiscale-consistency}. In the following theorem, we summarize the consistency results.

\begin{theorem}[Consistency of trace and mode-wise trajectories]
	\label{thm:trajectory-consistency-main}
	Assume that Assumption~\ref{assump:isotropy} holds and that the node embeddings are obtained via modified UASE. 
    Let $\mathbf E_{\mathrm{TV}}$ and $\mathbf E_k$ be the centered Gram matrices associated with the population distances, and suppose that each is a positive semidefinite matrix of rank \(c\).
    Then, there exists a possibly random orthogonal matrix \(\mathbf R_{\mathrm{TV}}\in\mathbb O(c)\) such that
	\[
	\sum_{t=1}^T
	\|\hat{\psi}_{\mathrm{TV}}(t)-\mathbf R_{\mathrm{TV}}\psi_{\mathrm{TV}}(t)\|^2
	=
	O\!\left(\frac{(\log n)^2}{n}\right)
	\qquad \text{a.s.}
	\]
	Moreover, for any fixed \(k\in\{1,\dots,d\}\), there exists a possibly random orthogonal matrix \(\mathbf R_k\in\mathbb O(c)\) such that
	\[
	\sum_{t=1}^T
	\|\hat{\psi}_{k}(t)-\mathbf R_k\psi_{k}(t)\|^2
	=
	O\!\left(\frac{(\log n)^2}{n}\right)
	\qquad \text{a.s.}
	\]
\end{theorem}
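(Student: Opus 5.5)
The argument has three stages: a distance concentration step built on Theorem~\ref{thm:modified-uase-paper}, a perturbation bound for the centered Gram matrices, and a Davis--Kahan/Procrustes step through CMDS. Since $T$, $d$ and $c$ are fixed, all bounds over time pairs and modes can be taken uniformly.

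\textbf{Step 1: concentration of the second-moment matrices.} Write $\mathbf\Delta(t,s)=\mathbf Y(t)-\mathbf Y(s)$ and $\hat{\mathbf\Delta}(t,s)=\hat{\mathbf Y}_{\mathrm{mod}}(t)-\hat{\mathbf Y}_{\mathrm{mod}}(s)$. By Theorem~\ref{thm:modified-uase-paper}, $\|\hat{\mathbf\Delta}-\mathbf\Delta\mathbf W\|_2=O(\log n)$. Combining this with $\|\mathbf\Delta\|_2=O(n^{1/2})$ gives
\[
\bigl\|\hat{\mathbf M}_{\hat{\mathbf Y}}(t,s)-\mathbf W^\top\tfrac1n\mathbf\Delta^\top\mathbf\Delta\,\mathbf W\bigr\|_2=O\!\left(\frac{\log n}{n^{1/2}}\right).
\]
The rows of $\mathbf\Delta$ are i.i.d.\ copies of $\phi(t)-\phi(s)=(\mathbf G(t)-\mathbf G(s))^\top\chi$, and $\chi$ is bounded because probabilities lie in $[0,1]$. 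A matrix Hoeffding/Bernstein bound therefore gives $\|\tfrac1n\mathbf\Delta^\top\mathbf\Delta-\mathbf M_\phi(t,s)\|_2=O((\log n/n)^{1/2})$ with overwhelming probability. Hence $\|\hat{\mathbf M}-\mathbf W^\top\mathbf M_\phi\mathbf W\|_2=O(\log n/n^{1/2})$.

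\textbf{Step 2: from matrices to squared distances.} For trace variation, orthogonal invariance of the trace gives $|\hat d_{\mathrm{TV}}^2-d_{\mathrm{TV}}^2|=O(\log n/n^{1/2})$ immediately.

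For mode $k$, the basis $\hat{\bm u}_k$ is taken from the eigenvectors of $\hat{\mathcal M}=\sum_{\mathcal S}\hat{\mathbf M}$. This matrix is within $O(\log n/n^{1/2})$ of $\mathbf W^\top\mathcal M_\phi\mathbf W$. Because the eigenvalues of $\mathcal M_\phi$ are separated, the Davis--Kahan theorem yields a sign $\epsilon_k\in\{\pm1\}$ with
\[
\|\hat{\bm u}_k-\epsilon_k\mathbf W^\top\bm u_k\|=O\!\left(\frac{\log n}{n^{1/2}}\right).
\]
The sign cancels in the quadratic form, so $|\hat d_k^2-d_k^2|=O(\log n/n^{1/2})$ as well.

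\textbf{Step 3: CMDS propagation, the main obstacle.} Working with squared distances avoids the non-Lipschitz square root near zero. Let $\hat{\mathbf E}=-\tfrac12\mathbf J\hat{\mathbf D}^{(2)}\mathbf J$. Then $\|\hat{\mathbf E}-\mathbf E\|_F=O(\log n/n^{1/2})$, since $T$ is fixed.

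Write $\mathbf E=\mathbf U\mathbf\Lambda\mathbf U^\top$ with rank $c$ and smallest nonzero eigenvalue $\lambda_c>0$; this is a constant not depending on $n$. The CMDS configuration is $\Psi=\mathbf U\mathbf\Lambda^{1/2}$. Two standard inputs are needed:
\begin{itemize}
\item Weyl's inequality shows that the top $c$ eigenvalues of $\hat{\mathbf E}$ stay bounded away from zero, and the remaining ones are $O(\log n/n^{1/2})$.
\item Davis--Kahan then gives an orthogonal $\mathbf Q$ with $\|\hat{\mathbf U}-\mathbf U\mathbf Q\|_F=O(\|\hat{\mathbf E}-\mathbf E\|_F/\lambda_c)$.
\end{itemize}
Next I would bound $\hat{\mathbf U}\hat{\mathbf\Lambda}^{1/2}-\mathbf U\mathbf\Lambda^{1/2}\mathbf R$ for a suitable orthogonal $\mathbf R$, either via the standard scaled-eigenvector perturbation lemma or via the square-root Lipschitz bound on the positive-definite cone restricted to eigenvalues at least $\lambda_c/2$.

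This yields $\|\hat\Psi-\Psi\mathbf R\|_F=O(\log n/n^{1/2})$, with constants depending on $\lambda_c$ and $\|\mathbf E\|$. Squaring and summing over the $T$ rows gives the stated $O((\log n)^2/n)$ bound. The same argument applies verbatim to $\mathbf E_k$, giving $\mathbf R_k$.

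The delicate parts are two. First, the mode-wise basis alignment requires the eigengap assumption; its dependence on $\mathbf W$ must also be tracked. Second, a single orthogonal $\mathbf R$ must absorb both the eigenvector rotation $\mathbf Q$ and the noncommutation between $\mathbf Q$ and $\hat{\mathbf\Lambda}^{1/2}$. For the latter I would follow the Procrustes argument used for adjacency spectral embedding, bounding $\|\mathbf Q\hat{\mathbf\Lambda}^{1/2}-\mathbf\Lambda^{1/2}\mathbf Q\|$ via $\|\mathbf Q\hat{\mathbf\Lambda}-\mathbf\Lambda\mathbf Q\|$ and the eigenvalue lower bound.
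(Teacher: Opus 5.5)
Your proposal is correct and follows essentially the paper's route: uniform squared-distance control from Theorem~\ref{thm:modified-uase-paper} plus matrix Bernstein, Davis--Kahan for the aggregated-operator basis, and then the Davis--Kahan/Weyl/eigenvalue-commutator CMDS bound of Proposition~\ref{prop:psi-hatpsi-D}. The only difference is that you compare $\hat{\mathbf M}_{\hat{\mathbf Y}}$ directly with $\mathbf W^\top\mathbf M_\phi\mathbf W$ instead of passing through the intermediate $\hat d(\mathbf Y(t),\mathbf Y(s))$ of Assumptions~\ref{assump:phi-Y} and~\ref{assump:Y-hatY}, which saves one Davis--Kahan step for $\hat{\bm u}_k$.

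One small fix: boundedness of $\chi$ does not follow from $\chi^\top\phi(t)=\chi^\top\mathbf G(t)\chi\in[0,1]$, since an indefinite $\mathbf G(t)$ allows unbounded $\chi$. Cite instead the paper's standing assumption that $\chi$ and $\phi(t)$ have bounded support.
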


Theorem~\ref{thm:trajectory-consistency-main} shows that the proposed estimated trajectories consistently recover the population trajectories, both globally through trace variation and directionally through the mode-wise decomposition.


\subsection{Node Attribution and Change Point Detection}
\label{subsec:extensions}

\textbf{Node attribution:} The proposed second-moment geometry admits the principled node attributional interpretation of the temporal trajectories. Both trace variation and mode-wise variation distances admit node-level decompositions. Consequently, displacements in both the trace trajectory and mode-wise trajectories can be traced back to the nodes that drive them. Because the following arguments holds for arbitrary embeddings, we write $\hat{\mathbf Y}(t)$ instead of $\hat{\mathbf Y}_{\mathrm{mod}}(t)$.

For \(t,s\in\mathcal T\), we first define the node embedding displacements
\[
\hat{\Delta}_i(t,s):=\hat{\mathbf Y}_{i:}(t)-\hat{\mathbf Y}_{i:}(s)\in \mathbb R^d,\qquad i\in\{1,\dots,n\}.
\]
Then we can show that the exact decomposition holds at the distance level as follows,
\[
\begin{aligned}
	&\hat d_{\mathrm{TV}}\bigl(\hat{\mathbf Y}(t),\hat{\mathbf Y}(s)\bigr)^2
	=
	\frac{1}{n}\sum_{i=1}^n \|\hat{\Delta}_i(t,s)\|^2,
	\\
	&\hat d_k\bigl(\hat{\mathbf Y}(t),\hat{\mathbf Y}(s)\bigr)^2
	=
	\frac{1}{n}\sum_{i=1}^n \langle \hat{\Delta}_i(t,s),\hat{\bm u}_k\rangle^2,\quad k\in\{1,\dots,d\}.
\end{aligned}
\]
Thus, the trace variation distance measures the average squared magnitude of node-level change, whereas each mode-wise distance isolates the average squared change along the orthogonal directions \(\{\hat{\bm u}_k\}_{k=1}^d\). Because of the approximation error bound of CMDS, this attributional structure is inherited by the temporal trajectories themselves. This leads to the following theorem.

\begin{theorem}[Node-level to trajectory-level control for trace and mode-wise trajectories]
	\label{thm:local-global-main}
	Let $\hat{\mathbf E}_{\mathrm{TV}}$ and $\hat{\mathbf E}_k$ be the centered Gram matrices associated with the estimated distances, and suppose that each has at least $c$ positive eigenvalues. Then, for any $t,s \in \mathcal T$,
	\[
	\left|
	\|\hat\psi_{\mathrm{TV}}(t)-\hat\psi_{\mathrm{TV}}(s)\|^2
	-
	\frac1n\sum_{i=1}^n
	\|\hat{\mathbf Y}_{i:}(t)-\hat{\mathbf Y}_{i:}(s)\|^2
	\right|
	\le
	2\left(
	\sum_{i=c+1}^T
	\lambda_i(\hat{\mathbf E}_{\mathrm{TV}})^2
	\right)^{1/2}.
	\]
	
	Moreover, for any fixed \(k\in\{1,\dots,d\}\) and any \(t,s\in\mathcal T\),
	\[
	\left|
	\|\hat\psi_k(t)-\hat\psi_k(s)\|^2
	-
	\frac1n\sum_{i=1}^n
	\left\langle
	\hat{\mathbf Y}_{i:}(t)-\hat{\mathbf Y}_{i:}(s),
	\hat{\bm u}_k
	\right\rangle^2
	\right|
	\le
	2\left(
	\sum_{i=c+1}^T
	\lambda_i(\hat{\mathbf E}_k)^2
	\right)^{1/2}.
	\]
\end{theorem}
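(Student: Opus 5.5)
The plan has two parts. First, identify the averaged node-level quantity with an exact squared Euclidean distance. Second, control the gap between that distance and the CMDS distance using the discarded eigenvalues of the centered Gram matrix. I treat the trace case in detail; the mode-wise case is identical with $\hat{\mathbf D}_k$ and $\hat{\mathbf E}_k$ in place of $\hat{\mathbf D}_{\mathrm{TV}}$ and $\hat{\mathbf E}_{\mathrm{TV}}$.

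\emph{Step 1 (exact realizability).} By the node-level decomposition displayed just before the theorem,
\[
\frac1n\sum_{i=1}^n\|\hat{\mathbf Y}_{i:}(t)-\hat{\mathbf Y}_{i:}(s)\|^2=\hat d_{\mathrm{TV}}(t,s)^2=\mathrm{tr}\,\hat{\mathbf M}_{\hat{\mathbf Y}}(t,s).
\]
This equals $\|z(t)-z(s)\|^2$ for the points $z(t):=n^{-1/2}\mathrm{vec}(\hat{\mathbf Y}(t))\in\mathbb R^{nd}$. Hence $\hat{\mathbf D}_{\mathrm{TV}}^{(2)}$ is a genuine Euclidean squared-distance matrix. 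Then $\hat{\mathbf E}_{\mathrm{TV}}=-\frac12\mathbf J\hat{\mathbf D}^{(2)}_{\mathrm{TV}}\mathbf J=\mathbf Z\mathbf Z^\top$, where $\mathbf Z$ has rows $z(t)-\bar z$, so $\hat{\mathbf E}_{\mathrm{TV}}$ is positive semidefinite. For the mode-wise case I use the scalars $n^{-1/2}\hat{\mathbf Y}(t)\hat{\bm u}_k\in\mathbb R^n$ in the same way.

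\emph{Step 2 (squared distances as a linear map of the Gram matrix).} For any Gram matrix $\mathbf E$ of centered points $\{x(t)\}$ we have $\|x(t)-x(s)\|^2=\mathbf E_{tt}+\mathbf E_{ss}-2\mathbf E_{ts}=(e_t-e_s)^\top\mathbf E(e_t-e_s)$. I apply this twice:
\begin{itemize}
\item to $\hat{\mathbf E}_{\mathrm{TV}}$, which gives the node-level quantity;
\item to the CMDS Gram matrix $\hat{\mathbf E}_c=\sum_{i\le c}\lambda_i\bm v_i\bm v_i^\top$, which gives $\|\hat\psi_{\mathrm{TV}}(t)-\hat\psi_{\mathrm{TV}}(s)\|^2$.
\end{itemize}
The hypothesis of at least $c$ positive eigenvalues ensures the CMDS coordinates $\hat\psi(t)$ are real, with $\hat\psi\hat\psi^\top=\hat{\mathbf E}_c$. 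Their centering is automatic, since $\mathbf 1$ lies in the kernel of $\hat{\mathbf E}_{\mathrm{TV}}$. The difference of the two quantities is $(e_t-e_s)^\top(\hat{\mathbf E}_{\mathrm{TV}}-\hat{\mathbf E}_c)(e_t-e_s)$.

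\emph{Step 3 (bound).} By Cauchy--Schwarz in the Frobenius inner product,
\[
\bigl|\langle \hat{\mathbf E}_{\mathrm{TV}}-\hat{\mathbf E}_c,(e_t-e_s)(e_t-e_s)^\top\rangle_F\bigr|\le\|\hat{\mathbf E}_{\mathrm{TV}}-\hat{\mathbf E}_c\|_F\,\|(e_t-e_s)(e_t-e_s)^\top\|_F .
\]
Here $\|(e_t-e_s)(e_t-e_s)^\top\|_F=\|e_t-e_s\|^2=2$. Also $\|\hat{\mathbf E}_{\mathrm{TV}}-\hat{\mathbf E}_c\|_F=(\sum_{i>c}\lambda_i^2)^{1/2}$ by orthonormality of the eigenvectors. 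This yields exactly the stated constant $2$. The case $t=s$ is trivial.

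The only point needing care is Step 2: checking that CMDS with at least $c$ positive eigenvalues produces precisely the rank-$c$ truncation as its Gram matrix. Once that is confirmed, the rest is linear algebra. Positive semidefiniteness from Step 1 is not strictly needed for the bound, since the Frobenius tail includes any negative eigenvalues anyway. It does, however, justify interpreting the tail as discarded variance.
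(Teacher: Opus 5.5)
Your proof is correct and is essentially the paper's argument. Both rest on the CMDS strain identity $\|\hat{\mathbf E}-\mathbf Z\mathbf Z^\top\|_F=(\sum_{i>c}\lambda_i^2)^{1/2}$ (with $\mathbf Z$ the CMDS coordinate matrix), the identity $\mathbf E_{tt}+\mathbf E_{ss}-2\mathbf E_{ts}$ for squared distances applied to both Gram matrices, Cauchy--Schwarz, and the node-level decompositions of $\hat d_{\mathrm{TV}}^2$ and $\hat d_k^2$; the paper bounds the four entries by the triangle inequality before applying Cauchy--Schwarz, whereas you pair directly with $(e_t-e_s)(e_t-e_s)^\top$, and both give the constant $2$. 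Your Step 1 is not actually needed, since $\mathbf J(e_t-e_s)=e_t-e_s$ makes $\hat d(t,s)^2=\hat{\mathbf E}_{tt}+\hat{\mathbf E}_{ss}-2\hat{\mathbf E}_{ts}$ hold for any symmetric zero-diagonal $\mathbf D^{(2)}$, which is why the same argument also covers the non-Euclidean MV distance.
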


Theorem~\ref{thm:local-global-main} shows that displacements in the trace trajectory recovers aggregate node-level contributions, up to a residual determined by the discarded CMDS spectrum. It also provides the analogous result for mode-wise trajectories. Together, these theorems give the trajectories a precise attributional interpretation. The above theorem is pairwise, which provides a straightforward interpretation, but we can derive a tighter bound for the aggregated version over all time pairs. We show this in Theorem~\ref{thm:local-global-agg-TV} and ~\ref{thm:local-global-agg-kV} in the Appendix. Furthermore, we can derive a related attribution statement for the maximum directional variation distance considered in previous study ~\citep{athreya2025euclidean} (Theorem~\ref{thm:local-global-MV}). However, in Appendix~\ref{app:limitation_mv}, we highlight that this attribution is substantially less precise than that obtained in this section because it depends on a direction optimized separately for each time pair and therefore does not yield a comparably sharp bound nor admits temporal comparison.

\textbf{Change point detection:} These low-dimensional temporal trajectories also lead naturally to inference on temporal events. In particular, once a trace variation or mode-wise trajectory has been estimated, change point detection can be conducted directly on that trajectory.

For a 1D estimated trajectory \(\hat\psi(t)\in\mathbb R\), define the estimator
\[
\hat t\in\arg\min_{k\in\mathcal K}\min_{\theta}\sum_{t=1}^T\bigl(\hat\psi(t)-\Psi(t;k,\theta)\bigr)^2,
\]
where \(\Psi(t;k,\theta)\) is a piecewise parametric trajectory with candidate knot \(k\in \mathcal K\subseteq \mathcal{T}\). In particular, piecewise constant models encode \(0\)th order changes and piecewise linear models encode \(1\)st order changes~\citep{chen2024euclidean}. For a 1D population trajectory $\psi(t)\in \mathbb R$, let
\[
Q(k):=\min_{\theta}\sum_{t=1}^T\left(\psi(t)-\Psi(t;k,\theta)\right)^2
\]
and let \(t^*\in \arg\min_{\mathcal K} Q(k)\) denote the population change point. Under the uniqueness and separation assumptions stated in Appendix~\ref{app:cp}, the localization error of the estimated change point is controlled by the trajectory estimation error.

\begin{theorem}[Change point localization]
	\label{thm:cp-localization}
	Under the uniqueness and separation assumptions,
	\[
	\left| \hat t - t^* \right|
	\le
	\frac{4}{\alpha D(\theta^*)}
	\left\{
	\left(\sum_{t=1}^{T}\left(\hat\psi(t)-\psi(t)\right)^2\right)^{1/2}
	+
	Q(t^*)^{1/2}
	\right\}^2,
	\]
	where, $D(\theta^*)$ is a nonnegative function and $\alpha$ is a positive constant, both of which are determined by the parametric trajectory model.
\end{theorem}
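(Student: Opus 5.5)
We argue at the level of the profile objective, treating the change point estimator as an M-estimator over the finite knot set $\mathcal K$. Write $\|\cdot\|$ for the Euclidean norm on $\mathbb R^T$, $\hat{\bm\psi}=(\hat\psi(t))_{t}$ and $\bm\psi=(\psi(t))_t$. For each candidate knot $k$, the model class $\mathcal F_k=\{\Psi(\cdot;k,\theta)\}$ is a closed set (a linear subspace for the piecewise constant and piecewise linear models). The profile criteria are $\hat Q(k)=\mathrm{dist}(\hat{\bm\psi},\mathcal F_k)^2$ and $Q(k)=\mathrm{dist}(\bm\psi,\mathcal F_k)^2$.

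\textbf{Step 1 (uniform comparison).} Since $v\mapsto \mathrm{dist}(v,\mathcal F_k)$ is $1$-Lipschitz, we have, uniformly in $k$,
\[
|\hat Q(k)^{1/2}-Q(k)^{1/2}|\le \varepsilon:=\|\hat{\bm\psi}-\bm\psi\|.
\]

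\textbf{Step 2 (excess criterion at $\hat t$).} By optimality of $\hat t$, $\hat Q(\hat t)\le \hat Q(t^*)$. Hence
\[
Q(\hat t)^{1/2}\le \hat Q(\hat t)^{1/2}+\varepsilon\le \hat Q(t^*)^{1/2}+\varepsilon\le Q(t^*)^{1/2}+2\varepsilon .
\]
Squaring gives $Q(\hat t)\le (Q(t^*)^{1/2}+2\varepsilon)^2\le 4(\varepsilon+Q(t^*)^{1/2})^2$. This produces the factor $4$ and the braced quantity in the statement. Note that we bound $Q(\hat t)$ itself rather than the excess $Q(\hat t)-Q(t^*)$, which is why $Q(t^*)^{1/2}$ appears additively.

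\textbf{Step 3 (separation).} This is where the assumptions of Appendix~\ref{app:cp} enter. I expect the separation assumption to take the form of a growth condition
\[
Q(k)\ge \alpha D(\theta^*)\,|k-t^*| \qquad\text{for all } k\in\mathcal K,
\]
where $\theta^*$ is the optimal parameter at $t^*$ and $D(\theta^*)$ measures the jump size: the squared level change for piecewise constant models, or the squared slope change for piecewise linear models. The constant $\alpha$ comes from the design of the model. Combining this with Step 2 gives $|\hat t-t^*|\le Q(\hat t)/(\alpha D(\theta^*))$, which is the claimed bound.

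\textbf{Main obstacle.} The hard part is justifying this linear-in-$|k-t^*|$ lower bound, if it is not simply assumed. To do so I would argue as follows. Fitting a piecewise model with the wrong knot $k$ forces a misfit on the $|k-t^*|$ time points lying between $k$ and $t^*$, where the true signal $\Psi(\cdot;t^*,\theta^*)$ follows the "other" segment. Project the true piecewise signal onto $\mathcal F_k$ and lower bound the residual on that stretch, for example by the variance of a step restricted to a window, which is proportional to $|k-t^*|$ times the squared jump up to a design constant $\alpha$. Uniqueness of $t^*$ ensures that $D(\theta^*)>0$.

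A complication is that $\bm\psi$ need not lie exactly in $\mathcal F_{t^*}$. This is already absorbed by the $Q(t^*)^{1/2}$ term: the growth condition should be applied to the noiseless piecewise fit, with the triangle inequality accounting for the discrepancy. Finally, Theorem~\ref{thm:trajectory-consistency-main} can be plugged in for $\varepsilon^2=O((\log n)^2/n)$, after sign-aligning the 1D trajectories (here $\mathbb O(1)=\{\pm1\}$).
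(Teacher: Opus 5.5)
Your Steps 1 and 2 coincide with the paper's argument. Your Lipschitz-distance version of Step 1 is slightly cleaner, since it needs no minimizers to exist. The gap is in Step 3.

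The separation hypothesis of the theorem, Assumption~\ref{assump:Psi-error}, is not a growth condition on the profile $Q$ of the actual trajectory $\psi$. It concerns only the noiseless template,
\[
\min_{\theta}\sum_{t=1}^T\bigl(\Psi(t;t^*,\theta^*)-\Psi(t;k,\theta)\bigr)^2\;\ge\;\alpha D(\theta^*)\,|k-t^*|,
\]
and this does not imply $Q(k)\ge\alpha D(\theta^*)|k-t^*|$ once $Q(t^*)>0$. Here is a counterexample. Take the $0$th order model with $T=3$, $\mathcal K=\{2,3\}$ and $\psi=(0,1-\delta,1+\delta)$, where $0<\delta<1/3$. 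Then $t^*=2$ is the unique minimizer, $\theta^*=(0,1)$, $D(\theta^*)=1$, and the template condition holds with $\alpha=1/2$. Yet $Q(3)=(1-\delta)^2/2<1/2=\alpha D(\theta^*)|3-t^*|$.

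So your inference $|\hat t-t^*|\le Q(\hat t)/(\alpha D(\theta^*))$ does not follow from the stated assumptions. Rescuing it by shrinking $\alpha$ would make $\alpha$ depend on $\psi$, whereas the statement requires it to depend only on the parametric model.

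The repair is the idea you only gesture at in your ``complication'' paragraph, and it has to be carried out before squaring.
\begin{enumerate}
\item Since $\|\psi-\Psi(\cdot;t^*,\theta^*)\|=Q(t^*)^{1/2}$, the reverse triangle inequality gives $Q(k)^{1/2}\ge(\alpha D(\theta^*)|k-t^*|)^{1/2}-Q(t^*)^{1/2}$ for all $k\in\mathcal K$.
\item Take $k=\hat t$ and combine with the unsquared Step 2 bound $Q(\hat t)^{1/2}\le Q(t^*)^{1/2}+2\varepsilon$. This yields $(\alpha D(\theta^*)|\hat t-t^*|)^{1/2}\le 2Q(t^*)^{1/2}+2\varepsilon$.
\item Squaring gives exactly the claimed bound.
\end{enumerate}

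This is the paper's proof, and it explains the constant: $4(\varepsilon+Q(t^*)^{1/2})^2=(2\varepsilon+2Q(t^*)^{1/2})^2$. Here $Q(t^*)^{1/2}$ enters once from the comparison step and once from the template mismatch. Your loosening $(Q(t^*)^{1/2}+2\varepsilon)^2\le4(\varepsilon+Q(t^*)^{1/2})^2$ happens to leave exactly this room, but your written chain never uses it.

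Relatedly, your ``main obstacle'' is not part of this theorem. Separation is a hypothesis here. Its verification for the two models (Propositions~\ref{prop:cp-separation-0} and~\ref{prop:cp-separation-1}) is carried out for the template, which is precisely the form the repaired argument needs.
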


Theorem~\ref{thm:cp-localization} shows that, once the temporal trajectories are consistently estimated, change point localization can be performed on them. In particular, the trace trajectory detects global structural changes, whereas the mode-wise trajectories allow changes of different orders to be localized along distinct orthogonal directions of temporal evolution.

\section{Experiments}
\label{sec:exp}

We evaluate MENT on two synthetic datasets with known latent structure and change points, and on two real-world datasets. One is a U.S. court opinion citation network~\citep{freelawproject2026bulk} and the other is Enron e-mail network~\citep{klimt2004enron,cohen2015enron}. The experiments test four claims: (i) unresolved \(\mathrm{GL}(d)\) ambiguity distorts temporal geometry, (ii) our proposed embedding consistently recovers trace and mode-wise trajectories, (iii) trajectory-level displacements admit faithful node-level attribution and (iv) mode-wise trajectories improve localization of heterogeneous change points. Both synthetic datasets are dynamic Stochastic Block Models (SBMs)~\citep{holland1983stochastic} with three equal communities and three orthogonal latent modes. Dataset~1 has one change point per mode, while Dataset~2 has multiple mode specific change points. Full details and additional validation are given in Appendix~\ref{app:experiments}.

\textbf{Synthetic Dataset 1:} Figure~\ref{fig:mirror_overview_scatter} validates the geometric and attribution claims in a controlled setting. Panel~(a) shows that trajectories estimated from the proposed modified UASE converge to their population counterparts for both trace and mode-wise distances. In contrast, the original UASE exhibits a persistent non vanishing bias, confirming that unresolved \(\mathrm{GL}(d)\) ambiguity distorts the temporal geometry. This is the consequence of Theorems~\ref{thm:modified-uase-paper} and \ref{thm:trajectory-consistency-main}. Panel~(b) shows that the recovered mode-wise trajectories are nearly identical to the population trajectories, and isolate the planted \(0\)th and \(1\)st order changes, whereas the trace trajectory aggregates them into a global summary (Section~\ref{subsec:multiscale-decomposition}). Panel~(c) verifies the node attribution identity: trajectory-level displacements closely match the corresponding aggregate node-level contributions for each time pair (Theorem~\ref{thm:local-global-main}). 


\begin{figure}[tbp]
	\centering
	\begin{subfigure}[t]{0.35\linewidth}
		\centering
		\includegraphics[width=\linewidth]{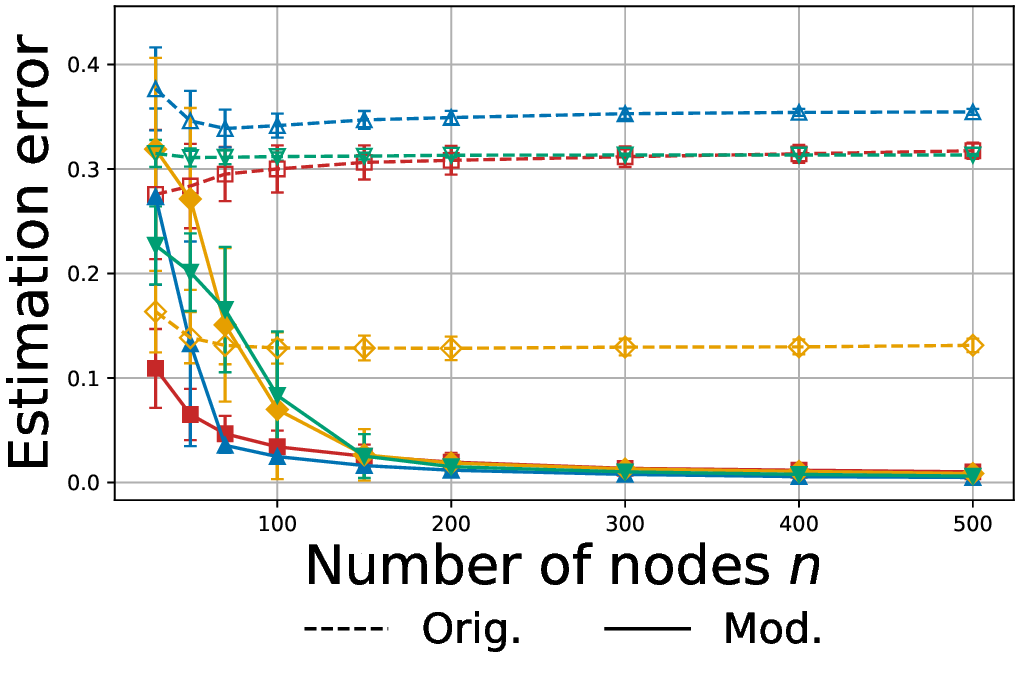}
		\caption{Trajectory estimation error}
		\label{fig:mirror_overview_2_mv}
	\end{subfigure}\hfill
	\begin{subfigure}[t]{0.35\linewidth}
		\centering
		\includegraphics[width=\linewidth]{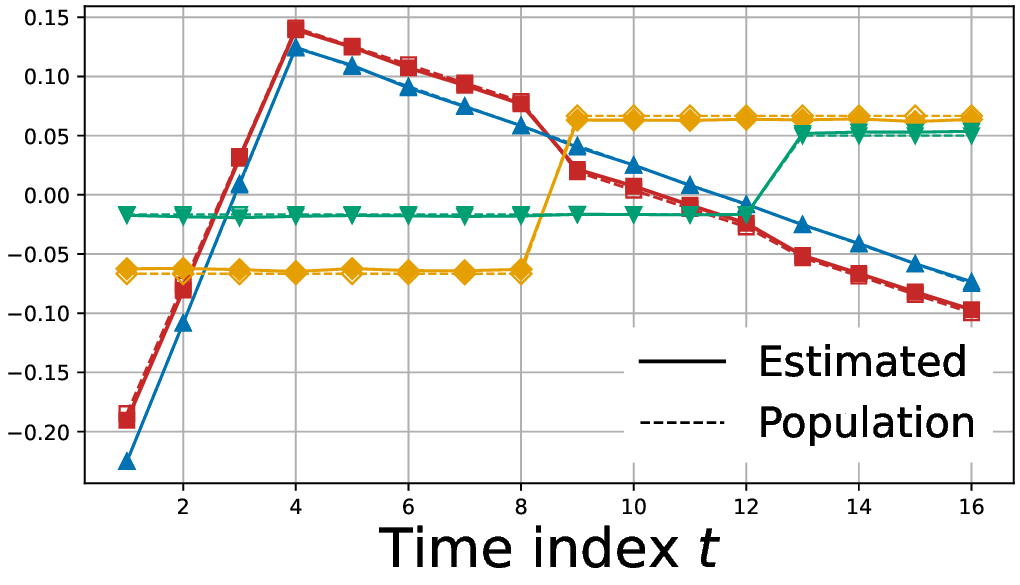}
		\caption{Trace and mode-wise trajectories}
		\label{fig:mirror_overview_2_mode}
	\end{subfigure}\hfill
	\begin{subfigure}[t]{0.25\linewidth}
		\centering
		\includegraphics[width=\linewidth]{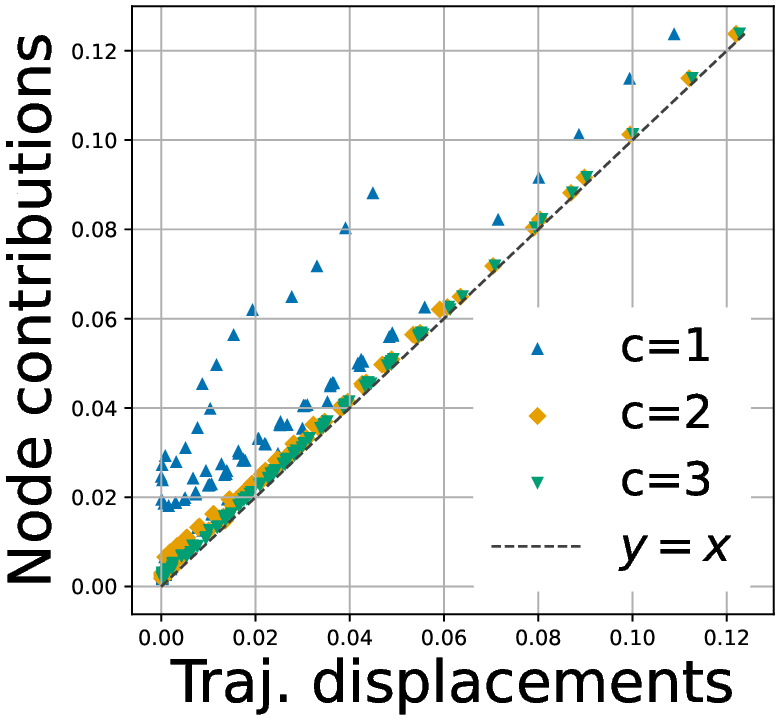}
		\caption{Trajectory-level displacements and node-level contributions}
		\label{fig:global_vs_local_scatter}
	\end{subfigure}
	
	\caption{
		Trajectory recovery and node attribution results on Dataset 1. Panel~(a) shows trajectory estimation error for the trace and mode-wise trajectories, comparing the proposed modified UASE with the original UASE. Panel~(b) compares the estimated and population trajectories under the modified UASE for one realization of the dataset where the number of nodes is 500. Panel~(c) compares displacements of the trace trajectory with aggregated node-level contributions for each time pair \((t,s)\) with \(t<s\) using the same data as (b). In panels~(a) and (b), the trace variation is depicted in red (square markers), and the three mode-wise components are depicted in blue (triangle-up), orange (diamond), and green (triangle-down), respectively.
	}
	\label{fig:mirror_overview_scatter}
\end{figure}


\textbf{Synthetic Dataset 2:} Next, we evaluated the proposed framework on a change point detection task. We used the more challenging synthetic Dataset~2, in which six ground-truth change points occur at \(t\in\{11,21,31,41,51,61\}\) across three modes with heterogeneous signal strengths and change orders. We compared the proposed method against DeltaCon~\citep{koutra2016deltacon}, LAD~\citep{huang2020laplacian}, HCDL~\citep{fukushima2020detecting}, Euclidean Mirror~\citep{athreya2025euclidean}, and the very recently proposed two-stage method of \citet{wang2026change}. For each method, we ranked the candidate change points and evaluated the top \(K\) performance for \(K\in\{3,6,9\}\) using the F1 score and timing mean absolute error (MAE), averaged over \(100\) independent trials. For MENT, we fit a local linear trend model~\citep{seabold2010statsmodels} for each mode \(k\) to the 1D trajectory \(\hat{\psi}_k(t)\). This produces two complementary scores: \(s_{k,t}^{(0)}\), which detects level shifts, and \(s_{k,t}^{(1)}\), which detects slope changes and we fused the results using normalized scores (see Appendix~\ref{app:exp-cp-comparison} for details). All methods are evaluated using the same matching tolerance, minimum separation rule, and top-\(K\) protocol. Method specific hyperparameters are tuned by grid search as detailed in Appendix~\ref{app:exp-cp-comparison}.

The left table in Figure~\ref{tab:cp_results_combined} shows that the proposed method performs best overall across all three values of $K$ (see Appendix Table~\ref{tab:performance_ci} for results with 95\% confidence intervals). At $K=3$, it matched the best baseline F1 while achieving perfect localization on matched detections. At $K=6$, which equals the true number of change points, it achieved near perfect recovery (F1 $=0.965$, timing MAE $=0.1$), substantially outperforming all competitors. It also remained strongest at $K=9$, where some degree of over selection is unavoidable. Among the competitors, LAD and the two-stage method were the most competitive but still missed weaker mode-specific events. Overall, these results support the claim that multiscale trajectories can improve detection in settings where distinct temporal events occur in different latent modes and are blurred by global summaries.Appendix~\ref{app:experiments} provides additional validation of distance recovery, basis sensitivity, trajectory recovery, attribution bounds, dimension robustness, and baseline comparisons.

\begin{figure}[t]
	\centering
    \begin{minipage}[c]{0.3\linewidth}
    	\centering
    	\small
    	\setlength{\tabcolsep}{3.5pt}
    	\begin{tabular}{lrrrrrr}
    		\toprule
    		& \multicolumn{2}{c}{$K=3$} & \multicolumn{2}{c}{$K=6$} & \multicolumn{2}{c}{$K=9$} \\
    		\cmidrule(lr){2-3}\cmidrule(lr){4-5}\cmidrule(lr){6-7}
    		Method & F1 & MAE & F1 & MAE & F1 & MAE \\
    		\midrule
    		DeltaCon  & 0.447          & 2.09          & 0.513          & 2.40          & 0.579          & 2.29 \\
    		LAD       & \textbf{0.667} & 0.73          & 0.833          & 1.26          & 0.688          & 1.89 \\
    		HCDL      & 0.322          & 2.42          & 0.480          & 2.21          & 0.532          & 2.15 \\
    		Mirror    & 0.664          & 0.01          & 0.573          & 1.56          & 0.512          & 2.08 \\
    		Two-stage & \textbf{0.667} & \textbf{0.00} & 0.815          & 1.04          & 0.692          & 2.07 \\
    		\midrule
    		MENT      & \textbf{0.667} & \textbf{0.00} & \textbf{0.965} & \textbf{0.10} & \textbf{0.800} & \textbf{1.00} \\
    		\bottomrule
    	\end{tabular}
    \end{minipage}
	\hfill
	\begin{minipage}[c]{0.44\linewidth}
		\centering
		\includegraphics[width=\linewidth,trim=0 0.3cm 0 0.2cm,clip]{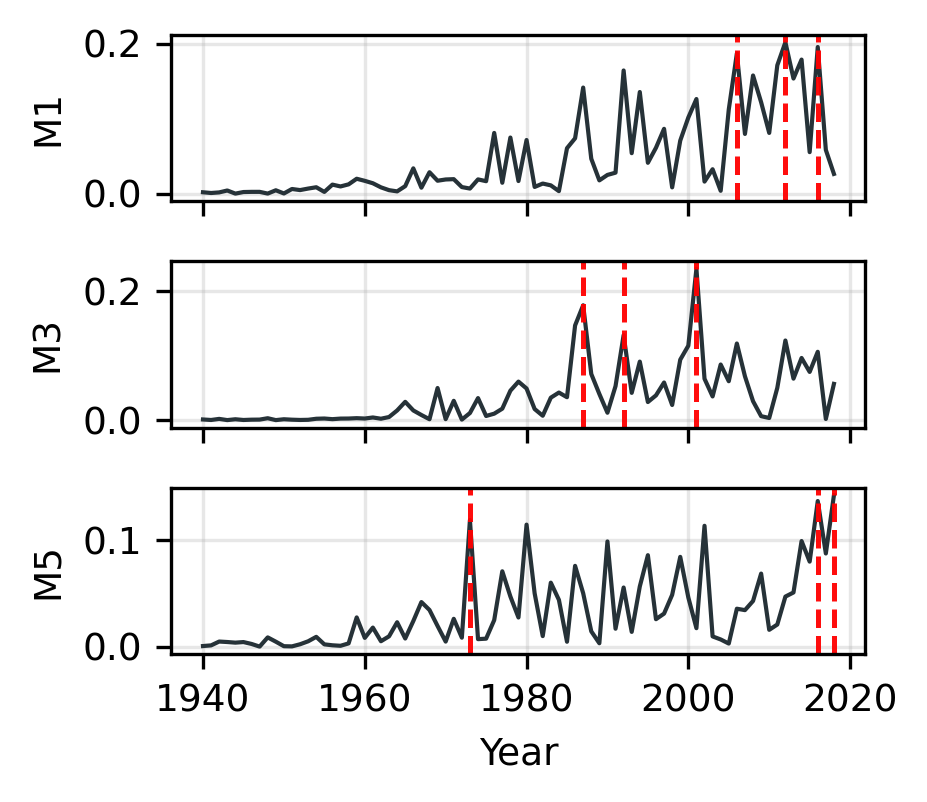}
	\end{minipage}
	\vspace{-0.4em}
	\caption{Results of change point detection on synthetic and real world data. Left: F1 and timing MAE across competing methods on Dataset~2. Right: Mode-wise level change point score plots on the U.S. court citation dataset. Dashed red line shows detected change points.}
	\label{tab:cp_results_combined}
\end{figure}

\textbf{Real-world interpretation of U.S. court citation dynamics:} We conclude with a case study on the U.S. court opinion citation network dataset (See Appendix~\ref{app:exp-real} for details on how we constructed this dataset). The right panel of Figure~\ref{tab:cp_results_combined} shows selected mode-wise level ($0$th order) change point score plots. In this case study, we focus on \(s_{k,t}^{(0)}\) because level shifts correspond naturally to abrupt changes in the co-citation patterns. We also present the detailed node attribution tables in Appendix~\ref{app:exp-real-additional}. 

The selected modes reveal several interpretable forms of citation reorganization. Mode~1 shows the broadest restructuring, with its strongest level shifts occurring from the mid-2000s to the mid-2010s. Its attributions separate a broad constitutional standards and rights or remedies neighborhood, including cases such as \textit{City of Cleburne}, \textit{Graham}, \textit{Mapp}, and \textit{Roper}, from a more procedure-specific criminal adjudication and post-conviction neighborhood, including \textit{Stovall}, \textit{Ker}, \textit{Schlup}, and \textit{Cronic}. Mode~3 is more concentrated. Its shifts around the late 1980s, early 1990s, and early 2000s organize Fourth Amendment and police litigation, including encounter, consent, automobile search, search incident, warrant review, detention, and qualified immunity cases. Mode~5 is broader, but provides a useful contrast: its peaks in 1973, 2016, and 2018 link constitutional claims against government action to doctrines governing reviewability, liability, remedies, and punishment. Taken together, the case study suggests that the learned trajectories recover structured shifts in citation patterns corresponding to recognizable doctrinal regimes rather than merely surfacing isolated high citation cases. We provide an additional real-world example using the Enron e-mail dataset in Appendix~\ref{app:enron}.

\section{Conclusion}

We introduced a multiscale second-moment geometric framework for dynamic network trajectories called MENT that resolves the geometric distortion caused by \(\mathrm{GL}(d)\) identifiability in multilayer and unfolded spectral constructions. The resulting trace variation and mode-wise trajectories support mode decomposition, consistent recovery, node-level attribution, and change point localization, providing an interpretable geometric representation of temporal network evolution. All code accompanying this paper is hosted on \url{https://github.com/hisanor013/ment}.

\section{Acknowledgements}
We thank Yuki Takazawa, Ryoma Kondo, Kohei Miyaguchi, Tilmann Altwicker and Zhivko Taushanov for helpful discussions. R.H. is supported by JST FOREST Program (JPMJFR216Q), JST PRESTO Program (JPMJPR2469), Grant-in-Aid for Scientific Research (KAKENHI, JP24K03043), JSPS International Joint Research Program (JRPs with SNSF: 20251501), and the UTEC-UTokyo FSI Research Grant Program.

\bibliographystyle{unsrtnat}
\bibliography{references}

\newpage
\appendix

\renewcommand{\thefigure}{\thesection.\arabic{figure}}
\renewcommand{\thetable}{\thesection.\arabic{table}}
\setcounter{figure}{0}
\setcounter{table}{0}

\section{Appendix Guide and Notation}
\label{app:guide}

This appendix collects the technical material supporting the main text and situates the proposed framework within a broader probabilistic, geometric, and inferential context. Its purpose is fourfold. First, Appendices~\ref{app:setup-background}, ~\ref{app:trajectory},~\ref{app:from_distance}, and~\ref{app:mv} develop the general background for the paper. They introduce the generative model, formalize the Euclidean trajectory construction from temporal distances, and present the abstract error propagation strategy that carries distance estimation error through to trajectory estimation. These sections also clarify the limitations of the Euclidean Mirror approach of \citet{athreya2025euclidean}, which is based on maximum directional variation (MV). Second, Appendices~\ref{sec:appendix-canonical-uase},~\ref{app:multiscale-consistency}, and~\ref{app:extensions} develop the main technical contributions of the paper in full detail. They formalize the canonical geometric parameterization, develop the trace trajectory and mode-wise trajectory framework, and prove the corresponding consistency, attribution, and change point results summarized in the main text. Third, Appendix~\ref{app:experiments} provides the additional experimental details needed to interpret and reproduce the empirical results, including data construction, implementation choices, and supplementary analyses for all four datasets: two synthetic datasets and two real world datasets. Finally, Appendix~\ref{app:proof} contains the detailed proofs of the main propositions, theorems, and corollaries.

For ease of reference, we briefly summarize the contents of the appendix sections below.

Appendix~\ref{app:setup-background} provides the model setup and clarifies the main inference targets. It specifies the generation model used throughout the paper, including the anchor latent positions, time-varying latent positions, edge probability matrices, and observed adjacency matrices. It also distinguishes the three layers of objects used in the analysis. They are population latent objects, finite sample latent variables, and estimated dynamic embeddings computed from observed graphs. Finally, it explains how these objects feed into the time pair distances, Euclidean trajectories, mode decompositions, node attributions, and change point detection procedures developed in later sections.

Appendix~\ref{sec:appendix-canonical-uase} develops the canonical geometric parameterization in full detail. It makes explicit the $\mathrm{GL}(d)$ identifiability of multilayer and unfolded spectral construction, explains why general linear transformation ambiguity distorts second-moment geometry, and introduces anchor isotropy as a canonical normalization. This appendix supplements Section~\ref{subsec:canonical-geometry}.

Appendix~\ref{app:trajectory} formalizes the Euclidean trajectory construction as an abstract map from pairwise temporal dissimilarities to Euclidean coordinates of the time point via CMDS. This section separates the geometric part of the framework from the statistical estimation problem and introduces the general trajectory notation used later.

Appendix~\ref{app:from_distance} develops the abstract error propagation framework that transfers uniform dissimilarity estimation error to consistency of the trajectories. These results are stated for a generic dissimilarity and serve as a common perturbation template for the concrete distances studied in the paper.

Appendix~\ref{app:mv} clarifies the limitations of the Euclidean Mirror approach based on maximum directional variation (MV)~\citep{athreya2025euclidean}. It first records that MV based trajectories can be consistently estimated under our proposed framework. It then shows that MV remains limited for the multiscale goals of this paper because it summarizes each temporal comparison by a single dominant direction, can yield non Euclidean distance matrices over finite time sets, and provides weaker attribution due to time pair specific maximizing directions.

Appendix~\ref{app:multiscale-consistency} supplements Section~\ref{subsec:multi-mode-mirrors} in the main text. It establishes consistency of the multiscale trajectories by verifying the abstract propagation conditions for trace variation and mode-wise variation distances under modified UASE, and derives the corresponding trajectory consistency results.

Appendix~\ref{app:extensions} supplements Section~\ref{subsec:extensions}. It develops the technical results supporting interpretation and inference. In particular, it proves the node-level decomposition identities and node-level to trajectory-level relations underlying attribution, introduces the piecewise trajectory models used for change point detection, and establishes the localization guarantees stated in the main text.

For convenience, Table~\ref{tab:appendix-notation} collects the main notation used throughout the paper and appendix.

\begin{table}[!htbp]
	\centering
	\caption{Summary of notation.}
	\label{tab:appendix-notation}
	\small
	\begin{tabular}{p{0.30\linewidth} p{0.62\linewidth}}
		\toprule
		\textbf{Notation} & \textbf{Meaning} \\
		\midrule
		\(n\) & Number of nodes. \\
		\(T\) & Number of time points. \\
		\(d\) & Latent dimension. \\
		\(\mathcal T=\{1,\dots,T\}\) & Time point set. \\
		\(\chi \in L^2(\Omega;\mathbb{R}^{d})\) & Anchor random vector. \\
		\(\phi (t) \in L^2(\Omega;\mathbb{R}^{d})\) & Dynamic random vector at time \(t\). \\
		\(\mathbf X\in\mathbb R^{n\times d}\) & Anchor latent position matrix. \\
		\(\mathbf Y(t)\in\mathbb R^{n\times d}\) & Dynamic latent position matrix at time \(t\). \\
		\(\mathbf A(t)\in\mathbb R^{n\times n}\) & Adjacency matrix at time \(t\). \\
		\(\mathbf P(t)\in\mathbb R^{n\times n}\) & Edge probability matrix at time \(t\). \\
		\(\mathbf P(t)=\mathbf X\mathbf Y(t)^\top\) & Probability matrix factorization. \\
        \(\hat{\mathbf Y}(t)\in\mathbb R^{n\times d}\) & Dynamic node embeddings at time \(t\). \\
		\(\mathrm{GL}(d)\) & General linear group. \\
		\(\mathbb O(d)\) & Orthogonal group. \\
		\(\mathbf M_\phi(t,s):=\mathbf M(\phi(t),\phi(s))\) & Population second-moment matrix of latent displacements between times \(t\) and \(s\). \\
        \(\hat{\mathbf M}_{\mathbf Y}(t,s):=\hat{\mathbf M}\bigl(\mathbf Y(t),\mathbf Y(s)\bigr)\) & Finite sample second-moment matrix between times \(t\) and \(s\). \\
		\(\hat{\mathbf M}_{\hat{\mathbf Y}}(t,s):=\hat{\mathbf M}\bigl(\hat{\mathbf Y}(t),\hat{\mathbf Y}(s)\bigr)\) & Second-moment matrix of embedding displacements between times \(t\) and \(s\). \\
		\(d_{\mathrm{TV}}(\bm x, \bm y)\) & Trace variation distance. \\
		\(d_k(\bm x, \bm y)\) & Mode-wise variation distance along direction \(\bm u_k\). \\
		\(\mathcal M\) & Population aggregated second-moment operator. \\
		\(\hat{\mathcal M}\) & Finite sample aggregated second-moment operator. \\
		\(\bm u_k\) & \(k\)th population canonical direction. \\
		\(\hat{\bm u}_k\) & \(k\)th estimated canonical direction. \\
		\(\mathbf D^{(2)}\) & Squared temporal dissimilarity matrix. \\
		\(\mathbf J=\mathbf I_T-\frac1T\mathbf 1\mathbf 1^\top\) & Centering matrix. \\
		\(\mathbf E=-\frac12\mathbf J\mathbf D^{(2)}\mathbf J\) & Centered Gram matrix for CMDS. \\
        \(c\) & Trajectory dimension. \\
		\(\psi(t)\in\mathbb R^c\) & Population Euclidean trajectory coordinate at time \(t\). \\
		\(\hat\psi(t)\in \mathbb R^c\) & Estimated Euclidean trajectory coordinate at time \(t\). \\
		\(\hat{\Delta}_i(t,s)\) & Node-level embedding displacement between times \(t\) and \(s\). \\
		\(\|\cdot\|_2\) & Euclidean norm for vectors, operator norm for matrices. \\
		\(\|\cdot\|_F\) & Frobenius norm. \\
        \( \|\cdot\|_{2\to\infty} \) & Two-to-infinity norm. \\
		\(\mathrm{tr}(\cdot)\) & Matrix trace. \\
		\(\lambda_k(\cdot)\) & \(k\)th eigenvalue, in decreasing order when applicable. \\
		\bottomrule
	\end{tabular}
\end{table}

\subsection{Comparison with Existing Methods}

Table~\ref{tab:method-comparison} summarizes the methodological distinction between MENT and the closest related approaches. UASE~\citep{gallagher2021spectral} provides stable dynamic node embeddings, but its underlying latent positions are identifiable only up to a general linear transformation, which is insufficient for second-moment temporal geometry. LAD~\citep{huang2020laplacian} provides a spectral anomaly score for change point detection, but does not construct an identifiable latent temporal geometry or node-level attribution. Euclidean Mirror~\citep{athreya2025euclidean} constructs time trajectories, but its maximum directional variation summary retains only pair specific dominant directions and therefore does not yield a stable mode-wise attribution geometry. MENT combines these ingredients by constructing an orthogonally identifiable second-moment geometry with exact mode decomposition, stable node-level attribution, and change point detection.
\begin{table}[t]
	\centering
	\caption{Comparison of dynamic network analysis methods.}
	\label{tab:method-comparison}
	\small
	\setlength{\tabcolsep}{4pt}
	\renewcommand{\arraystretch}{1.15}
	\begin{tabular}{p{0.34\linewidth}cccc}
		\toprule
		Property 
		& UASE 
		& LAD 
		& Euclidean Mirror 
		& MENT \\
		\midrule
		Stable node embeddings 
		& Yes 
		& Score 
		& Trajectory
		& Yes \\
		Identifiable second-moment geometry 
		& No 
		& No 
		& Partial 
		& Yes \\
		Exact Euclidean realizability 
		& No 
		& No 
		& Not in general
		& Yes \\
		Exact mode decomposition 
		& No 
		& No 
		& No 
		& Yes \\
		Stable node-level attribution 
		& No 
		& No 
		& No (pair-specific) 
		& Yes \\
		Change point detection 
		& Indirect 
		& Yes 
		& Yes 
		& Yes \\
		\bottomrule
	\end{tabular}
\end{table}

\subsection{Probabilistic Notions and Asymptotic Conventions}

This section introduces the probabilistic conventions used throughout the paper to quantify the reliability of random approximations. Because the objects studied in the paper are random and depend on finite samples, it is essential to specify a uniform notion of high probability control that is strong enough to support matrix perturbation arguments and geometric constructions.




Some results in the dynamic embedding literature are phrased using almost sure asymptotic notation \citep{jones2020multilayer, gallagher2021spectral}.

\begin{definition}[$\mathcal{O}(f)$ almost surely]
	Let $\{X_n\}$ be a sequence of random variables and let $f(n)$ be a positive function.
	We say that
	\[
	| X_n | = \mathcal{O}(f(n)) \quad \text{almost surely}
	\]
	if the following holds. For every $A > 0$, there exist a constant $c_A > 0$ and an integer $N_A$ such that, for all $n \ge N_A$,
	\[
	\Pr\bigl(|X_n| \le c_A f(n)\bigr) \ge 1 - n^{-A}.
	\]
\end{definition}

This definition provides a sequence of increasingly strong high probability bounds indexed by $A$. This formulation allows results stated in terms of almost sure rates to be combined cleanly with high probability perturbation arguments, which is essential when propagating error through spectral decompositions and geometric mappings such as CMDS.

\section{Model Setup and Inference Targets}
\label{app:setup-background}
This section establishes the probabilistic framework underlying the  analysis in the main text. The model is standard in spirit and follows multilayer random dot product graph models and Euclidean Mirror constructions used in prior work on dynamic networks~\citep{jones2020multilayer,athreya2025euclidean}. Our contribution is not to introduce a new generative model, but to use this formulation to separate node-level latent structure from time level geometric objects, thereby clarifying which quantities are intrinsic targets of inference and how they enter the trajectory based analysis.

\begin{definition} \label{def:pop}
	Fix a finite set of time points $\mathcal{T} = \{1,\dots,T\}$ and a latent space dimension $d \in \mathbb{N}$.
	Let $(\Omega,\mathcal{F},\mathbb{P})$ be a probability space. Let $(\chi,\{\phi(t)\}_{t\in\mathcal{T}}) \in L^2(\Omega;\mathbb{R}^{d(T+1)})$ satisfy
	\(
	\chi^\top \phi(t) \in [0,1] \quad \text{almost surely for all } t\in\mathcal{T}.
	\)
	
	For each $n \in \mathbb{N}$, consider a collection of random undirected graphs $\{G(t)\}_{t\in\mathcal{T}}$ on $n$ vertices.
	Let $\mathbf{A}(t) \in \{0,1\}^{n\times n}$ denote the adjacency matrix of $G(t)$.
	
	Let $\mathbf{X} \in \mathbb{R}^{n\times d}$ and $\mathbf{Y}(t)\in\mathbb{R}^{n\times d}$ be random matrices whose rows are i.i.d. copies of $\chi$ and $\phi(t)$, respectively.
	We refer to $\mathbf{X}$ as the anchor latent position matrix and to each $\mathbf{Y}(t)$ as the dynamic latent position matrix at time $t \in \mathcal{T}$.
	
	For each $t \in \mathcal{T}$, define the edge probability matrix
	\[
	\mathbf{P}(t) = \mathbf{X}\mathbf{Y}(t)^\top \in\mathbb{R}^{n\times n}.
	\]
	
	Assume that, for each $t\in\mathcal T$, there exists a (possibly random) symmetric matrix $\mathbf{G}(t)\in\mathbb{R}^{d\times d}$, satisfying
\(\mathbf G(t)=\mathbf G(t)^\top\), such that
	\[
	\mathbf{Y}(t) = \mathbf{X} \mathbf{G}(t) \quad \text{almost surely}.
	\]
	
	Conditional on $\mathbf{X},\{\mathbf{Y}(t)\}_{t\in \mathcal T}$, the edges are generated independently such that
	\[
	\mathbf{A}_{ij}(t) \sim \mathrm{Bernoulli}\!\left(\mathbf{P}_{ij}(t)\right)
	\quad (1 \le i < j \le n),
	\]
	with
	\[
	\mathbf{A}_{ji}(t)=\mathbf{A}_{ij}(t),
	\qquad
	\mathbf{A}_{ii}(t)=0.
	\]
\end{definition}

Throughout the paper, we assume that $\chi$ and $\phi(t)$ have bounded support and that
\[
\mathbb{E}[\chi\chi^\top], \quad \mathbb{E}[\phi(t)\phi(t)^\top]
\]
\noindent are invertible for all $t \in \mathcal{T}$.

The population latent object $(\chi,\{\phi(t)\}_{t\in\mathcal{T}})$ is fixed and independent of $n$. It serves as the population-level temporal structure governing network evolution. The analysis therefore involves three layers of objects: the population latent object $(\chi,\{\phi(t)\}_{t\in\mathcal{T}})$,  the finite sample latent variables $\mathbf X$ and $\{\mathbf Y(t)\}_{t\in\mathcal T}$, and the estimated dynamic embeddings $\{\hat{\mathbf Y}(t)\}_{t\in\mathcal T}$ computed from the observed graphs with adjacency matrices $\{\mathbf A(t)\}_{t\in\mathcal T}$. The goal is not only to estimate latent positions at each time point, but to use their stable recovery to infer the geometry of temporal evolution. In the sections that follow, temporal trajectories, mode decompositions, node attributions, and change point procedures are all constructed from time-to-time dissimilarities computed from these latent or estimated embeddings.

\section{Canonical Geometry and Modified Unfolded Adjacency Spectral Embedding}
\label{sec:appendix-canonical-uase}

This section provides the technical details behind the canonical geometry used in the main text. We make explicit how UASE is tied to a rectangular latent factorization, why this factorization yields \(\mathrm{GL}(d)\) rather than orthogonal identifiability, and how this ambiguity distorts second-moment geometry. 

In prior work \citep{baum2024doubly}, a naive combination of the unfolded embedding with a Euclidean Mirror was considered. However, in light of the $\mathrm{GL}(d)$ ambiguity described above, such a construction is not theoretically well-founded.

We then show that anchor isotropy selects a canonical population parameterization, restoring identifiability upto orthogonal transformations, and state the modified UASE construction and its orthogonal recovery guarantee.

\subsection{$\mathrm{GL}(d)$ identifiability and geometric distortion}

Proposition~\ref{prop:rectangular-nonid-paper} shows that the population latent objects under the multilayer factorization are intrinsically identifiable only up to a general linear transformation. This is a structural feature of the model and not an artifact of estimation. The $\mathrm{GL}(d)$ ambiguity above is harmless for edge probabilities, since the bilinear products \(\mathbf X\mathbf Y(t)^\top\) remain unchanged under paired transformations. For geometry based inference, however, the same ambiguity is consequential.

The difficulty is that second-moment quantities are typically not invariant under \(\mathrm{GL}(d)\). For \(t,s\in\mathcal T\), define the finite sample second-moment matrix of latent displacements by
\[
\hat{\mathbf M}_{\mathbf Y}(t,s)
:=
\frac{1}{n}
\bigl(\mathbf Y(t)-\mathbf Y(s)\bigr)^\top
\bigl(\mathbf Y(t)-\mathbf Y(s)\bigr).
\]
Suppose an embedding procedure yields
\[
\hat{\mathbf Y}(t)\approx \mathbf Y(t)\mathbf G
\]
for some unknown \(\mathbf G\in\mathrm{GL}(d)\). Then the node-averaged squared Euclidean distance between times \(t\) and \(s\) satisfies
\[
\frac{1}{n}\|\hat{\mathbf Y}(t)-\hat{\mathbf Y}(s)\|_F^2
\approx
\frac{1}{n}\|(\mathbf Y(t)-\mathbf Y(s))\mathbf G\|_F^2
=
\mathrm{tr}\!\left(\mathbf G\mathbf G^\top\,\hat{\mathbf M}_{\mathbf Y}(t,s)\right).
\]
Thus the induced distance is no longer measured in the canonical Euclidean geometry on \(\mathbb R^d\), but in the warped metric
\[
\langle \bm u,\bm v\rangle_{\mathbf G}
=
\bm u^\top(\mathbf G\mathbf G^\top)\bm v.
\]
Unless \(\mathbf G\) is orthogonal, \(\mathbf G\mathbf G^\top\neq \mathbf I_d\), and the resulting geometry is anisotropically distorted. Consequently, distance based summaries of temporal evolution, including Euclidean trajectories constructed from time pair dissimilarities, need not recover the intended population geometry if the latent representation is only identifiable up to \(\mathrm{GL}(d)\).

\subsection{Anchor isotropy and canonical geometry}

To eliminate this geometric distortion, we adopt a canonical parameterization defined by the second-moment of the anchor random vector, as summarized in Assumption~\ref{assump:isotropy}. This assumption is a normalization rather than a modeling restriction. Whenever the anchor second-moment matrix is invertible, one may reparameterize the model by whitening the anchor random vector.

\begin{proposition}[Whitening to anchor isotropy]
	\label{prop:appendix-whitening}
	Let $(\chi,\{\phi(t)\}_{t\in\mathcal{T}})$ be the population latent object defined in Definition~\ref{def:pop}.
	Then there exists a population latent object $(\chi',\{\phi'(t)\}_{t\in\mathcal{T}})$ that satisfies Assumption~\ref{assump:isotropy} and induces the same distribution over the observed dynamic network.
\end{proposition}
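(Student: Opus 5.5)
The plan is to whiten the anchor explicitly and then appeal to Proposition~\ref{prop:rectangular-nonid-paper}. The standing assumptions after Definition~\ref{def:pop} say that $\bm\Sigma_\chi:=\mathbb E[\chi\chi^\top]$ is symmetric positive definite. I therefore take its symmetric square root $\bm\Sigma_\chi^{1/2}$ and define
\[
\chi' := \bm\Sigma_\chi^{-1/2}\chi,\qquad \phi'(t):=\bm\Sigma_\chi^{1/2}\phi(t),\qquad t\in\mathcal T.
\]
This is exactly the transformation of Proposition~\ref{prop:rectangular-nonid-paper} with $\mathbf G=\bm\Sigma_\chi^{-1/2}\in\mathrm{GL}(d)$, since then $\mathbf G^\top=\bm\Sigma_\chi^{-1/2}$ and $\mathbf G^{-1}=\bm\Sigma_\chi^{1/2}$. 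Isotropy follows by linearity of expectation:
\[
\mathbb E[\chi'\chi'^\top]=\bm\Sigma_\chi^{-1/2}\,\mathbb E[\chi\chi^\top]\,\bm\Sigma_\chi^{-1/2}=\mathbf I_d,
\]
so Assumption~\ref{assump:isotropy} holds.

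Next I check that $(\chi',\{\phi'(t)\})$ is a legitimate population object in the sense of Definition~\ref{def:pop}, so that it "induces a distribution" at all.

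\textbf{Square integrability.} This is preserved under a fixed linear map, and so is bounded support.

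\textbf{Inner products.} These are unchanged, since $\chi'^\top\phi'(t)=\chi^\top\bm\Sigma_\chi^{-1/2}\bm\Sigma_\chi^{1/2}\phi(t)=\chi^\top\phi(t)\in[0,1]$ a.s.

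\textbf{Symmetric link.} Given $\phi(t)=\mathbf G(t)^\top\chi=\mathbf G(t)\chi$ with $\mathbf G(t)$ symmetric, we get
\[
\phi'(t)=\bm\Sigma_\chi^{1/2}\mathbf G(t)\bm\Sigma_\chi^{1/2}\chi',
\]
and $\mathbf G'(t):=\bm\Sigma_\chi^{1/2}\mathbf G(t)\bm\Sigma_\chi^{1/2}$ is again symmetric. Using the symmetric square root is what makes this step work. At the matrix level, $\mathbf Y'(t)=\mathbf X'\mathbf G'(t)$ with $\mathbf X'=\mathbf X\bm\Sigma_\chi^{-1/2}$.

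\textbf{Invertible second moments.} The matrix $\mathbb E[\phi'(t)\phi'(t)^\top]=\bm\Sigma_\chi^{1/2}\mathbb E[\phi(t)\phi(t)^\top]\bm\Sigma_\chi^{1/2}$ remains invertible.

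Finally, I show that the observed network has the same law. Drawing the rows of $\mathbf X',\mathbf Y'(t)$ i.i.d.\ from the new object is the same as transforming i.i.d.\ rows of $\mathbf X,\mathbf Y(t)$, because the joint law of a row $(\chi',\phi'(1),\dots,\phi'(T))$ is the pushforward of the law of $(\chi,\phi(\cdot))$ under a fixed linear map. Hence
\[
\mathbf P'(t)=\mathbf X'\mathbf Y'(t)^\top=\mathbf X\bm\Sigma_\chi^{-1/2}\bm\Sigma_\chi^{1/2}\mathbf Y(t)^\top=\mathbf P(t)
\]
jointly in distribution over all $t$. Conditional on the probability matrices, the edge laws are identical independent Bernoullis, so the observed dynamic networks agree in law. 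Equivalently, this is just Proposition~\ref{prop:rectangular-nonid-paper}.

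There is no real obstacle here. The only point requiring care is the choice of $\mathbf G$: it must be the symmetric square root (rather than, say, a Cholesky factor) so that the symmetry of $\mathbf G(t)$, which encodes undirectedness, is preserved.
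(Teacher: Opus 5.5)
Your proof is correct and is the paper's argument: the same whitening $\mathbf G=\mathbb E[\chi\chi^\top]^{-1/2}$ followed by an appeal to Proposition~\ref{prop:rectangular-nonid-paper}. Your check that the transformed object still satisfies Definition~\ref{def:pop} is a useful detail the paper leaves implicit, but one correction to your closing remark: the symmetric square root is not needed, since for any $\mathbf G\in\mathrm{GL}(d)$ the new link $\mathbf G^{-1}\mathbf G(t)\mathbf G^{-\top}$ is a congruence of a symmetric matrix and hence symmetric, so a Cholesky-based whitening would work equally well.
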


\begin{proof}
	Since \(E[\chi\chi^\top]\) is positive definite, define $\mathbf{G} := \mathbb E[\chi\chi^\top]^{-1/2}$, so that $\mathbf{G}^\top \mathbb E[\chi\chi^\top] \mathbf{G}= \mathbf I_d$.
	
	Define the transformed variables
	\[
	\chi' = \mathbf{G}^\top\chi, 
	\qquad 
	\phi'(t) = \mathbf{G}^{-1}\phi(t).
	\]
	Then
	\[
	\mathbb{E}[\chi'\chi'^\top]
	= \mathbf{G}^\top \mathbb E[\chi\chi^\top] \mathbf{G}
	= \mathbf I_d,
	\]
	so Assumption~\ref{assump:isotropy} holds.
	
	By Proposition~\ref{prop:rectangular-nonid-paper}, the transformed latent object $(\chi',\{\phi'(t)\}_{t\in\mathcal{T}})$ induces the same distribution as the original one.
\end{proof}

No data driven whitening is performed at the sample level. Rather, Assumption~\ref{assump:isotropy} fixes a canonical population parameterization. Under this parameterization, the identifiability is restored from general linear transformations upto orthogonal transformations, which preserve Euclidean second-moment geometry.

\section{Euclidean Trajectories from Time Pair Dissimilarities}
\label{app:trajectory}

This section formalizes the Euclidean trajectory construction used throughout the paper. The construction takes a matrix of pairwise temporal dissimilarities as input and applies classical multidimensional scaling (CMDS) to obtain a Euclidean embedding of the time point.

\subsection{Abstract Dissimilarities and CMDS Trajectories}

We begin by introducing an abstract notion of pairwise dissimilarity between random vectors and its finite sample analogue, which can also be used as an estimator.

\begin{definition}\label{def:pairwise-diss}
	Let $\bm{x},\bm{y} \in L^2(\Omega,\mathbb{R}^d)$. Let
	\[
	d(\bm{x},\bm{y}) \ge 0
	\]
	denote a symmetric dissimilarity function that admits a finite sample analogue. For matrices $\mathbf{X},\mathbf{Y} \in \mathbb{R}^{n\times d}$, let
	\[
	\hat d(\mathbf{X},\mathbf{Y})
	\]
	denote the corresponding finite sample dissimilarity computed from finite samples.
\end{definition}

At this stage, no specific structure is imposed on $d$ beyond symmetry and the existence of a finite sample counterpart. Concrete choices of dissimilarity will be introduced in later sections.

\begin{definition}[Population dissimilarity matrices and trajectories  \citep{athreya2025euclidean,chen2024euclidean}]\label{def:pairwise-pop-diss}
	The population dissimilarity matrix induced by the population latent object $(\chi, \{\phi(t)\}_{t\in \mathcal T})$ is defined as
	\[
	\mathbf{D}_{\phi}
	:=
	\bigl[d(\phi(t),\phi(s))\bigr]_{t,s \in \mathcal{T}}.
	\]
	Let $\mathbf{D}_{\phi}^{(2)}$ denote the elementwise square of $\mathbf{D}_{\phi}$. Define the centering matrix
	\[
	\mathbf{J}
	:=
	\mathbf I_T - \frac{1}{T}\mathbf{1}\mathbf{1}^\top .
	\]
	
	The population zero-skeleton trajectory (i.e., zero-skeleton mirror) \citep{chen2024euclidean}, that is the Euclidean embedding of the time point, is obtained by CMDS. Specifically, let
	\[
	\mathbf{E}_{\phi}
	=
	-\frac{1}{2}\mathbf{J}\mathbf{D}_{\phi}^{(2)}\mathbf{J},
	\]
	and define $\psi(t)\in\mathbb{R}^c$ for $t=1,\dots,T$ such that
	\[
	(\psi(1),\dots,\psi(T))^\top = 
	\arg\min_{\mathbf{Z}\in\mathbb{R}^{T\times c}}
	\left\|
	\mathbf{E}_{\phi}
	-
	\mathbf{Z}\mathbf{Z}^\top
	\right\|_F^2.
	\]
\end{definition}


\begin{definition}[Finite sample dissimilarity matrices and estimated trajectories]\label{def:sample-diss}
	The finite sample dissimilarity matrix induced by the latent position matrices $\mathbf{Y}(t)$ is defined as
	\[
	\hat{\mathbf{D}}_{\mathbf{Y}}
	:=
	\bigl[\hat d(\mathbf{Y}(t),\mathbf{Y}(s))\bigr]_{t,s \in \mathcal{T}}.
	\]
	The estimated dissimilarity matrix induced by the dynamic embeddings $\hat{\mathbf{Y}}(t)$ is defined as
	\[
	\hat{\mathbf{D}}_{\hat{\mathbf{Y}}}
	:=
	\bigl[\hat d(\hat{\mathbf{Y}}(t),\hat{\mathbf{Y}}(s))\bigr]_{t,s \in \mathcal{T}}.
	\]
	Let $\hat{\mathbf{D}}_{\hat{\mathbf{Y}}}^{(2)}$ denote the elementwise square of $\hat{\mathbf{D}}_{\hat{\mathbf{Y}}}$. The estimated zero-skeleton trajectory is obtained by CMDS as
	\[
	(\hat \psi(1),\dots, \hat \psi(T))^\top = \arg\min_{\mathbf{Z}\in\mathbb{R}^{T\times c}}
	\left\|
	\hat{\mathbf{E}}_{\hat{\mathbf{Y}}} - \mathbf{Z}\mathbf{Z}^\top
	\right\|_F^2,
	\quad
	\hat{\mathbf{E}}_{\hat{\mathbf{Y}}}
	=
	-\frac{1}{2}\mathbf{J}\hat{\mathbf{D}}_{\hat{\mathbf{Y}}}^{(2)}\mathbf{J}.
	\]
\end{definition}


\subsection{Population and estimated trajectories}
\label{subsec:population_estimated}

From this perspective, it is useful to view the overall construction as operating along two coupled axes: a horizontal axis capturing statistical approximation of dissimilarities, and a vertical axis capturing the geometric mirror map applied at either the population or embedding level. Schematically, this is represented as
\[
\begin{array}{ccccc}
	\mathbf{D}_{\phi}
	& \longrightarrow &
	\hat{\mathbf{D}}_{\mathbf{Y}}
	& \longrightarrow &
	\hat{\mathbf{D}}_{\hat{\mathbf{Y}}}
	\\[10pt]
	\downarrow
	& &
	& &
	\downarrow
	\\[10pt]
	(\mathbf{E}_{\phi}, \psi)
	& &
	& &
	(\hat{\mathbf{E}}_{\hat{\mathbf{Y}}}, \hat{\psi})
\end{array}
\]

The horizontal arrows correspond to statistical approximation steps, separating intrinsic temporal geometry from finite sample variability. The first approximation replaces the population dissimilarity matrix $\mathbf{D}_{\phi}$ with $\hat{\mathbf{D}}_{\mathbf{Y}}$, the dissimilarity matrix computed from the finitely sampled latent positions $\mathbf{Y}(t)$. This step isolates the effect of finite node sampling, in which expectations over the latent position distribution are replaced by empirical averages across nodes. Even if latent positions were directly observed, this approximation would still be required and introduces stochastic error that must be controlled uniformly over all time pairs.

The second approximation replaces $\hat{\mathbf{D}}_{\mathbf{Y}}$ with $\hat{\mathbf{D}}_{\hat{\mathbf{Y}}}$, the dissimilarity matrix computed from estimated dynamic embeddings. This step captures error arising from spectral estimation and from model nonidentifiability. In particular, without additional normalization, unfolded spectral embeddings recover latent positions only up to a general linear transformation, which can induce systematic distortion in distance based quantities. The canonical normalization ensures that this approximation preserves the relevant Euclidean geometry.

The vertical arrows correspond to purely geometric operations. Given any time pair dissimilarity matrix, population or sample, the centered Gram matrices $\mathbf{E}_{\phi}$ and $\hat{\mathbf{E}}_{\hat{\mathbf{Y}}}$ are obtained by applying the centering operator to the squared dissimilarities. The mirror embeddings $\psi$ and $\hat{\psi}$ are then defined as low rank Euclidean realizations of these Gram matrices. No additional statistical assumptions enter at this stage.

Viewed in this way, all randomness and estimation error enter exclusively through the dissimilarity matrices, while the mirror construction itself is a deterministic geometric map. This separation allows consistency of the mirror to be established by first controlling dissimilarity estimation error and then propagating that error through multidimensional scaling.

\section{From Distance Estimation Error to Trajectory Error}
\label{app:from_distance}

This section develops a general framework for propagating error from estimated time pair dissimilarities to the resulting Euclidean trajectory embeddings. The analysis separates the problem into two stages: first, controlling the error of the estimated squared dissimilarity matrix, and second, propagating this matrix level error through the CMDS map. This abstraction allows the same perturbation argument to be reused for maximum directional variation, trace variation, and mode-wise variation distances.

The first stage concerns the accuracy of the estimated dissimilarity matrix. Since Euclidean trajectories are obtained by applying CMDS to the entire squared dissimilarity matrix, we begin by imposing uniform control over all time pairs, which will later be translated into Frobenius norm control. The following assumptions formalize two distinct sources of error that arise in estimation from observed graphs.

First, Assumption~\ref{assump:phi-Y} isolates the error arising from finite node sampling and averaging by treating the latent positions $\mathbf{Y}(t)$ as observed.
\begin{assumption} \label{assump:phi-Y}
	Let $f(n)$ be a positive function. Assume $d(\phi(t),\phi(s))$ is finite for all $t,s\in\mathcal{T}$, and
	\[
	\sup_{t,s\in\mathcal{T}}\left\lvert
	\hat{d}(\mathbf{Y}(t),\mathbf{Y}(s))^2 - d(\phi(t),\phi(s))^2
	\right\rvert = \mathcal{O}(f(n)) \qquad \text{a.s.}
	\]
\end{assumption}

Next, Assumption~\ref{assump:Y-hatY} captures the additional error introduced by edge sampling and estimating latent positions via dynamic embeddings.
\begin{assumption} \label{assump:Y-hatY}
	Let $g(n)$ be a positive function.
	\[
	\sup_{t,s\in\mathcal{T}}\left\lvert
	\hat{d}(\hat{\mathbf{Y}}(t),\hat{\mathbf{Y}}(s))^2
	-
	\hat{d}(\mathbf{Y}(t),\mathbf{Y}(s))^2
	\right\rvert = \mathcal{O}(g(n)) \qquad \text{a.s.}
	\]
\end{assumption}

Together, these assumptions yield uniform control over all entries of the squared dissimilarity matrix. This entrywise control implies Frobenius norm control of the matrix error, which is required for CMDS.
\begin{proposition} \label{prop:D-rho-hatY}
	Assume Assumptions~\ref{assump:phi-Y} and~\ref{assump:Y-hatY}. Then,
	\[
	\left\|\hat{\mathbf{D}}_{\hat{\mathbf{Y}}}^{(2)} - \mathbf{D}_\phi^{(2)} \right\|_F = \mathcal{O}(f(n)+g(n)) \qquad \text{a.s.} 
	\]
\end{proposition}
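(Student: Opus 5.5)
The plan is a triangle inequality combined with the elementary comparison between the Frobenius norm and the entrywise maximum on a fixed $T\times T$ matrix. For each pair $(t,s)$, insert the intermediate quantity $\hat d(\mathbf Y(t),\mathbf Y(s))^2$ and write
\[
\hat d(\hat{\mathbf Y}(t),\hat{\mathbf Y}(s))^2 - d(\phi(t),\phi(s))^2
=
\bigl[\hat d(\hat{\mathbf Y}(t),\hat{\mathbf Y}(s))^2-\hat d(\mathbf Y(t),\mathbf Y(s))^2\bigr]
+
\bigl[\hat d(\mathbf Y(t),\mathbf Y(s))^2-d(\phi(t),\phi(s))^2\bigr].
\]
This gives the matrix decomposition $\hat{\mathbf D}^{(2)}_{\hat{\mathbf Y}}-\mathbf D^{(2)}_\phi=(\hat{\mathbf D}^{(2)}_{\hat{\mathbf Y}}-\hat{\mathbf D}^{(2)}_{\mathbf Y})+(\hat{\mathbf D}^{(2)}_{\mathbf Y}-\mathbf D^{(2)}_\phi)$. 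By the triangle inequality for $\|\cdot\|_F$, it suffices to bound each piece separately.

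For any $\mathbf B\in\mathbb R^{T\times T}$ we have $\|\mathbf B\|_F\le T\max_{t,s}|\mathbf B_{ts}|$. Since $T$ is fixed and does not depend on $n$, the factor $T$ is absorbed into the constant. Assumption~\ref{assump:Y-hatY} bounds the first piece by $T\cdot\mathcal O(g(n))$, and Assumption~\ref{assump:phi-Y} bounds the second piece by $T\cdot\mathcal O(f(n))$. Summing yields $\mathcal O(f(n)+g(n))$.

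The only point needing care is the probabilistic convention, namely combining two overwhelming-probability events. Fix $A>0$. Assumption~\ref{assump:phi-Y} supplies constants $c_A$ and $N_A$ such that the first bound holds with probability at least $1-n^{-A}$. Assumption~\ref{assump:Y-hatY}, applied with the exponent $A+1$, supplies $c'_{A+1}$ and $N'_{A+1}$ with failure probability at most $n^{-(A+1)}$. By a union bound, both bounds hold simultaneously with probability at least $1-n^{-A}-n^{-(A+1)}\ge 1-2n^{-A}$. To get exactly $1-n^{-A}$, it is enough to use exponent $A+1$ for both assumptions and take $n\ge 2$, since then $2n^{-(A+1)}\le n^{-A}$. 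On this event,
\[
\|\hat{\mathbf D}^{(2)}_{\hat{\mathbf Y}}-\mathbf D^{(2)}_\phi\|_F\le T\bigl(c_{A+1}f(n)+c'_{A+1}g(n)\bigr)\le C_A\bigl(f(n)+g(n)\bigr),
\]
where $C_A:=T\max(c_{A+1},c'_{A+1})$. This matches the definition of $\mathcal O(\cdot)$ almost surely.

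There is no real obstacle. The only step requiring attention is this bookkeeping of constants and exceptional events, which relies on $T$ being fixed. The suprema over pairs in the assumptions already give uniform control, so no additional union bound over the $T^2$ pairs is needed.
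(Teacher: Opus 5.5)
Your proof is correct and follows the paper's argument. You insert the intermediate quantity $\hat d(\mathbf Y(t),\mathbf Y(s))^2$, apply the triangle inequality so the two assumptions give a uniform entrywise bound of order $f(n)+g(n)$, and pass to the Frobenius norm using that $T$ is fixed; your explicit union-bound bookkeeping for the overwhelming-probability convention, which the paper leaves implicit, is also handled correctly.
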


Proposition~\ref{prop:D-rho-hatY} provides a Frobenius norm control of the estimated squared dissimilarity matrix. Since CMDS is obtained by double centering this squared dissimilarity matrix to form a Gram matrix, such control implies a corresponding perturbation of the centered Gram matrix. 

The second stage propagates the matrix-level error through the CMDS map that defines the Euclidean trajectory.
We first establish a bound on the trajectory coordinates in terms of the dissimilarity matrix error using perturbation theory for CMDS, which relates perturbations of the centered Gram matrix to perturbations of the resulting trajectory. The purpose of this step is to make the leading constant explicit: although the trajectory error vanishes as the dissimilarity error tends to zero, its magnitude is governed by the conditioning of the population structure.
We therefore isolate this dependence before combining with the Frobenius norm error bound from the first stage, which only specifies the convergence rate and does not track such constants.

\begin{proposition} \label{prop:psi-hatpsi-D}
	Suppose that $\mathbf E_\phi$ has exactly $r$ positive eigenvalues and that $\hat{\mathbf E}_{\hat{\mathbf Y}}$ has at least $r$ positive eigenvalues. 
	For $1 \le a \le b \le r$, let $\psi(t;a,b), \hat{\psi}(t;a,b) \in \mathbb{R}^{b-a+1}$ denote the subvectors corresponding to the $a$-th through $b$-th coordinates of $\psi(t), \hat{\psi}(t)\in \mathbb{R}^r$, respectively.
	
	Let $\lambda_i$ denote the $i$-th largest eigenvalue of $\mathbf E_\phi$. Define the spectral gap and the conditioning number
	\[
	\mathrm{gap}(a,b) := \min\{\lambda_{a-1} - \lambda_a,\; \lambda_b - \lambda_{b+1}\},
	\kappa(a,b) := \frac{\lambda_a}{\min\{\mathrm{gap}(a,b),\; \lambda_b\}}\,
	\]
	where $\lambda_0 = \infty$ and $\lambda_{r+1} = -\infty$, and assume $\mathrm{gap}(a,b) > 0$.
	
	Then, as $\|\mathbf D_\phi^{(2)} - \hat{\mathbf D}_{\hat{\mathbf Y}}^{(2)}\|_F \to 0$,
	\[
	\begin{aligned}
		&\min_{\mathbf R \in \mathbb O(b-a+1)}
		\left( \sum_{t=1}^T \|\hat{\psi}(t;a,b) - \mathbf R \psi(t;a,b)\|^2 \right)^{1/2} \\
		&\qquad = \frac{2 \kappa(a,b)}{\lambda_a^{1/2}}
		\|\mathbf D_\phi^{(2)} - \hat{\mathbf D}_{\hat{\mathbf Y}}^{(2)}\|_F
		+ O\!\left(\|\mathbf D_\phi^{(2)} - \hat{\mathbf D}_{\hat{\mathbf Y}}^{(2)}\|_F^2\right).
	\end{aligned}
	\]
\end{proposition}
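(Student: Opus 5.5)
The plan is to treat CMDS as a deterministic map and perturb it in two stages: first from the squared dissimilarity matrix to the centered Gram matrix, then from the Gram matrix to its scaled eigenvectors.

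\textbf{Step 1: reduce to a Gram-matrix perturbation.} Write $\mathbf\Delta := \hat{\mathbf E}_{\hat{\mathbf Y}} - \mathbf E_\phi = -\tfrac12 \mathbf J(\hat{\mathbf D}^{(2)}_{\hat{\mathbf Y}} - \mathbf D^{(2)}_\phi)\mathbf J$. Since $\mathbf J$ is an orthogonal projection, $\|\mathbf\Delta\|_F \le \tfrac12 \|\hat{\mathbf D}^{(2)}_{\hat{\mathbf Y}} - \mathbf D^{(2)}_\phi\|_F =: \tfrac12\varepsilon$. It therefore suffices to control the CMDS coordinates as a function of $\mathbf\Delta$.

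\textbf{Step 2: set up the eigen-decompositions.} Write $\mathbf E_\phi=\mathbf U\mathbf\Lambda\mathbf U^\top$. Let $\mathbf U_{ab}\in\mathbb R^{T\times(b-a+1)}$ hold the eigenvectors for $\lambda_a,\dots,\lambda_b$ and $\mathbf\Lambda_{ab}$ the corresponding eigenvalues. Then the stacked subtrajectory is $\Psi_{ab}=\mathbf U_{ab}\mathbf\Lambda_{ab}^{1/2}$, up to the usual sign/rotation within eigenspaces. Likewise $\hat\Psi_{ab}=\hat{\mathbf U}_{ab}\hat{\mathbf\Lambda}_{ab}^{1/2}$, which is well defined because $\hat{\mathbf E}$ has at least $r$ positive eigenvalues.

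Apply Weyl's inequality: $|\hat\lambda_i-\lambda_i|\le\|\mathbf\Delta\|_2\le\varepsilon/2$. For small $\varepsilon$ the index block $a..b$ therefore stays separated by a gap of at least $\mathrm{gap}(a,b)-\varepsilon$ and stays positive.

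Apply Davis--Kahan (sin$\Theta$). There is an $\mathbf R\in\mathbb O(b-a+1)$, namely the polar factor of $\mathbf U_{ab}^\top\hat{\mathbf U}_{ab}$, with
\[
\|\hat{\mathbf U}_{ab}-\mathbf U_{ab}\mathbf R\|_F\le \frac{2\|\mathbf\Delta\|_F}{\mathrm{gap}(a,b)}+O(\varepsilon^2).
\]

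\textbf{Step 3: decompose the coordinate error.} Split
\[
\hat\Psi_{ab}-\Psi_{ab}\mathbf R' = (\hat{\mathbf U}_{ab}-\mathbf U_{ab}\mathbf R)\hat{\mathbf\Lambda}_{ab}^{1/2} + \mathbf U_{ab}(\mathbf R\hat{\mathbf\Lambda}^{1/2}_{ab}-\mathbf\Lambda_{ab}^{1/2}\mathbf R).
\]
Here the rotation is applied on the coordinate side: in row form $\hat\psi(t)\approx \mathbf R^\top\psi(t)$, which is a relabeling of $\mathbf R$.

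The first term is bounded by $\lambda_a^{1/2}\cdot 2\|\mathbf\Delta\|_F/\mathrm{gap}$ to first order.

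The second term is the commutator of $\mathbf R$ with the square-root eigenvalues. Use $\mathbf R\hat{\mathbf\Lambda}\approx \mathbf U_{ab}^\top\hat{\mathbf E}\hat{\mathbf U}_{ab}\approx \mathbf\Lambda\mathbf R+\mathbf U_{ab}^\top\mathbf\Delta\mathbf U_{ab}\mathbf R$, and then the square-root map's Fréchet derivative (a Sylvester equation). This gives a Frobenius bound of order $\|\mathbf\Delta\|_F/(2\lambda_b^{1/2})$.

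\textbf{Step 4: combine.} Adding the two pieces and using $\lambda_a^{1/2}/\mathrm{gap}\le\lambda_a^{1/2}/\min\{\mathrm{gap},\lambda_b\}$ and $\lambda_b^{-1/2}\le \lambda_a^{1/2}/\lambda_b$ gives a first-order coefficient at most $2\kappa(a,b)\lambda_a^{-1/2}$ times $\varepsilon$ (after absorbing $\|\mathbf\Delta\|_F\le\varepsilon/2$). All remainders are quadratic by the analytic dependence of spectral projectors on $\mathbf\Delta$ when the gap is positive.

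\textbf{Main obstacle.} The statement is written with ``$=$''. I would read it as the first-order bound, i.e.\ ``$\le$'' with an $O(\varepsilon^2)$ remainder, which is what the argument yields. The delicate step is bookkeeping the constants exactly so that the two first-order terms merge into $2\kappa(a,b)\lambda_a^{-1/2}$. I would first try the crude route: bound the first term by $\lambda_a^{1/2}\varepsilon/\min\{\mathrm{gap},\lambda_b\}$ and the second by $\lambda_a^{1/2}\varepsilon/\min\{\mathrm{gap},\lambda_b\}$. This works since $\lambda_b^{-1/2}\le\lambda_a^{1/2}/\lambda_b$, and the two pieces sum to the stated constant.
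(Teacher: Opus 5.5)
Your proposal is essentially the paper's proof: reduce to $\|\hat{\mathbf E}_{\hat{\mathbf Y}}-\mathbf E_\phi\|_F\le\varepsilon/2$, take $\mathbf R$ as the polar factor of $\mathbf U_{ab}^\top\hat{\mathbf U}_{ab}$, and control the eigenvector term by Davis--Kahan. The commutator $\mathbf R\hat{\mathbf\Lambda}_{ab}^{1/2}-\mathbf\Lambda_{ab}^{1/2}\mathbf R$ is handled through $\mathbf R\hat{\mathbf\Lambda}_{ab}-\mathbf\Lambda_{ab}\mathbf R$; your Sylvester step is the paper's entrywise argument with a slightly better constant. Both pieces are then absorbed into $2\kappa(a,b)\lambda_a^{-1/2}$, and like the paper this yields only an upper bound, so reading ``$=$'' as ``$\le$'' is right.

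One caveat you inherit from the paper: when $b=r$, the convention $\lambda_{r+1}=-\infty$ makes $\mathrm{gap}(a,b)$ ignore the separation from the eigenvalue $0$ of $\mathbf E_\phi$ (for $a=1$ the gap is even infinite). Davis--Kahan must therefore be applied with the true gap $\min\{\mathrm{gap}(a,b),\lambda_b\}$, which your final combination still absorbs into the stated constant.
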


This proposition identifies the leading term of the trajectory error and makes its constant explicit. In particular, the error is smaller when (i) the leading eigenvalue $\lambda_a$ is large, and (ii) the spectral gap $\mathrm{gap}(a,b)$ and the smallest eigenvalue $\lambda_b$ are not too small relative to $\lambda_a$. This ratio is captured by the conditioning number $\kappa(a,b)$, so that the leading constant scales proportionally to $\kappa(a,b)/\lambda_a^{1/2}$.

Combining the two stage propagation argument, we obtain the following theorem.
\begin{theorem} \label{thm:psi-hatpsi}
	Assume Assumptions~\ref{assump:phi-Y} and~\ref{assump:Y-hatY}, such that $f(n), g(n) \to 0$.
	Assume that $\mathbf{E}_\phi$ has exactly $r$ positive eigenvalues, and that $\mathrm{gap}(a,b) > 0$ for $1\le a\le b\le r$.
	
	Then there exists a possibly random orthogonal matrix $\mathbf{R}\in\mathbb O(b-a+1)$ such that
	\[
	\left( \sum_{t=1}^T \|\hat{\psi}(t;a,b) - \mathbf R \psi(t;a,b)\|^2 \right)^{1/2} = \mathcal{O}(f(n)+g(n)) \qquad \text{a.s.}
	\]
\end{theorem}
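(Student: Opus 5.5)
The plan is to combine Proposition~\ref{prop:D-rho-hatY} with Proposition~\ref{prop:psi-hatpsi-D}. The key point is that the first-order perturbation expansion becomes an overwhelming-probability bound once the Frobenius error is small.

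\textbf{Step 1 (matrix-level error).} Write $\varepsilon_n := \|\hat{\mathbf D}^{(2)}_{\hat{\mathbf Y}} - \mathbf D^{(2)}_\phi\|_F$. Assumptions~\ref{assump:phi-Y} and~\ref{assump:Y-hatY} give Proposition~\ref{prop:D-rho-hatY}, so $\varepsilon_n = \mathcal O(f(n)+g(n))$ a.s. Concretely, for every $A>0$ there is $c_A$ with $\Pr\{\varepsilon_n \le c_A(f(n)+g(n))\} \ge 1-n^{-A}$ for large $n$. Since $f,g\to 0$, on this event $\varepsilon_n\to 0$ deterministically.

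\textbf{Step 2 (verifying the hypotheses of Proposition~\ref{prop:psi-hatpsi-D}).} That proposition needs $\hat{\mathbf E}_{\hat{\mathbf Y}}$ to have at least $r$ positive eigenvalues.

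Since $\mathbf J$ is a projection, $\|\hat{\mathbf E}_{\hat{\mathbf Y}} - \mathbf E_\phi\|_2 \le \tfrac12\varepsilon_n$. By Weyl's inequality, $\lambda_r(\hat{\mathbf E}_{\hat{\mathbf Y}}) \ge \lambda_r - \tfrac12\varepsilon_n$, where $\lambda_r$ is the $r$-th eigenvalue of $\mathbf E_\phi$. This is positive once $\varepsilon_n < 2\lambda_r$, and on the good event that holds for all large $n$ because $\lambda_r>0$ is a fixed population constant.

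Likewise, $\mathrm{gap}(a,b)>0$ is a population quantity. Weyl's inequality therefore shows that the perturbed eigenvalue groups stay separated, so the subspace perturbation bound underlying Proposition~\ref{prop:psi-hatpsi-D} applies.

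\textbf{Step 3 (conversion to a rate).} Proposition~\ref{prop:psi-hatpsi-D} is an asymptotic expansion. I would make it quantitative: there exist $\delta_0>0$ and $C>0$, depending only on $\mathbf E_\phi$, $a$, $b$ and $T$, such that whenever $\varepsilon_n\le\delta_0$ the minimizing $\mathbf R$ satisfies
\[
\Bigl(\textstyle\sum_t\|\hat\psi(t;a,b)-\mathbf R\psi(t;a,b)\|^2\Bigr)^{1/2} \le \tfrac{2\kappa(a,b)}{\lambda_a^{1/2}}\varepsilon_n + C\varepsilon_n^2 .
\]
The remainder $O(\varepsilon_n^2)$ in the proposition is exactly such a deterministic statement, since all randomness enters only through $\hat{\mathbf D}^{(2)}_{\hat{\mathbf Y}}$ and the CMDS map is deterministic (Section~\ref{subsec:population_estimated}).

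Take $\mathbf R$ to be the minimizer. It is random because it depends on $\hat{\mathbf D}^{(2)}_{\hat{\mathbf Y}}$, but a measurable choice exists by compactness of $\mathbb O(b-a+1)$.

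On the event from Step 1, for $n$ large enough that $c_A(f+g)\le\min\{\delta_0,2\lambda_r\}$, the left-hand side is at most $\bigl(2\kappa(a,b)\lambda_a^{-1/2} + C\delta_0\bigr)c_A(f(n)+g(n))$. This holds with probability at least $1-n^{-A}$ for every $A$, which is exactly the claimed $\mathcal O(f(n)+g(n))$ a.s. bound.

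\textbf{Main obstacle.} The delicate step is Step 3: extracting from Proposition~\ref{prop:psi-hatpsi-D} a uniform radius $\delta_0$ and constant $C$ valid on the random event, rather than only a limit statement.

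I would handle this directly from its proof ingredients. A Davis--Kahan bound for the eigenvector block indexed by $a,\dots,b$ gives a controlled $\sin\Theta$ distance with denominator $\mathrm{gap}(a,b)-\varepsilon_n$. A square-root perturbation bound for the corresponding eigenvalue block $\mathrm{diag}(\lambda_a,\dots,\lambda_b)^{1/2}$ uses $\lambda_b-\varepsilon_n>0$. Both are Lipschitz for $\varepsilon_n\le\tfrac12\min\{\mathrm{gap}(a,b),\lambda_b\}$, so that threshold can serve as $\delta_0$.

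A second point to handle is the sign and rotation indeterminacy of eigenvectors when the block has size larger than one. This is absorbed by the Procrustes choice of $\mathbf R$.
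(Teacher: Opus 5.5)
Your proposal is correct and follows the same route as the paper, which proves Theorem~\ref{thm:psi-hatpsi} in one line by combining Proposition~\ref{prop:D-rho-hatY} with Proposition~\ref{prop:psi-hatpsi-D}. Your additions only make explicit what the paper leaves implicit: the Weyl check that $\hat{\mathbf E}_{\hat{\mathbf Y}}$ has at least $r$ positive eigenvalues on the high-probability event, and the conversion of the $O(\varepsilon_n^2)$ remainder into a deterministic bound below a fixed threshold $\delta_0$, which the explicit inequalities in that proposition's proof already supply.
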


The result of Theorem~\ref{thm:psi-hatpsi} applies in several settings.
When the full trajectory of dimension $r$ is used, $\mathrm{gap}(1,r) > 0$ always holds, and $\kappa(1,r) = \lambda_1 / \lambda_r$.
When truncating to $c < r$ dimensions, $\kappa(1,c) = \lambda_1 / (\lambda_c - \lambda_{c+1})$, highlighting the importance of the spectral gap. In particular, when $c = 1$, the result yields a 1D trajectory, which is essential for the change point analysis in the subsequent section.

Together, Proposition~\ref{prop:D-rho-hatY}, Proposition~\ref{prop:psi-hatpsi-D}, and Theorem~\ref{thm:psi-hatpsi} provide the template used in the rest of the appendix. For each specific dissimilarity, it remains only to verify the two uniform controls in Assumptions~\ref{assump:phi-Y} and~\ref{assump:Y-hatY}. The corresponding trajectory consistency result then follows from the same CMDS perturbation argument.

\section{Limitations of Euclidean Mirror with Maximum Directional Variation}
\label{app:mv}

This section clarifies why the maximum directional variation (MV) distance used in the Euclidean Mirror construction of \citet{athreya2025euclidean} is not sufficient for the multiscale goals of this paper. The MV distance is based on the spectral norm of the second-moment matrix of latent displacement, and therefore captures only the largest direction of temporal variation between two time points. This makes it effective as a dominant change summary, but it also discards subordinate directions that may correspond to distinct and interpretable temporal events.

We first instantiate the abstract trajectory consistency framework from Appendix~\ref{app:trajectory} for the MV distance, showing that MV based trajectories can be consistently estimated under the canonical normalization. We then show that this consistency does not remove the intrinsic limitations of the MV construction in that it compresses temporal variation into a single dominant direction, does not generally yield an exact Euclidean realization over a finite set of time points, and provides weaker attribution because the maximizing direction can vary across time pairs. These limitations motivate the trace variation and mode-wise variation distances developed in the main text.

Throughout this section, all population distances $d(\cdot,\cdot)$ and finite sample distances $\hat d(\cdot,\cdot)$ refer to $d_{\mathrm{MV}}(\cdot,\cdot)$ and $\hat d_{\mathrm{MV}}(\cdot,\cdot)$ defined below. Since modified UASE produce longitudinally stable embeddings, no additional Procrustes alignment across time is required~\citep{gallagher2021spectral}.

\subsection{Consistency and Attribution for MV Based Trajectories}
\label{app:mv-consistency}

We first introduce a population level notion of the second-moment matrix of the difference between random vectors. Let $\bm x, \bm y\in L^2(\Omega;\mathbb R^d)$ be random vectors. The population second-moment matrix of the difference between \(\bm{x}\) and \(\bm{y}\) is defined as
\[
\mathbf M(\bm x,\bm y) := \mathbb E\!\left[(\bm x-\bm y)(\bm x-\bm y)^\top\right].
\]
As a finite sample analogue, we define the second-moment matrix of the difference between data matrices.
Let $\mathbf X, \mathbf Y\in \mathbb R^{n\times d}$.
The finite sample second-moment matrix of the difference between \(\mathbf X\) and \(\mathbf Y\) is defined as
\[
\hat{\mathbf M}\left(\mathbf X,\mathbf Y\right):=\frac{1}{n}\bigl(\mathbf X-\mathbf Y\bigr)^\top\bigl(\mathbf X-\mathbf Y\bigr).
\]

\begin{definition}[Maximum directional variation distance and its finite sample counterpart]
\label{def:mv}
	Let $\bm{x},\bm{y}\in L^2(\Omega;\mathbb{R}^d)$.
	The maximum directional variation (MV) distance between $\bm{x}$ and $\bm{y}$ is defined as
	\[
	d_{\mathrm{MV}}(\bm{x},\bm{y}) := \left\|\mathbf M(\bm x,\bm y)\right\|_2^{1/2}=\sup_{\|\bm u\|=1} \|(\bm x-\bm y)^\top \bm u\|_{L^2(\Omega;\mathbb R)}.
	\]
	Let $\mathbf{X},\mathbf{Y}\in\mathbb{R}^{n\times d}$.
	The finite sample maximum directional variation (MV) distance between $\mathbf X, \mathbf Y\in \mathbb R^{n\times d}$ is defined as
	\[
	\hat d_{\mathrm{MV}}(\mathbf{X},\mathbf{Y})
	:=\left\|\hat{\mathbf M}\left(\mathbf X,\mathbf Y\right)\right\|_2^{1/2} = \frac{1}{n^{1/2}}\left\|\mathbf X-\mathbf Y\right\|_2.
	\]
\end{definition}

This distance quantifies the maximal directional variation induced by the difference between two random vectors or data matrices.


We next verify that the two abstract assumptions required for Theorem~\ref{thm:psi-hatpsi} hold for the MV distance.

\begin{proposition}\label{prop:phi-Y-MV}
	With $d_{\mathrm{MV}}$ and $\hat d_{\mathrm{MV}}$, Assumption~\ref{assump:phi-Y} is satisfied with $f(n)=n^{-1/2}\log n$.
\end{proposition}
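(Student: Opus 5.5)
We need to show that $\sup_{t,s}|\hat d_{\mathrm{MV}}(\mathbf Y(t),\mathbf Y(s))^2-d_{\mathrm{MV}}(\phi(t),\phi(s))^2|=\mathcal O(n^{-1/2}\log n)$ a.s. Since both squared distances are spectral norms of second-moment matrices, namely $\hat d_{\mathrm{MV}}(\mathbf Y(t),\mathbf Y(s))^2=\|\hat{\mathbf M}_{\mathbf Y}(t,s)\|_2$ and $d_{\mathrm{MV}}(\phi(t),\phi(s))^2=\|\mathbf M_\phi(t,s)\|_2$, the plan is to reduce to a matrix concentration bound. By the reverse triangle inequality for the operator norm,
\[
\bigl|\|\hat{\mathbf M}_{\mathbf Y}(t,s)\|_2-\|\mathbf M_\phi(t,s)\|_2\bigr|\le \|\hat{\mathbf M}_{\mathbf Y}(t,s)-\mathbf M_\phi(t,s)\|_2 ,
\]
so it suffices to control the right-hand side uniformly over the finitely many pairs $(t,s)\in\mathcal T\times\mathcal T$.

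For a fixed pair, write $\Delta_i:=\mathbf Y_{i:}(t)-\mathbf Y_{i:}(s)$. These are i.i.d. copies of $\phi(t)-\phi(s)$, and $\hat{\mathbf M}_{\mathbf Y}(t,s)=\frac1n\sum_{i=1}^n\Delta_i\Delta_i^\top$ with $\mathbb E[\Delta_i\Delta_i^\top]=\mathbf M_\phi(t,s)$. Because $\phi(t)$ has bounded support (a standing assumption in Appendix~\ref{app:setup-background}), each summand $\mathbf Z_i:=\Delta_i\Delta_i^\top-\mathbf M_\phi(t,s)$ is a centered, symmetric $d\times d$ matrix with $\|\mathbf Z_i\|_2\le L$ for a constant $L$ independent of $n$.

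The main step is the concentration bound for this fixed pair. I would apply the matrix Hoeffding inequality (or, more crudely, scalar Hoeffding entrywise, which suffices since $d$ is fixed and $\|\cdot\|_2\le\|\cdot\|_F$). This gives
\[
\Pr\Bigl(\bigl\|\tfrac1n\textstyle\sum_i\mathbf Z_i\bigr\|_2\ge \varepsilon\Bigr)\le 2d\exp\bigl(-n\varepsilon^2/(8L^2)\bigr).
\]
Choosing $\varepsilon=c_A n^{-1/2}\log n$ makes the right side $2d\exp(-c_A^2(\log n)^2/(8L^2))$, which is at most $n^{-A}$ for all large $n$ for any constant $c_A>0$. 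Since $(\log n)^2$ eventually dominates $A\log n$, this is in fact stronger than needed: even $\sqrt{\log n}$ would suffice. So the stated rate falls within the overwhelming-probability sense of $\mathcal O(\cdot)$ a.s. defined in Appendix~\ref{app:guide}.

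To finish, take a union bound over the $T^2$ pairs. Since $T$ is fixed, this only inflates the failure probability by a constant factor, which is absorbed by adjusting $A$. This yields Assumption~\ref{assump:phi-Y} with $f(n)=n^{-1/2}\log n$. The finiteness of $d_{\mathrm{MV}}(\phi(t),\phi(s))$ follows immediately from $\phi(t)\in L^2$. The only delicate point is keeping the probability bound in the paper's "for every $A$" form, and bounded support handles that directly. No Bernstein variance term is needed, because the fixed dimension and bounded entries make Hoeffding sufficient.
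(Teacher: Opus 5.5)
Your proposal is correct and follows the paper's argument. The paper also bounds the difference with the reverse triangle inequality $\bigl|\|\hat{\mathbf M}_{\mathbf Y}(t,s)\|_2-\|\mathbf M_\phi(t,s)\|_2\bigr|\le\|\hat{\mathbf M}_{\mathbf Y}(t,s)-\mathbf M_\phi(t,s)\|_2$, applies matrix concentration to the bounded i.i.d. summands, and takes a union bound over the finitely many pairs $(t,s)$ (Lemma~\ref{lem:sn-bound}). The only difference is that the paper uses matrix Bernstein where you use matrix (or entrywise) Hoeffding, and both give the $n^{-1/2}\log n$ rate in the required for-every-$A$ sense.
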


\begin{proposition}
	\label{prop:uase-MV-whitened}
	Assume that Assumption~\ref{assump:isotropy} holds, and that the dynamic embeddings $\hat{\mathbf{Y}}(t)$ are obtained via modified UASE.
	Then, for the distances $d_{\mathrm{MV}}(\cdot,\cdot)$ and $\hat d_{\mathrm{MV}}(\cdot,\cdot)$, the embeddings $\hat{\mathbf{Y}}(t)$ satisfy Assumption~\ref{assump:Y-hatY} with $g(n)=n^{-1/2}\log n$.
\end{proposition}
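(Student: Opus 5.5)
We must show $\sup_{t,s}\bigl|\hat d_{\mathrm{MV}}(\hat{\mathbf Y}(t),\hat{\mathbf Y}(s))^2-\hat d_{\mathrm{MV}}(\mathbf Y(t),\mathbf Y(s))^2\bigr| = \mathcal O(n^{-1/2}\log n)$ a.s. The plan is to work directly with the matrix square norms, using Theorem~\ref{thm:modified-uase-paper} to handle the orthogonal alignment.

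\textbf{Step 1 (remove $\mathbf W$).} Fix $t,s$ and write $\mathbf B:=\mathbf Y(t)-\mathbf Y(s)$ and $\hat{\mathbf B}:=\hat{\mathbf Y}_{\mathrm{mod}}(t)-\hat{\mathbf Y}_{\mathrm{mod}}(s)$. The matrix $\mathbf W\in\mathbb O(d)$ from Theorem~\ref{thm:modified-uase-paper} is shared across all snapshots, so $\mathbf B\mathbf W = \mathbf Y(t)\mathbf W-\mathbf Y(s)\mathbf W$. Moreover $\|\mathbf B\mathbf W\|_2=\|\mathbf B\|_2$, hence
\[
\hat d_{\mathrm{MV}}(\mathbf Y(t),\mathbf Y(s))^2=\tfrac1n\|\mathbf B\mathbf W\|_2^2 .
\]
By the triangle inequality and Theorem~\ref{thm:modified-uase-paper},
\[
\|\hat{\mathbf B}-\mathbf B\mathbf W\|_2\le 2\sup_{t}\|\hat{\mathbf Y}_{\mathrm{mod}}(t)-\mathbf Y(t)\mathbf W\|_2=\mathcal O(\log n)\quad\text{a.s.}
\]
This bound holds uniformly in $(t,s)$, and $T$ is fixed.

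\textbf{Step 2 (difference of squares).} For spectral norms,
\[
\bigl|\|\hat{\mathbf B}\|_2^2-\|\mathbf B\mathbf W\|_2^2\bigr|
\le \|\hat{\mathbf B}-\mathbf B\mathbf W\|_2\,\bigl(\|\hat{\mathbf B}\|_2+\|\mathbf B\|_2\bigr)
\le \|\hat{\mathbf B}-\mathbf B\mathbf W\|_2\,\bigl(2\|\mathbf B\|_2+\|\hat{\mathbf B}-\mathbf B\mathbf W\|_2\bigr).
\]
It therefore suffices to show $\|\mathbf B\|_2=\mathcal O(n^{1/2})$ a.s. uniformly in $(t,s)$. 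Given that, the right-hand side is $\mathcal O(\log n\cdot n^{1/2}+(\log n)^2)=\mathcal O(n^{1/2}\log n)$. Dividing by $n$ then gives the claimed $g(n)=n^{-1/2}\log n$.

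\textbf{Step 3 (scale of $\mathbf B$).} The latent vectors $\chi$ and $\phi(t)$ have bounded support, as assumed after Definition~\ref{def:pop}. Hence every row of $\mathbf Y(t)$ is bounded by a constant $C$, so
\[
\|\mathbf B\|_2\le\|\mathbf B\|_F\le 2C n^{1/2}
\]
deterministically. No concentration argument is needed here. Alternatively, one could note that $\tfrac1n\mathbf B^\top\mathbf B\to\mathbf M_\phi(t,s)$, as in Proposition~\ref{prop:phi-Y-MV}, but the deterministic bound is simpler.

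The only real content lies in Step 1. One must check that the alignment matrix in Theorem~\ref{thm:modified-uase-paper} is a single orthogonal matrix common to all $t$, and that the spectral norm is invariant under right multiplication by it. This is exactly where anchor isotropy (Assumption~\ref{assump:isotropy}) enters. With a general $\mathbf G\in\mathrm{GL}(d)$ in place of $\mathbf W$, $\|\mathbf B\mathbf G\|_2\neq\|\mathbf B\|_2$ and the argument would fail. Everything else is the routine difference-of-squares estimate above, made uniform over the finitely many pairs $(t,s)$ by taking a union bound at each level $A$ in the overwhelming-probability sense.
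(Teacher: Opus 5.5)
Your proof is correct and essentially the paper's argument. The paper first proves Corollary~\ref{cor:jones}: it expands $\frac1n\|\hat\Delta^\top\hat\Delta-(\Delta\mathbf W)^\top(\Delta\mathbf W)\|_2$ with the same product bound you use, drawing on Theorem~\ref{thm:modified-uase-paper} and $\|\mathbf Y(t)-\mathbf Y(s)\|_2=\mathcal O(n^{1/2})$ from Lemma~\ref{lem:sn-bound}, and then applies the reverse triangle inequality to the operator norms of the second-moment matrices. You do the identical estimate directly on $\|\cdot\|_2^2$ of the displacement matrices, and you replace the appeal to Lemma~\ref{lem:sn-bound} with the deterministic bounded-support bound; the paper's matrix-level version is the one it reuses for the TV and mode-wise distances.
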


Let $\hat{\psi}_{\mathrm{MV}}$ be the CMDS embedding obtained from the dissimilarity matrix with entries $\hat d_{\mathrm{MV}}(\hat{\mathbf Y}(t),\hat{\mathbf Y}(s))$. Then, applying Theorem~\ref{thm:psi-hatpsi} with these rates yields the following consequence.

\begin{corollary} \label{cor:MV}
	Assume that Assumption~\ref{assump:isotropy} holds, and that the dynamic embeddings $\hat{\mathbf{Y}}(t)$ are obtained via modified UASE.
	Suppose $\mathbf{E}_{\phi}$ is a positive semidefinite matrix of rank $c$.
	Then, there exists a possibly random orthogonal matrix $\mathbf{R}\in\mathbb O(c)$ such that
	\[
	\sum_{t=1}^T
	\|\hat{\psi}_{\mathrm{MV}}(t) - \mathbf{R}\psi_{\mathrm{MV}}(t)\|^2
	=
	O\left( \frac{(\log n)^2}{n}\right) \qquad \text{a.s.}
	\]  
\end{corollary}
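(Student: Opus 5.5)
The corollary follows directly from Theorem~\ref{thm:psi-hatpsi}, with the two rates supplied by Propositions~\ref{prop:phi-Y-MV} and~\ref{prop:uase-MV-whitened}.

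First, Proposition~\ref{prop:phi-Y-MV} verifies Assumption~\ref{assump:phi-Y} for $d_{\mathrm{MV}}$ and $\hat d_{\mathrm{MV}}$ with $f(n)=n^{-1/2}\log n$. Second, under Assumption~\ref{assump:isotropy} and modified UASE, Proposition~\ref{prop:uase-MV-whitened} verifies Assumption~\ref{assump:Y-hatY} with $g(n)=n^{-1/2}\log n$. Both rates tend to zero, so the convergence hypothesis of Theorem~\ref{thm:psi-hatpsi} holds. Proposition~\ref{prop:D-rho-hatY} then gives
\[
\bigl\|\hat{\mathbf D}^{(2)}_{\hat{\mathbf Y}}-\mathbf D^{(2)}_\phi\bigr\|_F=\mathcal O(n^{-1/2}\log n)\quad\text{a.s.}
\]

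Next I check the spectral hypotheses. Since $\mathbf E_\phi$ is positive semidefinite of rank $c$, it has exactly $r=c$ positive eigenvalues. Take the full block $a=1$, $b=c$. With the conventions $\lambda_0=\infty$ and $\lambda_{c+1}=-\infty$, we get $\mathrm{gap}(1,c)=\infty>0$ automatically, and $\kappa(1,c)=\lambda_1/\lambda_c$ is a finite population constant independent of $n$.

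The hypothesis in Proposition~\ref{prop:psi-hatpsi-D} that $\hat{\mathbf E}_{\hat{\mathbf Y}}$ has at least $c$ positive eigenvalues needs a short argument. Double centering is a contraction in Frobenius norm, so $\|\hat{\mathbf E}_{\hat{\mathbf Y}}-\mathbf E_\phi\|_2\le \tfrac12\|\hat{\mathbf D}^{(2)}_{\hat{\mathbf Y}}-\mathbf D^{(2)}_\phi\|_F\to 0$. By Weyl's inequality, on the same overwhelming-probability event and for large $n$, the top $c$ eigenvalues of $\hat{\mathbf E}_{\hat{\mathbf Y}}$ exceed $\lambda_c/2>0$. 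I expect Theorem~\ref{thm:psi-hatpsi} already absorbs this step.

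Theorem~\ref{thm:psi-hatpsi} then yields an orthogonal $\mathbf R\in\mathbb O(c)$ with
\[
\Bigl(\sum_t\|\hat\psi_{\mathrm{MV}}(t)-\mathbf R\psi_{\mathrm{MV}}(t)\|^2\Bigr)^{1/2}=\mathcal O(n^{-1/2}\log n).
\]
Squaring gives the stated $O((\log n)^2/n)$. Squaring is harmless because the overwhelming-probability definition of $\mathcal O(\cdot)$ a.s.\ is preserved under monotone transformations: for each $A$, the constant $c_A$ is replaced by $c_A^2$.

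No step is a real obstacle, since the substantive work lies in the cited propositions. The only care needed is in checking the rank and positivity conditions when choosing $(a,b)=(1,c)$.
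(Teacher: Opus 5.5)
Your proposal is correct and follows the paper's argument exactly. You take the rates $f(n)=g(n)=n^{-1/2}\log n$ from Propositions~\ref{prop:phi-Y-MV} and~\ref{prop:uase-MV-whitened}, apply Theorem~\ref{thm:psi-hatpsi} with $(a,b)=(1,c)$, and square the result. Your Weyl argument, that $\hat{\mathbf E}_{\hat{\mathbf Y}}$ eventually has at least $c$ positive eigenvalues, fills in a hypothesis the paper leaves implicit. On the gap: the true eigengap separating the top $c$ eigenvalues of $\mathbf E_\phi$ from the rest is $\lambda_c$, not the conventional $\infty$, but $\kappa(1,c)=\lambda_1/\lambda_c$ comes out the same either way.
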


We now state node-level to trajectory-level control for Euclidean trajectories constructed from $\hat{d}_{\mathrm{MV}}(\cdot,\cdot)$. To avoid ambiguity, we index each trajectory by its underlying dissimilarity.  Let $\hat{\mathbf E}_{\mathrm{MV}}$ denote the centered Gram matrices associated with the estimated distances. The next theorem provides a node-level to trajectory-level control for MV based trajectories, showing that trajectory distances can be sandwiched by averages of node-level squared displacements, up to a CMDS truncation residual.

\begin{theorem}\label{thm:local-global-MV}
	Suppose $\hat{\mathbf{E}}_{\mathrm{MV}}$ has at least $c$ positive eigenvalues. Then, for any $t,s\in\mathcal{T}$,
	\[
	\begin{aligned}
		\left\lvert \|\hat{\psi}_{\mathrm{MV}}(t)-\hat{\psi}_{\mathrm{MV}}(s)\|^2 
		- \frac{1}{n}\sup_{\|\bm u\|=1} \sum_{i=1}^n\left\{\left(\hat{\mathbf{Y}}_{i:}(t)-\hat{\mathbf{Y}}_{i:}(s)\right)^\top \bm u \right\}^2 \right\rvert
		\le 2\left(\sum_{i=c+1}^T\lambda_i(\hat{\mathbf{E}}_{\mathrm{MV}})^2\right)^{1/2}.
	\end{aligned}
	\]
	In particular,
	\[
	\begin{aligned}
		&\frac{1}{nd}\sum_{i=1}^n \left\|\hat{\mathbf{Y}}_{i:}(t)-\hat{\mathbf{Y}}_{i:}(s)\right\|^2
		-2\left(\sum_{i=c+1}^T\lambda_i(\hat{\mathbf{E}}_{\mathrm{MV}})^2\right)^{1/2}\\
		&\le \|\hat{\psi}_{\mathrm{MV}}(t)-\hat{\psi}_{\mathrm{MV}}(s)\|^2\\
		&\le \frac{1}{n}\sum_{i=1}^n \left\|\hat{\mathbf{Y}}_{i:}(t)-\hat{\mathbf{Y}}_{i:}(s)\right\|^2
		+2\left(\sum_{i=c+1}^T\lambda_i(\hat{\mathbf{E}}_{\mathrm{MV}})^2\right)^{1/2}.
	\end{aligned}
	\]
\end{theorem}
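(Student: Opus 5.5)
We prove Theorem~\ref{thm:local-global-MV} by splitting it into a generic CMDS truncation bound and an exact identity for the estimated MV distance.

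\textbf{Step 1 (CMDS truncation bound).} Let $\hat{\mathbf D}^{(2)}$ denote the elementwise squared estimated MV distance matrix and $\hat{\mathbf E}=\hat{\mathbf E}_{\mathrm{MV}}=-\tfrac12\mathbf J\hat{\mathbf D}^{(2)}\mathbf J$. The double-centering identity gives, for any $t,s$,
\[
\hat{\mathbf D}^{(2)}_{ts}=\hat{\mathbf E}_{tt}+\hat{\mathbf E}_{ss}-2\hat{\mathbf E}_{ts},
\]
since every diagonal entry of $\hat{\mathbf D}^{(2)}$ vanishes. This holds whether or not $\hat{\mathbf E}$ is positive semidefinite.

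Let $\mathbf Z=(\hat\psi(1),\dots,\hat\psi(T))^\top$ be the CMDS solution. Because $\hat{\mathbf E}$ has at least $c$ positive eigenvalues, $\mathbf Z\mathbf Z^\top$ is the truncation to the top $c$ eigenpairs. Then
\[
\|\hat\psi(t)-\hat\psi(s)\|^2=(\mathbf Z\mathbf Z^\top)_{tt}+(\mathbf Z\mathbf Z^\top)_{ss}-2(\mathbf Z\mathbf Z^\top)_{ts}.
\]
Set $\mathbf R=\hat{\mathbf E}-\mathbf Z\mathbf Z^\top=\sum_{i>c}\lambda_i\bm v_i\bm v_i^\top$. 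The difference we must control equals $(\bm e_t-\bm e_s)^\top\mathbf R(\bm e_t-\bm e_s)$, which in absolute value is at most $\|\bm e_t-\bm e_s\|^2\|\mathbf R\|_2\le 2\|\mathbf R\|_F$. Finally, $\|\mathbf R\|_F=(\sum_{i=c+1}^T\lambda_i(\hat{\mathbf E})^2)^{1/2}$, which yields exactly the residual on the right-hand side of the theorem.

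\textbf{Step 2 (identity for the MV distance).} By Definition~\ref{def:mv},
\[
\hat d_{\mathrm{MV}}(\hat{\mathbf Y}(t),\hat{\mathbf Y}(s))^2=\|\hat{\mathbf M}\|_2=\sup_{\|\bm u\|=1}\bm u^\top\hat{\mathbf M}\bm u=\frac1n\sup_{\|\bm u\|=1}\sum_{i=1}^n\{\hat\Delta_i(t,s)^\top\bm u\}^2,
\]
where the middle equality uses that $\hat{\mathbf M}$ is positive semidefinite. Substituting this into Step 1 gives the first displayed inequality of the theorem.

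\textbf{Step 3 (sandwich).} For a $d\times d$ positive semidefinite matrix, $\tfrac1d\mathrm{tr}\,\hat{\mathbf M}\le\|\hat{\mathbf M}\|_2\le\mathrm{tr}\,\hat{\mathbf M}$. Moreover, $\mathrm{tr}\,\hat{\mathbf M}=\tfrac1n\sum_i\|\hat\Delta_i(t,s)\|^2$. Combining these with the first inequality and the triangle inequality gives both sides of the second display.

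\textbf{Main obstacle.} The only delicate point is Step 1 when $\hat{\mathbf E}$ is indefinite: the estimated MV distances need not be Euclidean. The argument still goes through for two reasons. First, the double-centering identity needs only that $\hat{\mathbf D}^{(2)}$ has zero diagonal, not Euclidean realizability. Second, the discarded eigenvalues, including negative ones, are all absorbed into $\|\mathbf R\|_F$. We must also check that the CMDS minimizer of Definition~\ref{def:sample-diss} keeps exactly the top $c$ positive eigenpairs, which holds by Eckart--Young restricted to positive semidefinite rank-$c$ approximations.
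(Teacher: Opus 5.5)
Your proof is correct and takes essentially the paper's route: the CMDS strain bound $\|\hat{\mathbf E}_{\mathrm{MV}}-\mathbf Z\mathbf Z^\top\|_F=(\sum_{i>c}\lambda_i(\hat{\mathbf E}_{\mathrm{MV}})^2)^{1/2}$, both squared distances written through Gram entries, the variational characterization of $\|\hat{\mathbf M}\|_2$, and the trace/spectral-norm sandwich. The only difference is cosmetic: you bound the discrepancy as the quadratic form $(\bm e_t-\bm e_s)^\top\mathbf R(\bm e_t-\bm e_s)$ via $\|\mathbf R\|_2$ instead of entrywise via Cauchy--Schwarz, which gives a slightly sharper intermediate bound $2\|\mathbf R\|_2$ before you relax it to $2\|\mathbf R\|_F$.
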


\subsection{Metric Properties and Non-Euclidean Realizability}
\label{app:metric_mv}

We show that the quantities $d_{\mathrm{MV}}(\bm{x},\bm{y})$ and $\hat d_{\mathrm{MV}}(\mathbf X,\mathbf Y)$ define metrics on $L^2(\Omega;\mathbb R^d)$ (up to almost sure equivalence) and $\mathbb R^{n\times d}$, respectively. The argument follows that of \citet{athreya2025euclidean}, but is considerably simpler since no Procrustes alignment is required in the present setting.

\begin{proof}
	Non-negativity and symmetry are immediate.

	\textbf{Identity of indiscernibles.}
	For any \(\bm x,\bm y\in L^2(\Omega;\mathbb R^d)\),
	\[
	\begin{aligned}
		d_{\mathrm{MV}}(\bm x,\bm y)=0
		&\iff \forall\,\bm u\in\mathbb R^d,\ 
		\|(\bm x-\bm y)^\top \bm u\|_{L^2(\Omega;\mathbb R)} = 0 \\
		&\iff \forall\,\bm u\in\mathbb R^d,\ 
		\bm u^\top(\bm x-\bm y)=0 \quad \text{almost surely} \\
		&\iff \bm x=\bm y \quad \text{almost surely}.
	\end{aligned}
	\]
	
	For \(\mathbf X,\mathbf Y\in\mathbb R^{n\times d}\), it follows immediately that
	\[
	\hat d_{\mathrm{MV}}(\mathbf X,\mathbf Y)=0 \iff \|\mathbf X-\mathbf Y\|_2=0 \iff \mathbf X=\mathbf Y.
	\]
	
	\textbf{Triangle inequality.}
	Let \(\bm x,\bm y,\bm z\in L^2(\Omega;\mathbb R^d)\).
	For any unit vector \(\bm u\in\mathbb R^d\) with \(\|\bm u\|_2=1\), Minkowski's inequality in \(L^2(\Omega;\mathbb R)\) yields
	\[
	\|(\bm x-\bm z)^\top \bm u\|_{L^2(\Omega;\mathbb R)} \le \|(\bm x-\bm y)^\top \bm u\|_{L^2(\Omega;\mathbb R)} + \|(\bm y-\bm z)^\top \bm u\|_{L^2(\Omega;\mathbb R)}.
	\]
	Taking the supremum over all such unit vectors \(\bm u\), we obtain
	\[
	d_{\mathrm{MV}}(\bm x,\bm z)
	\le
	d_{\mathrm{MV}}(\bm x,\bm y)
	+
	d_{\mathrm{MV}}(\bm y,\bm z).
	\]
	
	For \(\mathbf X,\mathbf Y,\mathbf Z\in\mathbb R^{n\times d}\), the triangle inequality of the spectral norm implies
	\[
	\hat d_{\mathrm{MV}}(\mathbf X,\mathbf Z) = \frac{1}{n^{1/2}}\|\mathbf X-\mathbf Z\|_2
	\le \frac{1}{n^{1/2}}\|\mathbf X-\mathbf Y\|_2 + \frac{1}{n^{1/2}}\|\mathbf Y-\mathbf Z\|_2
	= \hat d_{\mathrm{MV}}(\mathbf X,\mathbf Y) + \hat d_{\mathrm{MV}}(\mathbf Y,\mathbf Z).
	\]
\end{proof}

However, MV distances used in the Euclidean Mirror construction of \citet{athreya2025euclidean} on a finite set cannot, in general, be realized exactly by any trajectory in Euclidean space. Here is a concrete example.

Fix latent dimension $d=2$ and time point set $\mathcal{T}=\{1,2,3,4\}$.
Let $(\Omega,\mathcal{F},\mathbb{P})$ be given by
$\Omega = \{\omega_1,\omega_2\}, 
\mathbb{P}(\{\omega_1\}) = \mathbb{P}(\{\omega_2\}) = 0.5$ 
Define random vectors $\chi \in L^2(\Omega;\mathbb{R}^2), \phi(t) \in L^2(\Omega;\mathbb{R}^2), \; t\in\mathcal{T}$ by specifying their realizations as follows:
\[
\chi(\omega_1) = (\sqrt{2},0), 
\qquad 
\chi(\omega_2) = (0,\sqrt{2}),
\]
and for $t\in\{1,2,3,4\}$,
\[
\begin{aligned}
	\phi_1(\omega_1) &= (1/8,0), & \phi_1(\omega_2) &= (0,1/8), \\
	\phi_2(\omega_1) &= (1/8,0), & \phi_2(\omega_2) &= (0,1/4), \\
	\phi_3(\omega_1) &= (1/4,0), & \phi_3(\omega_2) &= (0,1/8), \\
	\phi_4(\omega_1) &= (1/8,0), & \phi_4(\omega_2) &= (0,3/8).
\end{aligned}
\]
Then $(\chi,\{\phi(t)\}_{t\in\mathcal{T}})$ satisfies the assumptions in Definition \ref{def:pop}.

The population distance matrix is
\[
\mathbf D_\phi = \frac{1}{8\sqrt{2}}\begin{pmatrix}
	0 & 1 & 1 & 2 \\
	1 & 0 & 1 & 1 \\
	1 & 1 & 0 & 2 \\
	2 & 1 & 2 & 0 
\end{pmatrix},
\]
thus the exact eigenvalues of the centered Gram matrix are:
\[
\lambda_1 = \frac{5 + 3\sqrt{3}}{512}, \quad
\lambda_2 = \frac{1}{256}, \quad
\lambda_3 = -\frac{3\sqrt{3} - 5}{512}, \quad
\lambda_4 = 0
\]
Since it has a negative eigenvalue $\lambda_3$, it is not positive semidefinite. Therefore, this distance matrix cannot be embedded in a Euclidean space.


\subsection{Interpretive Limitations of the MV Distance}
\label{app:limitation_mv}

The preceding results show that MV based trajectories are statistically well behaved under the canonical normalization. Nevertheless, the MV distance has intrinsic limitations for the multiscale goals of this paper.

First, MV depends only on the largest eigenvalue of the second-moment matrix of latent position differences. It therefore summarizes each pairwise temporal comparison by the magnitude of its most pronounced directional variation. This makes MV useful for detecting dominant changes, but it discards subordinate directions that may correspond to distinct structural events. Consequently, higher CMDS coordinates obtained from MV distances generally lack a stable mode-wise interpretation, since the distance itself does not identify persistent lower order directions of variation.

Second, MV provides weaker attribution. The node-level to trajectory-level control in Theorem~\ref{thm:local-global-MV} relates MV trajectory distances to node-level displacement only through directions optimized separately for each time pair $(t,s)$. Because these maximizing directions can vary across time pairs, the resulting attribution is not temporally consistent. The alternative uniform comparison is also loose in higher dimensions due to the factor $1/d$. Thus MV does not support the kind of stable node-level decomposition needed for interpretable temporal analysis.

Third, although $d_{\mathrm{MV}}(\cdot,\cdot)$ and $\hat d_{\mathrm{MV}}(\cdot,\cdot)$ are valid metrics, Section~\ref{app:metric_mv} shows that the induced distance matrix over a finite set of time points need not be Euclidean. Therefore, CMDS may only provide an approximate realization of MV distances, and the resulting coordinates should be interpreted as an approximation to the MV distance structure rather than as an exact temporal geometry.

These limitations motivate the trace variation and mode-wise variation distances developed in the main text, which retain the full second-moment structure, admit stable orthogonal decompositions, and provide sharper attributional interpretations.

\section{Consistency of Multiscale Euclidean Trajectories}
\label{app:multiscale-consistency}

This section applies the abstract error propagation framework of Appendix~\ref{app:from_distance} to the trace variation and mode-wise variation distances used in the main text. The only points to verify are the two uniform controls in Assumptions~\ref{assump:phi-Y} and~\ref{assump:Y-hatY}. Once these are established, trajectory consistency follows directly from Theorem~\ref{thm:psi-hatpsi}.

\subsection{Application to the trace variation distance}

We begin by defining the trace variation distance and its finite sample counterpart.

\begin{definition}[Trace variation distance and its finite sample counterpart]
	Let $\bm{x},\bm{y}\in L^2(\Omega;\mathbb{R}^d)$.
	The trace variation (TV) distance between $\bm{x}$ and $\bm{y}$ is defined as
	\[
	d_{\mathrm{TV}}(\bm{x},\bm{y}) := \left(\mathrm{tr} \mathbf M(\bm x,\bm y)\right)^{1/2} = \|\bm x-\bm y\|_{L^2(\Omega;\mathbb R^d)}.
	\]
	Let $\mathbf{X},\mathbf{Y}\in\mathbb{R}^{n\times d}$.
	The finite sample trace variation (TV) distance between $\mathbf X, \mathbf Y\in \mathbb R^{n\times d}$ is defined as
	\[
	\hat d_{\mathrm{TV}}(\mathbf{X},\mathbf{Y})
	:=\left(\mathrm{tr}\hat{\mathbf M}\left(\mathbf X,\mathbf Y\right)\right)^{1/2} = \frac{1}{n^{1/2}}\|\mathbf X-\mathbf Y\|_F.
	\]
\end{definition}

By construction, $d_{\mathrm{TV}}(\cdot,\cdot)$ and $\hat d_{\mathrm{TV}}(\cdot,\cdot)$ coincide with the $L^2(\Omega;\mathbb R^d)$-induced distance and scaled Frobenius distances, respectively. Consequently, they define metrics on $L^2(\Omega;\mathbb R^d)$ (up to almost sure equivalence) and on $\mathbb R^{n\times d}$. Since both distances are induced by inner products, the corresponding Gram matrices are positive semidefinite. As a consequence, there exists a Euclidean Mirror of sufficiently high dimension such that these distances are exactly realized.

We first verify the two abstract assumptions for the trace variation distance.

\begin{proposition}\label{prop:phi-Y-TV}
	With the distance $d_{\mathrm{TV}}(\cdot,\cdot)$ and $\hat d_{\mathrm{TV}}(\cdot,\cdot)$,
	Assumption~\ref{assump:phi-Y} is satisfied with $f(n)=n^{-1/2}\log n$.
\end{proposition}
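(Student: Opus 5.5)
We need to show that, uniformly over the finitely many pairs $(t,s)\in\mathcal T\times\mathcal T$,
\[
\bigl|\hat d_{\mathrm{TV}}(\mathbf Y(t),\mathbf Y(s))^2-d_{\mathrm{TV}}(\phi(t),\phi(s))^2\bigr|=\mathcal O(n^{-1/2}\log n)\quad\text{a.s.}
\]
The plan is to write both sides as traces of second-moment matrices and reduce the problem to concentration of an empirical mean of i.i.d. bounded scalars. By definition,
\[
\hat d_{\mathrm{TV}}(\mathbf Y(t),\mathbf Y(s))^2=\frac1n\sum_{i=1}^n\|\mathbf Y_{i:}(t)-\mathbf Y_{i:}(s)\|^2 .
\]
Since the rows of $(\mathbf X,\{\mathbf Y(t)\})$ are i.i.d. copies of $(\chi,\{\phi(t)\})$, the rows of $\mathbf Y(t)-\mathbf Y(s)$ are i.i.d. copies of $\phi(t)-\phi(s)$. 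Set
\[
Z_i:=\|\mathbf Y_{i:}(t)-\mathbf Y_{i:}(s)\|^2 .
\]
The $Z_i$ are i.i.d. scalars with mean $\mathrm{tr}\,\mathbf M_\phi(t,s)=d_{\mathrm{TV}}(\phi(t),\phi(s))^2$, which is finite.

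The key quantitative input is the standing assumption that $\phi(t)$ has bounded support. Say $\|\phi(t)\|\le C$ a.s. for all $t$. Then $0\le Z_i\le 4C^2$ almost surely, and Hoeffding's inequality gives, for every $\epsilon>0$,
\[
\Pr\Bigl(\Bigl|\tfrac1n\textstyle\sum_i Z_i-\mathbb E Z_1\Bigr|>\epsilon\Bigr)\le 2\exp\!\bigl(-2n\epsilon^2/(16C^4)\bigr).
\]
We would take $\epsilon=c_A n^{-1/2}\log n$. The right-hand side then becomes $2\exp(-c_A^2(\log n)^2/(8C^4))$. For any fixed $A>0$ this is at most $n^{-A}/T^2$ for all large $n$, since $(\log n)^2$ eventually dominates $A\log n+\log(2T^2)$; any constant $c_A>0$ works. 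This is in fact slightly stronger than what is needed, because a rate of $\sqrt{(\log n)/n}$ would already suffice. We state the weaker rate $n^{-1/2}\log n$ to match the convention of Proposition~\ref{prop:phi-Y-MV}.

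Uniformity over $t,s$ follows from a union bound over the $T^2$ pairs. Since $T$ is fixed, this costs only the factor $T^2$, which the choice of $n$ above already absorbs. The resulting bound holds with probability at least $1-n^{-A}$, which is the overwhelming-probability notion of $\mathcal O(\cdot)$ a.s. defined in Appendix~\ref{app:guide}.

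There is no real obstacle here. The only point needing care is making sure the boundedness assumption is applied to the differences $\phi(t)-\phi(s)$, which it is, so that a sub-Gaussian tail bound is available with constants independent of $n$.
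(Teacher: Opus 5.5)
Your proof is correct, but it takes a different route from the paper's.

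\textbf{The paper's route.} The paper never works with the scalars $Z_i$ directly. It writes
\[
\bigl|\hat d_{\mathrm{TV}}(\mathbf Y(t),\mathbf Y(s))^2-d_{\mathrm{TV}}(\phi(t),\phi(s))^2\bigr|=\bigl|\mathrm{tr}\bigl(\hat{\mathbf M}_{\mathbf Y}(t,s)-\mathbf M_\phi(t,s)\bigr)\bigr|\le d\,\bigl\|\hat{\mathbf M}_{\mathbf Y}(t,s)-\mathbf M_\phi(t,s)\bigr\|_2,
\]
and then invokes Lemma~\ref{lem:sn-bound}. That lemma's operator-norm bound comes from the matrix Bernstein inequality applied to the centered rank-one summands $(\mathbf Y_{i:}(t)-\mathbf Y_{i:}(s))(\mathbf Y_{i:}(t)-\mathbf Y_{i:}(s))^\top-\mathbf M_\phi(t,s)$. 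Finiteness is handled the same way, via $\mathrm{tr}\,\mathbf M_\phi(t,s)\le d\,\|\mathbf M_\phi(t,s)\|_2$.

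\textbf{What your route buys.} Your scalar Hoeffding argument on the trace is more elementary and self-contained. It avoids matrix concentration and the (harmless) factor $d$. As you note, it also gives the sharper rate $\sqrt{(\log n)/n}$.

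\textbf{What the paper's route buys.} It is economical across the three distances. The same matrix-level lemma is needed anyway for the MV distance, which is a spectral norm. It is needed even more for the mode-wise distances, where the direction $\hat{\bm u}_k$ is itself estimated from $\hat{\mathcal M}$ and must be controlled by Davis--Kahan. A scalar concentration bound along a fixed direction would not suffice there. Once that lemma is in place, the TV case follows in two lines.
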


This proposition shows that the finite sample trace variation distance computed from sampled latent positions concentrates uniformly around its population counterpart at the stated rate.

We next verify the embedding level stability condition under canonical normalization. The anchor isotropy assumption ensures that the modified unfolded adjacency spectral embedding is identifiable up to an orthogonal transformation, which is sufficient for preserving Euclidean distance geometry.

\begin{proposition}\label{prop:uase-TV-whitened}
	Assume that Assumption~\ref{assump:isotropy} holds, and that the dynamic embeddings $\hat{\mathbf{Y}}(t)$ are obtained via modified UASE.
	Then, with the distance $d_{\mathrm{TV}}(\cdot,\cdot)$ and
	$\hat d_{\mathrm{TV}}(\cdot,\cdot)$, the embeddings satisfy Assumption~\ref{assump:Y-hatY} with
	$g(n)=n^{-1/2}\log n$.
\end{proposition}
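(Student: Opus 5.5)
The plan is to reduce the claim to Theorem~\ref{thm:modified-uase-paper} and to orthogonal invariance of the Frobenius norm. Write $\hat{\mathbf Y}(t)=\hat{\mathbf Y}_{\mathrm{mod}}(t)$ and let $\mathbf W\in\mathbb O(d)$ be the orthogonal matrix from that theorem. It is shared across all $t\in\mathcal T$, which is essential. Set $\mathbf E(t):=\hat{\mathbf Y}(t)-\mathbf Y(t)\mathbf W$, so that $\sup_t\|\mathbf E(t)\|_2=\mathcal O(\log n)$ almost surely. Since $d$ is fixed, $\|\mathbf E(t)\|_F\le d^{1/2}\|\mathbf E(t)\|_2=\mathcal O(\log n)$ uniformly in $t$.

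For a fixed pair $(t,s)$, put $\mathbf B=\mathbf Y(t)-\mathbf Y(s)$ and $\boldsymbol\Delta=\mathbf E(t)-\mathbf E(s)$. Then $\hat{\mathbf Y}(t)-\hat{\mathbf Y}(s)=\mathbf B\mathbf W+\boldsymbol\Delta$. Because $\|\mathbf B\mathbf W\|_F=\|\mathbf B\|_F$,
\[
n\bigl(\hat d_{\mathrm{TV}}(\hat{\mathbf Y}(t),\hat{\mathbf Y}(s))^2-\hat d_{\mathrm{TV}}(\mathbf Y(t),\mathbf Y(s))^2\bigr)
=2\,\mathrm{tr}\bigl(\mathbf W^\top\mathbf B^\top\boldsymbol\Delta\bigr)+\|\boldsymbol\Delta\|_F^2 .
\]
By Cauchy--Schwarz, the absolute value of this quantity is at most $2\|\mathbf B\|_F\|\boldsymbol\Delta\|_F+\|\boldsymbol\Delta\|_F^2$.

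The main step is to control $\|\mathbf B\|_F$. The latent positions have bounded support, so each row of $\mathbf Y(t)$ has norm bounded by a constant $C$. Hence $\|\mathbf B\|_F\le 2Cn^{1/2}$ deterministically, and no concentration argument is needed. Alternatively, one can use Proposition~\ref{prop:phi-Y-TV}, which gives $\|\mathbf B\|_F^2/n=d_{\mathrm{TV}}(\phi(t),\phi(s))^2+o(1)$. Combining this with $\|\boldsymbol\Delta\|_F=\mathcal O(\log n)$, the difference of squared distances is
\[
\frac1n\Bigl(\mathcal O(n^{1/2}\log n)+\mathcal O((\log n)^2)\Bigr)=\mathcal O(n^{-1/2}\log n).
\]

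Uniformity over $(t,s)\in\mathcal T^2$ is immediate. The supremum in Theorem~\ref{thm:modified-uase-paper} is already over $t$, and $T$ is fixed, so a union bound preserves the $1-n^{-A}$ overwhelming-probability guarantee up to adjusting constants. This yields Assumption~\ref{assump:Y-hatY} with $g(n)=n^{-1/2}\log n$.

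The only real obstacle is Theorem~\ref{thm:modified-uase-paper} itself, namely that the alignment is orthogonal rather than $\mathrm{GL}(d)$ and is common across all snapshots. Given that theorem, the argument above is routine. If $\mathbf W$ were only in $\mathrm{GL}(d)$, the identity $\|\mathbf B\mathbf W\|_F=\|\mathbf B\|_F$ would fail, and a non-vanishing bias would remain.
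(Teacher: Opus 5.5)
Your proof is correct and follows essentially the same route as the paper. Both arguments rely on the shared orthogonal alignment from Theorem~\ref{thm:modified-uase-paper}, on orthogonal invariance of the trace and Frobenius norm, and on a first-order perturbation expansion whose cross term is bounded by $\|\mathbf Y(t)-\mathbf Y(s)\|=O(n^{1/2})$ times the $\mathcal O(\log n)$ embedding error. The differences are cosmetic: the paper passes through the spectral-norm bound on second-moment matrices in Corollary~\ref{cor:jones} and then uses $|\mathrm{tr}(\cdot)|\le d\|\cdot\|_2$, whereas you expand $\|\cdot\|_F^2$ directly and bound $\|\mathbf B\|_F$ by bounded support rather than by Lemma~\ref{lem:sn-bound}.
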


Together, Propositions~\ref{prop:phi-Y-TV} and~\ref{prop:uase-TV-whitened} verify the two abstract assumptions required to control the error of the Euclidean trajectory constructed from trace variation distances. Applying Theorem~\ref{thm:psi-hatpsi} with these rates yields the following consequence. The following corollary is the first part of Theorem~\ref{thm:trajectory-consistency-main} in the main text.

\begin{corollary} \label{cor:TV}
	Assume that Assumption~\ref{assump:isotropy} holds, and that the dynamic embeddings $\hat{\mathbf{Y}}(t)$ are obtained via modified UASE.
	Suppose $\mathbf{E}_{\phi}$ is a positive semidefinite matrix of rank $c$.
	Then, there exists a possibly random orthogonal matrix $\mathbf{R}\in\mathbb O(c)$ such that
	\[
	\sum_{t=1}^T
	\|\hat{\psi}_{\mathrm{TV}}(t) - \mathbf{R}\psi_{\mathrm{TV}}(t)\|^2
	=
	O\left( \frac{(\log n)^2}{n}\right) \qquad \text{a.s.}
	\]  
\end{corollary}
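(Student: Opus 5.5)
The plan is to apply the template of Theorem~\ref{thm:psi-hatpsi} with the trace variation distance, taking $a=1$ and $b=r=c$. The only inputs the template needs are Propositions~\ref{prop:phi-Y-TV} and~\ref{prop:uase-TV-whitened}, which together give Assumptions~\ref{assump:phi-Y} and~\ref{assump:Y-hatY} with $f(n)=g(n)=n^{-1/2}\log n$. Both rates tend to zero, so the hypothesis $f(n),g(n)\to 0$ of Theorem~\ref{thm:psi-hatpsi} holds.

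The first step is to check the spectral hypotheses of Theorem~\ref{thm:psi-hatpsi}. By assumption $\mathbf E_\phi$ is positive semidefinite of rank $c$, so it has exactly $r=c$ positive eigenvalues; here $\mathbf E_\phi$ is the centered Gram matrix built from $\mathbf D_{\mathrm{TV}}$. Taking $(a,b)=(1,c)$, the conventions $\lambda_0=\infty$ and $\lambda_{c+1}=-\infty$ give
\[
\mathrm{gap}(1,c)=\min\{\infty,\infty\}>0,
\]
so the gap condition is automatic. The conditioning number is then $\kappa(1,c)=\lambda_1/\lambda_c$, which is finite because $\lambda_c>0$.

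The template (through Proposition~\ref{prop:psi-hatpsi-D}) also needs $\hat{\mathbf E}_{\hat{\mathbf Y}}$ to have at least $c$ positive eigenvalues. This follows from eigenvalue perturbation. Proposition~\ref{prop:D-rho-hatY} bounds $\|\hat{\mathbf D}^{(2)}_{\hat{\mathbf Y}}-\mathbf D^{(2)}_\phi\|_F=\mathcal O(n^{-1/2}\log n)$ a.s., and since $\|\mathbf J\|_2=1$,
\[
\|\hat{\mathbf E}-\mathbf E_\phi\|_2\le \tfrac12\|\hat{\mathbf D}^{(2)}_{\hat{\mathbf Y}}-\mathbf D^{(2)}_\phi\|_F .
\]
Weyl's inequality then gives $\lambda_c(\hat{\mathbf E})\ge \lambda_c(\mathbf E_\phi)-o(1)>0$ for all sufficiently large $n$, on the same overwhelming-probability event. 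The estimated trajectory $\hat\psi_{\mathrm{TV}}$ is therefore well defined in $\mathbb R^c$ on that event.

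With these hypotheses in place, Theorem~\ref{thm:psi-hatpsi} provides $\mathbf R\in\mathbb O(c)$ with
\[
\Bigl(\sum_{t=1}^T\|\hat\psi_{\mathrm{TV}}(t)-\mathbf R\psi_{\mathrm{TV}}(t)\|^2\Bigr)^{1/2}=\mathcal O(n^{-1/2}\log n)\quad\text{a.s.}
\]
Squaring gives the claimed $O((\log n)^2/n)$ rate. Squaring is legitimate under the overwhelming-probability convention: for each $A$, the event $\{X_n\le c_A f(n)\}$ is the same as $\{X_n^2\le c_A^2 f(n)^2\}$, so only the constant changes.

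The main obstacle is not this assembly, which is routine. It lies in the two inputs, especially Proposition~\ref{prop:uase-TV-whitened}. There one has to pass from the operator-norm bound of Theorem~\ref{thm:modified-uase-paper} to a Frobenius-type bound on
\[
\tfrac1n\bigl|\|\hat{\mathbf Y}(t)-\hat{\mathbf Y}(s)\|_F^2-\|\mathbf Y(t)-\mathbf Y(s)\|_F^2\bigr| .
\]
The step I would use is the following. Because $\mathbf W$ is a single orthogonal matrix shared across all $t$, the Frobenius norm is invariant under it, so the comparison can be made directly against $(\mathbf Y(t)-\mathbf Y(s))\mathbf W$. Since all matrices have rank at most $d$, each Frobenius norm is at most $\sqrt{2d}$ times the corresponding operator norm. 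The difference of squares then factors as (sum of norms) times (difference of norms), which is
\[
\mathcal O(n^{1/2})\cdot\mathcal O(\log n)\cdot n^{-1}=\mathcal O(n^{-1/2}\log n).
\]
If those propositions are taken as given, as the statement permits, the corollary follows immediately.
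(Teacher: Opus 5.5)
Your proposal is correct and follows the paper's argument: the paper also obtains the corollary by plugging the rates $f(n)=g(n)=n^{-1/2}\log n$ from Propositions~\ref{prop:phi-Y-TV} and~\ref{prop:uase-TV-whitened} into Theorem~\ref{thm:psi-hatpsi} with $(a,b)=(1,c)$, and then squaring. Your Weyl-inequality check that $\hat{\mathbf E}_{\mathrm{TV}}$ eventually has at least $c$ positive eigenvalues is a useful addition, since it makes explicit a hypothesis of Proposition~\ref{prop:psi-hatpsi-D} that the statement of Theorem~\ref{thm:psi-hatpsi} leaves implicit.
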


This corollary establishes strong consistency of the trace variation Euclidean trajectory: up to an orthogonal transformation, the estimated trajectory coordinates converge almost surely to their population counterparts at an explicit rate. As a result, the trace trajectory provides a stable and geometrically meaningful representation of global temporal network evolution.

\subsection{Application to the mode-wise variation distances}

We now verify the same abstract assumptions for the mode-wise variation distances. Compared to the trace variation case, the main additional feature is that the distance $d_k(\cdot,\cdot)$ and its finite sample counterpart $\hat d_k(\cdot,\cdot)$ explicitly depend on the choice of orthonormal bases $\{\bm u_k\}_{k=1}^d$ and $\{\hat{\bm u}_k\}_{k=1}^d$.  The selection of an appropriate orthonormal basis, along with its finite sample counterpart is crucial, and the eigenvectors of the aggregated second-moment matrix is a natural and principled choice.

We begin by defining the mode-wise variation distance and its finite sample counterpart.

\begin{definition}[Mode-wise variation distance and its finite sample counterpart]
	Fix an orthonormal basis \(\{\bm u_k\}_{k=1}^d\) of $\mathbb R^d$.
	Let $\bm{x},\bm{y}\in L^2(\Omega;\mathbb{R}^d)$.
	The $k$th mode-wise variation distance between $\bm x$ and $\bm y$ is defined as
	\[
	d_k(\bm x,\bm y) := \left(\bm u_k^\top \mathbf M(\bm x,\bm y)\bm u_k\right)^{1/2} = \|(\bm x-\bm y)^\top \bm u_k\|_{L^2(\Omega;\mathbb R)}.
	\]
	Similarly, fix an orthonormal basis \(\{\hat{\bm u}_k\}_{k=1}^d\).
	Let $\mathbf{X},\mathbf{Y}\in\mathbb{R}^{n\times d}$.
	The finite sample $k$th mode-wise variation distance between $\mathbf X, \mathbf Y\in \mathbb R^{n\times d}$ is defined as
	\[
	\hat d_k(\mathbf{X},\mathbf{Y}) := \left(\hat{\bm u}_k^\top \hat{\mathbf M}(\mathbf X,\mathbf Y)\hat{\bm u}_k\right)^{1/2} = \frac{1}{n^{1/2}}\|(\mathbf X-\mathbf Y)\hat{\bm u}_k\|.
	\]
\end{definition}

By construction, $d_k(\cdot,\cdot)$ and $\hat d_k(\cdot,\cdot)$ define pseudometrics on $L^2(\Omega;\mathbb R^d)$ and on $\mathbb R^{n\times d}$, respectively.
Specifically,
\[
\begin{aligned}
	&d_k(\bm{x},\bm{y}) = 0 \quad\Leftrightarrow \quad (\bm x-\bm y)^\top \bm u_k = 0 \ \text{almost surely},   \\
	&\hat d_k(\mathbf X,\mathbf Y) = 0 \quad \Leftrightarrow \quad (\mathbf X-\mathbf Y)\hat{\bm u}_k = 0.
\end{aligned}
\]
As with the trace variation distance, for each mode $k$, there exists a Euclidean Mirror of sufficiently high dimension such that these distances are exactly realized.


\begin{definition}[Aggregated directional variation basis and its finite sample counterpart]
	Fix a finite set of time points $\mathcal T=\{1,\dots,T\}$ and a collection of time pairs $\mathcal S\subseteq\mathcal T\times\mathcal T$. 
	Here $\mathcal S$ is user-specified (e.g., all pairs, adjacent pairs, or pairs within a temporal window) and determines the global orthogonal mode basis.
	
	For each $t\in\mathcal T$, let $\bm y(t)\in L^2(\Omega;\mathbb R^d)$ be a random vector. 
	Define the aggregated second-moment matrix
	\[
	\mathcal M := \sum_{(t,s)\in\mathcal S}\mathbf M(\bm y(t),\bm y(s))\in\mathbb R^{d\times d}.
	\]
	Assuming that $\mathcal M$ has $d$ positive, separated eigenvalues, we set $\bm u_k$ as the eigenvector corresponding to the $k$th largest eigenvalue.
	
	Similarly, for each $t\in\mathcal T$, let $\mathbf Y(t)\in \mathbb R^{n\times d}$. 
	Define the finite sample aggregated second-moment matrix
	\[
	\hat{\mathcal M} := \sum_{(t,s)\in\mathcal S}\hat{\mathbf M}\left(\mathbf Y(t),\mathbf Y(s)\right)\in\mathbb R^{d\times d}.
	\]
	Assuming that $\hat{\mathcal M}$ has $d$ positive, separated eigenvalues, we set $\hat{\bm u}_k$ as the eigenvector corresponding to the $k$th largest eigenvalue.
	Note that \(\hat{\bm u}_k\) are left singular vectors of the $d \times n\lvert\mathcal{S}\rvert$ matrix constructed by vertically concatenating \(\frac{1}{n^{1/2}}\left(\mathbf{Y}(t)-\mathbf{Y}(s)\right)\) for all \((t,s)\in\mathcal{S}\).
\end{definition}

Henceforth, we adopt this orthonormal basis $\{\bm u_k\}_{k=1}^d$ and its finite sample counterpart $\{\hat{\bm u}_k\}_{k=1}^d$ for the mode-wise variational distances.

We first verify the population to sample stability condition for the mode-wise variation distances.

\begin{proposition}\label{prop:phi-Y-kV}
	With the distance $d_k(\cdot,\cdot)$ and $\hat d_k(\cdot,\cdot)$, Assumption~\ref{assump:phi-Y} is satisfied with $f(n)=n^{-1/2}\log n$.
\end{proposition}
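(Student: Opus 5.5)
The goal is to show that, uniformly over the finitely many pairs $(t,s)$,
\[
\bigl|\hat{\bm u}_k^\top\hat{\mathbf M}_{\mathbf Y}(t,s)\hat{\bm u}_k-\bm u_k^\top\mathbf M_\phi(t,s)\bm u_k\bigr| = \mathcal O(n^{-1/2}\log n)\quad\text{a.s.}
\]
Here $\hat{\bm u}_k$ is the eigenvector of the sample aggregate $\hat{\mathcal M}$ built from the true latent positions $\mathbf Y(t)$, and $\bm u_k$ is the eigenvector of $\mathcal M_\phi$.

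Since $T$ is fixed, it suffices to handle each pair and then take a union bound. For a fixed pair I would split the difference into two pieces:
\[
\hat{\bm u}_k^\top(\hat{\mathbf M}_{\mathbf Y}-\mathbf M_\phi)\hat{\bm u}_k
+\bigl(\hat{\bm u}_k^\top\mathbf M_\phi\hat{\bm u}_k-\bm u_k^\top\mathbf M_\phi\bm u_k\bigr).
\]
The first piece is bounded by $\|\hat{\mathbf M}_{\mathbf Y}(t,s)-\mathbf M_\phi(t,s)\|_2$.

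\textbf{Matrix concentration.} Write $\hat{\mathbf M}_{\mathbf Y}(t,s)$ as an average of $n$ i.i.d. bounded rank-one matrices $\Delta_i\Delta_i^\top$, where $\Delta_i=\mathbf Y_{i:}(t)-\mathbf Y_{i:}(s)$. Boundedness holds because $\chi$ and $\phi(t)$ have bounded support. The mean of $\Delta_i\Delta_i^\top$ is $\mathbf M_\phi(t,s)$.

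Apply Hoeffding's inequality entrywise to the $d^2$ entries, or alternatively matrix Bernstein. This gives, for any $A>0$,
\[
\|\hat{\mathbf M}_{\mathbf Y}(t,s)-\mathbf M_\phi(t,s)\|_2\le c_A n^{-1/2}\log n
\]
with probability at least $1-n^{-A}$. The $\log n$ factor comfortably absorbs the $\sqrt{\log n}$ that the tail actually requires.

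This is the same argument that presumably underlies Propositions~\ref{prop:phi-Y-MV} and~\ref{prop:phi-Y-TV}, so I would reuse it. Summing over the at most $T^2$ pairs in $\mathcal S$ then gives $\|\hat{\mathcal M}-\mathcal M_\phi\|_2=\mathcal O(n^{-1/2}\log n)$.

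\textbf{Eigenvector perturbation.} This step is the main obstacle. Since $\mathcal M_\phi$ has simple, separated eigenvalues, apply Davis--Kahan to the one-dimensional eigenspace for $\lambda_k(\mathcal M_\phi)$. Let $\delta_k>0$ be the minimal gap between $\lambda_k$ and its neighbouring eigenvalues; it is fixed and independent of $n$. Then
\[
\min_{\sigma\in\{\pm1\}}\|\hat{\bm u}_k-\sigma\bm u_k\|\le \frac{2\|\hat{\mathcal M}-\mathcal M_\phi\|_2}{\delta_k}=\mathcal O(n^{-1/2}\log n).
\]
For large $n$, Weyl's inequality guarantees that the eigenvalues of $\hat{\mathcal M}$ are also separated, so $\hat{\bm u}_k$ is well defined.

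The sign ambiguity is harmless because the quadratic form $\bm u^\top\mathbf M\bm u$ is invariant under $\bm u\mapsto-\bm u$. Replacing $\hat{\bm u}_k$ by $\sigma\hat{\bm u}_k$ changes nothing, so we may take $\sigma=1$.

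\textbf{Bounding the second piece.} Write $\bm v=\hat{\bm u}_k-\bm u_k$. Then
\[
|\hat{\bm u}_k^\top\mathbf M_\phi\hat{\bm u}_k-\bm u_k^\top\mathbf M_\phi\bm u_k|\le 2\|\mathbf M_\phi(t,s)\|_2\|\bm v\|+\|\mathbf M_\phi(t,s)\|_2\|\bm v\|^2.
\]
The norm $\|\mathbf M_\phi(t,s)\|_2$ is bounded by a constant, again by bounded support, so this piece is $\mathcal O(n^{-1/2}\log n)$.

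\textbf{Combining.} Add the two pieces and intersect the finitely many high-probability events, one per pair plus the eigengap event. The failure probability is at most a constant times $n^{-A}$, and replacing $A$ by a slightly larger value absorbs the constant. The supremum over $t,s$ is therefore $\mathcal O(n^{-1/2}\log n)$ almost surely, which verifies Assumption~\ref{assump:phi-Y} with $f(n)=n^{-1/2}\log n$.

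The delicate points are the dependence of the perturbation constant on the fixed population eigengap $\delta_k$, and handling the sign ambiguity of $\hat{\bm u}_k$. Neither affects the rate.
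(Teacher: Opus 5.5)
Your proposal is correct and follows essentially the same route as the paper: concentrate $\hat{\mathcal M}$ via the entrywise/matrix-Bernstein bound of Lemma~\ref{lem:sn-bound} summed over $\mathcal S$, apply Davis--Kahan with the fixed population eigengap of $\mathcal M_\phi$, and split the quadratic-form difference into a matrix-perturbation term and an eigenvector-perturbation term. The differences are cosmetic. Your split controls the eigenvector term through $\|\mathbf M_\phi(t,s)\|_2$ alone, whereas the paper uses $\|\mathbf M_\phi\|_2+\|\hat{\mathbf M}_{\mathbf Y}\|_2$. Also, the Davis--Kahan constant for $\min_{\sigma}\|\hat{\bm u}_k-\sigma\bm u_k\|$ should be $2^{3/2}$, not $2$ (the factor $2$ bounds $\sin\theta$), which does not affect the rate.
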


This result shows that, for each fixed mode $k$, the finite sample mode-wise variation distance concentrates uniformly around its population counterpart at the same rate as the trace variation distance.

We next verify the embedding level stability condition under canonical normalization.

\begin{proposition}\label{prop:uase-kV-whitened}
	Assume that Assumption~\ref{assump:isotropy} holds, and that the dynamic embeddings $\hat{\mathbf{Y}}(t)$ are obtained via modified UASE.
	Then, with the distance $d_k(\cdot,\cdot)$ and $\hat d_k(\cdot,\cdot)$, the
	embeddings satisfy Assumption~\ref{assump:Y-hatY} with
	$g(n)=n^{-1/2}\log n$.
\end{proposition}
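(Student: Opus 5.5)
We need to bound, uniformly over the finitely many pairs $(t,s)$,
\[
\bigl|\hat d_k(\hat{\mathbf Y}(t),\hat{\mathbf Y}(s))^2-\hat d_k(\mathbf Y(t),\mathbf Y(s))^2\bigr|
=\Bigl|\hat{\bm u}_k^{\top}\hat{\mathbf M}_{\hat{\mathbf Y}}(t,s)\hat{\bm u}_k-\tilde{\bm u}_k^{\top}\hat{\mathbf M}_{\mathbf Y}(t,s)\tilde{\bm u}_k\Bigr|.
\]
Here $\hat{\bm u}_k$ is the $k$th eigenvector of the aggregated matrix $\hat{\mathcal M}_{\hat{\mathbf Y}}$ built from the embeddings. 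The vector $\tilde{\bm u}_k$ is the corresponding eigenvector of $\hat{\mathcal M}_{\mathbf Y}$ built from the true latent positions.

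The plan is to factor out the orthogonal matrix $\mathbf W$ from Theorem~\ref{thm:modified-uase-paper}. Once this is done, the problem splits into two pieces: a second-moment matrix perturbation and an eigenvector perturbation. Both are controlled at rate $n^{-1/2}\log n$.

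\textbf{Step 1 (matrix level).} Write $\hat{\mathbf Y}(t)=\mathbf Y(t)\mathbf W+\mathbf E(t)$, where $\sup_t\|\mathbf E(t)\|_2=\mathcal O(\log n)$. Set $\Delta=\mathbf Y(t)-\mathbf Y(s)$ and $\mathbf F=\mathbf E(t)-\mathbf E(s)$. Then
\[
\hat{\mathbf M}_{\hat{\mathbf Y}}(t,s)-\mathbf W^\top\hat{\mathbf M}_{\mathbf Y}(t,s)\mathbf W=\tfrac1n\bigl(\mathbf W^\top\Delta^\top\mathbf F+\mathbf F^\top\Delta\mathbf W+\mathbf F^\top\mathbf F\bigr).
\]
Bounded support gives $\|\Delta\|_2\le\|\Delta\|_F=\mathcal O(n^{1/2})$. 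The spectral norm of the right-hand side is therefore $\mathcal O(n^{-1}(n^{1/2}\log n+(\log n)^2))=\mathcal O(n^{-1/2}\log n)$. This holds uniformly over pairs because $T$ is fixed. Summing over $\mathcal S$ gives the same bound for $\|\hat{\mathcal M}_{\hat{\mathbf Y}}-\mathbf W^\top\hat{\mathcal M}_{\mathbf Y}\mathbf W\|_2$. (This step is also exactly what underlies Proposition~\ref{prop:uase-TV-whitened}.)

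\textbf{Step 2 (basis level).} The eigenvectors of $\mathbf W^\top\hat{\mathcal M}_{\mathbf Y}\mathbf W$ are $\mathbf W^\top\tilde{\bm u}_k$. By Davis--Kahan, up to a sign choice that is irrelevant for quadratic forms,
\[
\|\hat{\bm u}_k-\mathbf W^\top\tilde{\bm u}_k\|=\mathcal O\!\left(\frac{n^{-1/2}\log n}{\delta_n}\right),
\]
where $\delta_n$ is the eigengap of $\hat{\mathcal M}_{\mathbf Y}$ around $\lambda_k$. This is the main obstacle: we need $\delta_n$ bounded below almost surely.

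To get this, I would invoke the concentration underlying Proposition~\ref{prop:phi-Y-kV}, namely entrywise concentration of $\hat{\mathbf M}_{\mathbf Y}(t,s)$ around $\mathbf M_\phi(t,s)$ at rate $n^{-1/2}\log n$, via Hoeffding-type bounds on bounded i.i.d. rows. This yields $\|\hat{\mathcal M}_{\mathbf Y}-\mathcal M_\phi\|_2=\mathcal O(n^{-1/2}\log n)$. Weyl's inequality then says the sample gaps converge to the population gaps, which are positive by the separated-eigenvalue assumption. Hence $\delta_n\ge\delta/2$ for large $n$ with overwhelming probability.

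\textbf{Step 3 (combine).} Decompose the difference of quadratic forms as
\[
\hat{\bm u}_k^\top\hat{\mathbf M}_{\hat{\mathbf Y}}\hat{\bm u}_k-(\mathbf W^\top\tilde{\bm u}_k)^\top(\mathbf W^\top\hat{\mathbf M}_{\mathbf Y}\mathbf W)(\mathbf W^\top\tilde{\bm u}_k).
\]
Split it into two terms:
\[
\hat{\bm u}_k^\top\bigl(\hat{\mathbf M}_{\hat{\mathbf Y}}-\mathbf W^\top\hat{\mathbf M}_{\mathbf Y}\mathbf W\bigr)\hat{\bm u}_k
\]
and a term involving $\mathbf W^\top\hat{\mathbf M}_{\mathbf Y}\mathbf W$ with the vector difference $\hat{\bm u}_k-\mathbf W^\top\tilde{\bm u}_k$. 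The second term is bounded by $2\|\hat{\mathbf M}_{\mathbf Y}\|_2\,\|\hat{\bm u}_k-\mathbf W^\top\tilde{\bm u}_k\|$. Since $\|\hat{\mathbf M}_{\mathbf Y}\|_2=\mathcal O(1)$ by bounded support, both terms are $\mathcal O(n^{-1/2}\log n)$. A union over the finitely many $(t,s)$ and the finitely many events preserves the overwhelming-probability form, giving $g(n)=n^{-1/2}\log n$.
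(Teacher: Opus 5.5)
Your proposal is correct and follows essentially the same route as the paper. Your Step~1 is exactly Corollary~\ref{cor:jones}, and Steps~2--3 mirror the paper's argument: split the difference of quadratic forms into a matrix-perturbation term and an eigenvector term, keep the eigengap of $\hat{\mathcal M}_{\mathbf Y}$ open via its concentration around $\mathcal M_\phi$ (as in Proposition~\ref{prop:phi-Y-kV}) together with Weyl's inequality, and then apply Davis--Kahan to compare $\hat{\bm u}_k$ with $\mathbf W^\top\tilde{\bm u}_k$. The only difference is cosmetic: you bound the eigenvector term by $2\|\hat{\mathbf M}_{\mathbf Y}\|_2$ times the vector error, whereas the paper uses the sum of the two operator norms.
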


Together, Propositions~\ref{prop:phi-Y-kV} and~\ref{prop:uase-kV-whitened} verify that the two abstract assumptions required by the trajectory consistency theorem continue to hold for each mode-wise distance.

As in the TV case, applying Theorem~\ref{thm:psi-hatpsi} with these rates provides trajectory-level error bounds for the mode-$k$ trajectory $\hat{\psi}_k$ for each fixed $k\in\{1,\dots,d\}$. This is the latter part of Theorem~\ref{thm:trajectory-consistency-main} in the main text.

\begin{corollary} \label{cor:kV}
	Assume that Assumption~\ref{assump:isotropy} holds, and that the dynamic embeddings $\hat{\mathbf{Y}}(t)$ are obtained via modified UASE.
	Suppose $\mathbf{E}_{\phi}$ is a positive semidefinite matrix of rank $c$.
	Then, there exists a possibly random orthogonal matrix $\mathbf{R}\in\mathbb O(c)$ such that
	\[
	\sum_{t=1}^T
	\|\hat{\psi}_{k}(t) - \mathbf{R}\psi_{k}(t)\|^2
	=
	O\left( \frac{(\log n)^2}{n}\right) \qquad \text{a.s.}
	\]  
\end{corollary}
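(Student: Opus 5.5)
The plan is to apply the abstract template of Appendix~\ref{app:from_distance} directly: Corollary~\ref{cor:kV} is Theorem~\ref{thm:psi-hatpsi} specialized to $d=d_k$, $\hat d=\hat d_k$, with $(a,b)=(1,c)$. Since $\mathbf E_\phi$ is assumed positive semidefinite of rank $c$, it has exactly $r=c$ positive eigenvalues. Taking $a=1$ and $b=c=r$, we have $\lambda_0=\infty$ and $\lambda_{r+1}=-\infty$, so $\mathrm{gap}(1,c)=\infty>0$ automatically and $\kappa(1,c)=\lambda_1/\lambda_c<\infty$. The spectral hypotheses of Theorem~\ref{thm:psi-hatpsi} therefore hold with no extra conditions.

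It remains to supply the two uniform controls. Proposition~\ref{prop:phi-Y-kV} gives Assumption~\ref{assump:phi-Y} with $f(n)=n^{-1/2}\log n$. Under Assumption~\ref{assump:isotropy} and modified UASE, Proposition~\ref{prop:uase-kV-whitened} gives Assumption~\ref{assump:Y-hatY} with $g(n)=n^{-1/2}\log n$. Both rates tend to zero, so Theorem~\ref{thm:psi-hatpsi} yields a random $\mathbf R\in\mathbb O(c)$ with
\[
\Bigl(\sum_{t=1}^T\|\hat\psi_k(t)-\mathbf R\psi_k(t)\|^2\Bigr)^{1/2}=\mathcal O\bigl(n^{-1/2}\log n\bigr)\quad\text{a.s.}
\]
Squaring gives the rate $(\log n)^2/n$. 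Squaring is legitimate in the overwhelming-probability sense: the event $\{|X_n|\le c_A f(n)\}$ is the same event as $\{|X_n|^2\le c_A^2 f(n)^2\}$, so each $A$ carries over with constant $c_A^2$.

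One technicality needs care. Proposition~\ref{prop:psi-hatpsi-D} requires $\hat{\mathbf E}_{\hat{\mathbf Y}}$ to have at least $c$ positive eigenvalues, so that $\hat\psi_k$ is well defined in $\mathbb R^c$.
\begin{itemize}
\item By Proposition~\ref{prop:D-rho-hatY}, $\|\hat{\mathbf D}^{(2)}_{\hat{\mathbf Y}}-\mathbf D^{(2)}_\phi\|_F\to0$ a.s.
\item Double centering by $\mathbf J$ is a contraction in the Frobenius norm.
\item Weyl's inequality then shows that the $c$th eigenvalue of $\hat{\mathbf E}_{\hat{\mathbf Y}}$ exceeds $\lambda_c/2>0$ with overwhelming probability for large $n$.
\end{itemize}
The expansion in Proposition~\ref{prop:psi-hatpsi-D} also carries a second-order remainder, $O(\|\cdot\|_F^2)$. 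This remainder is dominated by the leading term at these rates, so the same order holds.

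A conceptual point, which I expect to be the main obstacle, is that $\hat d_k$ uses the estimated basis $\hat{\bm u}_k$. That basis is defined only up to sign, and it is compared with $\bm u_k$ only after the alignment $\mathbf W$ from Theorem~\ref{thm:modified-uase-paper}. Since $d_k$ depends on $\bm u_k$ only through $\bm u_k\bm u_k^\top$, the sign is harmless. The alignment and the eigengap of the aggregated operator $\mathcal M$ needed to control $\hat{\bm u}_k$ are assumed to be handled inside Propositions~\ref{prop:phi-Y-kV} and~\ref{prop:uase-kV-whitened}, which we take as given. With those in hand, the corollary is a direct application of the template, and $k$ being fixed means no uniformity over modes is needed.
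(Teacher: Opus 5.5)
Your proposal is correct and follows the paper's route exactly: the paper also plugs the rates $f(n)=g(n)=n^{-1/2}\log n$ from Propositions~\ref{prop:phi-Y-kV} and~\ref{prop:uase-kV-whitened} into Theorem~\ref{thm:psi-hatpsi}, using the full-rank choice $(a,b)=(1,c)$, and then squares the result. Your extra checks make explicit details the paper leaves implicit: that $\hat{\mathbf E}$ eventually has $c$ positive eigenvalues (via Weyl), that the $O(\|\cdot\|_F^2)$ remainder is negligible, and that squaring preserves the overwhelming-probability bound.
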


This corollary establishes strong consistency of each mode-wise Euclidean trajectory. Taken together with the trace variation case, these results show that the proposed framework supports stable and interpretable trajectory representations both globally and along individual orthogonal modes of temporal variation.


\section{Extensions: Node Attribution and Change Point Detection}
\label{app:extensions}

This section supplements Section~\ref{subsec:extensions} by providing further details on two extensions of the multiscale Euclidean trajectory framework. The first is a node-level attribution theory showing how trace and mode-wise temporal variation decompose into contributions from individual nodes. The second is a change point framework showing how 1D Euclidean trajectories can be used to localize level and slope changes along specific structural modes.

\subsection{Node-level attributions}


Adding to the node-level attribution discussion in the main text, we highlight two additional observations.

First, the contribution of node $i$ to the trace variation decomposes exactly into contributions along individual modes.
Since $\{\hat{\bm u}_k\}_{k=1}^d$ forms an orthonormal basis of $\mathbb R^d$, each node-wise difference admits the exact orthogonal decomposition
\[
\|\hat{\Delta}_i\|^2
=
\sum_{k=1}^d \langle \hat{\Delta}_i,\hat{\bm u}_k\rangle^2.
\]

This identity shows that the mode decomposition holds not only at the distance and trajectory level but also at the node-level.

Second, we highlight that attribution of mode-wise variation retain sign information. Compared to the trace variation approach, in which node attribution is represented by the squared norm, the mode-wise variation represents node attribution as a squared inner product. Unlike the norm, the inner product inherently preserves sign information. Consequently, the sign of the attribution may convey additional information beyond the magnitude of the contribution alone. This will be further verified in the additional experiments; see Appendix~\ref{app:exp-attribution}.

\subsection{Node-level to trajectory-level guarantees for multi-mode trajectories}

We now establish node-level to trajectory-level relationships for Euclidean trajectories constructed from the trace variation distance $\hat d_{\mathrm{TV}}(\cdot,\cdot)$ and from the mode-wise distances $\hat d_k(\cdot,\cdot)$. These results formalize how node-level variation aggregates into distances in the corresponding trajectory embeddings, thereby providing a precise link between local structural change and trajcectory-level geometric representations of time.


The following results relate local node-level variation to distances in the corresponding Euclidean trajectories. The residual spectral terms on the right hand sides quantify the truncation error incurred when embedding the trajectories in a fixed dimension $c$, and vanish when the associated centered Gram matrices is a positive semidefinite matrix of rank $c$. The following Theorem corresponds to the first half of Theorem~\ref{thm:local-global-main} in the main text.

\begin{theorem}\label{thm:local-global-TV}
	Suppose $\hat{\mathbf E}_{\mathrm{TV}}$ has at least $c$ positive eigenvalues.
	Then for any $t,s\in\mathcal T$,
	\[
	\left|
	\|\hat{\psi}_{\mathrm{TV}}(t)-\hat{\psi}_{\mathrm{TV}}(s)\|^2
	-
	\frac{1}{n}\sum_{i=1}^n \|\hat{\mathbf Y}_{i:}(t)-\hat{\mathbf Y}_{i:}(s)\|^2
	\right|
	\le
	2\left(\sum_{i=c+1}^T\lambda_i(\hat{\mathbf E}_{\mathrm{TV}})^2\right)^{1/2}.
	\]
\end{theorem}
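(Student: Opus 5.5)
The plan is to reduce the statement to an entrywise comparison between the squared distance matrix realized by CMDS and the input squared distance matrix, and then to control that comparison through the Frobenius error of the best rank-$c$ approximation of $\hat{\mathbf E}_{\mathrm{TV}}$. Write $\hat{\mathbf D}^{(2)}$ for the matrix with entries $\hat d_{\mathrm{TV}}(\hat{\mathbf Y}(t),\hat{\mathbf Y}(s))^2=\frac1n\sum_i\|\hat{\mathbf Y}_{i:}(t)-\hat{\mathbf Y}_{i:}(s)\|^2$. This exact node-level identity is the one already recorded in Section~\ref{subsec:extensions}. It follows from $\frac1n\|\mathbf X-\mathbf Y\|_F^2=\frac1n\sum_i\|\mathbf X_{i:}-\mathbf Y_{i:}\|^2$.

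First, I will show that $\hat{\mathbf D}^{(2)}$ is exactly Euclidean. Stack the vectorized scaled embeddings $\bm z_t:=n^{-1/2}\mathrm{vec}(\hat{\mathbf Y}(t))\in\mathbb R^{nd}$. Then $\hat{\mathbf D}^{(2)}_{ts}=\|\bm z_t-\bm z_s\|^2$. By the standard CMDS identity, $\hat{\mathbf E}_{\mathrm{TV}}=-\frac12\mathbf J\hat{\mathbf D}^{(2)}\mathbf J=\mathbf J\mathbf Z\mathbf Z^\top\mathbf J$, where $\mathbf Z$ has rows $\bm z_t^\top$. Hence $\hat{\mathbf E}_{\mathrm{TV}}$ is positive semidefinite, and the centered points $\bar{\bm z}_t=\bm z_t-\frac1T\sum_s\bm z_s$ satisfy $\hat{\mathbf D}^{(2)}_{ts}=\hat{\mathbf E}_{tt}+\hat{\mathbf E}_{ss}-2\hat{\mathbf E}_{ts}$ with $\hat{\mathbf E}:=\hat{\mathbf E}_{\mathrm{TV}}$. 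Because it is positive semidefinite, all $\lambda_i(\hat{\mathbf E})\ge0$.

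Second, by Eckart--Young and the definition in Definition~\ref{def:sample-diss}, the CMDS solution $\hat{\mathbf Z}=(\hat\psi_{\mathrm{TV}}(1),\dots,\hat\psi_{\mathrm{TV}}(T))^\top$ gives $\hat{\mathbf Z}\hat{\mathbf Z}^\top=\hat{\mathbf E}_c$, the rank-$c$ spectral truncation. This uses the hypothesis of at least $c$ positive eigenvalues. Therefore $\|\hat\psi(t)-\hat\psi(s)\|^2=(\hat{\mathbf E}_c)_{tt}+(\hat{\mathbf E}_c)_{ss}-2(\hat{\mathbf E}_c)_{ts}$. Setting $\mathbf R:=\hat{\mathbf E}-\hat{\mathbf E}_c=\sum_{i>c}\lambda_i\bm v_i\bm v_i^\top$, the difference in the statement is exactly $\mathbf R_{tt}+\mathbf R_{ss}-2\mathbf R_{ts}=(\bm e_t-\bm e_s)^\top\mathbf R(\bm e_t-\bm e_s)$.

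Third, I bound this quadratic form. Since $\mathbf R$ is positive semidefinite, the form lies in $[0,\ \|\bm e_t-\bm e_s\|^2\|\mathbf R\|_2]=[0,2\lambda_{c+1}]$. This is at most $2\|\mathbf R\|_F=2(\sum_{i>c}\lambda_i^2)^{1/2}$, which gives the claim. If one prefers not to invoke semidefiniteness, the same bound follows from $|\bm w^\top\mathbf R\bm w|\le\|\bm w\|^2\|\mathbf R\|_2$ with $\|\bm w\|^2=2$, for symmetric $\mathbf R$.

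The only delicate step is the second one: making sure the CMDS output of Definition~\ref{def:sample-diss} realizes exactly $\hat{\mathbf E}_c$. That is, the argmin over $\mathbf Z\mathbf Z^\top$ picks the top $c$ eigenpairs with nonnegative eigenvalues, and the residual is the spectral tail rather than something involving negative eigenvalues. The hypothesis of at least $c$ positive eigenvalues is what secures this. Everything else is algebraic.
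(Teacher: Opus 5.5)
Your proof is correct. Its skeleton matches the paper's: both identify $\hat{\mathbf Z}\hat{\mathbf Z}^\top$ with the rank-$c$ spectral truncation of $\hat{\mathbf E}_{\mathrm{TV}}$, which is the paper's ``strain bound'' $\|\hat{\mathbf E}_{\mathrm{TV}}-\hat{\mathbf Z}\hat{\mathbf Z}^\top\|_F=(\sum_{i>c}\lambda_i^2)^{1/2}$. Both also write each squared distance as $\mathbf G_{tt}+\mathbf G_{ss}-2\mathbf G_{ts}$ for the relevant Gram matrix $\mathbf G$.

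\textbf{Where the routes differ.} The paper bounds $|\mathcal E_{tt}|+|\mathcal E_{ss}|+2|\mathcal E_{ts}|$ entrywise and applies Cauchy--Schwarz to these four residual entries to reach $2\|\mathcal E\|_F$. That step ignores the sign structure of $\hat{\mathbf E}$, so the same lines serve verbatim for the MV trajectories, whose centered Gram matrix can be indefinite. You instead write the discrepancy as one quadratic form $(\bm e_t-\bm e_s)^\top\mathbf R(\bm e_t-\bm e_s)$ and bound it by the operator norm. You also use that trace variation is exactly Euclidean, i.e.\ $\hat{\mathbf E}_{\mathrm{TV}}=\mathbf J\mathbf Z\mathbf Z^\top\mathbf J\succeq 0$.

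\textbf{What your route gains.} You get a strictly stronger conclusion:
\[
0\le \frac1n\sum_{i=1}^n\|\hat{\mathbf Y}_{i:}(t)-\hat{\mathbf Y}_{i:}(s)\|^2-\|\hat\psi_{\mathrm{TV}}(t)-\hat\psi_{\mathrm{TV}}(s)\|^2\le 2\lambda_{c+1}(\hat{\mathbf E}_{\mathrm{TV}}).
\]
So the truncated trajectory can only underestimate the node-level average. The error is also controlled by the first discarded eigenvalue rather than the whole Frobenius tail.

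\textbf{A sign slip.} The difference in the statement is the \emph{negative} of $\mathbf R_{tt}+\mathbf R_{ss}-2\mathbf R_{ts}$, not that quantity itself. This is harmless under the absolute value, but it is what fixes the direction of the one-sided refinement above.

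\textbf{Without semidefiniteness.} Your fallback $|\bm w^\top\mathbf R\bm w|\le 2\|\mathbf R\|_2=2\max_{i>c}|\lambda_i|$ still holds. It is never worse than the paper's bound, but the sign information is lost in that case.
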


This result shows that distances in the trace trajectory recover the average squared magnitude of node-level position changes, up to a residual term determined by the truncated eigenvalues of the centered Gram matrix. Thus, global temporal distances in the trajectory faithfully reflect aggregate local variation when the trajectory dimension is sufficiently large. The following Theorem corresponds to the latter part of Theorem~\ref{thm:local-global-main} in the main text.

\begin{theorem}\label{thm:local-global-kV}
	Suppose $\hat{\mathbf E}_k$ has at least $c$ positive eigenvalues.
	Then for any $t,s\in\mathcal T$,
	\[
	\left|
	\|\hat{\psi}_k(t)-\hat{\psi}_k(s)\|^2
	-
	\frac{1}{n}\sum_{i=1}^n \left\langle \hat{\mathbf Y}_{i:}(t)-\hat{\mathbf Y}_{i:}(s),\hat{\bm u}_k\right\rangle^2
	\right|
	\le
	2\left(\sum_{i=c+1}^T\lambda_i(\hat{\mathbf E}_k)^2\right)^{1/2}.
	\]
\end{theorem}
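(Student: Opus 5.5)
The argument is deterministic and mirrors the one for Theorem~\ref{thm:local-global-TV}, with $\hat{\mathbf D}_k$ replacing $\hat{\mathbf D}_{\mathrm{TV}}$. It has three steps: identify the squared estimated mode-wise distance with the node-averaged squared projected displacement, show that $\hat{\mathbf D}_k^{(2)}$ is exactly a squared Euclidean distance matrix recovered from $\hat{\mathbf E}_k$, and bound the error of the rank-$c$ CMDS truncation entrywise.

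\textbf{Step 1 (node-level identity).} By the definition of $\hat d_k$,
\[
\hat d_k\bigl(\hat{\mathbf Y}(t),\hat{\mathbf Y}(s)\bigr)^2=\frac1n\bigl\|(\hat{\mathbf Y}(t)-\hat{\mathbf Y}(s))\hat{\bm u}_k\bigr\|^2=\frac1n\sum_{i=1}^n\bigl\langle \hat{\mathbf Y}_{i:}(t)-\hat{\mathbf Y}_{i:}(s),\hat{\bm u}_k\bigr\rangle^2 .
\]
So it suffices to bound $\bigl|\|\hat\psi_k(t)-\hat\psi_k(s)\|^2-\hat d_k(t,s)^2\bigr|$.

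\textbf{Step 2 (exact Euclidean realization).} Define $z_t:=n^{-1/2}\hat{\mathbf Y}(t)\hat{\bm u}_k\in\mathbb R^n$. Then $\hat d_k(t,s)=\|z_t-z_s\|$. Double centering therefore gives $\hat{\mathbf E}_k=\mathbf J\mathbf Z\mathbf Z^\top\mathbf J$, where $\mathbf Z$ has rows $z_t^\top$. This matrix is positive semidefinite, and for all $t,s$,
\[
\hat d_k(t,s)^2=(\hat{\mathbf E}_k)_{tt}+(\hat{\mathbf E}_k)_{ss}-2(\hat{\mathbf E}_k)_{ts},
\]
since centering does not change pairwise differences. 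Write $\hat{\mathbf B}:=\hat{\Psi}\hat{\Psi}^\top$ for the rank-$c$ CMDS approximation, where $\hat\Psi$ has rows $\hat\psi_k(t)^\top$. The hypothesis that $\hat{\mathbf E}_k$ has at least $c$ positive eigenvalues guarantees that $\hat{\mathbf B}$ is the spectral truncation to the top $c$ eigenpairs. Likewise $\|\hat\psi_k(t)-\hat\psi_k(s)\|^2=\hat{\mathbf B}_{tt}+\hat{\mathbf B}_{ss}-2\hat{\mathbf B}_{ts}$.

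\textbf{Step 3 (truncation bound).} Let $\mathbf\Delta:=\hat{\mathbf E}_k-\hat{\mathbf B}=\sum_{i>c}\lambda_i\bm v_i\bm v_i^\top$. Then
\[
\bigl|\|\hat\psi_k(t)-\hat\psi_k(s)\|^2-\hat d_k(t,s)^2\bigr|=|(\bm e_t-\bm e_s)^\top\mathbf\Delta(\bm e_t-\bm e_s)|\le\|\bm e_t-\bm e_s\|^2\|\mathbf\Delta\|_2=2\|\mathbf\Delta\|_2 .
\]
Finally $\|\mathbf\Delta\|_2\le\|\mathbf\Delta\|_F=\bigl(\sum_{i=c+1}^T\lambda_i(\hat{\mathbf E}_k)^2\bigr)^{1/2}$, which gives the stated bound.

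\textbf{Main obstacle.} The only delicate point is Step 2: verifying that CMDS returns exactly the top-$c$ eigen-truncation, so that $\mathbf\Delta$ has the stated spectrum, and that the squared-distance formula holds for both $\hat{\mathbf E}_k$ and $\hat{\mathbf B}$. For $\hat{\mathbf E}_k$ this rests on $\hat d_k$ being Hilbertian, so no negative-eigenvalue issues arise. The rest is routine.
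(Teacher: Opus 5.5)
Your proof is correct and follows the paper's argument (node-level identity for $\hat d_k^2$, Gram-entry representation of squared distances, CMDS strain residual $(\sum_{i>c}\lambda_i(\hat{\mathbf E}_k)^2)^{1/2}$), except at the last step. There the paper bounds $|\mathbf\Delta_{tt}|+|\mathbf\Delta_{ss}|+2|\mathbf\Delta_{ts}|\le 2\|\mathbf\Delta\|_F$ by Cauchy--Schwarz. Your quadratic form $(\bm e_t-\bm e_s)^\top\mathbf\Delta(\bm e_t-\bm e_s)$ instead gives the sharper intermediate bound $2\|\mathbf\Delta\|_2=2\lambda_{c+1}(\hat{\mathbf E}_k)$, using that $\hat{\mathbf E}_k$ is positive semidefinite.

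One correction to your closing remark: $\hat d_k(t,s)^2=(\hat{\mathbf E}_k)_{tt}+(\hat{\mathbf E}_k)_{ss}-2(\hat{\mathbf E}_k)_{ts}$ holds for any symmetric zero-diagonal squared-dissimilarity matrix. The paper uses it in the non-Euclidean MV case too, so the bound does not rest on $\hat d_k$ being Hilbertian.
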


Analogously, this theorem shows that distances in the $k$th mode-wise trajectory approximate the aggregated node-level variation along the corresponding mode. Together, Theorems~\ref{thm:local-global-TV} and~\ref{thm:local-global-kV} establish that multi-mode Euclidean trajectories provide geometrically faithful summaries of node-level temporal change, both globally and along individual structural dimensions, with explicit control over the error introduced by dimensional truncation.

Here, we establish a tighter bound for the aggregated error.
\begin{theorem}\label{thm:local-global-agg-TV}
	Suppose $T\geq 2$ and $\hat{\mathbf E}_{\mathrm{TV}}$ has at least $c$ positive eigenvalues.
	Then,
	\[
	\sum_{t<s}\left|
	\|\hat{\psi}_{\mathrm{TV}}(t)-\hat{\psi}_{\mathrm{TV}}(s)\|^2
	-
	\frac{1}{n}\sum_{i=1}^n \|\hat{\mathbf Y}_{i:}(t)-\hat{\mathbf Y}_{i:}(s)\|^2
	\right|^2
	\le
	4(T-1)\left(\sum_{i=c+1}^T\lambda_i(\hat{\mathbf E}_{\mathrm{TV}})^2\right).
	\]
\end{theorem}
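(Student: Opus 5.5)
The bound will follow from a purely algebraic identity relating squared trajectory distances to the CMDS residual matrix, combined with an entrywise Cauchy--Schwarz bound that is summed carefully over pairs. No statistical input is needed.

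\textbf{Step 1 (exact recovery of the squared dissimilarities).} Write $\hat{\mathbf E}:=\hat{\mathbf E}_{\mathrm{TV}}=-\frac12\mathbf J\hat{\mathbf D}^{(2)}\mathbf J$, where $\hat{\mathbf D}^{(2)}_{ts}=\frac1n\sum_i\|\hat{\mathbf Y}_{i:}(t)-\hat{\mathbf Y}_{i:}(s)\|^2$. I will use the standard fact that for any symmetric matrix with zero diagonal,
\[
\hat{\mathbf D}^{(2)}_{ts}=\hat{\mathbf E}_{tt}+\hat{\mathbf E}_{ss}-2\hat{\mathbf E}_{ts}.
\]
This holds here in particular because $\hat{\mathbf D}^{(2)}$ consists of squared Euclidean (scaled Frobenius) distances, so $\hat{\mathbf E}$ is the Gram matrix of the centered configuration. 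I will verify it by expanding the double centering: the row, column and grand means cancel in $\hat{\mathbf E}_{tt}+\hat{\mathbf E}_{ss}-2\hat{\mathbf E}_{ts}$, and the zero diagonal of $\hat{\mathbf D}^{(2)}$ leaves exactly $\hat{\mathbf D}^{(2)}_{ts}$.

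\textbf{Step 2 (the residual).} Let $\mathbf Z=(\hat\psi_{\mathrm{TV}}(1),\dots,\hat\psi_{\mathrm{TV}}(T))^\top$ and $\mathbf R:=\hat{\mathbf E}-\mathbf Z\mathbf Z^\top$. Since $\hat{\mathbf E}$ has at least $c$ positive eigenvalues, the CMDS minimizer is $\mathbf Z\mathbf Z^\top=\sum_{i\le c}\lambda_i\bm v_i\bm v_i^\top$, by the Eckart--Young argument under the rank-$c$ PSD constraint. Hence
\[
\|\mathbf R\|_F^2=\sum_{i=c+1}^T\lambda_i(\hat{\mathbf E})^2 .
\]
Because $\|\hat\psi(t)-\hat\psi(s)\|^2=(\mathbf Z\mathbf Z^\top)_{tt}+(\mathbf Z\mathbf Z^\top)_{ss}-2(\mathbf Z\mathbf Z^\top)_{ts}$, subtracting from Step 1 gives the error for the pair $(t,s)$:
\[
e_{ts}=\mathbf R_{tt}+\mathbf R_{ss}-2\mathbf R_{ts}.
\]
This is the same identity underlying Theorem~\ref{thm:local-global-TV}. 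That theorem bounds $|e_{ts}|\le2\|\mathbf R\|_F$ for each pair, but summing that bound over all pairs would lose a factor of order $T$, so a different estimate is needed.

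\textbf{Step 3 (aggregation, the key step).} Rather than bounding each pair by $\|\mathbf R\|_F$, apply Cauchy--Schwarz to the three terms with weights $(1,1,\sqrt2)$:
\[
e_{ts}^2\le(1+1+2)\bigl(\mathbf R_{tt}^2+\mathbf R_{ss}^2+2\mathbf R_{ts}^2\bigr).
\]
Summing over $t<s$, each diagonal entry $\mathbf R_{tt}^2$ appears $T-1$ times. The off-diagonal terms combine as $\sum_{t<s}2\mathbf R_{ts}^2=\sum_{t\neq s}\mathbf R_{ts}^2$. Therefore
\[
\sum_{t<s}e_{ts}^2\le4\Bigl[(T-1)\sum_t\mathbf R_{tt}^2+\sum_{t\ne s}\mathbf R_{ts}^2\Bigr]\le4(T-1)\|\mathbf R\|_F^2,
\]
where the last inequality uses $T\ge2$. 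Substituting $\|\mathbf R\|_F^2$ from Step 2 yields the claim. The only delicate points are the counting of diagonal versus off-diagonal contributions and making sure the CMDS residual norm is exactly the tail-eigenvalue sum, including any negative tail eigenvalues, which is why only ``at least $c$ positive eigenvalues'' is assumed.
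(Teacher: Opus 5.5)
Your proposal is correct and follows essentially the same route as the paper: the per-pair identity $e_{ts}=\mathbf R_{tt}+\mathbf R_{ss}-2\mathbf R_{ts}$ for the CMDS residual, the bound $e_{ts}^2\le 4(\mathbf R_{tt}^2+\mathbf R_{ss}^2+2\mathbf R_{ts}^2)$, the same diagonal/off-diagonal counting giving $4(T-1)\|\mathbf R\|_F^2$ for $T\ge 2$, and then the strain bound plus the node-level decomposition. Your weighted Cauchy--Schwarz is just the paper's split $2|\mathbf R_{ts}|=|\mathbf R_{ts}|+|\mathbf R_{st}|$ followed by $(a+b+c+d)^2\le 4(a^2+b^2+c^2+d^2)$. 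Your explicit checks of the double-centering identity and of the PSD-constrained Eckart--Young residual fill in steps the paper takes for granted.
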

\begin{theorem}\label{thm:local-global-agg-kV}
	Suppose $T\geq 2$ and $\hat{\mathbf E}_k$ has at least $c$ positive eigenvalues.
	Then,
	\[
	\sum_{t<s}\left|
	\|\hat{\psi}_k(t)-\hat{\psi}_k(s)\|^2
	-
	\frac{1}{n}\sum_{i=1}^n \left\langle \hat{\mathbf Y}_{i:}(t)-\hat{\mathbf Y}_{i:}(s),\hat{\bm u}_k\right\rangle^2
	\right|^2
	\le
	4(T-1)\left(\sum_{i=c+1}^T\lambda_i(\hat{\mathbf E}_k)^2\right).
	\]
\end{theorem}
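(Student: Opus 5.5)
The plan is to reduce everything to a statement about the discarded part of the centered Gram matrix $\hat{\mathbf E}_k$, and then bound a sum of squared entries of a doubly centered matrix.

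\textbf{Step 1: exact realization.} First I would note that the finite sample mode-wise distance is Hilbertian:
\[
\hat d_k(\hat{\mathbf Y}(t),\hat{\mathbf Y}(s))^2=\frac1n\|(\hat{\mathbf Y}(t)-\hat{\mathbf Y}(s))\hat{\bm u}_k\|^2 .
\]
This equals the node-level aggregate $\frac1n\sum_{i=1}^n\langle \hat{\mathbf Y}_{i:}(t)-\hat{\mathbf Y}_{i:}(s),\hat{\bm u}_k\rangle^2$ appearing in the statement. Write $\bm z_t:=n^{-1/2}\hat{\mathbf Y}(t)\hat{\bm u}_k\in\mathbb R^n$. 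Then $\hat{\mathbf E}_k$ is the Gram matrix of the centered vectors $\bm z_t-\bar{\bm z}$. Hence it is positive semidefinite, satisfies $\hat{\mathbf E}_k\mathbf 1=0$, and realizes the squared distances exactly:
\[
\hat d_k^2(t,s)=(\bm e_t-\bm e_s)^\top\hat{\mathbf E}_k(\bm e_t-\bm e_s).
\]

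\textbf{Step 2: the truncation residual.} By definition of CMDS, $\hat\psi_k$ satisfies $\hat\Psi\hat\Psi^\top=\mathbf E_c$, the best rank-$c$ approximation, which is well defined because there are at least $c$ positive eigenvalues. Set
\[
\boldsymbol\Delta:=\hat{\mathbf E}_k-\mathbf E_c=\sum_{i>c}\lambda_i\bm v_i\bm v_i^\top .
\]
Every eigenvector with a nonzero eigenvalue is orthogonal to $\mathbf 1$, so $\boldsymbol\Delta\mathbf 1=0$. The error for the pair $(t,s)$ is
\[
e_{ts}=\Delta_{tt}+\Delta_{ss}-2\Delta_{ts}.
\]
Moreover, $F:=\|\boldsymbol\Delta\|_F^2=\sum_{i=c+1}^T\lambda_i(\hat{\mathbf E}_k)^2$, and $\mathrm{rank}\,\boldsymbol\Delta=:r\le T-1-c\le T-2$.

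\textbf{Step 3: the algebraic bound (main step).} Expand
\[
\sum_{t<s}e_{ts}^2=\tfrac12\sum_{t,s}(\Delta_{tt}+\Delta_{ss}-2\Delta_{ts})^2 .
\]
The cross term $\sum_{t,s}(\Delta_{tt}+\Delta_{ss})\Delta_{ts}$ vanishes because the rows of $\boldsymbol\Delta$ sum to zero. Expanding the remaining squares gives
\[
\sum_{t<s}e_{ts}^2=T\sum_t\Delta_{tt}^2+(\mathrm{tr}\,\boldsymbol\Delta)^2+2F .
\]
Each of the first two terms then needs to be controlled by $F$.

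For the trace term, Cauchy--Schwarz on the eigenvalues gives $(\mathrm{tr}\,\boldsymbol\Delta)^2\le rF\le(T-2)F$.

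For the diagonal term, centering gives $\Delta_{tt}=-\sum_{s\ne t}\Delta_{ts}$. By Cauchy--Schwarz, $\Delta_{tt}^2\le(T-1)\sum_{s\neq t}\Delta_{ts}^2$. Writing $D:=\sum_t\Delta_{tt}^2$ and $O:=F-D$, this yields $D\le(T-1)O$, hence $D\le\frac{T-1}{T}F$ and so $TD\le(T-1)F$.

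\textbf{Step 4: conclusion.} Summing the three contributions,
\[
\sum_{t<s}e_{ts}^2\le\{(T-1)+(T-2)+2\}F=(2T-1)F\le4(T-1)F,
\]
where the last inequality holds since $T\ge2$. This is the claim.

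The main obstacle is Step 3. A naive entrywise bound such as $|e_{ts}|\le 2\|\boldsymbol\Delta\|_2$, which is what Theorem~\ref{thm:local-global-kV} uses, only gives a factor of order $T^2$. Obtaining the linear factor in $T$ requires exploiting the double centering twice: once to kill the cross term, and once to control the diagonal by the off-diagonal mass. The same argument applies verbatim to Theorem~\ref{thm:local-global-agg-TV}, with $\bm z_t=n^{-1/2}\mathrm{vec}(\hat{\mathbf Y}(t))$.
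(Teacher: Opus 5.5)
Your proof is correct, and it takes a genuinely different route from the paper's, reaching a sharper constant.

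The paper never uses the centering structure. With $\mathcal E=\mathbf Z\mathbf Z^\top-\hat{\mathbf E}_k$ (your $-\boldsymbol\Delta$), it proceeds as follows:
\begin{itemize}
\item it bounds $|e_{ts}|\le|\mathcal E_{tt}|+|\mathcal E_{ss}|+|\mathcal E_{ts}|+|\mathcal E_{st}|$;
\item it squares using $(a+b+c+d)^2\le 4(a^2+b^2+c^2+d^2)$ and sums over pairs;
\item each diagonal entry enters only the $T-1$ pairs containing its index, while each off-diagonal entry enters a single pair, so the sum is at most $4(T-1)\sum_t\mathcal E_{tt}^2+4\sum_{t\neq s}\mathcal E_{ts}^2\le 4(T-1)\|\mathcal E\|_F^2$;
\item the strain identity finishes the argument.
\end{itemize}

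Your closing remark that a factor linear in $T$ \emph{requires} exploiting the double centering is therefore not accurate. The $T^2$ loss comes only from summing the pairwise bound pair by pair, and plain multiplicity counting already avoids it.

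What your approach buys is the exact identity $\sum_{t<s}e_{ts}^2=T\sum_t\Delta_{tt}^2+(\mathrm{tr}\,\boldsymbol\Delta)^2+2\|\boldsymbol\Delta\|_F^2$, and with it the constant $2T-1$ in place of $4(T-1)$. What it costs is the need for $\boldsymbol\Delta\mathbf 1=0$ (which you justify correctly) and a rank count. The paper's bound, by contrast, holds for any symmetric residual.

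One minor caveat: the step $r\le T-1-c\le T-2$ uses $c\ge 1$. If $c=0$ were allowed, you would only get $r\le T-1$ and a total of $2T\,\|\boldsymbol\Delta\|_F^2$. This is still at most $4(T-1)\|\boldsymbol\Delta\|_F^2$ for $T\ge 2$, so the theorem is unaffected.
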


\subsection{Change point order and 1D trajectory geometry}

We adopt the notion of change point order introduced by \citet{chen2024euclidean}, which is defined through finite differences of a latent position process. This notion provides a principled way to distinguish different types of temporal change according to the level at which nonstationarity occurs. 

Let $\{X(t)\}_{t\in\mathbb{Z}}$ be a latent position process taking values in a Hilbert space, and let $\delta>0$ be a fixed lag. The first order difference is defined as

\[
\Delta_\delta X(t) = X(t) - X(t-\delta),
\]

\noindent with higher order differences defined recursively.

In \citet{chen2024euclidean}, a change point at time $\tau$ is said to be of order $k$ if the distribution of the $k$th order finite difference $\Delta_\delta^{(k)} X(t)$ changes at $\tau$. A $0$th order change point corresponds to a distributional change in the latent positions $X(t)$ themselves, while a $1$st order change point corresponds to a change in the distribution of the latent increments $X(t)-X(t-\delta)$.

In \citet{chen2024euclidean}, these change point definitions are linked to geometry through the Euclidean trajectory construction. Although the increments $X(t)-X(t-\delta)$ are not directly embedded, \citet{chen2024euclidean} show that under a random walk latent position model the increment structure determines the time pair squared dissimilarity matrix

\[
d^2(t,s) = \mathbb{E}\!\left[\lVert X(t)-X(s)\rVert^2\right].
\]

When latent increments are distributionally stationary within regimes but change across regimes, this squared distance grows quadratically in $\lvert t-s\rvert$ within each regime, with regime dependent coefficients. Applying CMDS to this dissimilarity matrix yields a Euclidean embedding of time whose leading coordinate is piecewise linear, with slope changes at $1$st order change points. By contrast, $0$th order change points manifest as jumps in the trajectory embedding.

In the present work, we construct multiple Euclidean trajectories from a single dynamic network time series, each corresponding to a resolved mode of temporal variation. Let $\psi_k(t)$ denote the one-dimensional Euclidean trajectory obtained with the $k$th mode-wise varitation distance. The notion of change point order is unchanged from \citet{chen2024euclidean}; the key difference is that the mode-wise resolution allows multiple change points, potentially of different orders, to be identified simultaneously along distinct directions of temporal variation.

\begin{definition}[$0$th and $1$st order of change points]~
	\begin{enumerate}
		\item[(1)] A time $\tau$ is a \emph{$0$th order change point in mode $k$} if
		$\psi_k(t)$ is piecewise constant in $t$ with a jump at $\tau$.
		\item[(2)] A time $\tau$ is a \emph{$1$st order change point in mode $k$} if
		$\psi_k(t)$ is piecewise linear in $t$ with a change in slope at $\tau$.
	\end{enumerate}
\end{definition}

This definition provides a direct geometric realization of change point order.  A jump in $\psi_k(t)$ corresponds to an abrupt change in the latent structure along mode $k$, while a change in slope corresponds to a change in the rate of latent evolution along that mode. The contribution of the present framework is not to redefine change point order, but to express it mode-wise manner through one-dimensional Euclidean trajectories rather than the full latent process, thereby allowing change points of different orders to occur simultaneously along distinct orthogonal modes of temporal variation.

\subsection{Change Point Estimation and Localization}
\label{app:cp}

In this subsection, we formalize the definition of a change point and introduce the corresponding estimator based on 1D Euclidean trajectories. Throughout, let $\psi(t)$ denote a 1D population trajectory and let $\hat{\psi}(t)$ denote its estimated counterpart obtained from observed data.

\begin{definition}\label{def:cp}
	Let $\mathcal{K} \subseteq \{2,\dots,T\}$ be the set of candidate knot locations,
	and let $k \in \mathcal{K}$.
	Let $\theta$ denote a parameter set.
	Let $\Psi_L(\cdot;k, \theta)$ and $\Psi_R(\cdot;k, \theta)$ be parametric functions before and after $k$, respectively.
	We define a piecewise function
	\[
	\Psi(t; k, \theta)
	=
	\begin{cases}
		\Psi_L(t;k, \theta), & t < k,\\
		\Psi_R(t;k, \theta), & t \geq k.
	\end{cases}
	\]
\end{definition}

\textbf{Example ($0$th order case).}
Let $\mathcal{K}^{(0)}=\{2,\dots,T\}$ and $\theta = (a_L,a_R)$ with $a_L\neq a_R$.
Define
\[
\Psi^{(0)}_L(t;k, \theta)=a_L,
\qquad
\Psi^{(0)}_R(t;k, \theta)=a_R.
\]

This specification models a $0$th order change point through a discontinuous jump in the trajectory coordinate at time $k$.

\textbf{Example 2 ($1$st order case).}
Let $\mathcal{K}^{(1)}=\{2,\dots,T-1\}$ and $\theta=(a,b_L,b_R)$ with $b_L\neq b_R$.
Define
\[
\Psi^{(1)}_L(t;k, \theta)=a + b_L t,
\qquad
\Psi^{(1)}_R(t;k, \theta)=a + b_L k +b_R(t-k).
\]
This formulation enforces continuity  of $\Psi^{(1)}(\cdot; k, \theta)$ at the knot $t=k$ while allowing a change in slope, corresponding to a $1$st order change point.

A time point $t^* \in \mathcal{K}$ is a change point if $\psi(t)$ is well approximated by $\Psi(t; t^*, \theta^*)$ for some parameter set $\theta^*$.

For each candidate $k \in \mathcal{K}$, fit the best piecewise function with a knot at $k$ by minimizing
\[
\hat{Q}(k):=\min_{\theta}\sum_{t=1}^T\left(\hat{\psi}(t)-\Psi(t; k,\theta)\right)^2.
\]
over $\theta$. We then define
\[
\widehat{t}\in\arg\min_{k \in \mathcal{K}}\hat{Q}(k).
\]

\noindent Before turning to estimation error, we emphasize the separation of roles between trajectory estimation and change point modeling. The results below treat the estimated trajectory $\hat{\psi}(t)$ as noisy observations of an underlying population trajectory $\psi(t)$. All stochastic error enters exclusively through the discrepancy between $\hat{\psi}$ and $\psi$, which is controlled by the Euclidean trajectory consistency results established earlier. Conditional on this trajectory-level error, change point estimation is analyzed as a deterministic approximation problem for piecewise parametric functions. This separation allows change point guarantees to be derived independently of the specific embedding or dissimilarity construction, provided uniform control of the trajectory estimation error is available.

We now analyze how estimation error in the trajectory coordinates propagates to error in the estimated change point. The goal of this subsection is to formalize conditions under which the population change point is identifiable and to quantify how deviations between estimated and population trajectory representations translate into localization error for the estimated change point.

We begin by formalizing identifiability of the population change point and the associated best fitting model. The following assumption ensures that the population objective admits a unique minimizer in both the change point location and the model parameters, ruling out ambiguity in the target of inference.

\begin{assumption} \label{assump:unique}
	Suppose 
	\[
	Q(k):=\min_{\theta}\sum_{t=1}^T\left(\psi(t)-\Psi(t; k,\theta)\right)^2.
	\]
	is uniquely minimized at $t^* \in \mathcal{K}$ and 
	\[
	\min_{\theta}\sum_{t=1}^T\left(\psi(t)-\Psi(t; t^*,\theta)\right)^2
	\]
	has a unique minimizer $\theta^*$.
\end{assumption}

This assumption ensures that, at the population level, there is a single change point and a single best fitting model describing the temporal structure on either side of the change.

The next assumption quantifies how rapidly the population objective deteriorates as the candidate change point moves away from the true location. This condition imposes a linear separation bound, ensuring that the population objective provides sufficient curvature to distinguish nearby change point locations.

\begin{assumption} \label{assump:Psi-error}
	Let  $D(\theta^*)$ be a nonnegative function.
	There exist a constant \(\alpha>0\) such that for all
	\(k \in \mathcal{K}\),
	\[
	\min_{\theta}
	\sum_{t=1}^T
	\left(
	\Psi(t;t^*,\theta^*)
	-
	\Psi(t;k,\theta)
	\right)^2
	\ge
	\alpha \, D(\theta^*)\, \lvert k-t^*\rvert.
	\]
\end{assumption}

Assumption~\ref{assump:Psi-error} provides a linear separation condition for the population objective around the true change point. Together with uniqueness, it allows deviations of the estimation objective to be translated into localization error for the estimated change point.


These assumptions leads to Theorem~\ref{thm:cp-localization} in the main text. It shows that change point error is controlled by the trajectory estimation error together with the approximation error of the chosen piecewise model.

\textbf{Remark.}
The bound separates two sources of error. The first term reflects estimation error in the trajectory coordinates, capturing the discrepancy between estimated and population trajectories. The second term captures approximation error of the change point model itself, reflecting how well the chosen parametric form $\Psi(\cdot; k,\theta)$ approximates the true population trajectory. In later sections, trajectory consistency results are used to control the first term explicitly, thereby yielding concrete localization rates for the estimated change point.


We now verify that the separation condition in Assumption~\ref{assump:Psi-error} holds for the $0$th and $1$st order change point models introduced above. These results ensure that the abstract localization theorem applies to the concrete parametric models used for change point inference in one-dimensional Euclidean trajectories.

\begin{proposition}\label{prop:cp-separation-0}
	Let $\Psi = \Psi^{(0)}$ and assume Assumption \ref{assump:unique}.
	Suppose that $\theta^* = (a_L^*, a_R^*)$ with $a_L^*\neq a_R^*$.
	Then, Assumption~\ref{assump:Psi-error} holds with 
	\[
	D(\theta^*):= \lvert a_L^*-a_R^*\rvert^2. 
	\]
\end{proposition}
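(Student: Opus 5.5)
The plan is to compute the left-hand side of Assumption~\ref{assump:Psi-error} in closed form for the piecewise constant model and then bound it below uniformly in $k$. Write $\delta:=a_L^*-a_R^*\neq 0$ and $m:=|k-t^*|$. The population template $\Psi^{(0)}(t;t^*,\theta^*)$ equals $a_L^*$ for $t<t^*$ and $a_R^*$ for $t\ge t^*$. If $k=t^*$ the inequality is trivial, since the right-hand side is $0$. The constraint $a_L\neq a_R$ in the fitted parameter only shrinks the feasible set, so dropping it can only decrease the minimum. It therefore suffices to lower bound the unconstrained least squares value $\min_{a_L,a_R\in\mathbb R}\sum_{t=1}^T(\Psi^{(0)}(t;t^*,\theta^*)-\Psi^{(0)}(t;k,\theta))^2$, which also settles that the minimum is attained.

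For fixed $k$ this problem decouples into two independent constant fits: one on $\{1,\dots,k-1\}$ and one on $\{k,\dots,T\}$. Each is minimized by the segment mean, with residual sum of squares equal to the within-segment variance times the segment length.

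Suppose $k>t^*$. The right segment $\{k,\dots,T\}$ contains only values $a_R^*$, so it fits exactly. The left segment contains $t^*-1$ copies of $a_L^*$ and $m$ copies of $a_R^*$. For a two-valued sample with counts $p$ and $q$, the residual sum of squares about the mean is $\frac{pq}{p+q}\delta^2$. Here this gives
\[
\frac{(t^*-1)\,m}{k-1}\,\delta^2 .
\]
Suppose instead $k<t^*$. Symmetrically, the left segment fits exactly. The right segment contains $m$ copies of $a_L^*$ and $T-t^*+1$ copies of $a_R^*$, which gives
\[
\frac{(T-t^*+1)\,m}{T-k+1}\,\delta^2 .
\]

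The remaining step is a uniform bound on the prefactors, using $\mathcal K^{(0)}=\{2,\dots,T\}$. In the first case $t^*\ge 2$ gives $t^*-1\ge 1$, and $k\le T$ gives $k-1\le T-1$, so the prefactor is at least $1/(T-1)$. In the second case $t^*\le T$ gives $T-t^*+1\ge 1$, and $k\ge 2$ gives $T-k+1\le T-1$, so the prefactor is again at least $1/(T-1)$. Therefore the minimum is at least $\frac{1}{T-1}\,|a_L^*-a_R^*|^2\,|k-t^*|$ for every $k\in\mathcal K$. This is Assumption~\ref{assump:Psi-error} with $\alpha=1/(T-1)$ and $D(\theta^*)=|a_L^*-a_R^*|^2$.

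The only delicate point is making the constant independent of $k$ and $t^*$. This relies on both segments of the true step being nonempty, which is guaranteed because the knots lie in $\{2,\dots,T\}$. Assumption~\ref{assump:unique} is used only to fix a well-defined $\theta^*$.
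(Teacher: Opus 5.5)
Your proof is correct and follows the paper's argument. It uses the same case split into $k=t^*$, $k<t^*$ and $k>t^*$, the same decoupling into two segment-mean fits, and the same closed forms $\frac{(t^*-k)(T-t^*+1)}{T-k+1}|a_L^*-a_R^*|^2$ and $\frac{(t^*-1)(k-t^*)}{k-1}|a_L^*-a_R^*|^2$. The differences are minor: you coarsen the prefactor to the uniform constant $\alpha=1/(T-1)$ where the paper keeps the slightly sharper $\alpha=\min(t^*-1,T-t^*+1)/(T-1)$, and you state explicitly that dropping the constraint $a_L\neq a_R$ can only lower the minimum, which the paper leaves implicit.
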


This proposition shows that, for a $0$th order change point model, identifiability is driven by the magnitude of the level shift across the change point: larger jumps yield stronger separation and hence more precise localization.

\begin{proposition}\label{prop:cp-separation-1}
	Let $\Psi = \Psi^{(1)}$ and assume Assumption \ref{assump:unique}.
	Suppose that $\theta^* = (a^*,b_L^*, b_R^*)$ with $b_L^*\neq b_R^*$.
	Then, Assumption~\ref{assump:Psi-error} holds with 
	\[
	D(\theta^*):= \lvert b_L^*-b_R^*\rvert^2. 
	\]
\end{proposition}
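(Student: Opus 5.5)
The plan is to reduce the separation inequality to a finite-dimensional least-squares residual that is quadratic in the slope change, and then use that both $\mathcal K^{(1)}$ and $T$ are finite. Write $\delta^*:=b_R^*-b_L^*\neq 0$. The $1$st order model is continuous at its knot, so it can be written as
\[
\Psi^{(1)}(t;k,\theta)=a+b_L t+(b_R-b_L)(t-k)_+ .
\]
In particular, $\Psi^{(1)}(t;t^*,\theta^*)=a^*+b_L^*t+\delta^*(t-t^*)_+$.

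\textbf{Step 1: reduce to a projection residual.} For a candidate knot $k$, the family $\{\Psi^{(1)}(\cdot;k,\theta)\}$ evaluated on $t=1,\dots,T$ is the set of vectors in $\mathcal L_k:=\mathrm{span}\{\mathbf 1,\ (t)_t,\ ((t-k)_+)_t\}\subset\mathbb R^T$, minus the slice where $b_L=b_R$. Removing this slice can only increase the minimum, so the left-hand side of Assumption~\ref{assump:Psi-error} is at least
\[
\mathrm{dist}^2\bigl(f^*,\mathcal L_k\bigr),\qquad f^*:=\bigl(\Psi^{(1)}(t;t^*,\theta^*)\bigr)_{t=1}^T .
\]
The vector $a^*\mathbf 1+b_L^*(t)_t$ already lies in $\mathcal L_k$. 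Hence this distance equals $|\delta^*|^2\, r(k,t^*)$, where $r(k,t^*):=\mathrm{dist}^2\bigl(((t-t^*)_+)_t,\mathcal L_k\bigr)$ depends only on $k$, $t^*$ and $T$. This factorization is exactly what produces $D(\theta^*)=|b_L^*-b_R^*|^2$.

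\textbf{Step 2: positivity of $r(k,t^*)$ for $k\neq t^*$.} If $k=t^*$, the inequality is trivial because its right-hand side vanishes. For $k\neq t^*$, with $k,t^*\in\{2,\dots,T-1\}$, I will show that $\mathbf 1$, $(t)_t$, $((t-t^*)_+)_t$ and $((t-k)_+)_t$ are linearly independent in $\mathbb R^T$. The tool is the second-difference operator $\nabla^2 v(t)=v(t+1)-2v(t)+v(t-1)$ for $t=2,\dots,T-1$:
\begin{itemize}
\item $\nabla^2$ annihilates $\mathbf 1$ and $(t)_t$;
\item $\nabla^2$ sends $((t-c)_+)_t$ to the indicator of $t=c$ whenever $c$ is an interior point.
\end{itemize}
Take a vanishing combination $c_1((t-t^*)_+)+c_2((t-k)_+)+\ell=0$ with $\ell$ affine. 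Applying $\nabla^2$ gives $c_1e_{t^*}+c_2e_k=0$, so $c_1=c_2=0$, and then $\ell=0$. Therefore $((t-t^*)_+)_t\notin\mathcal L_k$, and $r(k,t^*)>0$.

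\textbf{Step 3: choose $\alpha$.} Since $\mathcal K^{(1)}$ is finite and $T$ is fixed, set
\[
\alpha:=\min_{k\in\mathcal K^{(1)},\,k\neq t^*}\frac{r(k,t^*)}{|k-t^*|}>0 .
\]
Then $\min_\theta\sum_t\bigl(\Psi(t;t^*,\theta^*)-\Psi(t;k,\theta)\bigr)^2\ge |\delta^*|^2 r(k,t^*)\ge\alpha\,|b_L^*-b_R^*|^2\,|k-t^*|$ for every $k\in\mathcal K^{(1)}$, which is the claim. Assumption~\ref{assump:unique} is used only to fix $t^*$ and $\theta^*$.

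\textbf{Expected obstacle.} The main difficulty is doing better than a $T$-dependent constant, if an explicit $\alpha$ is wanted. The approach I would try first is to lower bound $r(k,t^*)$ by restricting the sum to the window of indices between $k$ and $t^*$, extended by one point on each side. On that window, any element of $\mathcal L_k$ is affine on one side of $k$, whereas $(t-t^*)_+$ has a kink at $t^*$. A direct computation of the least-squares residual of a hinge against affine functions on about $|k-t^*|$ points would then give growth at least linear in $|k-t^*|$.

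For the proposition as stated, however, the finite-set argument in Steps 1–3 suffices.
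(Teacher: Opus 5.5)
Your proof is correct, but it takes a genuinely different route from the paper's.

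\textbf{How the two arguments differ.} The paper treats $k<t^*$ and $k>t^*$ separately. In each case it lower-bounds the objective by minimizing the parameters on the two sides of the candidate knot independently, which drops the continuity coupling at $k$. On the segment containing $t^*$ this leaves an ordinary affine least-squares fit of a single hinge. The paper states, without derivation, that this fit yields the coefficient $\frac{xy(y+1)(2xy+x-y+1)}{6z(z^2-1)}$ in terms of the segment lengths. It then minimizes this coefficient over the finitely many $k$ on each side and combines $\alpha_1,\alpha_2$ by cases on $t^*$.

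You instead keep the exact continuous-hinge model and view the template family as a dense subset of the three-dimensional subspace $\mathcal L_k$. You factor out $|\delta^*|^2$ by linearity of the orthogonal projection, and you get $r(k,t^*)>0$ from the second-difference argument with no computation.

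\textbf{What each route buys.} Your route is shorter and handles both sides of $t^*$ uniformly. It also needs no closed-form least-squares algebra. Moreover, removing the slice $b_L=b_R$ does not change the infimum, so $|\delta^*|^2 r(k,t^*)$ equals the left-hand side exactly. Your $\alpha$ is therefore the largest admissible constant for this $D(\theta^*)$. The paper's route gives an explicit, though only lower-bounding, formula for the separation as a function of $k$, $t^*$ and $T$. Since the paper also ends by taking a finite minimum, its $\alpha$ is no more explicit in the end than yours.

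\textbf{Your closing remark does not work.} You hope for a $T$-independent constant by showing the residual grows linearly in $|k-t^*|$. This fails in general. If $t^*=T-1$, then $f^*-\bigl(a^*\mathbf 1+b_L^*(t)_t\bigr)=\delta^* e_T$, and $a^*\mathbf 1+b_L^*(t)_t\in\mathcal L_k$ for every $k$. Hence $r(k,t^*)\le 1$ for all $k$, while $|k-t^*|$ can be as large as $T-3$. Any valid $\alpha$ is then at most $1/(T-3)$. The window-restriction idea cannot avoid this, since the residual on a sub-window is bounded by the global residual. So the $T$- and $t^*$-dependent finite-minimum constant that both you and the paper use is essentially forced.
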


Analogously, for a $1$st order change point model, separation is governed by the difference in slopes across the change point. This ensures that deviations from the true change point incur a linearly increasing penalty in the population objective.

We now combine these separation results with the trajectory consistency bounds developed earlier. Specifically, when the dynamic embeddings $\hat{\mathbf{Y}}(t)$ are obtained via the modified UASE construction, we can quantify the localization error of a $0$th or $1$st order change point detected from the one-dimensional Euclidean trajectory associated with a fixed mode-wise variation distance.

With Theorems~\ref{thm:cp-localization} and ~\ref{thm:psi-hatpsi}, Propositions ~\ref{prop:phi-Y-kV}, ~\ref{prop:uase-kV-whitened} in hand, we obtain the following corollary.

\begin{corollary}
	Assume Assumption \ref{assump:isotropy} holds and $\hat{\mathbf{Y}}(t)$ is obtained via the modified UASE.
	Assume that $\lambda_{1}(\mathbf{E}_\phi), \lambda_1(\mathbf{E}_\phi)- \lambda_{2}(\mathbf{E}_\phi)>0$.
	Assume \ref{assump:unique}, and the change point is of $0$th or $1$st order.
	
	Then, for every $A>0$, there exist constants $C_A>0$ and integers $N_A$ such that, for all $n\ge N_A$,
	\[
	\left\lvert \hat t - t^* \right\rvert
	\le  \frac{4}{\alpha D(\theta^*)}\left\{ C_A\frac{\log n}{n^{1/2}} + Q(t^*)^{1/2}\right\}^2.
	\]
	with probability at least $1-n^{-A}$.
\end{corollary}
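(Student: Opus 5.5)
The proof chains two results already available: the localization bound of Theorem~\ref{thm:cp-localization}, and the trajectory consistency template of Theorem~\ref{thm:psi-hatpsi}, instantiated for a fixed mode $k$ through Propositions~\ref{prop:phi-Y-kV} and~\ref{prop:uase-kV-whitened}. The first step is to verify the separation hypothesis, Assumption~\ref{assump:Psi-error}. Depending on whether the change point is of $0$th or $1$st order, we invoke Proposition~\ref{prop:cp-separation-0} or Proposition~\ref{prop:cp-separation-1}. Together with Assumption~\ref{assump:unique}, this supplies $\alpha>0$ and $D(\theta^*)$ (the squared jump or squared slope change). Theorem~\ref{thm:cp-localization} then gives deterministically
\[
|\hat t-t^*|\le \frac{4}{\alpha D(\theta^*)}\Bigl\{\Bigl(\sum_{t=1}^T(\hat\psi(t)-\psi(t))^2\Bigr)^{1/2}+Q(t^*)^{1/2}\Bigr\}^2 .
\]
It therefore remains to bound the trajectory error term by $C_A n^{-1/2}\log n$ with probability at least $1-n^{-A}$.

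For that, we apply Theorem~\ref{thm:psi-hatpsi} with $a=b=1$, so that $\hat\psi$ and $\psi$ are the leading CMDS coordinates. The hypotheses $\lambda_1(\mathbf E_\phi)>0$ and $\lambda_1-\lambda_2>0$ are exactly $\mathrm{gap}(1,1)>0$ together with at least one positive eigenvalue; the conventions $\lambda_0=\infty$ and $\lambda_{r+1}=-\infty$ cover the case $r=1$. Propositions~\ref{prop:phi-Y-kV} and~\ref{prop:uase-kV-whitened} give $f(n)=g(n)=n^{-1/2}\log n$. Hence there is $R\in\mathbb O(1)=\{\pm1\}$ with $(\sum_t(\hat\psi(t)-R\psi(t))^2)^{1/2}=\mathcal O(n^{-1/2}\log n)$ a.s. Unpacking the paper's definition of $\mathcal O(\cdot)$ a.s., for every $A$ there are $C_A$ and $N_A$ such that this bound holds with constant $C_A$ and probability at least $1-n^{-A}$ for $n\ge N_A$.

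The main obstacle is the sign ambiguity $R=\pm1$: Theorem~\ref{thm:cp-localization} compares $\hat\psi$ with $\psi$ itself, not with $R\psi$. I would resolve it by noting that the change point problem is sign-equivariant. If $R=-1$, replace $\psi$ by $-\psi$. Both piecewise models are closed under negation, via $\theta\mapsto(-a_L,-a_R)$ or $(-a,-b_L,-b_R)$. Consequently $Q(k)$, $t^*$, the uniqueness in Assumption~\ref{assump:unique}, and $D(\theta^*)$ (a squared difference) are unchanged. Equivalently, one may fix the CMDS sign convention so that $\hat\psi$ and $\psi$ are aligned. Theorem~\ref{thm:cp-localization} can then be applied to $R\psi$. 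Substituting the high-probability bound into the display, and using monotonicity of $x\mapsto(x+Q(t^*)^{1/2})^2$, yields the stated inequality on the event of probability at least $1-n^{-A}$ for all $n\ge N_A$.
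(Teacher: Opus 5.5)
Your proposal is correct and follows essentially the paper's route. You combine Theorem~\ref{thm:cp-localization}, with Assumption~\ref{assump:Psi-error} supplied by Propositions~\ref{prop:cp-separation-0} and~\ref{prop:cp-separation-1}, with Theorem~\ref{thm:psi-hatpsi} at $a=b=1$ and the mode-wise rates from Propositions~\ref{prop:phi-Y-kV} and~\ref{prop:uase-kV-whitened}. The one difference is how the sign ambiguity is handled, and it is cosmetic: the paper notes that $\hat\psi$ and $-\hat\psi$ give the same $\hat t$, while you flip $\psi$ and check that $Q$, $t^*$, $\alpha$ and $D(\theta^*)$ are unchanged, which is equivalent.
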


Note that the sign ambiguity of the estimated Euclidean trajectory does not affect $0$th or $1$st order change point localization: both $\hat\psi$ and $-\hat\psi$ yield the same estimated $\hat t$.

This result completes the link between trajectory consistency and change point estimation. The bound shows that the error in the estimated change point location is controlled by the trajectory estimation error through the $\ell_2$ deviation $\sum_t(\hat\psi(t)-\psi(t))^2$, together with the intrinsic separation $D(\theta^*)$ of the underlying change point model. In particular, under anchor isotropy and the modified UASE construction, the change point estimator is consistent whenever the corresponding mode-wise Euclidean trajectory is consistent. The dependence on the spectral gaps of $\mathbf E_\phi$ reflects the stability of the temporal geometry in the selected mode and is unavoidable in mode-wise change point analysis.

\section{Additional Experimental Details}
\label{app:experiments}


This section provides the full experimental details omitted from the main text due to space constraints. It describes the complete synthetic data generation procedure, the construction of the real world legal citation network, implementation details for all methods and baselines, and additional quantitative and qualitative results. These supplementary experiments are organized around the paper’s main claims, including canonical geometry, multiscale trajectory recovery, exact node attribution, change point detection, and interpretability on real world data.

\subsection{Synthetic Datasets}
\label{app:exp-synthetic}

The two synthetic datasets were generated according to the process described in Appendix B. They can be viewed as dynamic stochastic block models (SBM)~\citep{holland1983stochastic} with three equally sized communities. We set the number of latent modes to \(3\) and let \(\{\bm e_k\}_{k=1}^3\) denote the standard basis of \(\mathbb R^3\). The anchor random vector is sampled as
\[
\chi =\sqrt{3}\,\bm e_k \qquad \text{with probability } \frac13,\quad k\in\{1,2,3\},
\]
and, conditional on \(\chi\), the dynamic random vector at time \(t\) is defined by
\[
\phi(t)=\frac13\,\mathbf B(t)\chi,
\]
where \(\mathbf B(t)\in\mathbb R^{3\times 3}\) is a time-varying block probability matrix. This yields a dynamic SBM with uniform community proportions. The network structure is decomposed into the orthonormal modes
\[
\bm u_1=\frac{1}{\sqrt3}(1,1,1),\qquad
\bm u_2=\frac{1}{\sqrt6}(1,1,-2),\qquad
\bm u_3=\frac{1}{\sqrt2}(1,-1,0),
\]
where \(\bm u_1\) captures global connectivity, \(\bm u_2\) captures separation between community \(3\) and communities \(1\) and \(2\), and \(\bm u_3\) captures separation between communities \(1\) and \(2\). Let $\xi_k(t)$ denote the strength of mode $k$ at time $t$. The block probability matrix is defined as
\[
\mathbf B(t)=\sum_{k=1}^3 \xi_k(t)\,\bm u_k\bm u_k^\top.
\]
Under this parametrization, the 1D mode-wise trajectory coincides with $\xi_k(t)$ up to a parallel shift specific to each $k$, together with a possible sign ambiguity. Each \(\xi_k(t)\) is chosen to exhibit either a \(0\)th order or \(1\)st order change point, with the change points occurring at different times across modes. Since the population trajectories are known, this benchmark allows direct evaluation of geometric distortion, mode recovery, node attribution, and change point localization against the ground truth. 

We construct two synthetic datasets with distinct ground truth trajectories using the basis defined above, each tailored to a specific experimental objective. Dataset 1 is used exclusively to empirically validate our theoretical results under controlled conditions that exactly match their assumptions. In particular, each mode contains a single change point, which is a standard simplifying assumption in change point localization. By contrast, Dataset 2 is designed for comparison with existing state-of-the-art change point detection methods and features more complex temporal dynamics in order to assess robustness beyond these assumptions.


\textbf{Dataset 1 (Single change point per mode):} We show the mode strength \(\xi_k(t)\) in the left panel of Figure~\ref{fig:xi}, with \(T=16\) time points. For this dataset, the ordering of the modes by their contribution to temporal variation is exactly the same as the basis ordering. In other words, mode \(k\) corresponds to \(\bm u_k\) for \(k=1,2,3\). Each mode contains a single change point, summarized as follows.

\begin{itemize}
    \item Mode 1 (\(\bm u_1\)): a \(1\)st order change point at \(t_1^*=4\),
    \item Mode 2 (\(\bm u_2\)): a \(0\)th order change point at \(t_2^*=9\),
    \item Mode 3 (\(\bm u_3\)): a \(0\)th order change point at \(t_3^*=13\).
\end{itemize}

To validate the convergence of our proposed estimators, we generate networks with the following numbers of nodes:
\[
n \in \{30,50,70,100,150,200,300,400,500\},
\]
\noindent and produce \(100\) independent realizations for each \(n\).

\textbf{Dataset 2 (Multiple change points per mode):} The underlying mode strength is shown in the right panel of Figure~\ref{fig:xi}, with \(T=70\) time points. We can confirm that each mode-wise trajectory contains multiple change points. We fix the number of nodes at \(n=500\) and generate \(100\) independent realizations. Unlike Dataset 1, the correspondence between the mode indices and the basis vectors is not aligned, and each mode exhibits multiple change points of varying order. Specifically, the modes are defined as follows:
\begin{itemize}
    \item Mode 1 (\(\bm u_3\)): \(0\)th order change points at \(t_{1,1}^*=31\) and \(t_{1,2}^*=61\),
    \item Mode 2 (\(\bm u_2\)): a \(0\)th order change point at \(t_{2,1}^*=21\) and a \(1\)st order change point at \(t_{2,2}^*=51\),
    \item Mode 3 (\(\bm u_1\)): a \(1\)st order change point at \(t_{3,1}^*=11\) and a \(0\)th order change point at \(t_{3,2}^*=41\).
\end{itemize}

To provide an intuitive, plain language view of how the network evolves over time for both datasets, we visualize each network snapshot using Gephi~\citep{bastian2009gephi} in Figure~\ref{fig:network_snapshots}. At every time point, communities are detected via modularity maximization~\citep{newman2006modularity,blondel2008fast} and nodes are colored according to the resulting partition. Node positions are computed independently at each snapshot using the ForceAtlas2 layout~\citep{jacomy2014forceatlas2}, yielding visually interpretable representations of local and global connectivity structure. Because both community detection and layout are recomputed separately for each time point, node colors and positions are not intended to be directly comparable across time. The visualizations serve solely as qualitative references for the evolving network structure that is later summarized by the proposed variation and mirror based metrics.

\begin{figure}[t]
\centering
\begin{subfigure}[t]{0.48\linewidth}
    \centering
    \includegraphics[width=\linewidth]{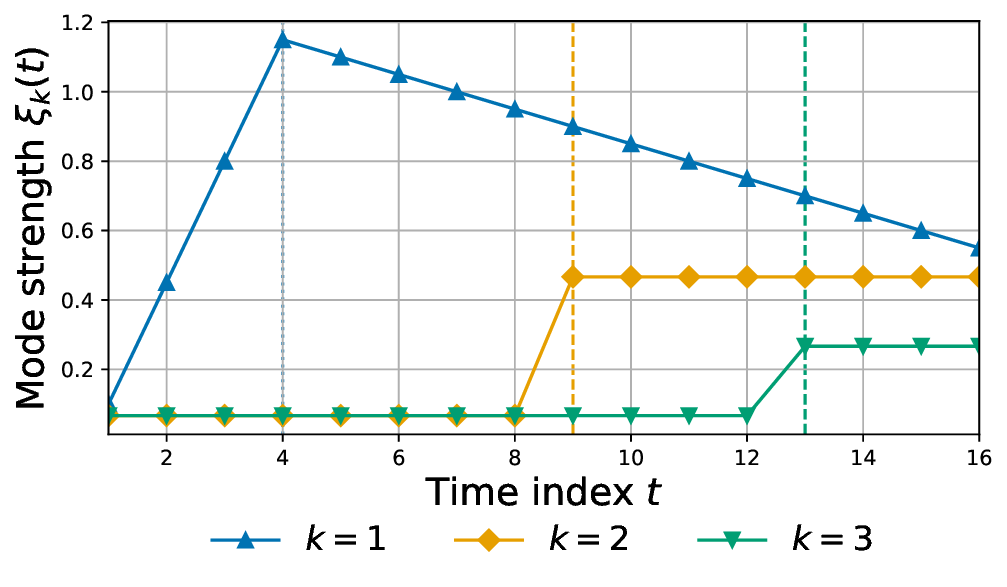}
    \caption{Dataset 1 (single change point per mode).}
\end{subfigure}
\hfill
\begin{subfigure}[t]{0.48\linewidth}
    \centering
    \includegraphics[width=\linewidth]{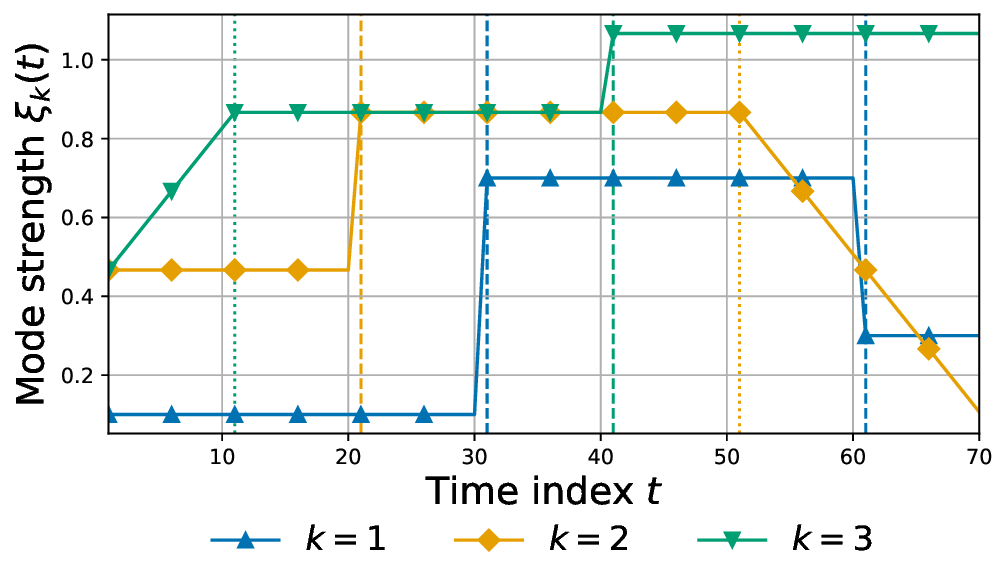}
    \caption{Dataset 2 (multiple change points per mode).}
\end{subfigure}
\caption{Mode strengths $\xi_k(t)$ for the two datasets. Vertical segments indicate change points for the corresponding mode; dashed lines mark a jump in $\xi_k$ (order~0) and dotted lines mark a change in slope (order~1).}
\label{fig:xi}
\end{figure}


\begin{figure}[t]
\centering

\begin{subfigure}[t]{\linewidth}
    \centering
    \begin{tabular}{@{}cccc@{}}
        \includegraphics[width=0.95\linewidth]{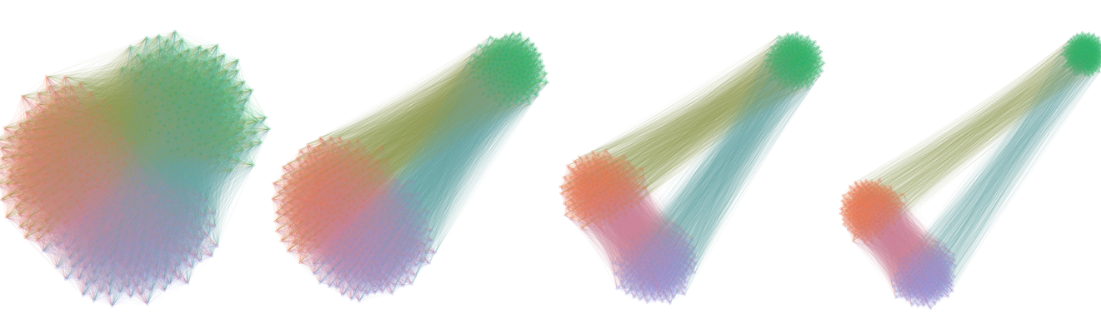} 
    \end{tabular}
    \caption{Dataset 1 (\(t=6,10,14,16\)). The modularity scores are 0.178, 0.312, 0.45 and 0.497 respectively.}
\end{subfigure}

\vspace{0.5em}

\begin{subfigure}[t]{\linewidth}
    \centering
    \begin{tabular}{@{}cccc@{}}
        \includegraphics[width=0.9\linewidth]{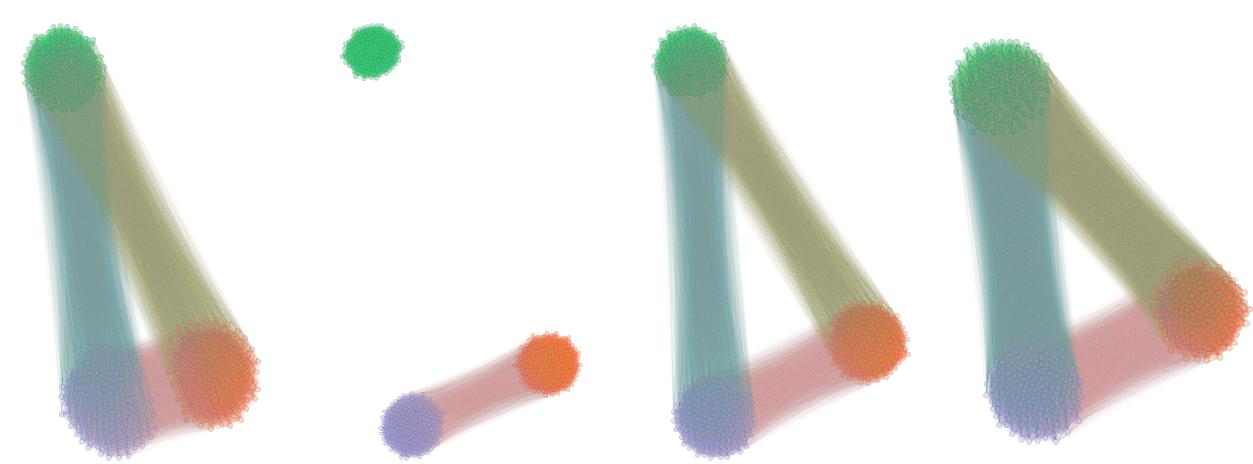} &
    \end{tabular}
    \caption{Dataset 2 (\(t=15,35,55,70\)). The modularity scores are 0.458, 0.624, 0.567 and 0.446 respectively.}
\end{subfigure}

\caption{Visualization of network evolution (both with \(n=500\)). Nodes are colored by communities detected via modularity maximization, and layouts are computed independently at each time point using ForceAtlas2.}
\label{fig:network_snapshots}

\end{figure}

\subsection{Additional Results for Canonical Second-moment Geometry}
\label{app:exp-geometry}

First, using Dataset 1, we examine whether (i) the \(\mathrm{GL}(d)\) ambiguity affects the preservation of latent geometry, and (ii) the choice of basis affects the consistent recovery of mode-wise distances and trajectories, and in particular whether a canonical basis enables successful recovery. Our primary goal is trajectory recovery. However, as discussed in Section~\ref{subsec:population_estimated}, we begin by evaluating distance preservation, since accurate trajectory estimation relies on the faithful recovery of pairwise distances and therefore provides a natural way to verify that each step of the procedure successfully estimates the latent structure.

To assess geometric preservation, we compare three distance matrices:
\begin{itemize}
    \item the estimated distance matrix \(\hat{\mathbf D}_{\hat{\mathbf Y}}\),
    \item the finite sample distance matrix \(\hat{\mathbf D}_{\mathbf Y}\), which serves as an intermediate ground truth, and
    \item the population distance matrix \(\mathbf D_\phi\), which represents the target geometry.
\end{itemize}

We compare two methods: (i) the original UASE~\citep{gallagher2021spectral} and (ii) our proposed modified UASE. We consider three types of distances: the MV distance (Definition~\ref{def:mv}), the TV distance, and the mode-wise distance. For the mode-wise analysis, we distinguish between representations under (i) a canonical basis and (ii) a fixed standard basis. Since basis selection requires multiple modes, this distinction is relevant only for mode-wise distances.

For the modified UASE, we measure geometric error using the Frobenius norm between squared distance matrices:
\[
\left\| \hat{\mathbf D}_{\hat{\mathbf Y}}^{(2)} - \hat{\mathbf D}_{\mathbf Y}^{(2)} \right\|_F, \qquad \left\| \hat{\mathbf D}_{\hat{\mathbf Y}}^{(2)} - \mathbf D_{\phi}^{(2)} \right\|_F,
\]
where $\hat{\mathbf D}_{\mathbf Y}^{(2)}$ denote the elementwise square of $\hat{\mathbf D}_{\mathbf Y}$.

For the original UASE, we account for the \(\mathrm{GL}(d)\) ambiguity by allowing an optimal global scaling:
\[
\min_{r>0} \left\| \hat{\mathbf D}_{\hat{\mathbf Y}}^{(2)} - r \hat{\mathbf D}_{\mathbf Y}^{(2)} \right\|_F ,\qquad \min_{r>0}\left\| \hat{\mathbf D}_{\hat{\mathbf Y}}^{(2)} - r\mathbf D_{\phi}^{(2)} \right\|_F.
\]

We then evaluate trajectory reconstruction up to sign, and up to scaling as well for the original UASE. For the modified UASE, the error is defined as
\[
\min_{s \in \{\pm 1\}} \sum_{t=1}^{T} \bigl(\hat{\psi}(t) - s\,\psi(t)\bigr)^2,
\]
where the sign ambiguity is removed by optimizing over \(s \in \{\pm 1\}\).

For each method, we evaluate both distance reconstruction accuracy and trajectory reconstruction accuracy. We report the mean $\pm$ standard deviation over \(100\) independent trials for all experiments using Dataset 1.

Figure~\ref{fig:latent-vs-embed} shows the results for finite sample geometric recovery. The original UASE exhibits persistent distortion across all distance types, as indicated by the dashed lines, which do not converge to \(0\). In contrast, the proposed method substantially reduces this mismatch and achieves stable convergence to \(0\) for the MV, TV, and mode-wise distances when the latter are expressed in the canonical basis, as shown in Figure~\ref{fig:latent-vs-embed} (a) to (c). Figure~\ref{fig:latent-vs-embed} (d), however, shows that the standard basis also fails to converge. This highlights that \(\mathrm{GL}(d)\) transformations can distort geometric structure and underscores the importance of Theorem~\ref{thm:modified-uase-paper}.

We next examine whether the estimated embedding also recovers the population geometry accurately. Figure~\ref{fig:population-vs-embed} compares the estimated distance matrix \(\hat{\mathbf D}_{\hat{\mathbf Y}}\) with the population distance matrix \(\mathbf D_\phi\), providing a direct assessment of population level recovery. We observe the same qualitative pattern. Both the original UASE and the mode-wise representation obtained using the standard basis fail to converge to \(0\).

Having established the importance of geometric recovery, we finally report trajectory reconstruction results in Figure~\ref{fig:trajectory}. The solid lines in Figure~\ref{fig:trajectory} (a) to (c) show that the proposed method achieves consistent convergence for the MV, TV, and all mode-wise trajectories. By contrast, the original UASE and the mode-wise trajectories based on the standard basis again fail to converge, as shown by the dashed lines in Figure~\ref{fig:trajectory} (a) to (d).

To visually compare how these distortions appear in each trajectory, Figure~\ref{fig:multiscale-trajectories} compares the MV, TV, and mode-wise trajectories under the canonical basis, with the original UASE shown in the top row, that is, Figure~\ref{fig:multiscale-trajectories} (a) to (c), and the proposed method shown in the bottom row, that is, Figure~\ref{fig:multiscale-trajectories} (d) to (f). The dashed lines represent the population trajectories, whereas the solid lines show the estimated trajectories. Under the proposed method, the mode-wise trajectories clearly recover the distinct temporal structures embedded in the latent process. In particular, the \(0\)th and \(1\)st order variations are separated into different modes, enabling a more fine grained view of the temporal dynamics.

At the same time, Figure~\ref{fig:multiscale-trajectories} (a), (b), (d), and (e) show that the MV and TV trajectories aggregate these changes into a single global summary, thereby compressing multiple sources of variation. As a result, weaker but meaningful temporal changes that are visible in the mode-wise representation are attenuated in the global trajectories.

For the original UASE, although the overall temporal patterns are roughly preserved, noticeable distortions remain, as shown in Figure~\ref{fig:multiscale-trajectories} (a) to (c). In particular, the mode-wise trajectories exhibit inconsistencies in alignment and ordering across modes, suggesting that the relative importance of the latent components is not reliably recovered. This behavior is consistent with the geometric distortions induced by the \(\mathrm{GL}(d)\) ambiguity.

These results lead to three main findings: (i) the \(\mathrm{GL}(d)\) ambiguity induces geometric distortion in both the global structure, as measured by the MV and TV distances, and the mode-wise structure, (ii) the proposed method resolves this issue at the global level, enabling consistent recovery of MV and TV geometry and trajectories, and (iii) the proposed method successfully recovers mode-wise geometry and trajectories under a canonical basis, showing that consistent mode-wise recovery becomes achievable with an appropriate choice of basis.

\begin{figure}[t]
    \centering

    \begin{subfigure}[t]{0.48\textwidth}
        \centering
        \includegraphics[width=\textwidth]{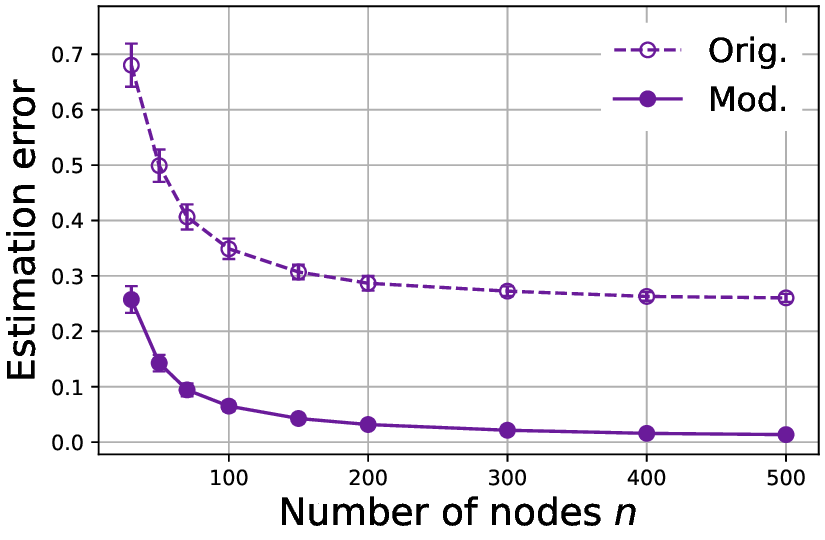}
        \caption{MV}
    \end{subfigure}
    \hfill
    \begin{subfigure}[t]{0.48\textwidth}
        \centering
        \includegraphics[width=\textwidth]{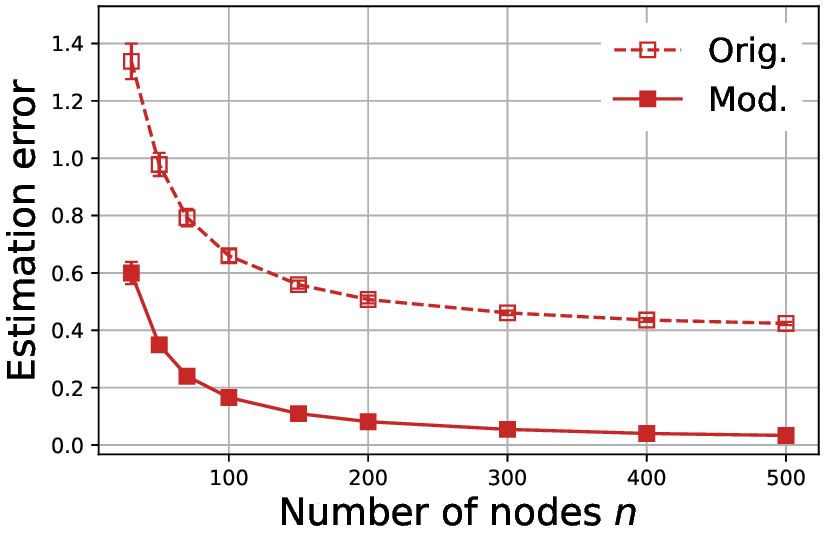}
        \caption{TV}
    \end{subfigure}

    \vspace{0.5em}

    \begin{subfigure}[t]{0.48\textwidth}
        \centering
        \includegraphics[width=\textwidth]{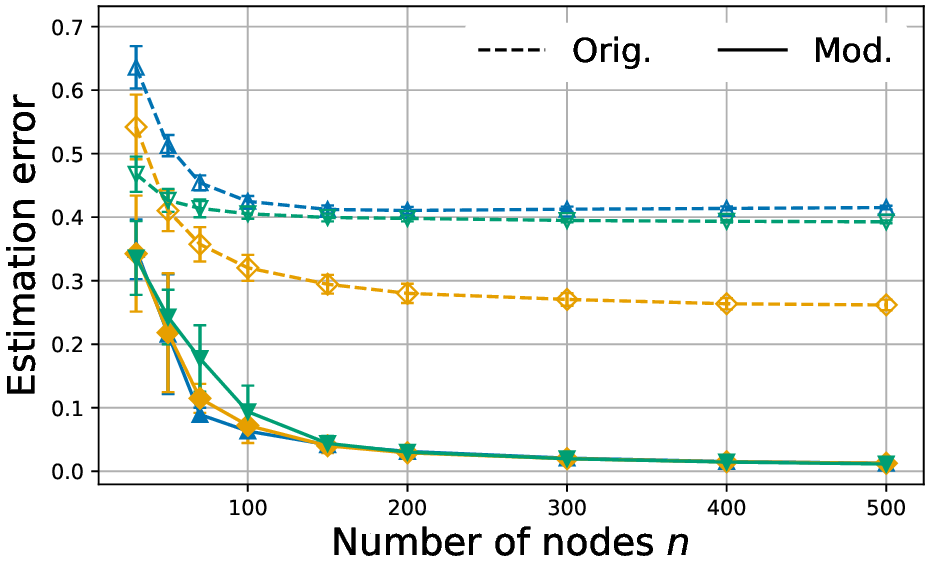}
        \caption{Mode-wise (canonical)}
    \end{subfigure}
    \hfill
    \begin{subfigure}[t]{0.48\textwidth}
        \centering
        \includegraphics[width=\textwidth]{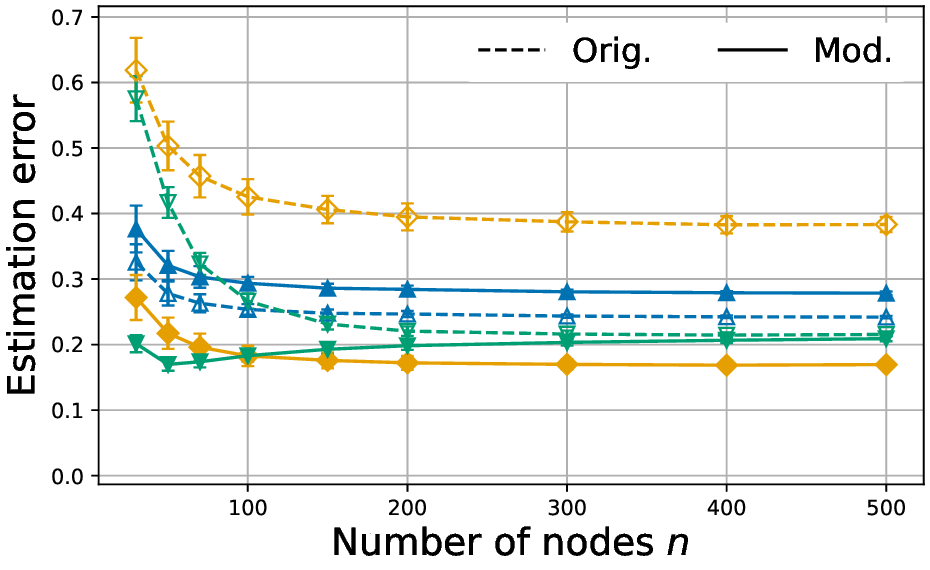}
        \caption{Mode-wise (standard)}
    \end{subfigure}

    \caption{Convergence of the estimation error between the finite sample distance matrix $\hat{\mathbf D}_{\mathbf Y}$ and the estimated distance matrix $\hat{\mathbf D}_{\hat{\mathbf Y}}$, comparing the original UASE and the proposed modified UASE. This evaluates geometric distortion induced by the embedding across MV, TV, and mode-wise distances under canonical and standard bases.}
    \label{fig:latent-vs-embed}
\end{figure}

\begin{figure}[t]
    \centering

    \begin{subfigure}[t]{0.48\textwidth}
        \centering
        \includegraphics[width=\textwidth]{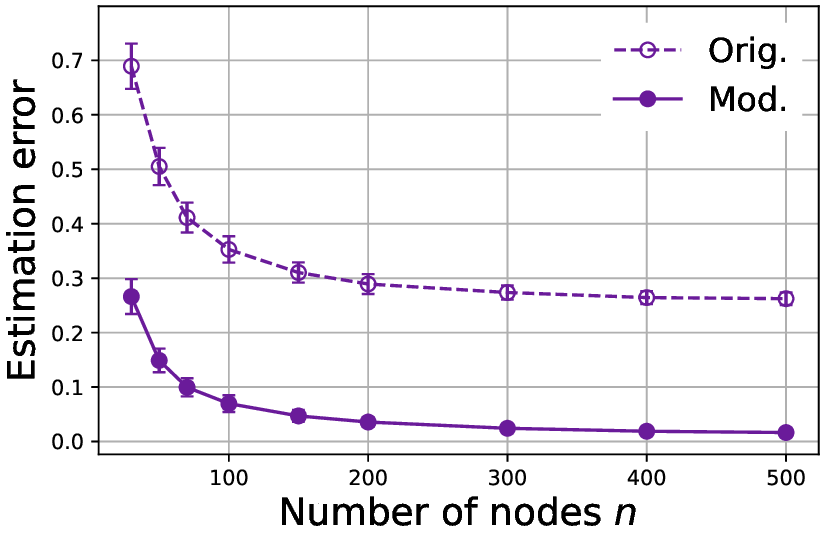}
        \caption{MV}
    \end{subfigure}
    \hfill
    \begin{subfigure}[t]{0.48\textwidth}
        \centering
        \includegraphics[width=\textwidth]{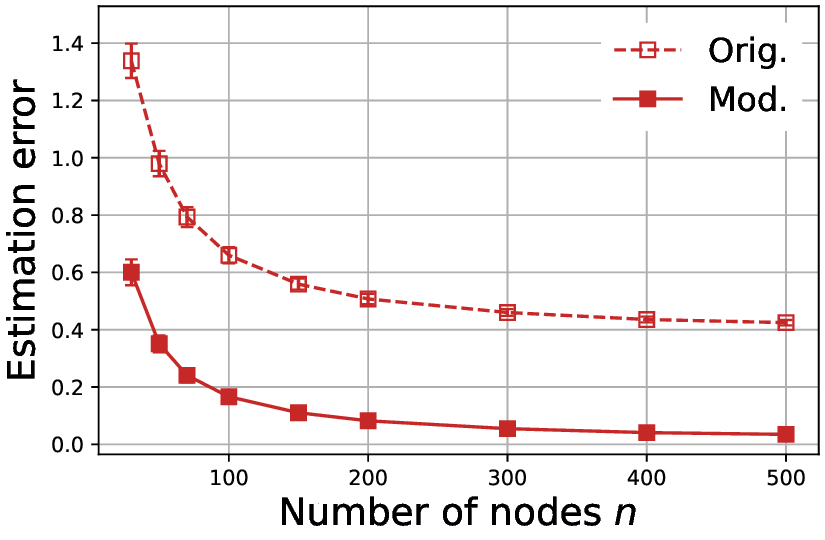}
        \caption{TV}
    \end{subfigure}

    \vspace{0.5em}

    \begin{subfigure}[t]{0.48\textwidth}
        \centering
        \includegraphics[width=\textwidth]{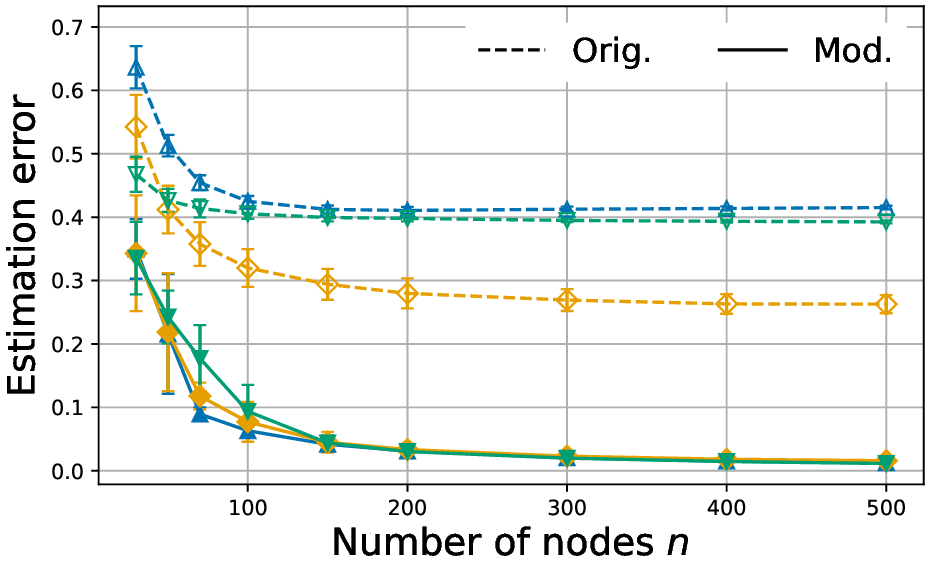}
        \caption{Mode-wise (canonical)}
    \end{subfigure}
    \hfill
    \begin{subfigure}[t]{0.48\textwidth}
        \centering
        \includegraphics[width=\textwidth]{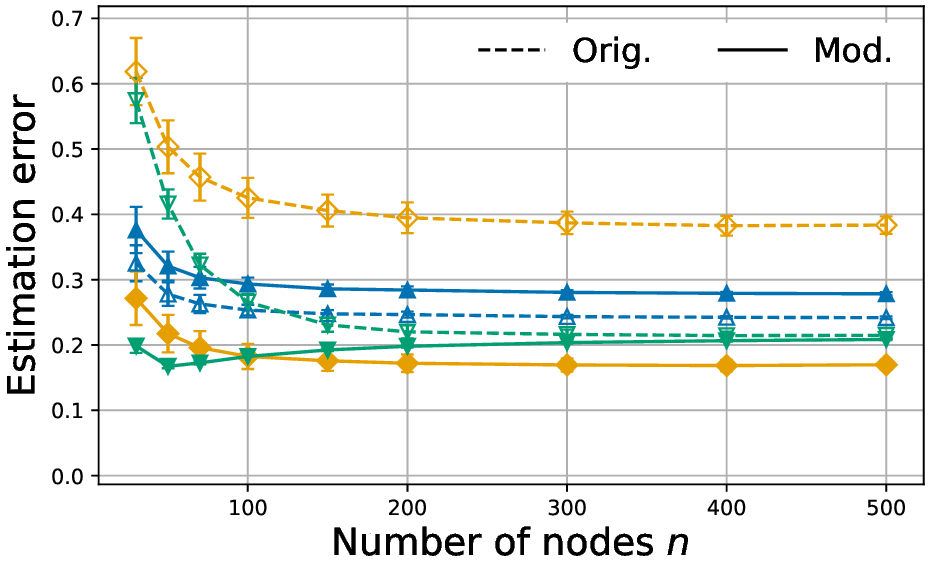}
        \caption{Mode-wise (standard)}
    \end{subfigure}

    \caption{Convergence of the estimation error between the population distance matrix $\mathbf D_\phi$ and the estimated distance matrix $\hat{\mathbf D}_{\hat{\mathbf Y}}$, comparing the original UASE and the proposed modified UASE. This evaluates recovery of intrinsic geometry across MV, TV, and mode-wise distances under canonical and standard bases.}
    \label{fig:population-vs-embed}
\end{figure}

\begin{figure}[t]
    \centering

    \begin{subfigure}[t]{0.48\textwidth}
        \centering
        \includegraphics[width=\textwidth]{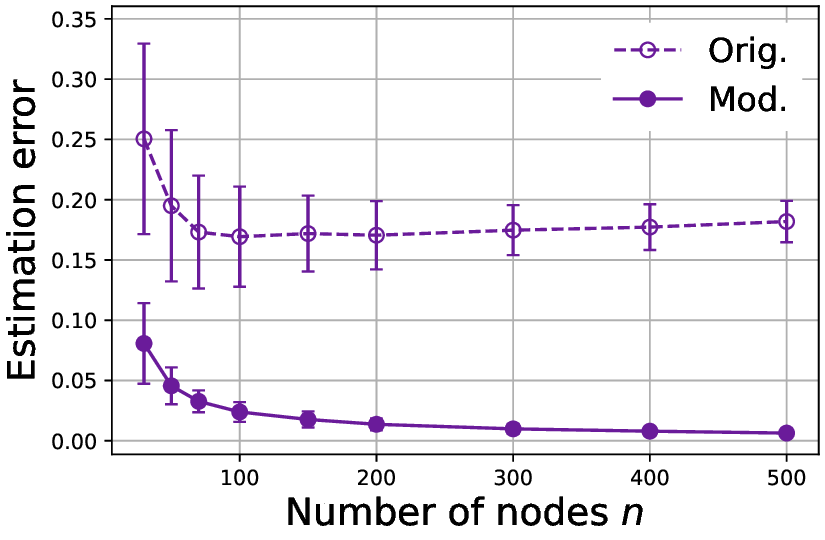}
        \caption{MV}
    \end{subfigure}
    \hfill
    \begin{subfigure}[t]{0.48\textwidth}
        \centering
        \includegraphics[width=\textwidth]{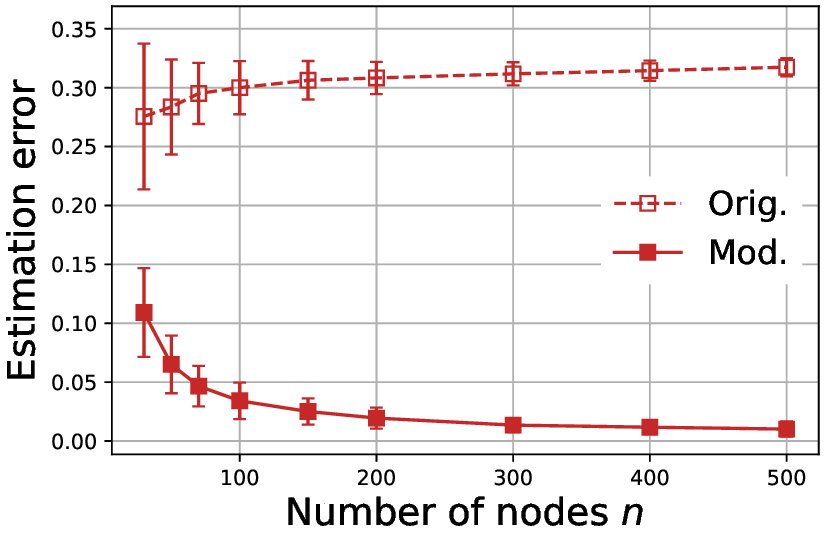}
        \caption{TV}
    \end{subfigure}

    \vspace{0.5em}

    \begin{subfigure}[t]{0.48\textwidth}
        \centering
        \includegraphics[width=\textwidth]{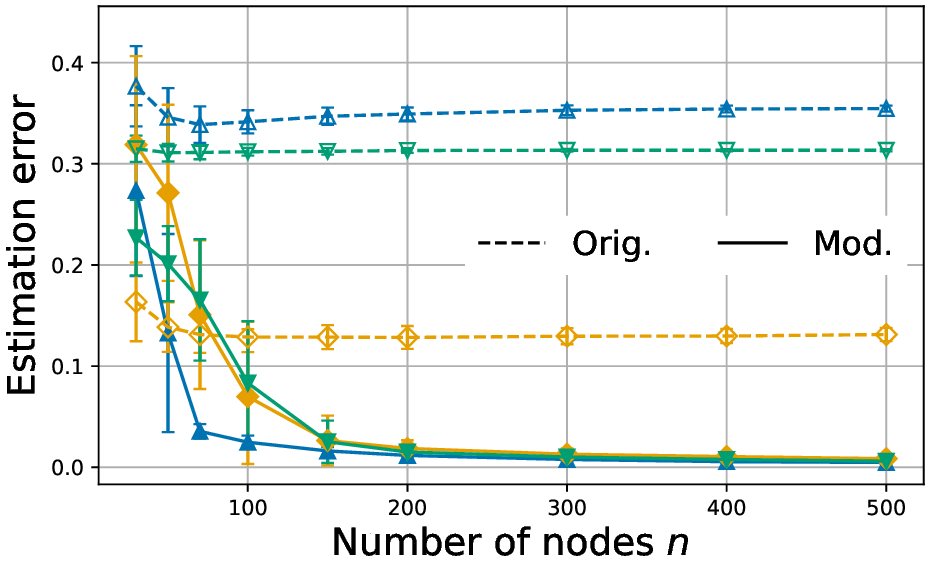}
        \caption{Mode-wise (canonical)}
    \end{subfigure}
    \hfill
    \begin{subfigure}[t]{0.48\textwidth}
        \centering
        \includegraphics[width=\textwidth]{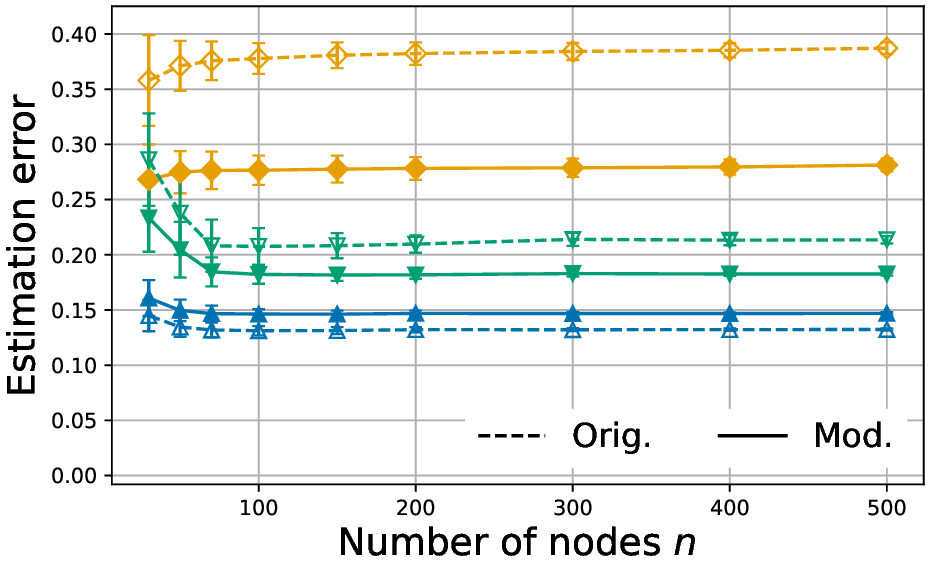}
        \caption{Mode-wise (standard)}
    \end{subfigure}

    \caption{Convergence of trajectory estimation error between the population and estimated trajectories, comparing the original UASE and the proposed modified UASE across MV, TV, and mode-wise cases under canonical and standard bases.}
    \label{fig:trajectory}
\end{figure}

\subsection{Mode Decomposition of Trace Variation}
\label{app:exp-trajectories}

Theorem~\ref{thm:tv-decomposition-paper} establishes an exact decomposition at the distance level. Since the trajectories are constructed to approximate these distances, we expect a corresponding decomposition to also appear at the trajectory-level. We verify this using Dataset 1.

We measure temporal change at each time point by taking the difference between consecutive trajectory values and then taking its squared norm. We visualize these quantities in two ways: (i) line plots for the trace trajectory with trajectory dimensions \(c=1\) and \(c=3\), and (ii) a stacked plot for the mode-wise trajectories, where the contributions of the individual modes are aggregated. The case \(c=3\) corresponds to the intrinsic dimensionality required to reconstruct the population distance in Dataset 1.

Figure~\ref{fig:tv-decomposition} shows the results. The temporal variation captured by the trace trajectory, shown in the line plots, closely aligns with the aggregated variation of the mode-wise trajectories, shown in the bar chart. In particular, we observe that
\[
\|\psi_{\mathrm{TV}}(t+1) - \psi_{\mathrm{TV}}(t)\|_2^2
\;\approx\;
\sum_{k=1}^3 \|\psi_k(t+1) - \psi_k(t)\|_2^2,
\]
\noindent indicating that the variation of the trace trajectory is well explained by the sum of the mode-wise variations. This provides empirical evidence that the trajectory construction preserves not only distances, but also their decomposition structure.

When only the 1D representation is used, as shown by the solid line, the magnitude of variation decreases, although the overall temporal pattern remains broadly consistent with the sum of the mode-wise variations. By contrast, the 3-dimensional representation exhibits much closer quantitative agreement with the sum of the mode-wise variations. This confirms that the proposed construction preserves a coherent decomposition of temporal variation at the trajectory-level, consistent with the exact decomposition at the distance level.

\begin{figure}[t]
    \centering

    \begin{subfigure}[t]{0.3\textwidth}
        \centering
        \includegraphics[width=\textwidth]{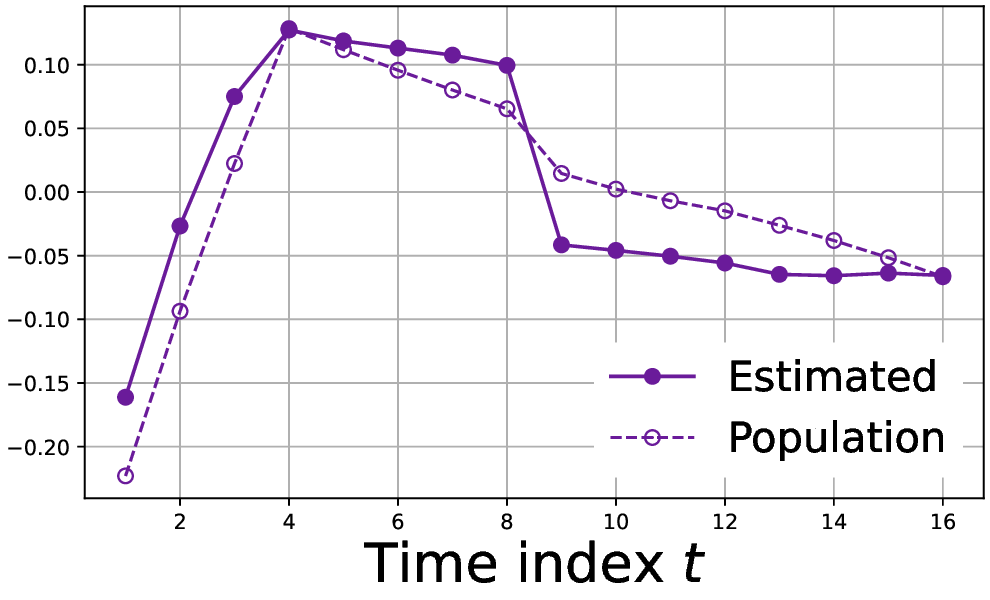}
        \caption{MV}
    \end{subfigure}
    \hfill
    \begin{subfigure}[t]{0.3\textwidth}
        \centering
        \includegraphics[width=\textwidth]{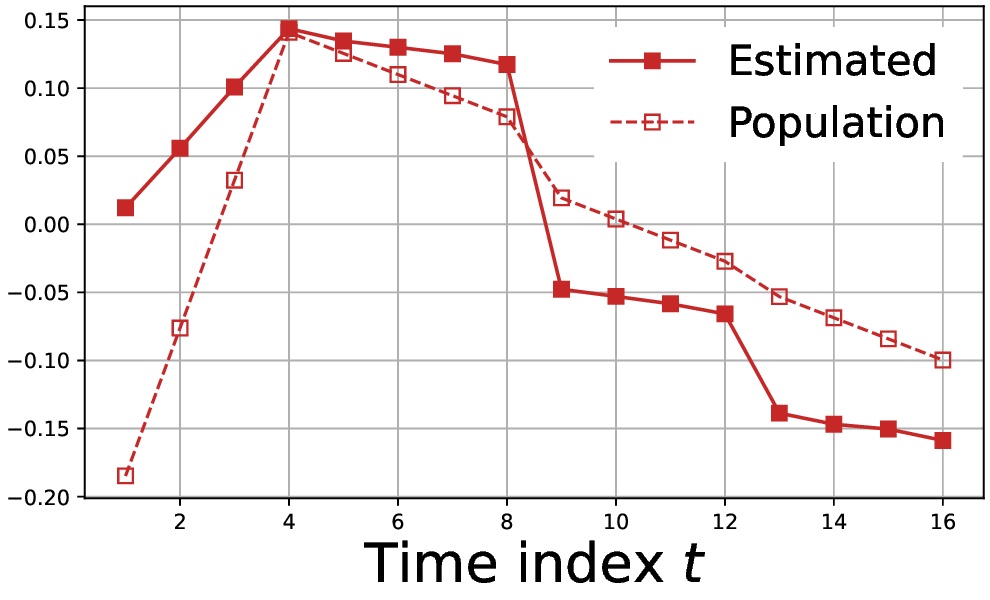}
        \caption{TV}
    \end{subfigure}
    \hfill
    \begin{subfigure}[t]{0.3\textwidth}
        \centering
        \includegraphics[width=\textwidth]{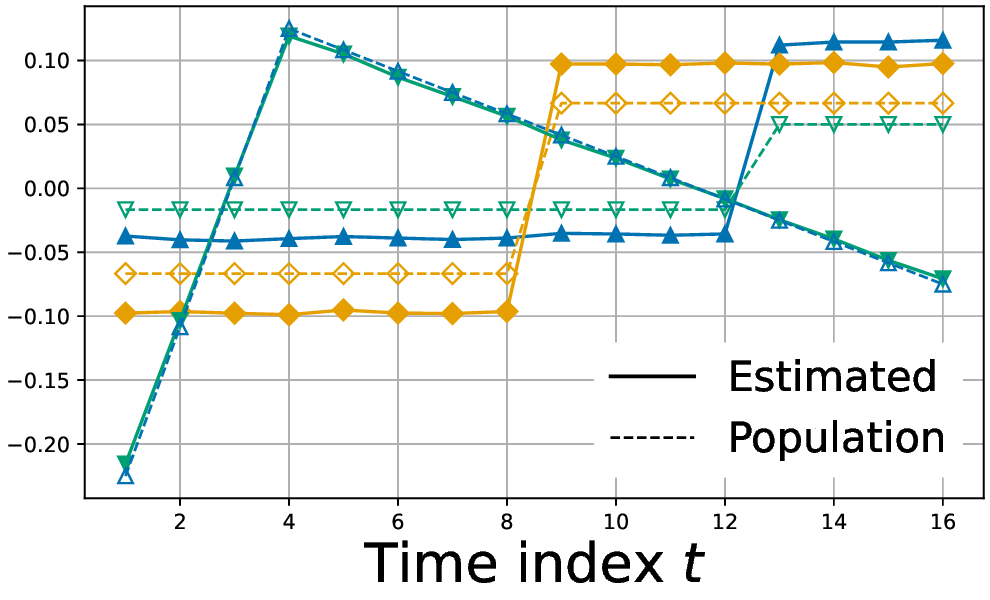}
        \caption{Mode-wise (canonical)}
    \end{subfigure}

    \vspace{0.5em}

    \begin{subfigure}[t]{0.3\textwidth}
        \centering
        \includegraphics[width=\textwidth]{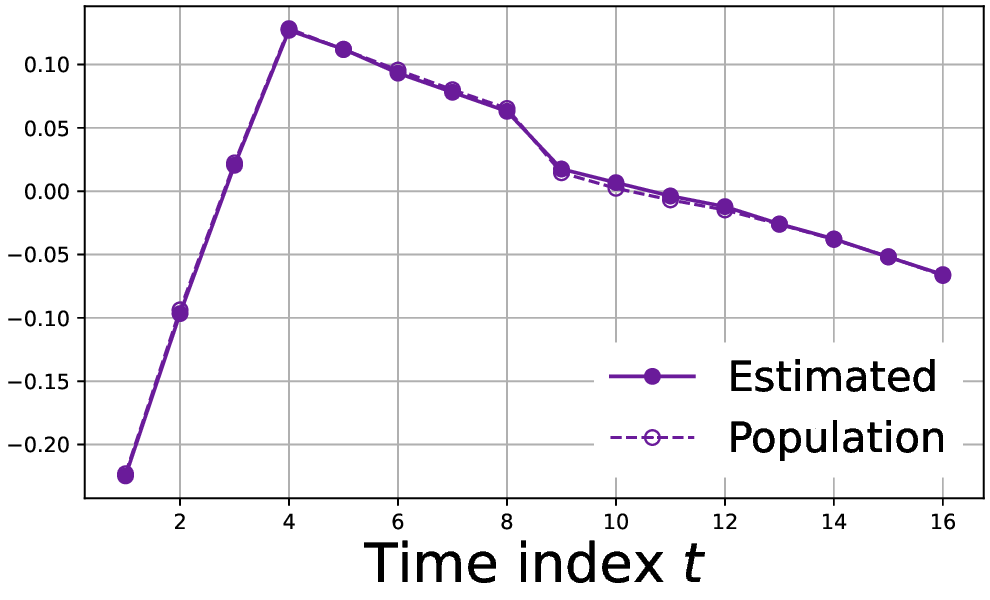}
        \caption{MV}
    \end{subfigure}
    \hfill
    \begin{subfigure}[t]{0.3\textwidth}
        \centering
        \includegraphics[width=\textwidth]{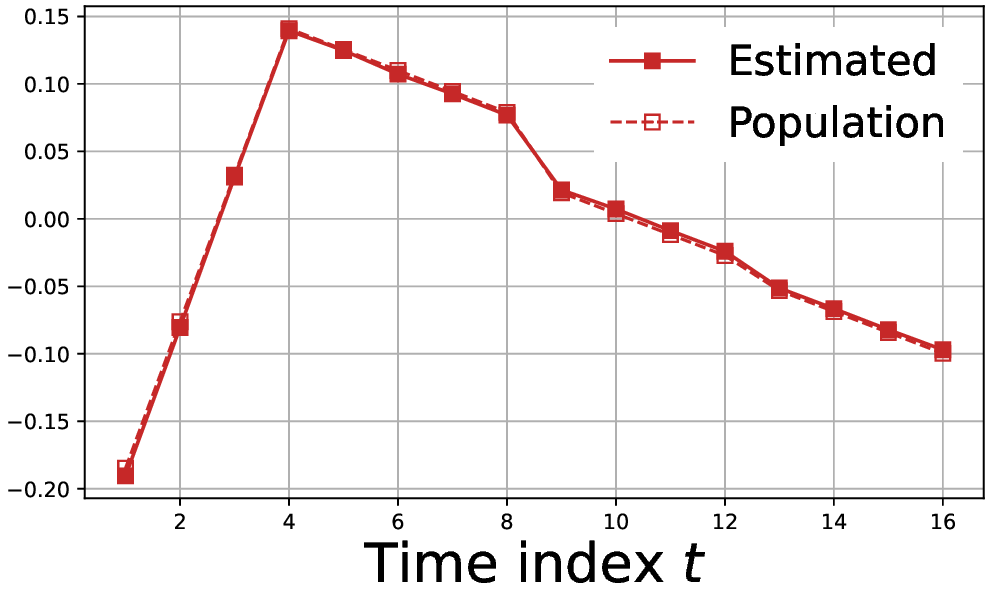}
        \caption{TV}
    \end{subfigure}
    \hfill
    \begin{subfigure}[t]{0.3\textwidth}
        \centering
        \includegraphics[width=\textwidth]{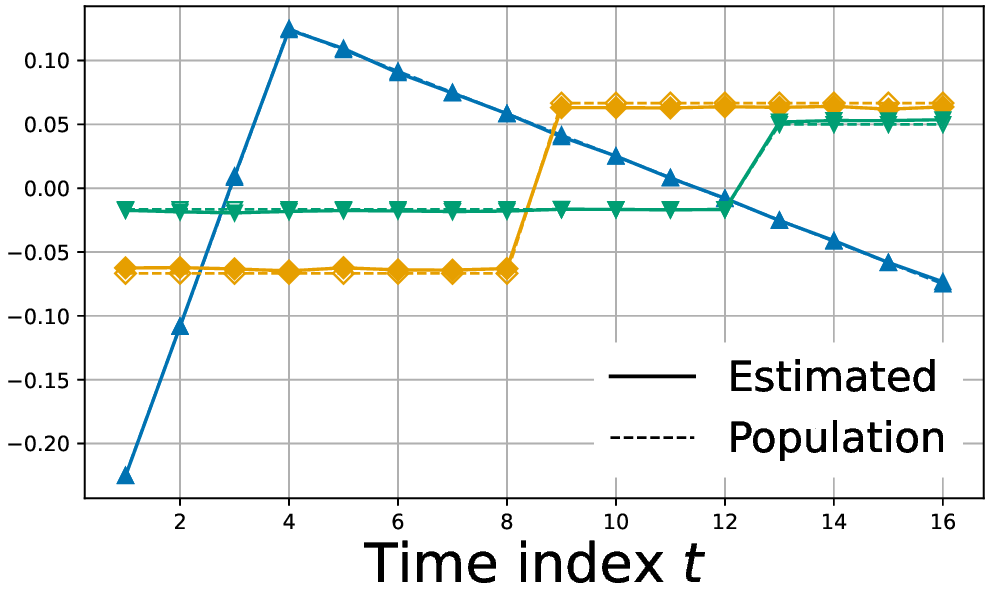}
        \caption{Mode-wise (canonical)}
    \end{subfigure}

    \caption{
    Multiscale trajectories estimated by the original UASE (top row) and the proposed modified UASE (bottom row).
    Columns correspond to MV, TV, and mode-wise distances under a canonical basis.
    The mode-wise trajectories separates distinct temporal components at the mode level, while MV and TV trajectories provide global summaries.
    In contrast, the original UASE exhibits distortions in mode-wise trajectories, including inconsistencies in alignment and ordering across modes. Solid lines denote estimated trajectories, while dashed lines denote the corresponding population trajectories. Note that trajectories are invariant up to sign, and the signs have been adjusted for better visual interpretability.
    }
    \label{fig:multiscale-trajectories}
\end{figure}

\begin{figure}[t]
    \centering
    \includegraphics[width=0.95\textwidth]{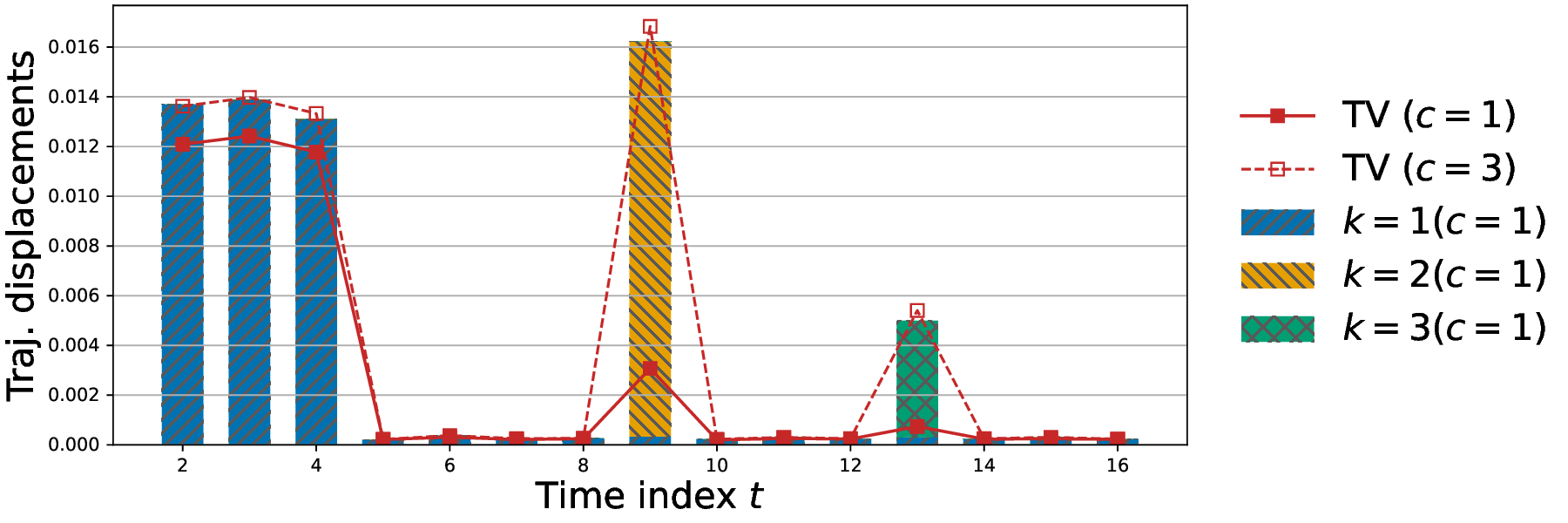}
    \caption{
    Decomposition of temporal variation under the proposed modified UASE.
    Line plots show the squared temporal differences of the trace trajectory for dimensions $c=1$ and $c=3$.
    The stacked plot shows the sum of squared temporal differences of the mode-wise trajectories.
    The 3-dimensional case ($c=3$) closely matches the aggregated mode-wise variation, while the 1D ($c=1$) case underestimates the magnitude but broadly preserves the overall temporal pattern.
    }
    \label{fig:tv-decomposition}
\end{figure}

\subsection{Additional Results for Node Attribution}
\label{app:exp-attribution}

In this subsection, we verify the node attribution results summarized in Theorem~\ref{thm:local-global-main} in the main text. Theorem~\ref{thm:local-global-main} states that the discrepancy between trajectory differences and aggregated node-level attributions is uniformly bounded over all time pairs \((t,s)\) by a term determined by the CMDS residual. To verify this numerically, we aggregate attribution magnitudes at the node-level by averaging across nodes:

\[
\frac{1}{n}\sum_{i=1}^{n}\|\hat{\mathbf Y}_{i:}(t)-\hat{\mathbf Y}_{i:}(s)\|^2,
\qquad
\frac{1}{n}\sum_{i=1}^{n} \left| \left\langle \hat{\mathbf Y}_{i:}(t) - \hat{\mathbf Y}_{i:}(s), \hat{\bm u}_k \right\rangle \right|^2.
\]

Figure~\ref{fig:pairwise-attribution} presents scatter plots over all time pairs \((t,s)\) with \(t < s\). The trajectory-level variation and the corresponding aggregated node-level attribution exhibit strong agreement, indicating that variations measured in trajectory space are well explained by local node-level contributions.

\begin{figure}[t]
    \centering

    \begin{subfigure}[t]{0.48\linewidth}
        \centering
        \includegraphics[width=\linewidth]{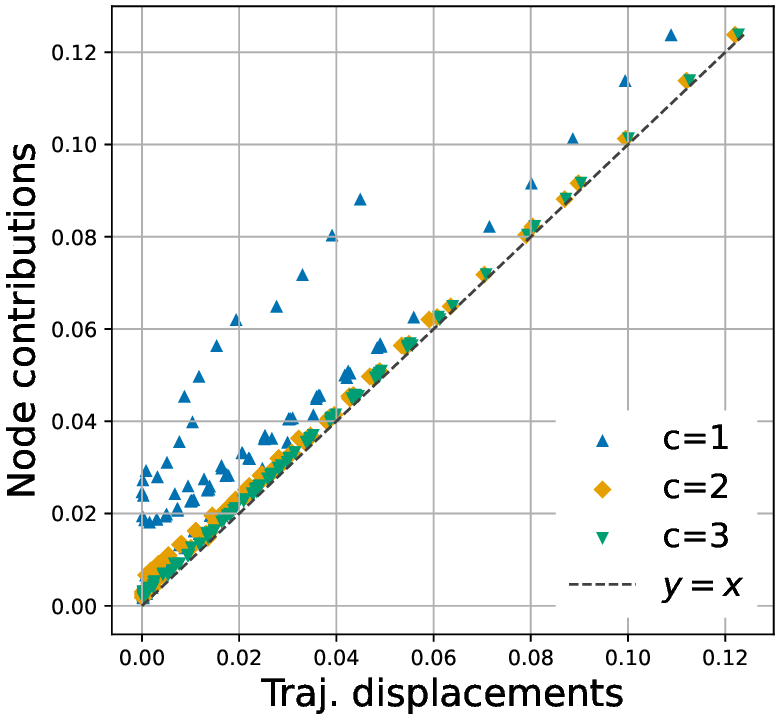}
        \caption{TV}
    \end{subfigure}
    \hfill
    \begin{subfigure}[t]{0.48\linewidth}
        \centering
        \includegraphics[width=\linewidth]{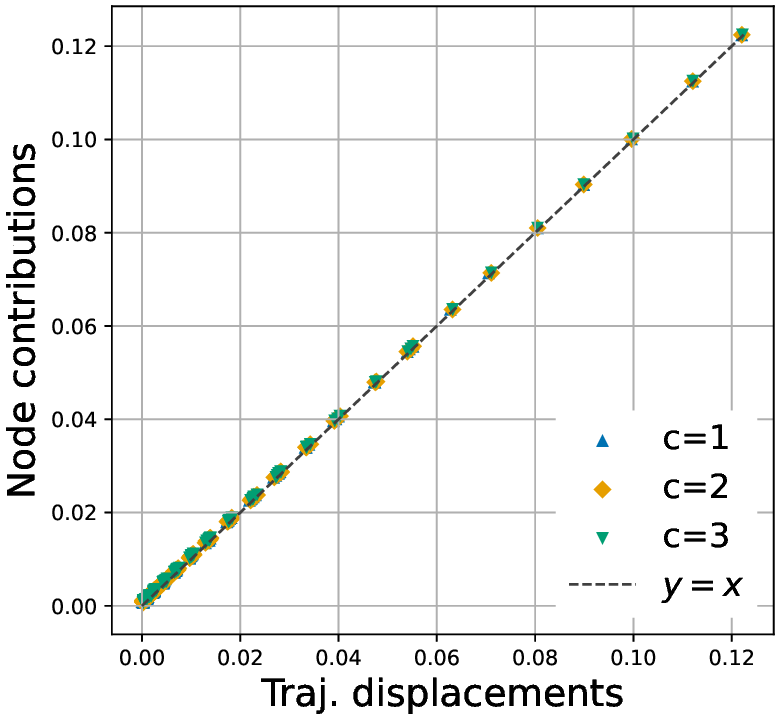}
        \caption{Mode 1}
    \end{subfigure}

    \vspace{0.5em}

    \begin{subfigure}[t]{0.48\linewidth}
        \centering
        \includegraphics[width=\linewidth]{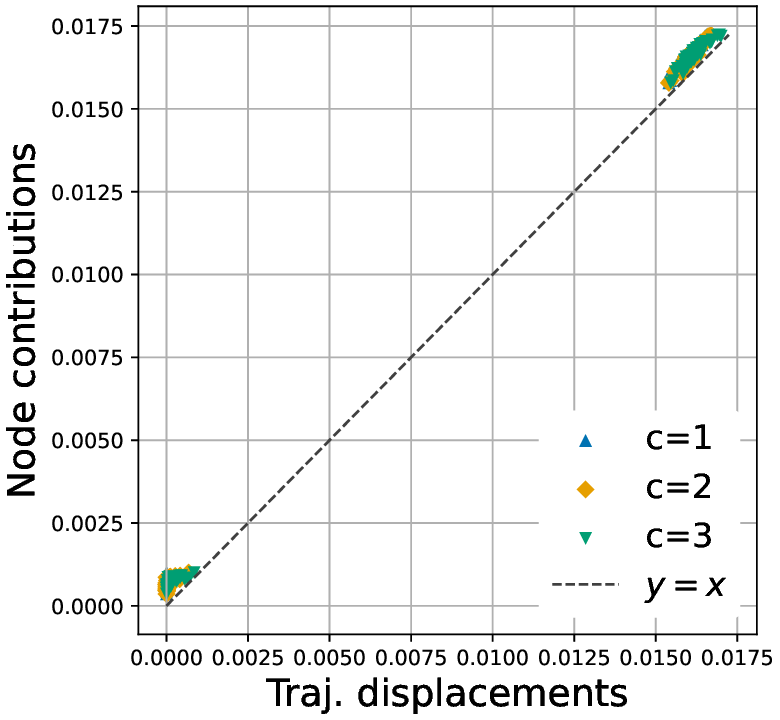}
        \caption{Mode 2}
    \end{subfigure}
    \hfill
    \begin{subfigure}[t]{0.48\linewidth}
        \centering
        \includegraphics[width=\linewidth]{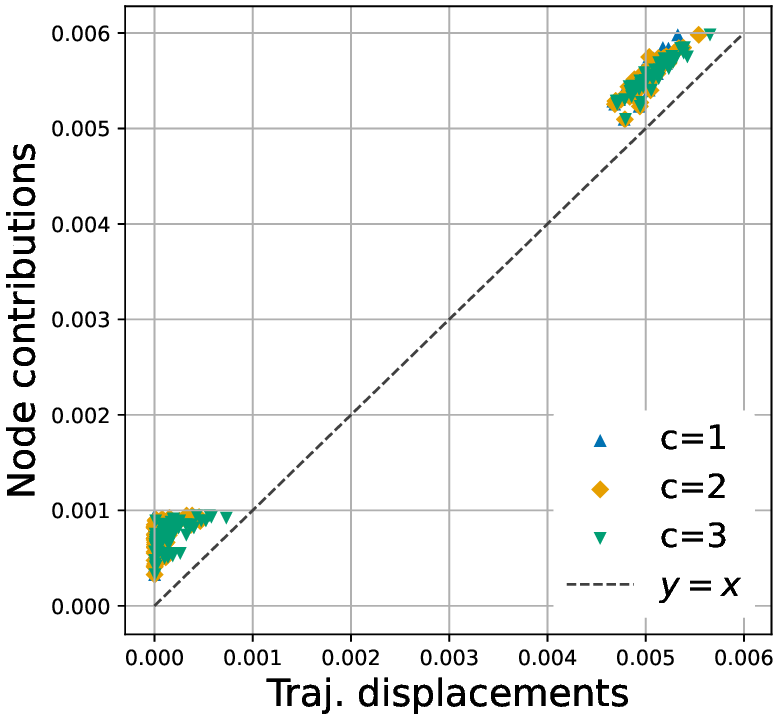}
        \caption{Mode 3}
    \end{subfigure}

    \caption{
    Pairwise comparison between trajectory differences and aggregated node-level attributions over all time pairs $(t,s)$ with $t < s$.
    Each panel corresponds to a different trajectory type (TV or mode-wise), where the mode-wise results are computed under the canonical basis.
    The horizontal axis shows the trajectory-level variation, and the vertical axis shows the corresponding aggregated node-level attribution.
    Results are shown for trajectory dimensions $c=1,2,3$.
    }
    \label{fig:pairwise-attribution}
\end{figure}

We next examine a stronger aggregate version of this result. Theorems~\ref{thm:local-global-agg-TV} and \ref{thm:local-global-agg-kV} in the Appendix establish a tighter upper bound for the aggregated discrepancies across all time pairs \((t,s)\) with \(t < s\). Figure~\ref{fig:aggregate-attribution} shows the aggregated discrepancy between trajectory-level variation and node-level attribution as a function of the trajectory dimension \(c\). This allows us to assess whether the upper bounds derived in Theorems~\ref{thm:local-global-agg-TV} and \ref{thm:local-global-agg-kV} are loose. The dashed line shows the theoretical upper bound, while the solid line shows the empirical error. As the trajectory dimension \(c\) increases, the alignment between trajectory differences and aggregated node-level attributions improves, consistent with the theoretical prediction.
\begin{figure}[t]
    \centering

    \begin{subfigure}[t]{0.48\linewidth}
        \centering
        \includegraphics[width=\linewidth]{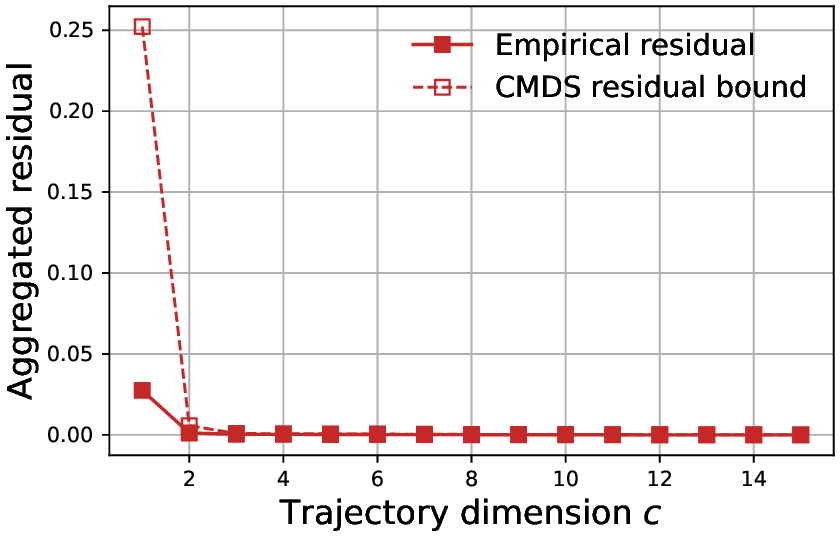}
        \caption{TV}
    \end{subfigure}
    \hfill
    \begin{subfigure}[t]{0.48\linewidth}
        \centering
        \includegraphics[width=\linewidth]{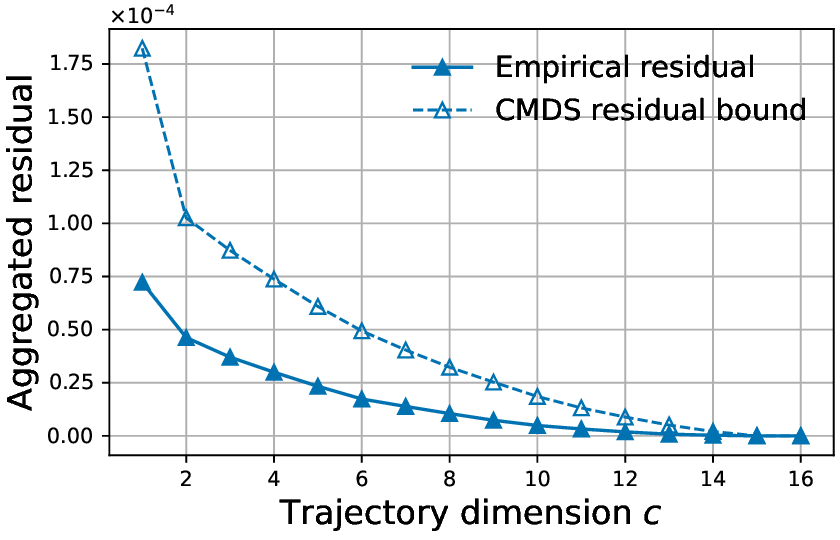}
        \caption{Mode 1}
    \end{subfigure}

    \vspace{0.5em}

    \begin{subfigure}[t]{0.48\linewidth}
        \centering
        \includegraphics[width=\linewidth]{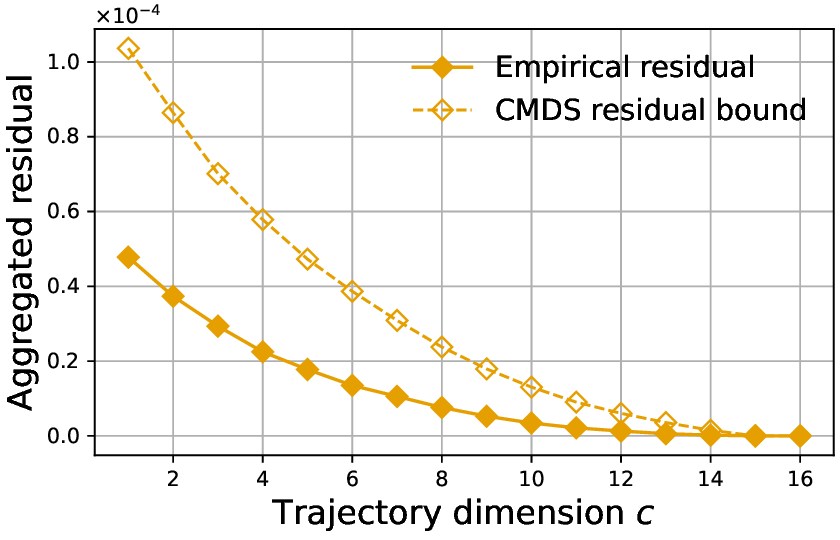}
        \caption{Mode 2}
    \end{subfigure}
    \hfill
    \begin{subfigure}[t]{0.48\linewidth}
        \centering
        \includegraphics[width=\linewidth]{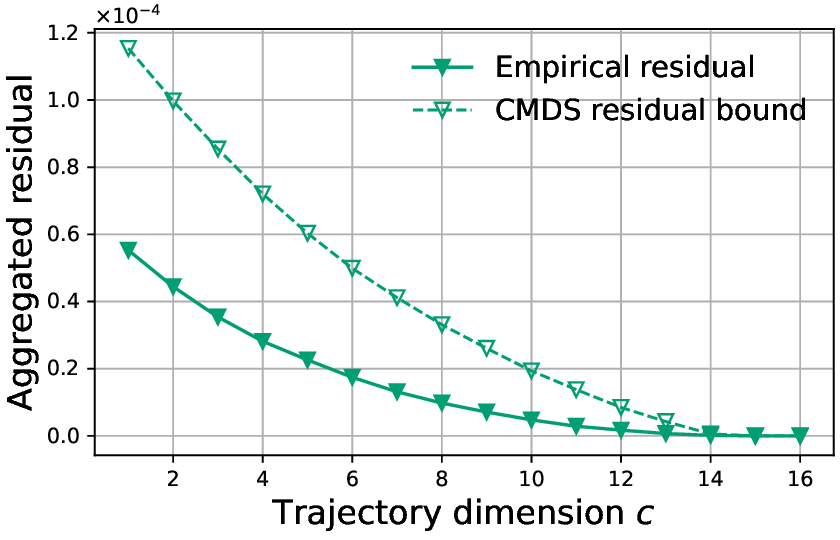}
        \caption{Mode 3}
    \end{subfigure}

    \caption{
    Aggregated discrepancy between trajectory-level variation and node-level attribution as a function of the trajectory dimension $c$.
    The empirical residual (solid line) is compared with the theoretical upper bound (dashed line) from Theorems~\ref{thm:local-global-agg-TV} and~\ref{thm:local-global-agg-kV}.
    Each panel corresponds to a different trajectory type (TV or mode-wise), where the mode-wise results are computed under the canonical basis.
    }
    \label{fig:aggregate-attribution}
\end{figure}

Having verified these identities numerically, we next examine whether the recovered node-level attributions also reflect the expected structural patterns. We use the estimated embeddings to visualize node-level contributions to the TV and mode-wise trajectories, as shown in Figure~\ref{fig:combined-attribution}. Each node is positioned using a t-SNE projection of the 3-dimensional modified UASE embedding. At each time point \(t\), the size of a node marker is proportional to the magnitude of its attribution:
\[
\|\hat{\mathbf Y}_{i:}(t)-\hat{\mathbf Y}_{i:}(t-1)\|^2
\quad \text{(TV)},
\qquad
\left| \left\langle \hat{\mathbf Y}_{i:}(t) - \hat{\mathbf Y}_{i:}(t-1), \hat{\bm u}_k \right\rangle \right|^2
\quad \text{(mode-wise)}.
\]

\noindent For the mode-wise visualizations, color encodes the sign of the inner product, with red indicating positive values and blue indicating negative values, whereas for TV all nodes are shown in gray.

The recovered patterns closely match the synthetic ground truth. For mode \(1\), all nodes contribute with signs corresponding to increases or decreases in global connectivity strength. For mode \(2\), nodes in communities \(1\) and \(2\) exhibit signs opposite to those in community \(3\). For mode \(3\), communities \(1\) and \(2\) contribute with opposite signs, reflecting their pairwise separation. By contrast, the TV attribution aggregates these effects. Nodes with large contributions in one or more modes appear prominent, indicating that the trace trajectory reflects the combined magnitude of the mode-wise variations.

Taken together, these results demonstrate that node-level attributions provide a faithful decomposition of trajectory variation. Moreover, they capture meaningful structural patterns at the node-level while aggregating consistently to explain the variations observed in trajectory space, in agreement with the theoretical guarantees.

\begin{figure}[t]
    \centering

    \begin{subfigure}[t]{0.48\linewidth}
        \centering
        \includegraphics[width=\linewidth]{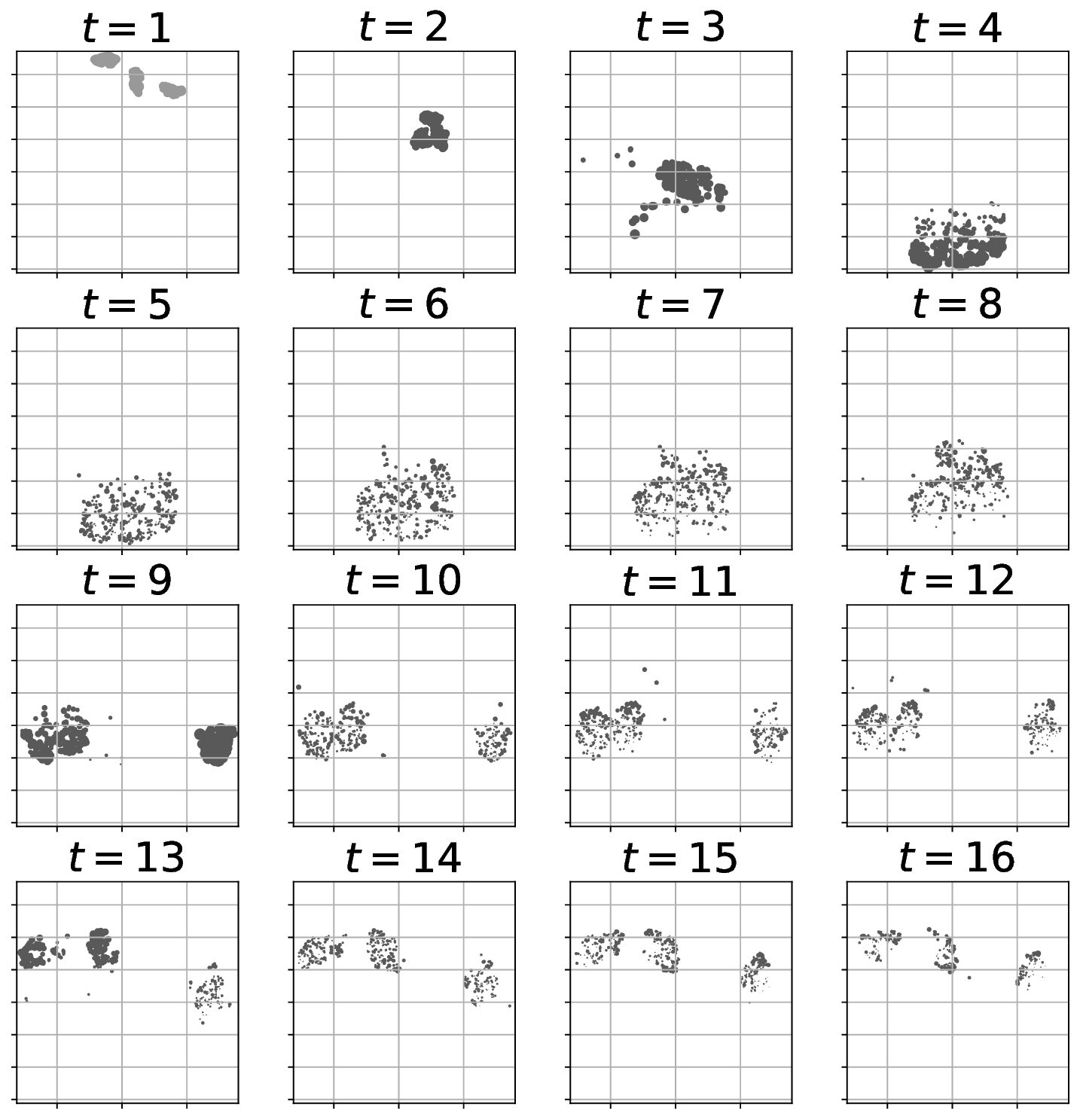}
        \caption{Trace variation}
        \label{fig:tv-attribution}
    \end{subfigure}
    \hfill
    \begin{subfigure}[t]{0.48\linewidth}
        \centering
        \includegraphics[width=\linewidth]{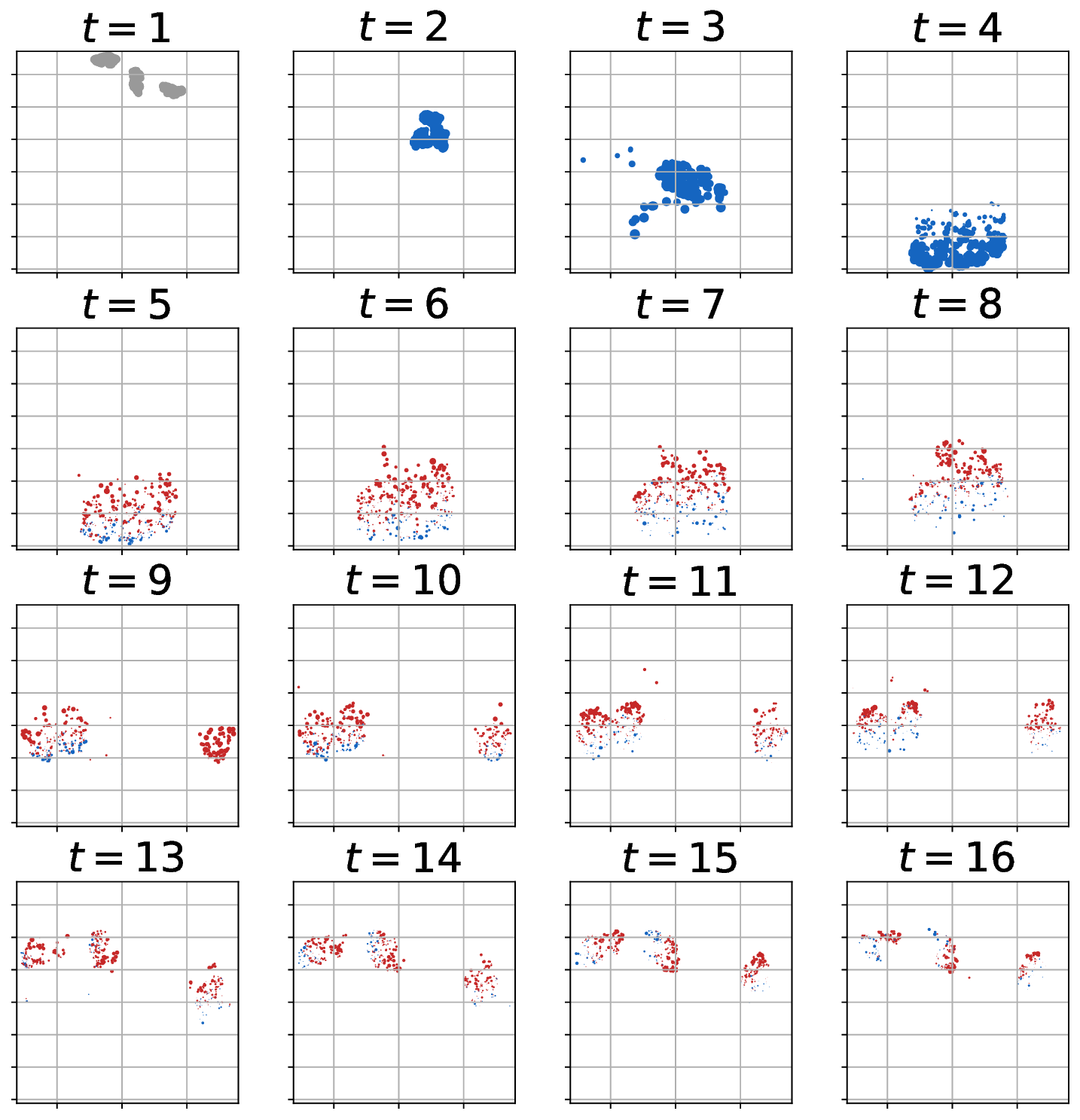}
        \caption{Mode 1}
        \label{fig:mode1-attribution}
    \end{subfigure}

    \vspace{0.5em}

    \begin{subfigure}[t]{0.48\linewidth}
        \centering
        \includegraphics[width=\linewidth]{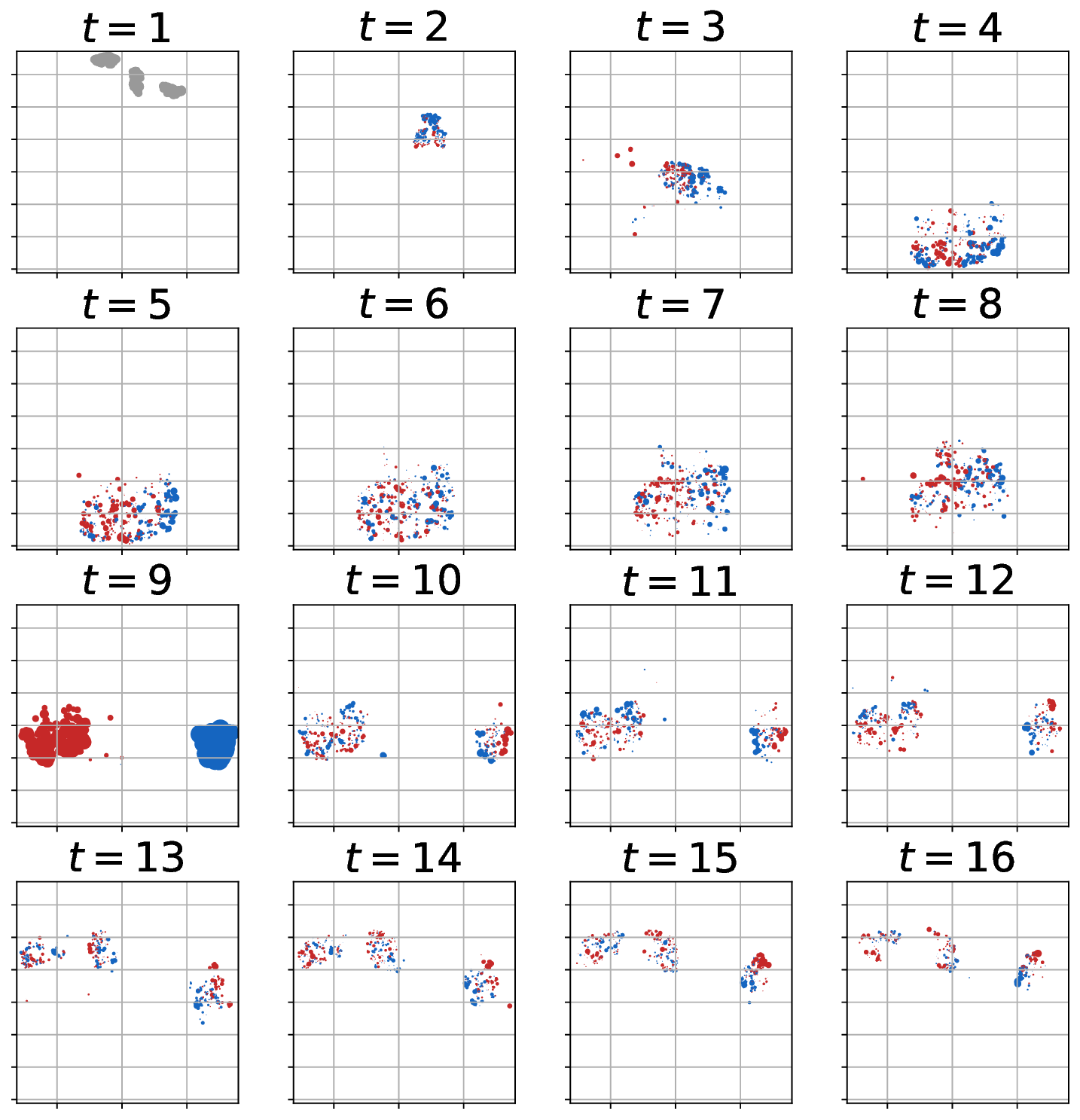}
        \caption{Mode 2}
        \label{fig:mode2-attribution}
    \end{subfigure}
    \hfill
    \begin{subfigure}[t]{0.48\linewidth}
        \centering
        \includegraphics[width=\linewidth]{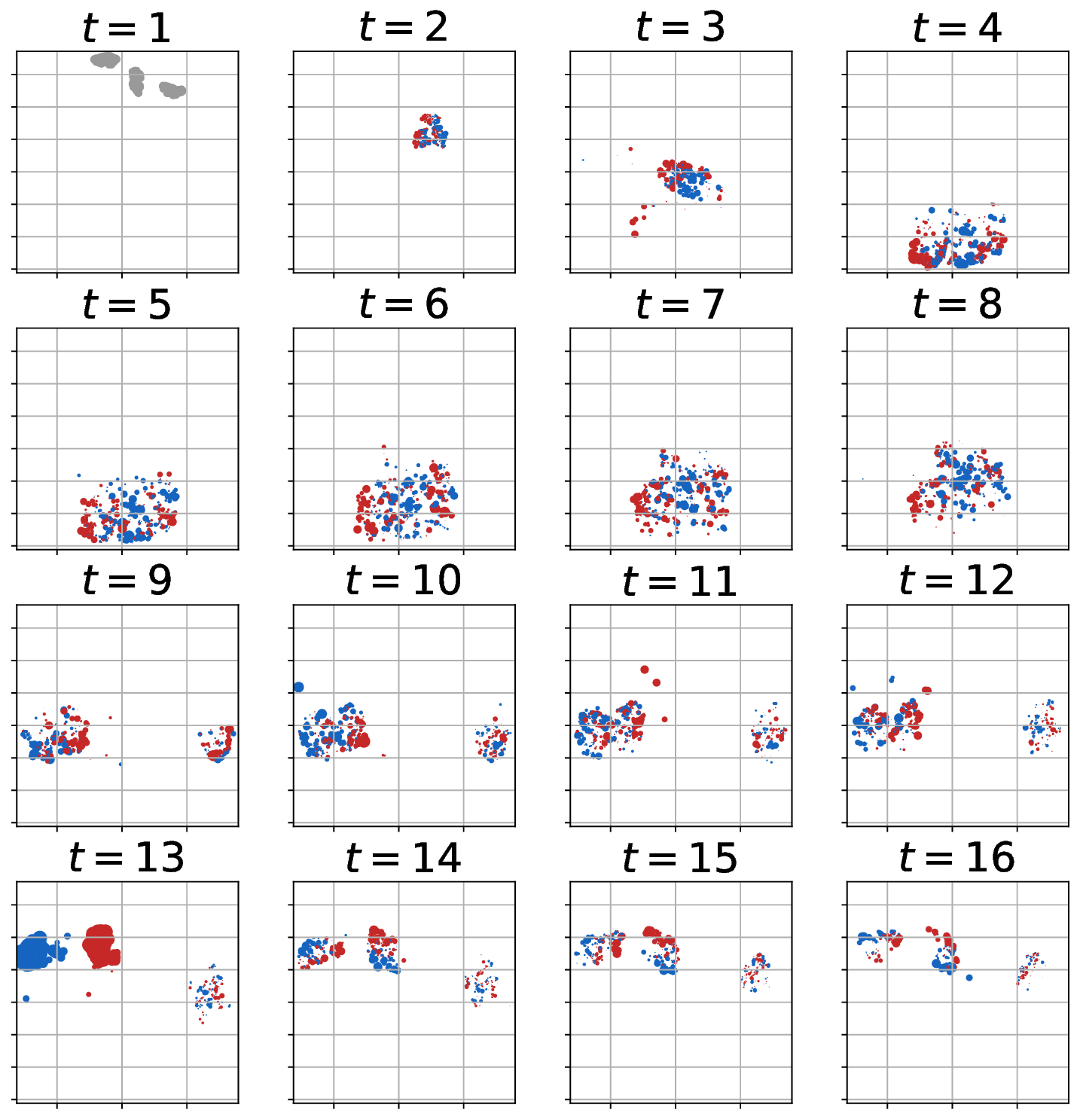}
        \caption{Mode 3}
        \label{fig:mode3-attribution}
    \end{subfigure}

    \caption{Node-level attribution visualizations. 
    Node size is proportional to the corresponding magnitude. 
    For canonical modes, color encodes sign (red: positive, blue: negative). 
    At $t=1$, node sizes are uniform and colors are gray since no value is defined.}
    \label{fig:combined-attribution}
\end{figure}

\subsection{Localization Theorem}
\label{app:exp-cp}

We next empirically verify the localization guarantee in Theorem~\ref{thm:cp-localization}.  Dataset~1 is designed so that each population mode-wise trajectory contains a single change point, either of order \(0\) or order \(1\).  Thus, for each mode, the population target is a well defined 1D change point. We estimate the corresponding change point time by minimizing the trajectory fitting criterion introduced in Section~\ref{subsec:extensions}, and compare the proposed modified UASE with the original UASE.

Figure~\ref{fig:cp-result} summarizes the results. The top row shows detection accuracy, while the bottom row shows localization error. The dashed lines indicate that the original UASE fails to detect the ground truth change points for Mode \(1\) and Mode \(3\), as shown in Figure~\ref{fig:cp-result} (a), (c), (d), and (f). Although it appears to achieve higher accuracy and lower localization error for Mode \(2\), strong performance in only a subset of modes is insufficient when the goal is to accurately localize the full set of mode specific change points. By contrast, the solid lines show that the proposed modified UASE recovers the change points for all modes.

These results demonstrate that accurate change point detection critically depends on faithful recovery of mode-wise geometry across all modes. The original UASE suffers from geometric distortion due to the unresolved \(\mathrm{GL}(d)\) ambiguity, which leads to inconsistent performance across modes. By resolving this ambiguity, the proposed method achieves stable convergence and enables reliable localization of mode specific temporal discontinuities, with detection accuracy converging to \(1\) and localization error converging to \(0\) across all modes.

\begin{figure}[t]
    \centering

    \begin{subfigure}[t]{0.32\textwidth}
        \centering
        \includegraphics[width=\textwidth]{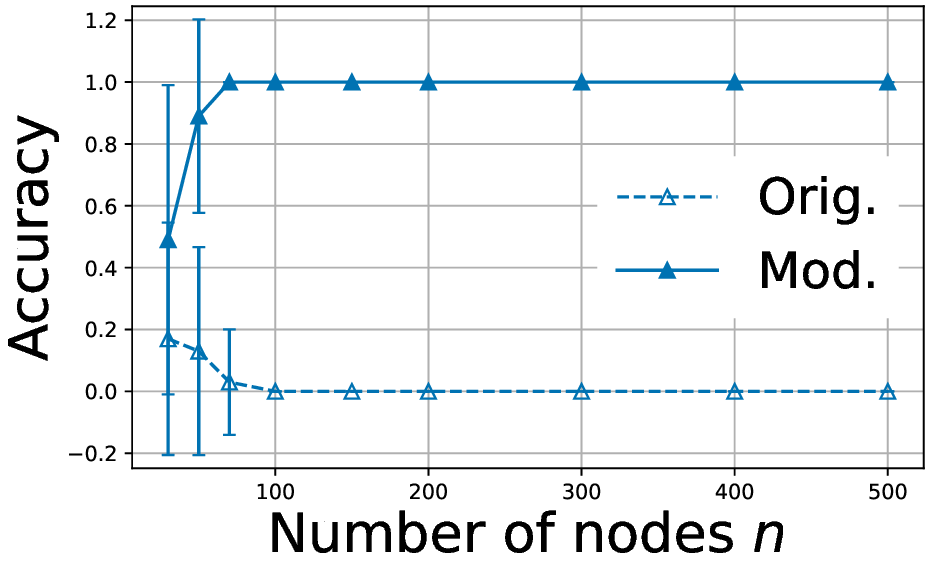}
        \caption{Mode 1 (1st order, $t_1^*=4$)}
    \end{subfigure}
    \hfill
    \begin{subfigure}[t]{0.32\textwidth}
        \centering
        \includegraphics[width=\textwidth]{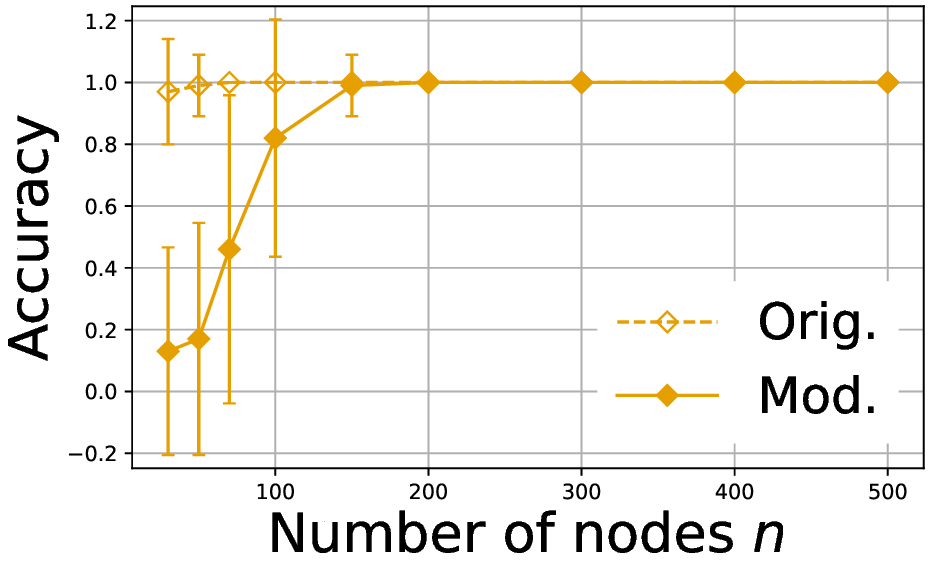}
        \caption{Mode 2 (0th order, $t_2^*=9$)}
    \end{subfigure}
    \hfill
    \begin{subfigure}[t]{0.32\textwidth}
        \centering
        \includegraphics[width=\textwidth]{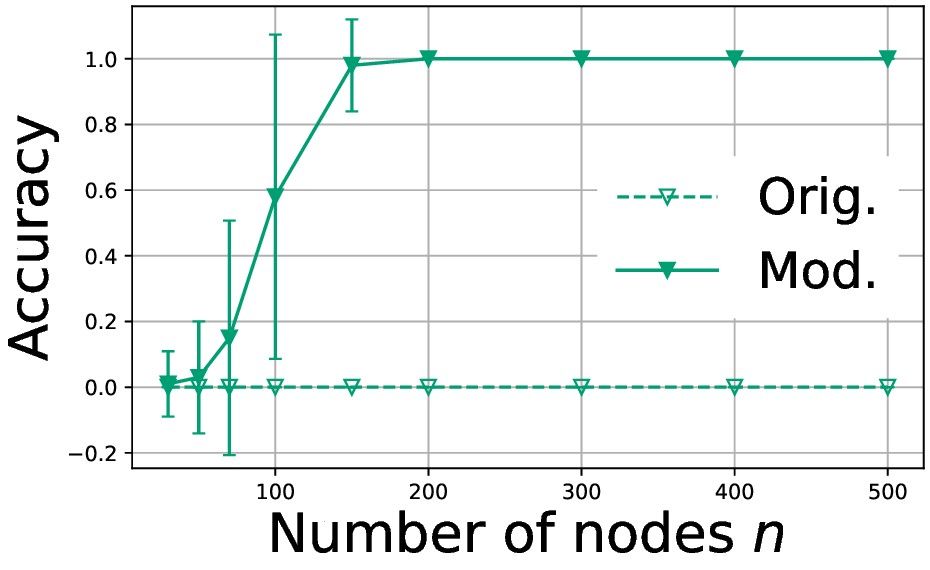}
        \caption{Mode 3 (0th order, $t_3^*=13$)}
    \end{subfigure}

    \vspace{0.8em}

    \begin{subfigure}[t]{0.32\textwidth}
        \centering
        \includegraphics[width=\textwidth]{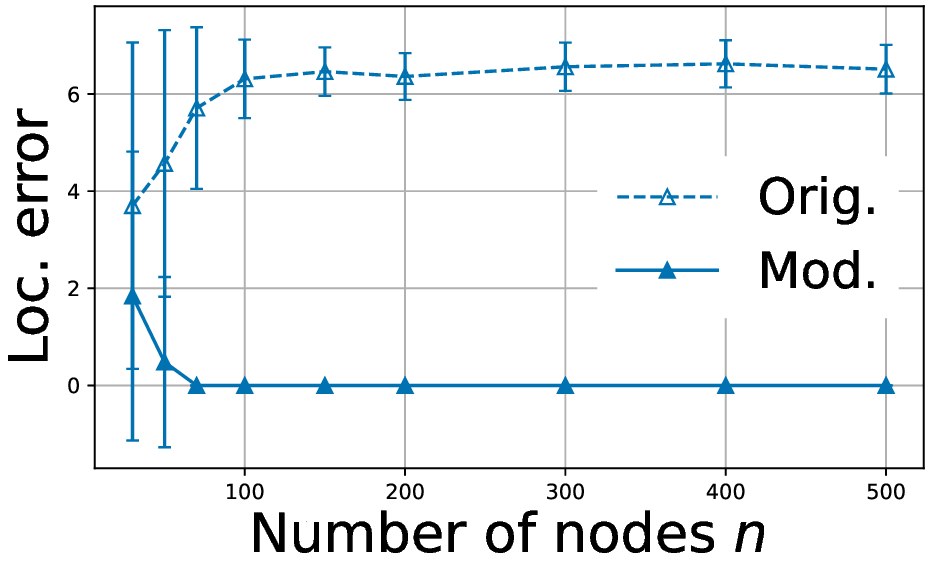}
        \caption{Mode 1 (1st order, $t_1^*=4$)}
    \end{subfigure}
    \hfill
    \begin{subfigure}[t]{0.32\textwidth}
        \centering
        \includegraphics[width=\textwidth]{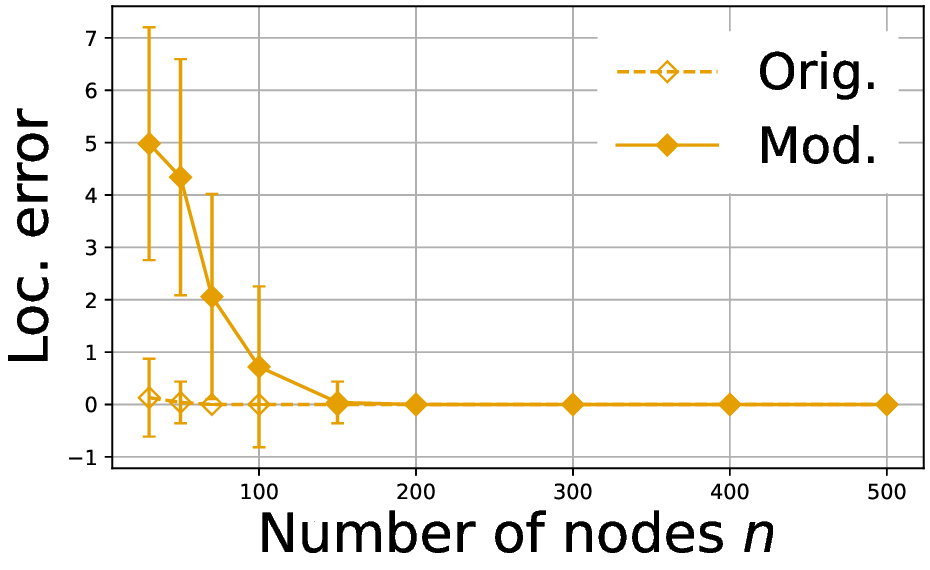}
        \caption{Mode 2 (0th order, $t_2^*=9$)}
    \end{subfigure}
    \hfill
    \begin{subfigure}[t]{0.32\textwidth}
        \centering
        \includegraphics[width=\textwidth]{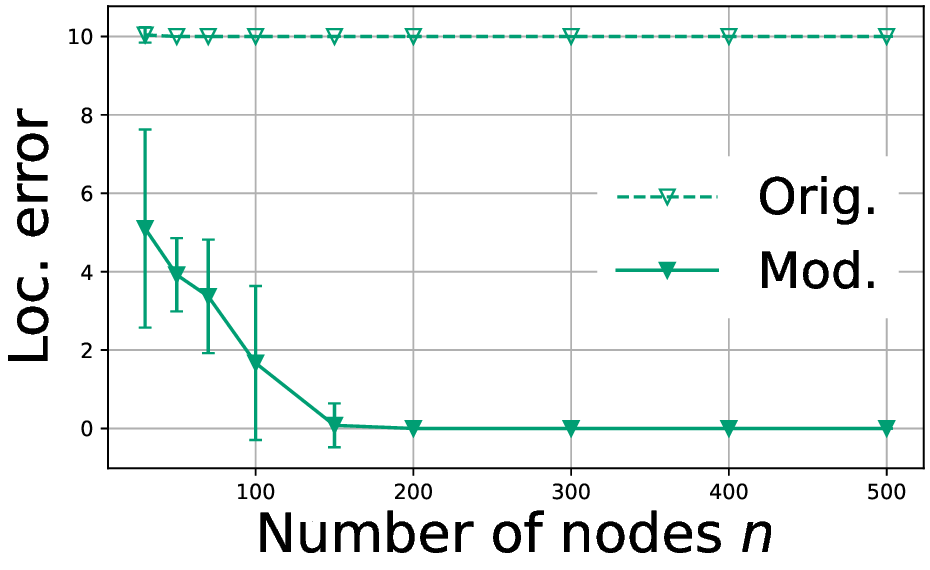}
        \caption{Mode 3 (0th order, $t_3^*=13$)}
    \end{subfigure}

    \caption{
    Change point detection performance for each mode and evaluation metric.
    The top row shows detection accuracy, and the bottom row shows localization error $|\hat{t}_k - t_k^*|$.
    Each column corresponds to a mode-specific ground-truth change point.
    }
    \label{fig:cp-result}
\end{figure}

\subsection{Additional Results for the Effect of Overestimating the Embedding Dimension}
\label{app:overestimating}

In the main text, we assume that the embedding dimension $d_{\mathrm{emb}}$ is equal to the true latent dimension $d_{\mathrm{lat}}$. In practice, however, the latent dimension is typically unknown and must be estimated or selected, potentially resulting in a misspecified embedding dimension. In particular, overestimating the embedding dimension can introduce noisy components that do not capture the underlying latent structure. In this section, we study the effect of choosing $d_{\mathrm{emb}} > d_{\mathrm{lat}}$.

We briefly note that when $d_{\mathrm{emb}} < d_{\mathrm{lat}}$, the embedding induces a spectral truncation of the latent position matrix. If the contribution from singular values beyond the top $d_{\mathrm{emb}}$ components is small, then the resulting trajectories remain well estimated. Moreover, the $d_{\mathrm{emb}}$ mode-wise trajectories correspond to the population trajectories associated with the leading modes. We do not pursue this case further, as underestimation may discard relevant signal components, whereas overestimation only introduces additional error, which we show below to be asymptotically negligible for trajectory estimation.

A misspecified embedding dimension can affect both the estimated pairwise distances and the resulting trajectories, since noise introduced by the additional dimensions propagates through the estimation procedure. In particular, overestimating the embedding dimension gives rise to additional mode-wise trajectories that do not correspond to any population trajectories.

At a high level, overestimating the embedding dimension has only a limited effect, because the embedding admits a signal to noise decomposition under the modified UASE. The leading $d_{\mathrm{lat}}$ coordinates capture the latent structure, whereas the remaining dimensions are dominated by noise. Consequently, the estimated distances and trajectories constructed from the embeddings are effectively governed by the signal component, with the contribution of the additional dimensions vanishing asymptotically.

To formalize this intuition, we analyze the structure of the embedding and its effect on the second-moment of the displacements. We partition $\hat{\mathbf Y}(t) = (\hat{\mathbf Y}_1(t), \hat{\mathbf Y}_2(t))$ into its first $d{\mathrm{lat}}$ coordinates and its remaining coordinates. Building on the analysis of \citet{jones2020multilayer}, the modified UASE admits a signal to noise decomposition: there exists a possibly random orthogonal matrix $\mathbf W \in \mathbb O(d_{\mathrm{lat}})$ such that
\[
\sup_{t\in\mathcal T}\|\hat{\mathbf Y}_1(t) - \mathbf Y(t)\mathbf W\|_2 = \mathcal O(\log n)\qquad \text{a.s.},
\]
and
\[
\sup_{t\in\mathcal T}\|\hat{\mathbf Y}_2(t)\|_2 = \mathcal O((\log n)^{1/2})\qquad \text{a.s.}
\]
Although these bounds diverge in absolute terms, they are negligible compared to the $\mathcal O(n^{1/2})$ scale of the latent positions, implying that the leading $d_{\mathrm{lat}}$ dimensions consistently recover the latent position matrix up to orthogonal transformation, while the remaining dimensions are asymptotically dominated by noise.

We now examine the second-moment of displacements, which is used in the main text to capture changes in the latent structure. Let $\Delta_1 := \hat{\mathbf Y}_1(t) - \hat{\mathbf Y}_1(s)$ and $\Delta_2 := \hat{\mathbf Y}_2(t) - \hat{\mathbf Y}_2(s)$. Then, the second-moment of displacements can be expressed as
\[
\frac{1}{n}(\hat{\mathbf Y}(t)-\hat{\mathbf Y}(s))^\top(\hat{\mathbf Y}(t)-\hat{\mathbf Y}(s))
= \frac{1}{n}
\begin{pmatrix}
   \Delta_1^\top \Delta_1 & \Delta_1^\top \Delta_2\\
   \Delta_2^\top \Delta_1 & \Delta_2^\top \Delta_2
\end{pmatrix}.
\]
Using the bounds above, we obtain
\[
\frac{1}{n}\Delta_1^\top \Delta_1 = \mathcal O(1) \quad \text{a.s.}, \quad
\frac{1}{n}\Delta_1^\top \Delta_2 = \mathcal O(n^{-1/2}(\log n)^{1/2}) \quad \text{a.s.}, \quad
\frac{1}{n}\Delta_2^\top \Delta_2 = \mathcal O(n^{-1}\log n) \quad \text{a.s.}
\]
Thus, only the top left block remains asymptotically non negligible, and it converges to the population second-moment matrix of dimension $d_{\mathrm{lat}}$, preserving the latent structure.

This implies that the effective dimensionality of the second-moment matrix is asymptotically $d_{\mathrm{lat}}$. Consequently, the mode-wise trajectories corresponding to the latent structure can still be consistently estimated. In contrast, the additional trajectories induced by the excess dimensions have vanishing magnitude and therefore appear nearly flat, reflecting their noise dominated nature.

To illustrate this behavior, we conduct an experiment based on Dataset~1. The resulting trajectories are shown in Figure~\ref{fig:multiscale-trajectories-dim6}. The network has the same latent structure as Dataset~1 and contains $n = 800$ nodes. We set the embedding dimension to $d_{\mathrm{emb}} = 6$, while the true latent dimension is $d_{\mathrm{lat}} = 3$.

The TV and MV trajectories, as well as the leading three mode-wise trajectories, are consistent with the population trajectories. In contrast, the remaining three mode-wise trajectories are nearly flat, reflecting their noise dominated nature. These results confirm that overestimating the embedding dimension introduces additional trajectories that do not carry meaningful signal, while leaving the estimation of the true latent structure largely unaffected.

\begin{figure}[t]
    \centering
    \begin{subfigure}[t]{0.3\textwidth}
        \centering
        \includegraphics[width=\textwidth]{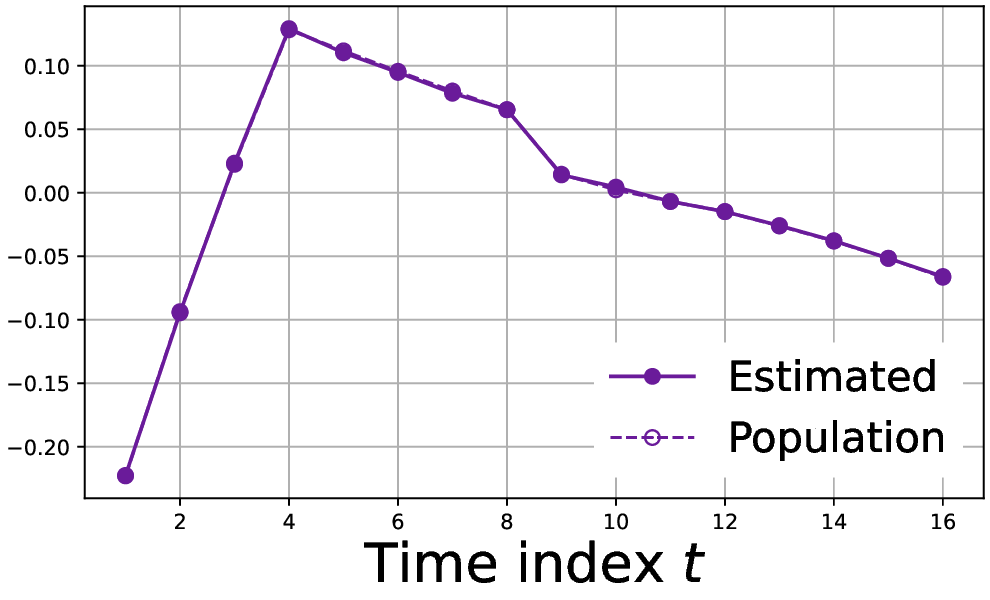}
        \caption{MV}
    \end{subfigure}
    \hfill
    \begin{subfigure}[t]{0.3\textwidth}
        \centering
        \includegraphics[width=\textwidth]{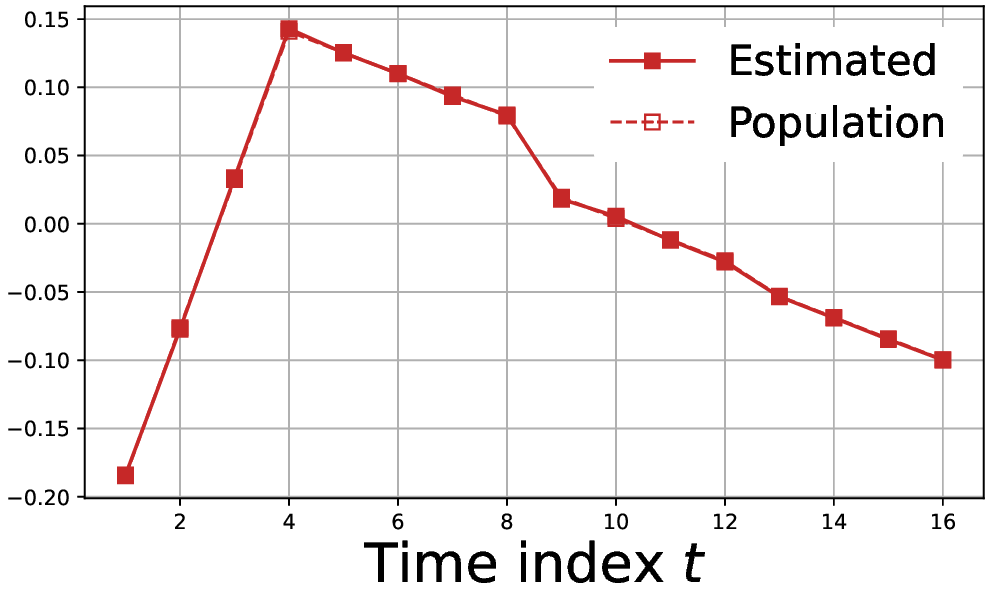}
        \caption{TV}
    \end{subfigure}
    \hfill
    \begin{subfigure}[t]{0.3\textwidth}
        \centering
        \includegraphics[width=\textwidth]{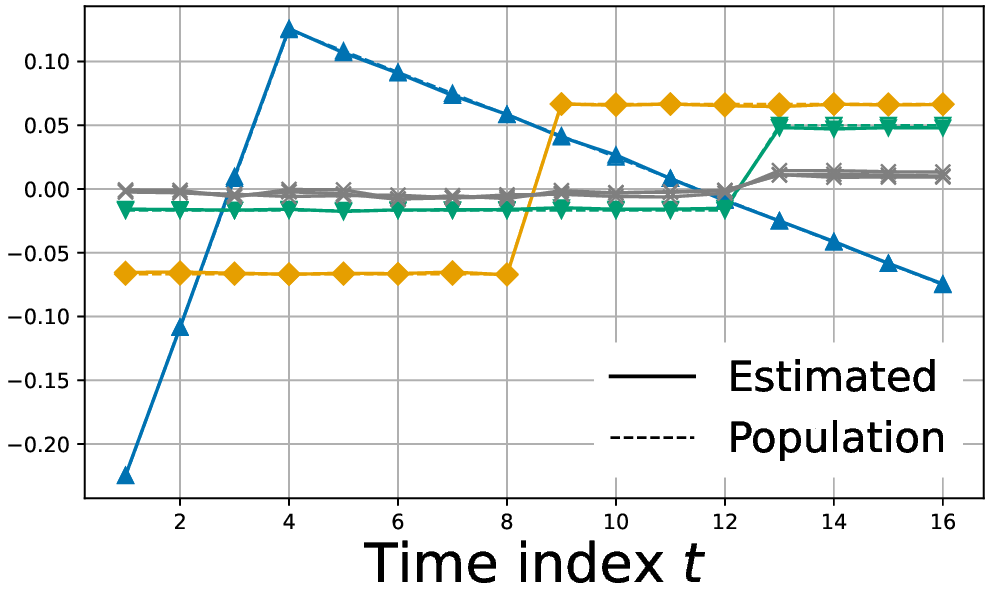}
        \caption{Mode-wise (canonical)}
    \end{subfigure}

    \caption{
    Multiscale trajectories estimated by the modified UASE with embedding dimension $d_{\mathrm{emb}} = 6$ and latent dimension $d_{\mathrm{lat}} = 3$. 
    The columns correspond to MV, TV, and mode-wise trajectories under a canonical basis. 
    The leading three mode-wise trajectories capture the latent structure, while the remaining three are nearly flat, reflecting their noise-dominated nature.
    Note that trajectories are invariant up to sign, and the signs have been adjusted for better visual interpretability.
    }
    \label{fig:multiscale-trajectories-dim6}
\end{figure}

\subsection{Comparison of Multiple Change Point Detection Performance}
\label{app:exp-cp-comparison}

We compare our proposed method with several competing approaches on Dataset~2. Relative to Dataset~1, this setting is substantially more challenging because it contains multiple temporal events with heterogeneous types and signal strengths. In particular, each of the three mode strengths shown in the right panel of Figure~\ref{fig:xi} contains two change points, yielding six ground truth events overall at
\[
t \in \{11,21,31,41,51,61\}.
\]
This design tests not only whether a method can detect the presence of change, but also whether it can localize several distinct events accurately over an extended time horizon.

We compare against five baselines, namely DeltaCon \citep{koutra2016deltacon}, Laplacian Anomaly Detection (LAD) \citep{huang2020laplacian}, the HCDL framework of \citet{fukushima2020detecting}, Euclidean Mirror \citep{athreya2025euclidean}, and the recent two stage localization and inference procedure of \citet{wang2026change}. Taken together, these methods cover a range of approaches to dynamic graph change detection, including graph similarity based statistics, spectral anomaly measures, hierarchical latent diagnostics, geometric trajectory summaries, and dedicated localization procedures.

For each method, we convert the output into a ranked list of candidate change times and evaluate top $K$ detection performance for $K \in \{3,6,9\}$. For methods that produce a single score sequence, we retain only strict local peaks. A time point is kept only when its score is larger than those of its two immediate neighbors, so boundary points and flat plateaus are excluded. The remaining candidates are ranked by score, and the top $K$ are selected, with earlier times used to break ties. We also impose a minimum separation of two time units between selected candidates to avoid counting nearby peaks as distinct detections. Predicted and true change times are then matched using a tolerance window of two time units, so any prediction within $\pm 2$ of a true change is counted as a match. From these matches, we compute the F1 score and the absolute timing error. All results are averaged over 100 trials.

For methods that produce multiple score sequences, we first fuse stream specific candidates into a single ranked list. For HCDL, and for each value of $K$, we take the top $K$ candidates from each of the three level-specific streams. To make scores comparable across streams, we rescale each nominated score by dividing it by the median score of its own stream. We then pool the nominated times across streams, merge duplicate time points, and assign each merged candidate the largest fused score among its duplicates. The fused candidates are ranked globally, after which the final top $K$ are selected subject to the same minimum separation of four time units. This yields a single set of estimated change points that is evaluated using exactly the same protocol as for the other methods.

For the proposed method, we construct change point scores from the mode-wise temporal trajectories and then fuse candidates across modes and change point orders. For each embedding mode \(k\), let \(\hat{\psi}_k(t)\) denote the associated 1D trajectory. We consider two types of changes in this trajectory: a \(0\)th order change, corresponding to an shift in level, and a \(1\)st order change, corresponding to a change in slope. Both scores are obtained from a local linear trend state space model fitted separately to each mode-wise trajectory. After centering \(\hat{\psi}_k(t)\) and scaling it to unit variance, we fit

\begin{align*}
\hat{\psi}_k(t) &= L_{k,t} + \epsilon_{k,t},
& \epsilon_{k,t} &\sim \mathcal N(0,\sigma_{\epsilon,k}^2), \\
L_{k,t} &= L_{k,t-1} + B_{k,t-1} + \eta_{k,t},
& \eta_{k,t} &\sim \mathcal N(0,\sigma_{\eta,k}^2), \\
B_{k,t} &= B_{k,t-1} + \zeta_{k,t},
& \zeta_{k,t} &\sim \mathcal N(0,\sigma_{\zeta,k}^2),
\end{align*}

\noindent where \(L_{k,t}\) is the local level and \(B_{k,t}\) is the local slope of the \(k\)th trajectory. The disturbances \(\eta_{k,t}\) and \(\zeta_{k,t}\) represent innovations in the level and slope, respectively, between times \(t-1\) and \(t\). We estimate the model parameters and initial state by maximum likelihood using the unobserved components implementation of \citet{seabold2010statsmodels}, and then apply Kalman smoothing to obtain the posterior mean disturbances
\[
\hat{\eta}_{k,t}
=
\mathbb E[\eta_{k,t}\mid \hat{\psi}_k(1),\dots,\hat{\psi}_k(T)],
\qquad
\hat{\zeta}_{k,t}
=
\mathbb E[\zeta_{k,t}\mid \hat{\psi}_k(1),\dots,\hat{\psi}_k(T)].
\]

Let \(\sigma_k\) denote the standard deviation used before normalizing \(\hat{\psi}_k(t)\). We define the \(0\)th order and \(1\)st order change point scores for mode \(k\) as
\[
s_{k,t}^{(0)} = \bigl|\sigma_k \hat{\eta}_{k,t}\bigr|,
\qquad
s_{k,t}^{(1)} = \bigl|\sigma_k \hat{\zeta}_{k,t}\bigr|.
\]
Thus, \(s_{k,t}^{(0)}\) measures the magnitude of a level innovation in the \(k\)th trajectory at time \(t\), whereas \(s_{k,t}^{(1)}\) measures the magnitude of a slope innovation. 
We interpret these as the level (i.e., \(0\)th order) and slope (i.e., \(1\)st order) change point scores for mode \(k\).

We then fuse candidates across the trajectory based score sequences. Each sequence contributes candidates given by strict local peaks. For a given \(K\), we take the top \(K\) candidates from each sequence and pool them.  Because the raw score scales may differ between the \(0\)th order and \(1\)st order families, we compute one shared median score within the \(0\)th order family and one shared median score within the \(1\)st order family, and rescale each sequence by the corresponding family median. The remaining fusion and evaluation steps are the same as those used for HCDL.

For each competing method, we tuned the hyperparameters that affect change point detection, excluding parameters fixed uniformly across all methods, such as the minimum separation rule and the timing tolerance. We performed a grid search over these method specific hyperparameters and report the best-performing configuration. For HCDL~\citep{fukushima2020detecting}, we varied the half window parameter over \(h\in\{2,3,4,5,6,7,8\}\), while fixing the maximum number of clusters to \(K=10\), which is sufficiently large for the three block synthetic networks. The mean $F_1$ across $K \in \{3,6,9\}$ peaked at $h=6$ and decreased on both sides, so we use $h=5$ in the experiments. For LAD~\citep{huang2020laplacian}, we swept the (short, long) window pair $(w_{\mathrm{short}}, w_{\mathrm{long}}) \in {(3,7), (4,8), (5,9)}$ (burn-in $= w_{\mathrm{long}}$) and fixed the number of singular values to $r = n - 1 = 499$; we cap $w_{\mathrm{long}} < 10$ to avoid straddling the $10$ timestamp inter change point spacing. The best mean
$F_1$ was attained at $(5,9)$. For DeltaCon~\citep{koutra2016deltacon}, we used the exact implementation, since the faster approximation is intended for substantially larger networks and may introduce unnecessary approximation error at the present scale. For the two stage method of \citet{wang2026change}, we report results for \(K\in\{3,6,9\}\). For the proposed method, we fixed the embedding dimension to \(d=3\). This matches the true latent dimension in the synthetic experiments, and Appendix~\ref{app:overestimating} shows that moderate overestimation of the embedding dimension does not substantially affect recovery of the relevant latent structure.

\begin{table}[t]
	\centering
	\caption{Performance comparison across different values of $K$. Values in brackets denote 95\% confidence intervals.}
	\label{tab:performance_ci}
	\scriptsize
	\setlength{\tabcolsep}{2.5pt}
	\resizebox{\textwidth}{!}{%
	\begin{tabular}{lrrrrrr}
		\toprule
		& \multicolumn{2}{c}{$K=3$} & \multicolumn{2}{c}{$K=6$} & \multicolumn{2}{c}{$K=9$} \\
		\cmidrule(lr){2-3}\cmidrule(lr){4-5}\cmidrule(lr){6-7}
		Method & F1 & MAE & F1 & MAE & F1 & MAE \\
		\midrule
		DeltaCon
		& 0.447 [0.444, 0.451] & 2.090 [2.027, 2.147]
		& 0.513 [0.492, 0.535] & 2.400 [2.293, 2.512]
		& 0.579 [0.565, 0.592] & 2.289 [2.227, 2.352] \\

		LAD
		& \textbf{0.667 [0.667, 0.667]} & 0.733 [0.707, 0.760]
		& 0.833 [0.833, 0.833] & 1.257 [1.213, 1.300]
		& 0.688 [0.679, 0.699] & 1.888 [1.834, 1.942] \\

		HCDL
		& 0.322 [0.293, 0.351] & 2.420 [2.280, 2.560]
		& 0.480 [0.455, 0.505] & 2.208 [2.112, 2.303]
		& 0.532 [0.511, 0.553] & 2.149 [2.092, 2.209] \\

		Mirror
		& 0.664 [0.660, 0.667] & 0.007 [0.000, 0.020]
		& 0.573 [0.557, 0.590] & 1.562 [1.490, 1.635]
		& 0.512 [0.500, 0.525] & 2.076 [2.017, 2.136] \\

		Two-stage
		& \textbf{0.667 [0.667, 0.667]} & \textbf{0.000 [0.000, 0.000]}
		& 0.815 [0.798, 0.830] & 1.037 [1.007, 1.070]
		& 0.692 [0.677, 0.707] & 2.073 [2.027, 2.120] \\

		\midrule
		MENT
		& \textbf{0.667 [0.667, 0.667]} & \textbf{0.000 [0.000, 0.000]}
		& \textbf{0.965 [0.952, 0.978]} & \textbf{0.095 [0.058, 0.135]}
		& \textbf{0.800 [0.800, 0.800]} & \textbf{1.002 [0.960, 1.046]} \\
		\bottomrule
	\end{tabular}%
	}
\end{table}
Table~\ref{tab:performance_ci} reports the mean results with 95\% bootstrap confidence intervals. The proposed method achieves the strongest overall performance across all three values of $K$. At $K=3$, it matches LAD in F1 score while achieving perfect localization accuracy among the matched detections. At $K=6$, which coincides with the true number of change points, it attains near perfect recovery with an F1 score of $0.965$ and a timing MAE of $0.1$, substantially outperforming all competing methods (See also Figure~\ref{app:fig:ment}). Even at $K=9$, where some degree of over selection is unavoidable, it remains the best performing approach under both metrics.

\begin{figure}[htbp]
    \centering
    \includegraphics[width=0.63\textwidth]{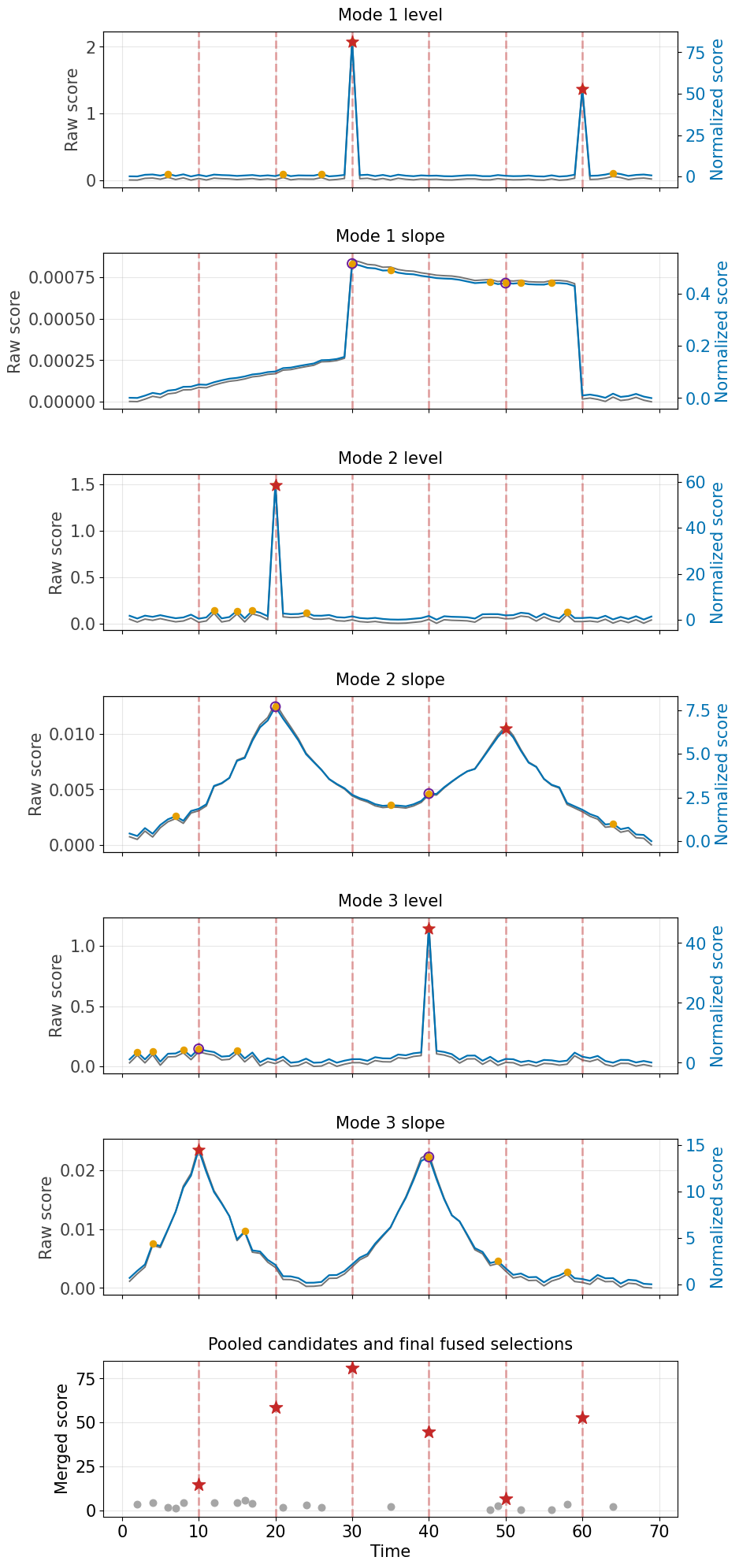}
    \caption{Change points detected by MENT for Dataset 2. In each of the top panels, the gray curve shows the raw score for one stream. When the fusion mode uses normalized stream scores, the blue curve shows the normalized score used for fusion. Orange circles denote that stream’s top-K candidate changepoints. Red stars mark candidate times that are selected in the final fused top-K and for which the current stream provides the winning score. Purple hollow circles mark candidate times that are selected globally but whose winning score comes from a different stream. Vertical red dashed lines indicate the final fused changepoints. The bottom panel shows the pooled candidate set obtained by taking the union of the per-stream top-K candidates and merging them by the fusion rule; red stars indicate the final selected changepoints.}
    \label{app:fig:ment}
\end{figure}

Among the baselines, LAD is one of the most competitive. This is likely because it leverages information from multiple singular values, which is broadly consistent with the multi component nature of our setting. However, because LAD summarizes these components through an aggregated spectral anomaly score rather than a canonical mode-wise temporal geometry, it is less effective at separating distinct latent directions of change. Consequently, it can miss change points that are weak in the aggregate score but clear along a specific mode. This behavior is evident in Figure~\ref{app:fig:lad}, where LAD fails to identify the change point at time \(50\).

Another strong competitor is the two stage method of \citet{wang2026change}, which was developed for change point localization and inference in dynamic multilayer random dot product graphs. The method first uses seeded binary segmentation with CUSUM statistics to obtain preliminary candidate change points, and then refines these candidates using low rank tensor estimation. This second stage is important. Under the dynamic multilayer RDPG model, the expected CUSUM tensor has a low rank structure induced by shared node latent positions and time-varying layer specific connectivity matrices. As a result, the refinement step can denoise the local change signal by pooling information across nodes, layers, and low rank tensor factors.  This explains why the method is a strong baseline in our experiments, even though our synthetic setting is organized around mode-wise temporal variation rather than multilayer tensor shifts.

In our experiments, the two stage method recovers most of the major events, but it can miss or mislocalize weaker mode specific changes. 
This is consistent with the difference between the two objectives.  The two stage method is designed to localize coherent low-rank shifts in the probability tensor, whereas our method explicitly decomposes temporal variation into canonical mode-wise trajectories. Thus, when multiple changes occur along different latent directions with heterogeneous signal strengths, the proposed multiscale trajectory representation can separate events that may be partially merged or attenuated in a shared low-rank tensor localization procedure.

Because the two stage method is not naturally a score based peak picking method, we do not display a per time score curve for it.  Instead, we evaluate it according to its native output that is the refined candidate change points produced by its two stage procedure.  For this reason, when reporting top-\(K\) performance, we use the ranked candidate locations returned by the method directly, rather than forcing it into the same local-peak selection protocol used for single score baselines.

\begin{figure}[htbp]
    \centering
    \includegraphics[width=0.8\textwidth]{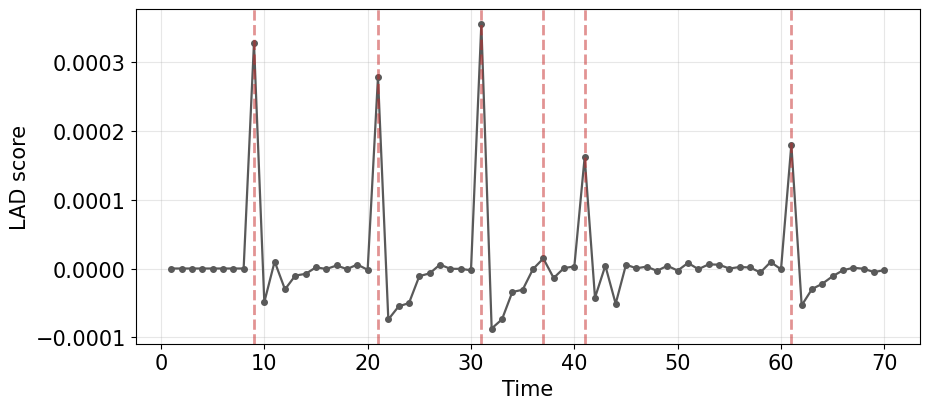}
    \caption{Change points detected by LAD for Dataset 2. The gray curve shows the raw score. Vertical red dashed lines indicate the detected change points.}
    \label{app:fig:lad}
\end{figure}

The weaker performance of Euclidean Mirror is also consistent with its design. Since it primarily tracks the direction of maximal variation, it tends to emphasize only the dominant mode of temporal change. As a result, more subtle or secondary modes are easily overlooked. This can be seen in Figure~\ref{app:fig:euclidean_mirror}, where the method appears to respond mainly to the first mode.

\begin{figure}[htbp]
    \centering
    \includegraphics[width=0.8\textwidth]{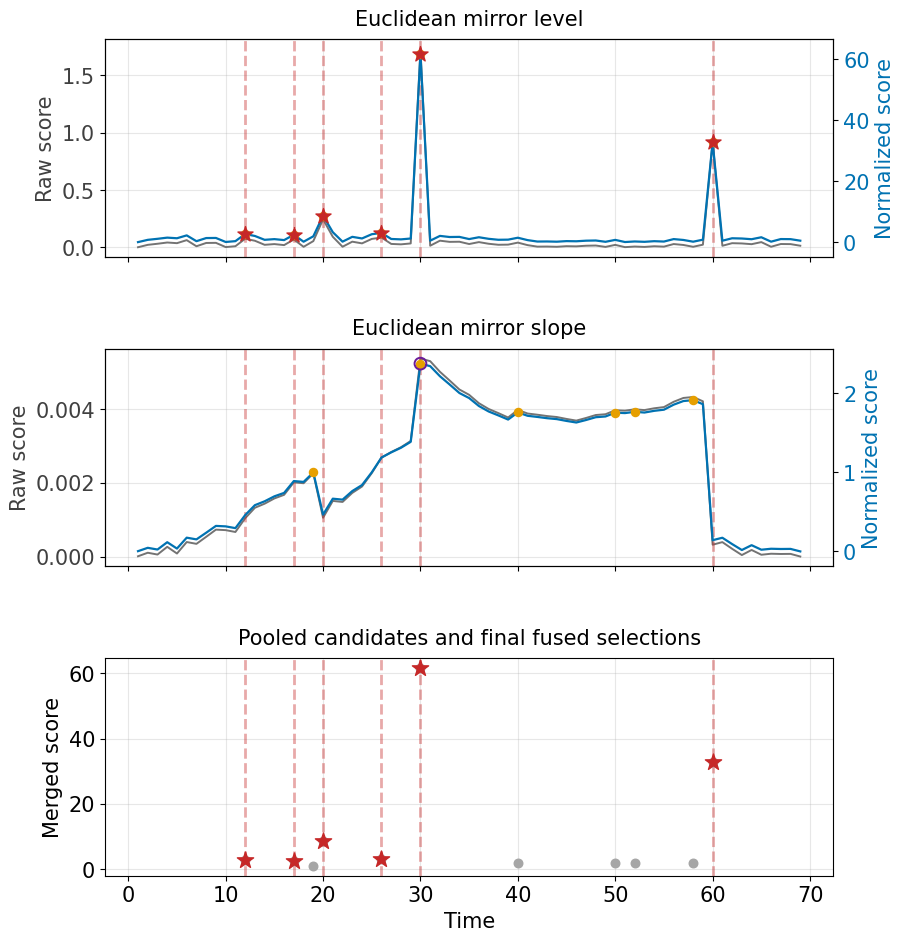}
    \caption{Change points detected by Euclidean Mirror for Dataset 2. The gray curve shows the raw score for one stream. The blue curve shows the normalized score used for fusion. Orange circles denote that stream’s top-K candidate change points. Red stars mark candidate times that are selected in the final fused top $K$ and for which the current stream provides the winning score. Purple hollow circles mark candidate times that are selected globally but whose winning score comes from a different stream. Vertical red dashed lines indicate the final fused change points. The bottom panel shows the pooled candidate set obtained by taking the union of the per stream top $K$ candidates and merging them by the fusion rule. Red stars indicate the final selected change points.}
    \label{app:fig:euclidean_mirror}
\end{figure}

HCDL is more structured than single score methods because it targets changes at different levels of a latent variable model (See Figure~\ref{app:fig:hcdl}). However, these levels do not correspond to the geometric modes of temporal variation that drive our setting. As a result, when multiple heterogeneous events occur across different population trajectories, HCDL is less naturally aligned with precise eventwise localization than the proposed multiscale trajectory method.

\begin{figure}[htbp]
    \centering
    \includegraphics[width=0.8\textwidth]{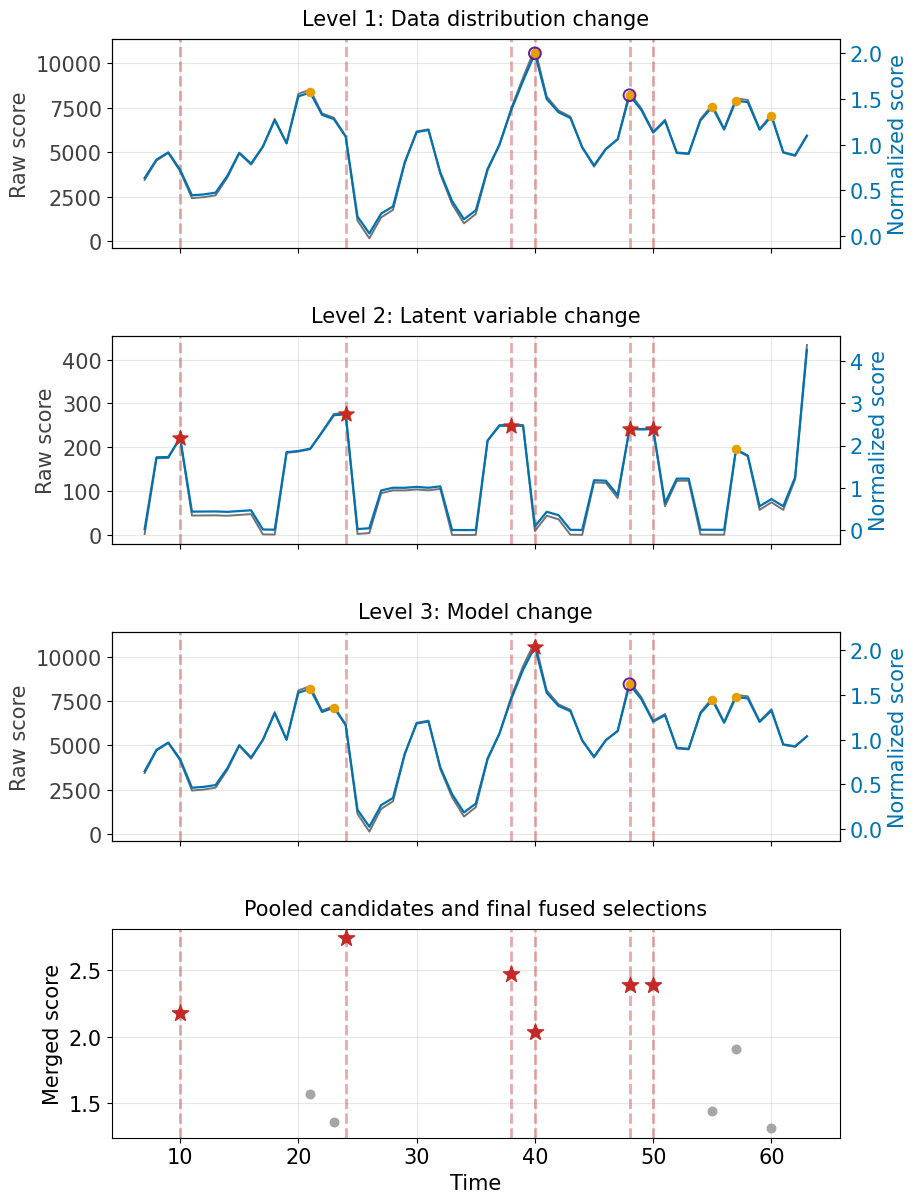}
    \caption{Change points detected by HCDL for Dataset 2. Gray lines show raw scores, blue lines show normalized fusion scores, orange circles show per level top $K$ candidates, red stars show final fused selections attributed to the current level, and purple hollow circles show globally selected candidates whose winning score comes from another level. Red dashed vertical lines mark the final fused changepoints. The bottom panel summarizes the merged pooled candidates and final selections.}
    \label{app:fig:hcdl}
\end{figure}

DeltaCon performs worse in this setting because it is designed as a pairwise graph similarity measure rather than a multievent localization method. It summarizes the difference between consecutive graphs through a single global affinity based score, which is effective for detecting whether a change occurred but is less well suited for separating several distinct structural events over time. As a result, changes arising from different latent modes can be blended into one aggregate signal, making weaker events harder to localize accurately (See Figure~\ref{app:fig:deltacon}).

\begin{figure}[htbp]
    \centering
    \includegraphics[width=0.7\textwidth]{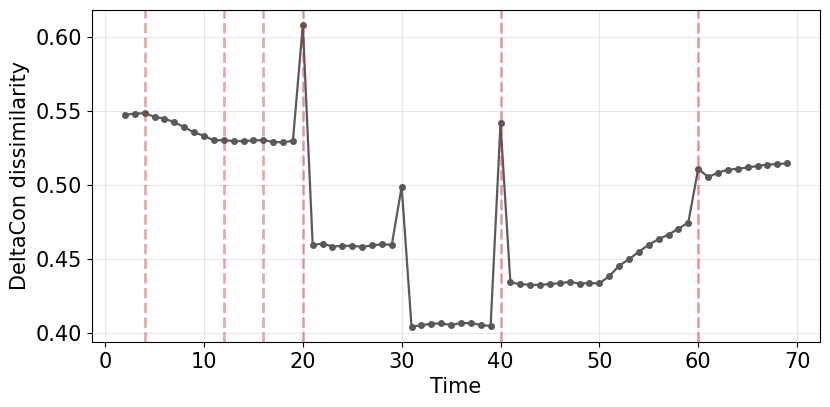}
    \caption{Change points detected by DeltaCon for Dataset 2. The gray curve shows the raw score. Vertical red dashed lines indicate the detected change points.}
    \label{app:fig:deltacon}
\end{figure}

Overall, these results show that the proposed multiscale trajectory construction not only detects the presence of multiple changes reliably, but also localizes them more accurately than competing approaches in this challenging heterogeneous setting.

We also report the computational complexity of the compared methods in Table~\ref{tab:complexity}. The proposed MENT procedure has the same leading order time complexity as the spectral embedding based baselines, such as LAD and Euclidean Mirror, namely \(O(n^2 dT)\). Its space complexity is \(O(ndT)\), reflecting the storage of time-indexed embeddings used to construct the multiscale trajectories. Thus, MENT remains computationally comparable to the strongest embedding based competitors while providing additional mode-wise decomposition, attribution, and change point localization capabilities.

All reported runs were executed on a single CPU workstation with two Intel Xeon Gold 6338 CPUs, 64 physical cores in total, 2.0 TiB of RAM, and no GPU. The machine ran Ubuntu 20.04 with Linux 5.15. The code was run with Python 3.13.1, NumPy/SciPy, and the OpenBLAS 0.3.31 LP64 backend. BLAS thread counts were left at their default settings. The largest run, the U.S. court opinions experiment with \(T=80\), \(n=9399\), and 16,716,425 edge instances, took 27.5 minutes end to end; its largest persisted artifact was 67.7 GB. The Enron experiment with \(T=165\), \(n=150\), and 14,500 edge instances took 83 seconds end to end, with a largest persisted artifact of 1.3 GB. The synthetic Dataset 2 run with \(T=70\), \(n=500\), and 2,488,628 edge instances took 17 seconds end to end, with a largest persisted artifact of 0.8 GB. Peak resident memory was not directly recorded.

\begin{figure}[t]
\centering
\setlength{\tabcolsep}{7pt}
\begin{tabular}{lrr}
\toprule
Method & Time Complexity & Space Complexity \\
\midrule
DeltaCon         &  $O(mT)$       &  $O(m)$     \\
LAD              &  $O(n^2dT)$    &  $O(nd)$    \\
HCDL             &  $O(n^2K^2hT)$ &  $O(nK)$   \\
Mirror           &  $O(n^2dT)$    &  $O(ndT)$    \\
Two stage        &  $O(n^3)$      &  $O(n^2)$   \\
\midrule
MENT             &  $O(n^2dT)$    &  $O(ndT)$   \\
\bottomrule
\end{tabular}
\caption{Computational complexity comparison of the compared methods. Here, $T$, $n$ and $m$ represent the number of time points, the number of nodes, and the maximum number of edges per snapshot. $d$, $K$ and $h$ represent the embedding dimension, the maximum number of clusters and the window size. We assume that $n \gg T, d, K, h$.}
\label{tab:complexity}
\end{figure}

\subsection{Construction of the Real-World Legal Citation Network}
\label{app:exp-real}

We construct the legal citation network from the CourtListener bulk case law snapshots, which are released as quarterly CSV exports of the underlying database rather than as incremental updates~\citep{freelawproject2026bulk}. We use the opinion clusters and opinions tables from the May 6, 2024 bulk release. The opinion clusters table provides metadata for groups of related opinions, including filing dates, while the opinions table provides the corresponding opinion texts.

The preprocessing pipeline has three main stages. First, we extract the mapping from each opinion to its parent opinion cluster and the filing date associated with that cluster, linking each opinion level text record to filing time metadata. Second, we process the opinion texts to extract legal references. For each opinion, we select the first available substantive text field among the available plain text, HTML, and XML fields, after cleaning the text and requiring a minimum length of $1{,}000$ characters. We then apply \texttt{eyecite}~\citep{cushman2021eyecite} to identify legal citations and retain only normalized references passing strict filtering rules. Case citations are required to match a reporter style pattern, statutory citations are required to contain both a section symbol and a digit, and non-substantive citation strings such as ``Id.'' and ``supra'' are removed. We also separately extract references to U.S.\ constitutional amendments and normalize them to canonical forms, such as ``U.S.\ Const.\ amend.\ XIV''. Opinions are retained only if they contain at least two extracted references in total, counting both legal citations and amendment references.

The resulting intermediate object is an opinion level reference dataset. Each source opinion is associated with its filing date, a deduplicated set of normalized cited authorities, and any normalized constitutional amendment references.

The full set of authorities cited at least once across the entire corpus is heavy tailed and contains a long tail of essentially singleton references that carry little signal for a temporal embedding. We therefore impose two complementary restrictions on the data before constructing the network. First, we restrict the set of admissible authorities (i.e.\ candidate nodes). We partition the corpus into ten year filing windows and, within each window, retain only the top $1{,}000$ most frequently cited authorities by raw within-window citation count. The union of these per decade top $1{,}000$ sets defines the candidate authority pool, which is fixed across all years. Each opinion's cited authority set is then intersected with this pool, and any opinion left with fewer than two surviving references is discarded. 

Following \citet{matsumoto2025hypergraph}, we then treat the set of authorities cited by each opinion (after the restrictions above) as a hyperedge. Aggregating these hyperedges by filing year yields a yearly legal citation hypergraph. Finally, we apply clique expansion within each yearly hypergraph to obtain an undirected network for each year.

We focus on the period 1939--2018, yielding 80 yearly snapshots. In aggregate, the network contains roughly $9.4\mathrm{k}$ distinct authorities, on the order of $3.2\mathrm{M}$ distinct undirected co-citation pairs, and about $1.7 \times 10^{7}$ edge instances when summed with multiplicity across years. Individual yearly snapshots are large, locally clustered graphs, each typically containing a few thousand active authorities and on the order of $10^{5}$ edges. Both the active node set and the edge count grow over time as more case law is filed.

All code needed to reproduce the dataset construction is provided in the supplementary material. Because the resulting network files are large, they are not included in the supplementary archive. We will make them available online upon acceptance.

\begin{figure}[htbp]
    \centering
    \includegraphics[width=0.6\textwidth]{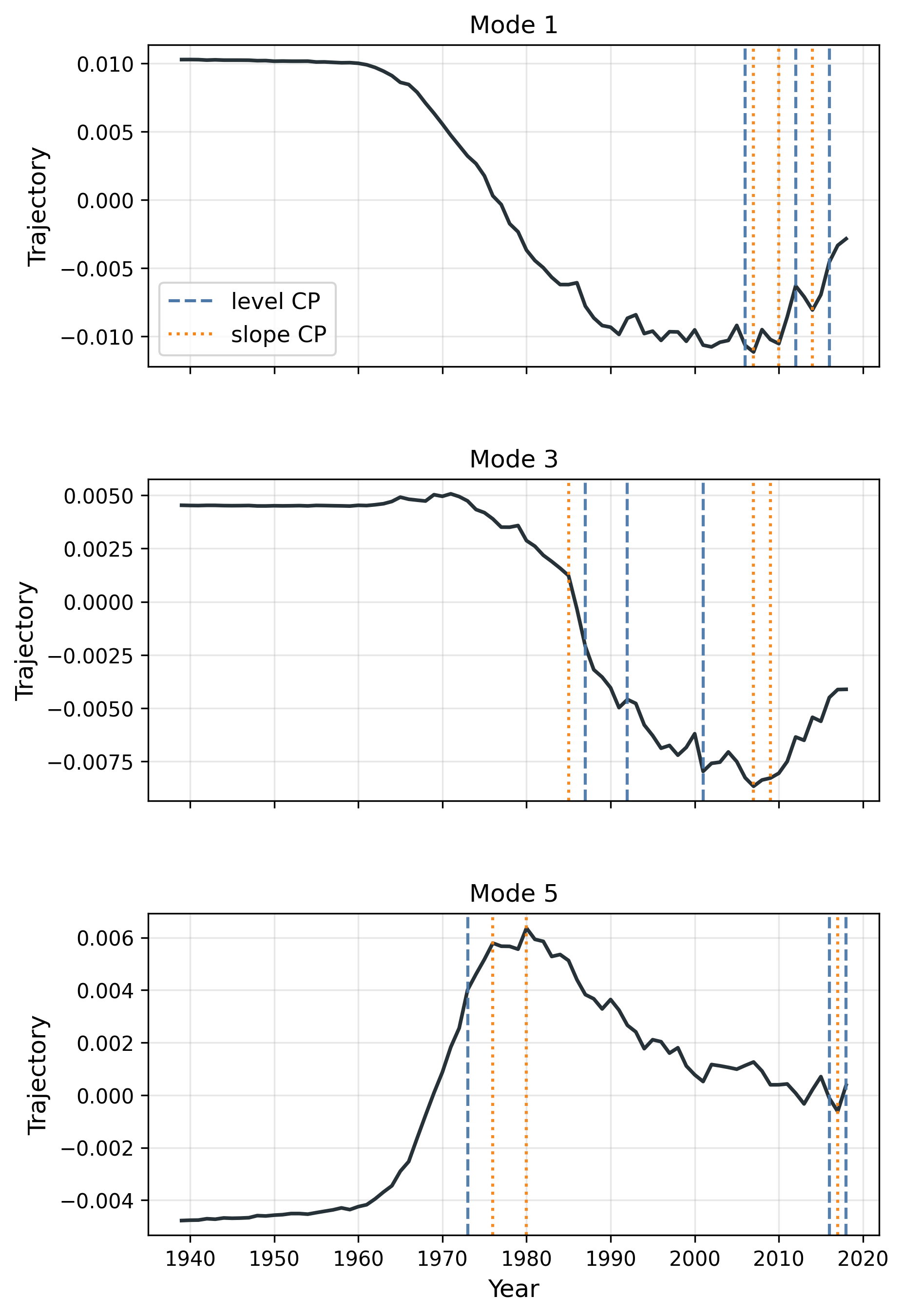}
    \caption{Trajectories of the selected modes for the U.S. Court opinions data.}
    \label{app:fig:us:trajectory}
\end{figure}

\begin{figure}[htbp]
    \centering
    \includegraphics[width=1.0\textwidth]{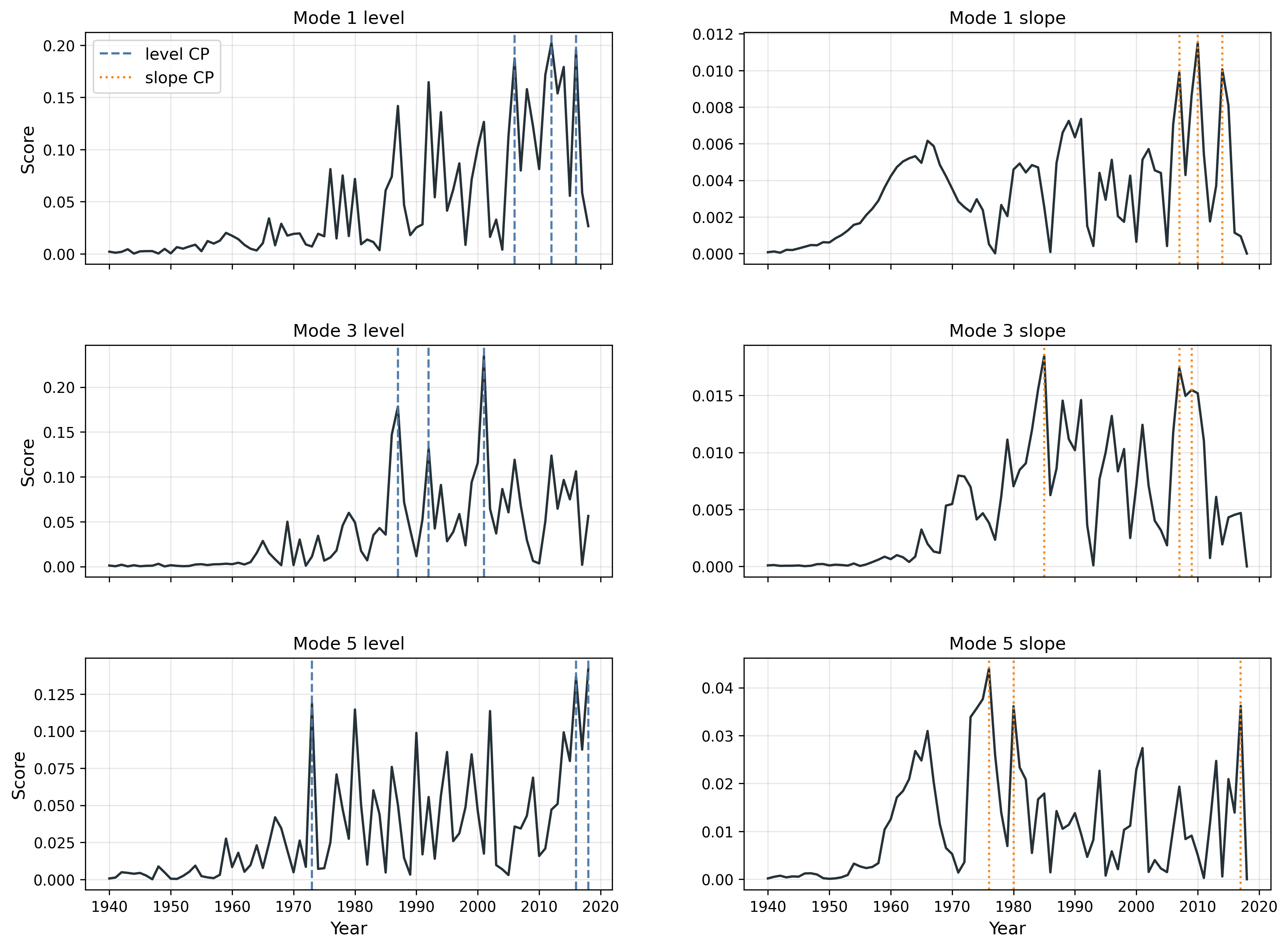}
    \caption{Trajectories of the selected modes for the U.S. Court opinions data.}
    \label{app:fig:us:score}
\end{figure}

\subsection{Detailed Analysis of the Real World Legal Citation Network}
\label{app:exp-real-additional}

This section explains how we interpret the legal citation modes shown in the main text. Each learned mode has a positive side and a negative side. For mode $k$ and a time comparison $(t,s)$, we define the signed node attribution of reporter node $i$ by
\[
\hat a_{i,k}(t,s)
=
\frac{1}{\sqrt n}
\left\langle
\hat{\mathbf Y}_{i:}(t)-\hat{\mathbf Y}_{i:}(s),
\hat{\bm u}_k
\right\rangle .
\]
\noindent These attributions decompose the squared mode-wise variation:
\[
\hat d_k\bigl(\hat{\mathbf Y}(t),\hat{\mathbf Y}(s)\bigr)^2
=
\sum_{i=1}^n \hat a_{i,k}(t,s)^2 .
\]
\noindent Nodes with large positive values of $\hat a_{i,k}(t,s)$ form the positive side of the mode at that time comparison, while nodes with large negative values form the negative side.

At each detected level change point, we identify the reporter citations with the largest positive and negative node attributions. These citations tell us which cases contribute most strongly to the movement of that mode at that time. We then ask whether the positive and negative sides correspond to recognizable legal clusters.

The analysis below focuses on level change attributions, corresponding to the $0$th order scores plotted in the main text. The same attribution procedure can also be applied to slope change scores by applying it to changes in the trajectory increments rather than to level shifts. We do not interpret those slope attributions here because the main text case study is organized around level shifts. For both real-world dataset we set $d=32$.

\subsubsection{Mode 1}
\begingroup
\setlength{\tabcolsep}{3pt}
\renewcommand{\arraystretch}{1.08}
\begin{footnotesize}

\begin{longtable}{c c l p{2.0cm} r p{5.8cm}}
\caption{Mode 1 attribution table. For each change point, TOP rows report the most positively attributed reporter nodes and BOTTOM rows report the most negatively attributed reporter nodes.}
\label{tab:mode0_attributions}\\
\toprule
CP rank & Year & Side & Reporter cite & Score & Case \\
\midrule
\endfirsthead

\toprule
CP rank & Year & Side & Reporter cite & Score & Case \\
\midrule
\endhead

\midrule
\multicolumn{6}{r}{\emph{Continued on next page}}\\
\midrule
\endfoot

\bottomrule
\endlastfoot

1 & 2012 & TOP & 47 L.Ed.2d 128 & 0.08947 & \textit{Imbler v. Pachtman}, 424 U.S. 409 (1976) \\
1 & 2012 & TOP & 104 L.Ed.2d 443 & 0.08912 & \textit{Graham v. Connor}, 490 U.S. 386 (1989) \\
1 & 2012 & TOP & 96 S.Ct. 984 & 0.08782 & \textit{Imbler v. Pachtman}, 424 U.S. 409 (1976) \\
1 & 2012 & TOP & 3 L.Ed.2d 1217 & 0.08277 & \textit{Napue v. Illinois}, 360 U.S. 264 (1959) \\
1 & 2012 & TOP & 79 S.Ct. 1173 & 0.08237 & \textit{Napue v. Illinois}, 360 U.S. 264 (1959) \\
1 & 2012 & TOP & 112 S.Ct. 475 & 0.07287 & \textit{Estelle v. McGuire}, 502 U.S. 62 (1991) \\
1 & 2012 & TOP & 467 U.S. 431 & 0.07248 & \textit{Nix v. Williams}, 467 U.S. 431 (1984) \\
1 & 2012 & TOP & 64 L.Ed.2d 497 & 0.07098 & \textit{United States v. Mendenhall}, 446 U.S. 544 (1980) \\
1 & 2012 & TOP & 100 S.Ct. 1870 & 0.07098 & \textit{United States v. Mendenhall}, 446 U.S. 544 (1980) \\
1 & 2012 & TOP & 116 L.Ed.2d 385 & 0.06818 & \textit{Estelle v. McGuire}, 502 U.S. 62 (1991) \\
\addlinespace
1 & 2012 & BOTTOM & 45 S.Ct. 280 & -0.04782 & \textit{Carroll v. United States}, 267 U.S. 132 (1925) \\
1 & 2012 & BOTTOM & 50 L.Ed.2d 450 & -0.04905 & \textit{Village of Arlington Heights v. Metropolitan Housing Development Corp.}, 429 U.S. 252 (1977) \\
1 & 2012 & BOTTOM & 97 S.Ct. 555 & -0.04924 & \textit{Village of Arlington Heights v. Metropolitan Housing Development Corp.}, 429 U.S. 252 (1977) \\
1 & 2012 & BOTTOM & 155 L.Ed.2d 144 & -0.05175 & \textit{Lockyer v. Andrade}, 538 U.S. 63 (2003) \\
1 & 2012 & BOTTOM & 111 S.Ct. 1246 & -0.05285 & \textit{Arizona v. Fulminante}, 499 U.S. 279 (1991) \\
1 & 2012 & BOTTOM & 113 L.Ed.2d 302 & -0.05285 & \textit{Arizona v. Fulminante}, 499 U.S. 279 (1991) \\
1 & 2012 & BOTTOM & 499 U.S. 279 & -0.05689 & \textit{Arizona v. Fulminante}, 499 U.S. 279 (1991) \\
1 & 2012 & BOTTOM & 412 U.S. 218 & -0.05839 & \textit{Schneckloth v. Bustamonte}, 412 U.S. 218 (1973) \\
1 & 2012 & BOTTOM & 105 S.Ct. 2633 & -0.06366 & \textit{Caldwell v. Mississippi}, 472 U.S. 320 (1985) \\
1 & 2012 & BOTTOM & 86 L.Ed.2d 231 & -0.06366 & \textit{Caldwell v. Mississippi}, 472 U.S. 320 (1985) \\

\midrule

2 & 2016 & TOP & 105 S.Ct. 3249 & 0.09473 & \textit{City of Cleburne v. Cleburne Living Center, Inc.}, 473 U.S. 432 (1985) \\
2 & 2016 & TOP & 6 L.Ed.2d 1081 & 0.08627 & \textit{Mapp v. Ohio}, 367 U.S. 643 (1961) \\
2 & 2016 & TOP & 104 L.Ed.2d 443 & 0.08290 & \textit{Graham v. Connor}, 490 U.S. 386 (1989) \\
2 & 2016 & TOP & 109 S.Ct. 1865 & 0.08283 & \textit{Graham v. Connor}, 490 U.S. 386 (1989) \\
2 & 2016 & TOP & 161 L.Ed.2d 1 & 0.07755 & \textit{Roper v. Simmons}, 543 U.S. 551 (2005) \\
2 & 2016 & TOP & 95 L.Ed.2d 697 & 0.07237 & \textit{United States v. Salerno}, 481 U.S. 739 (1987) \\
2 & 2016 & TOP & 364 U.S. 479 & 0.06613 & \textit{Shelton v. Tucker}, 364 U.S. 479 (1960) \\
2 & 2016 & TOP & 125 S.Ct. 1183 & 0.06515 & \textit{Roper v. Simmons}, 543 U.S. 551 (2005) \\
2 & 2016 & TOP & 48 L.Ed.2d 597 & 0.06411 & \textit{Washington v. Davis}, 426 U.S. 229 (1976) \\
2 & 2016 & TOP & 81 S.Ct. 1684 & 0.06394 & \textit{Mapp v. Ohio}, 367 U.S. 643 (1961) \\
\addlinespace
2 & 2016 & BOTTOM & 103 S.Ct. 3469 & -0.04945 & \textit{Michigan v. Long}, 463 U.S. 1032 (1983) \\
2 & 2016 & BOTTOM & 101 S.Ct. 690 & -0.04961 & \textit{United States v. Cortez}, 449 U.S. 411 (1981) \\
2 & 2016 & BOTTOM & 32 L.Ed.2d 612 & -0.05059 & \textit{Adams v. Williams}, 407 U.S. 143 (1972) \\
2 & 2016 & BOTTOM & 88 L.Ed.2d 662 & -0.05081 & \textit{Daniels v. Williams}, 474 U.S. 327 (1986) \\
2 & 2016 & BOTTOM & 77 L.Ed.2d 1201 & -0.05118 & \textit{Michigan v. Long}, 463 U.S. 1032 (1983) \\
2 & 2016 & BOTTOM & 106 S.Ct. 662 & -0.05378 & \textit{Daniels v. Williams}, 474 U.S. 327 (1986) \\
2 & 2016 & BOTTOM & 374 U.S. 23 & -0.05509 & \textit{Ker v. California}, 374 U.S. 23 (1963) \\
2 & 2016 & BOTTOM & 83 S.Ct. 1623 & -0.05962 & \textit{Ker v. California}, 374 U.S. 23 (1963) \\
2 & 2016 & BOTTOM & 9 L.Ed.2d 799 & -0.06539 & \textit{Gideon v. Wainwright}, 372 U.S. 335 (1963) \\
2 & 2016 & BOTTOM & 83 S.Ct. 792 & -0.06780 & \textit{Gideon v. Wainwright}, 372 U.S. 335 (1963) \\

\midrule

3 & 2006 & TOP & 473 U.S. 432 & 0.10705 & \textit{City of Cleburne v. Cleburne Living Center, Inc.}, 473 U.S. 432 (1985) \\
3 & 2006 & TOP & 105 S.Ct. 3249 & 0.10318 & \textit{City of Cleburne v. Cleburne Living Center, Inc.}, 473 U.S. 432 (1985) \\
3 & 2006 & TOP & 87 L.Ed.2d 313 & 0.10301 & \textit{City of Cleburne v. Cleburne Living Center, Inc.}, 473 U.S. 432 (1985) \\
3 & 2006 & TOP & 397 U.S. 337 & 0.06634 & \textit{Illinois v. Allen}, 397 U.S. 337 (1970) \\
3 & 2006 & TOP & 338 U.S. 25 & 0.06556 & \textit{Wolf v. Colorado}, 338 U.S. 25 (1949) \\
3 & 2006 & TOP & 69 S.Ct. 1359 & 0.06495 & \textit{Wolf v. Colorado}, 338 U.S. 25 (1949) \\
3 & 2006 & TOP & 93 L.Ed. 1782 & 0.06495 & \textit{Wolf v. Colorado}, 338 U.S. 25 (1949) \\
3 & 2006 & TOP & 523 U.S. 1 & 0.06156 & \textit{Spencer v. Kemna}, 523 U.S. 1 (1998) \\
3 & 2006 & TOP & 519 U.S. 33 & 0.05660 & \textit{Ohio v. Robinette}, 519 U.S. 33 (1996) \\
3 & 2006 & TOP & 501 U.S. 429 & 0.05525 & \textit{Florida v. Bostick}, 501 U.S. 429 (1991) \\
\addlinespace
3 & 2006 & BOTTOM & 37 Cal.4th 428 & -0.06259 & \textit{People v. Partida}, 37 Cal. 4th 428 (2005) \\
3 & 2006 & BOTTOM & 36 L.Ed.2d 439 & -0.06302 & \textit{Preiser v. Rodriguez}, 411 U.S. 475 (1973) \\
3 & 2006 & BOTTOM & 388 U.S. 293 & -0.06440 & \textit{Stovall v. Denno}, 388 U.S. 293 (1967) \\
3 & 2006 & BOTTOM & 80 L.Ed.2d 657 & -0.08186 & \textit{United States v. Cronic}, 466 U.S. 648 (1984) \\
3 & 2006 & BOTTOM & 104 S.Ct. 2039 & -0.08258 & \textit{United States v. Cronic}, 466 U.S. 648 (1984) \\
3 & 2006 & BOTTOM & 130 L.Ed.2d 808 & -0.08436 & \textit{Schlup v. Delo}, 513 U.S. 298 (1995) \\
3 & 2006 & BOTTOM & 115 S.Ct. 851 & -0.08436 & \textit{Schlup v. Delo}, 513 U.S. 298 (1995) \\
3 & 2006 & BOTTOM & 513 U.S. 298 & -0.08450 & \textit{Schlup v. Delo}, 513 U.S. 298 (1995) \\
3 & 2006 & BOTTOM & 87 S.Ct. 1967 & -0.09221 & \textit{Stovall v. Denno}, 388 U.S. 293 (1967) \\
3 & 2006 & BOTTOM & 18 L.Ed.2d 1199 & -0.09426 & \textit{Stovall v. Denno}, 388 U.S. 293 (1967) \\

\end{longtable}
\end{footnotesize}
\endgroup

Mode 1 has three leading level change points, in 2006, 2012, and 2016. The dates are relatively close to one another, and the same types of authorities recur across the three attribution blocks. We therefore read this mode as a medium term reweighting of citation neighborhoods during the mid 2000s to mid 2010s, rather than as a single isolated legal event.

The main contrast is between two groups of authorities. The positive side is broad, but it is dominated by cases that courts use to state general constitutional standards, rights frameworks, or limits on government action. For example, the 2006 positive block is led by \textit{City of Cleburne v. Cleburne Living Center}, with additional weight on \textit{Wolf v. Colorado}, \textit{Ohio v. Robinette}, and \textit{Florida v. Bostick}. The 2012 positive block shifts toward \textit{Imbler v. Pachtman}, \textit{Graham v. Connor}, \textit{Napue v. Illinois}, \textit{Estelle v. McGuire}, \textit{Nix v. Williams}, and \textit{United States v. Mendenhall}. The 2016 positive block again contains \textit{Cleburne} and \textit{Graham}, together with \textit{Mapp v. Ohio}, \textit{Roper v. Simmons}, \textit{United States v. Salerno}, and \textit{Washington v. Davis}. These cases span equal protection, Fourth Amendment seizure doctrine, excessive force, exclusionary rule incorporation, habeas review, prosecutorial misconduct, punishment, and preventive detention. Thus, the positive side should not be described as one narrow doctrine. What the table supports is a broader characterization: this side collects authorities that articulate constitutional standards and remedial limits across several areas of public-law and criminal-adjudication doctrine.

The negative side is more concentrated in criminal procedure and post-conviction review. In 2006, the largest negative attributions are \textit{Stovall v. Denno}, \textit{Schlup v. Delo}, \textit{United States v. Cronic}, \textit{Preiser v. Rodriguez}, and \textit{People v. Partida}. These cases concern eyewitness identification, habeas gateways, right to counsel, the habeas/Section 1983 boundary, and evidentiary error preservation. In 2012, the negative side contains \textit{Caldwell v. Mississippi}, \textit{Schneckloth v. Bustamonte}, \textit{Arizona v. Fulminante}, \textit{Lockyer v. Andrade}, \textit{Village of Arlington Heights}, and \textit{Carroll v. United States}. In 2016, it contains \textit{Gideon v. Wainwright}, \textit{Ker v. California}, \textit{Daniels v. Williams}, \textit{Michigan v. Long}, \textit{Adams v. Williams}, and \textit{United States v. Cortez}. This side is therefore more visibly tied to criminal adjudication: right to counsel, search and seizure, consent, reasonable suspicion, coerced confessions, habeas review, and post-conviction standards.

The change point is understandable because the same broad opposition appears at all three peaks. Mode 1 does not simply pick up one famous case or one doctrinal category. Instead, it repeatedly places a broad constitutional standards group on the positive side and a more procedure specific criminal adjudication group on the negative side. The 2006 peak is driven especially by \textit{Cleburne} on the positive side and \textit{Stovall}/\textit{Schlup}/\textit{Cronic} on the negative side. The 2012 peak adds a different set of positive authorities involving prosecutorial immunity, excessive force, false testimony, habeas limits, and seizure doctrine, while the negative side emphasizes capital sentencing, consent searches, coerced confessions, and habeas proportionality review. The 2016 peak returns to the same larger contrast, with \textit{Cleburne}, \textit{Mapp}, \textit{Graham}, and \textit{Roper} on the positive side and \textit{Gideon}, \textit{Ker}, \textit{Daniels}, \textit{Long}, and \textit{Cortez} on the negative side.

We therefore interpret Mode 1 as a broad reweighting between two citation neighborhoods: a positive neighborhood of general constitutional standards and authorities concerning rights and remedies, and a negative neighborhood more anchored in criminal procedure and post conviction doctrine. The timing of these changes is also consistent with the early and middle years of the Roberts Court, during which the Court repeatedly addressed the remedial and procedural consequences of constitutional criminal adjudication claims, including habeas review, exclusionary rule doctrine, ineffective assistance, sentencing, and collateral review. This context is relevant because many of these decisions did not simply ask whether a constitutional right existed in the abstract. They also asked whether an alleged violation justified suppression, reversal, collateral relief, or reopening of a criminal judgment. In that sense, the period placed greater emphasis on finality, deference to prior adjudication, harmless error, procedural default, and the limits of available remedies. We do not treat the mode as identifying the Roberts Court as a single causal source of the shift. Rather, the Roberts Court context helps explain why the criminal procedure and post-conviction side of this citation contrast may have become more salient during this period.

\subsubsection{Mode 3}

\begingroup
\setlength{\tabcolsep}{3pt}
\renewcommand{\arraystretch}{1.08}
\begin{footnotesize}

\begin{longtable}{c c l p{2.0cm} r p{5.8cm}}
\caption{Mode 3 attribution table. For each change point, TOP rows report the most positively attributed reporter nodes and BOTTOM rows report the most negatively attributed reporter nodes.}
\label{tab:mode2_attributions}\\
\toprule
CP rank & Year & Side & Reporter cite & Score & Case \\
\midrule
\endfirsthead

\toprule
CP rank & Year & Side & Reporter cite & Score & Case \\
\midrule
\endhead

\midrule
\multicolumn{6}{r}{\emph{Continued on next page}}\\
\midrule
\endfoot

\bottomrule
\endlastfoot

1 & 2001 & TOP & 102 S.Ct. 2727 & 0.06780 & \textit{Harlow v. Fitzgerald}, 457 U.S. 800 (1982) \\
1 & 2001 & TOP & 73 L.Ed.2d 396 & 0.06778 & \textit{Harlow v. Fitzgerald}, 457 U.S. 800 (1982) \\
1 & 2001 & TOP & 448 U.S. 98 & 0.05388 & \textit{Rawlings v. Kentucky}, 448 U.S. 98 (1980) \\
1 & 2001 & TOP & 86 L.Ed.2d 411 & 0.05083 & \textit{Mitchell v. Forsyth}, 472 U.S. 511 (1985) \\
1 & 2001 & TOP & 105 S.Ct. 2806 & 0.05064 & \textit{Mitchell v. Forsyth}, 472 U.S. 511 (1985) \\
1 & 2001 & TOP & 428 U.S. 242 & 0.04152 & \textit{Proffitt v. Florida}, 428 U.S. 242 (1976) \\
1 & 2001 & TOP & 49 L.Ed.2d 913 & 0.04139 & \textit{Proffitt v. Florida}, 428 U.S. 242 (1976) \\
1 & 2001 & TOP & 96 S.Ct. 2960 & 0.04055 & \textit{Proffitt v. Florida}, 428 U.S. 242 (1976) \\
1 & 2001 & TOP & 59 L.Ed.2d 660 & 0.03933 & \textit{Delaware v. Prouse}, 440 U.S. 648 (1979) \\
1 & 2001 & TOP & 150 L.Ed.2d 272 & 0.03826 & \textit{Saucier v. Katz}, 533 U.S. 194 (2001) \\
\addlinespace
1 & 2001 & BOTTOM & 462 U.S. 696 & -0.03869 & \textit{United States v. Place}, 462 U.S. 696 (1983) \\
1 & 2001 & BOTTOM & 88 S.Ct. 1556 & -0.03912 & \textit{Carafas v. LaVallee}, 391 U.S. 234 (1968) \\
1 & 2001 & BOTTOM & 13 L.Ed.2d 684 & -0.03917 & \textit{United States v. Ventresca}, 380 U.S. 102 (1965) \\
1 & 2001 & BOTTOM & 20 L.Ed.2d 554 & -0.03977 & \textit{Carafas v. LaVallee}, 391 U.S. 234 (1968) \\
1 & 2001 & BOTTOM & 85 S.Ct. 741 & -0.03989 & \textit{United States v. Ventresca}, 380 U.S. 102 (1965) \\
1 & 2001 & BOTTOM & 92 L.Ed. 436 & -0.04120 & \textit{Johnson v. United States}, 333 U.S. 10 (1948) \\
1 & 2001 & BOTTOM & 68 S.Ct. 367 & -0.04148 & \textit{Johnson v. United States}, 333 U.S. 10 (1948) \\
1 & 2001 & BOTTOM & 442 U.S. 200 & -0.04529 & \textit{Dunaway v. New York}, 442 U.S. 200 (1979) \\
1 & 2001 & BOTTOM & 102 S.Ct. 2157 & -0.05316 & \textit{United States v. Ross}, 456 U.S. 798 (1982) \\
1 & 2001 & BOTTOM & 72 L.Ed.2d 572 & -0.05335 & \textit{United States v. Ross}, 456 U.S. 798 (1982) \\

\midrule

2 & 1987 & TOP & 100 S.Ct. 1870 & 0.05454 & \textit{United States v. Mendenhall}, 446 U.S. 544 (1980) \\
2 & 1987 & TOP & 64 L.Ed.2d 497 & 0.05372 & \textit{United States v. Mendenhall}, 446 U.S. 544 (1980) \\
2 & 1987 & TOP & 446 U.S. 544 & 0.05360 & \textit{United States v. Mendenhall}, 446 U.S. 544 (1980) \\
2 & 1987 & TOP & 415 U.S. 164 & 0.04358 & \textit{United States v. Matlock}, 415 U.S. 164 (1974) \\
2 & 1987 & TOP & 63 L.Ed.2d 639 & 0.04294 & \textit{Payton v. New York}, 445 U.S. 573 (1980) \\
2 & 1987 & TOP & 100 S.Ct. 1371 & 0.04294 & \textit{Payton v. New York}, 445 U.S. 573 (1980) \\
2 & 1987 & TOP & 461 U.S. 352 & 0.04262 & \textit{Kolender v. Lawson}, 461 U.S. 352 (1983) \\
2 & 1987 & TOP & 103 S.Ct. 1855 & 0.04186 & \textit{Kolender v. Lawson}, 461 U.S. 352 (1983) \\
2 & 1987 & TOP & 445 U.S. 573 & 0.04041 & \textit{Payton v. New York}, 445 U.S. 573 (1980) \\
2 & 1987 & TOP & 29 L.Ed.2d 564 & 0.04002 & \textit{Coolidge v. New Hampshire}, 403 U.S. 443 (1971) \\
\addlinespace
2 & 1987 & BOTTOM & 94 S.Ct. 613 & -0.04170 & \textit{United States v. Calandra}, 414 U.S. 338 (1974) \\
2 & 1987 & BOTTOM & 414 U.S. 338 & -0.04187 & \textit{United States v. Calandra}, 414 U.S. 338 (1974) \\
2 & 1987 & BOTTOM & 38 L.Ed.2d 561 & -0.04248 & \textit{United States v. Calandra}, 414 U.S. 338 (1974) \\
2 & 1987 & BOTTOM & 10 L.Ed.2d 726 & -0.04630 & \textit{Ker v. California}, 374 U.S. 23 (1963) \\
2 & 1987 & BOTTOM & 6 S.Ct. 524 & -0.04670 & \textit{Boyd v. United States}, 116 U.S. 616 (1886) \\
2 & 1987 & BOTTOM & 374 U.S. 23 & -0.04972 & \textit{Ker v. California}, 374 U.S. 23 (1963) \\
2 & 1987 & BOTTOM & 83 S.Ct. 1623 & -0.04972 & \textit{Ker v. California}, 374 U.S. 23 (1963) \\
2 & 1987 & BOTTOM & 100 S.Ct. 2556 & -0.05570 & \textit{Rawlings v. Kentucky}, 448 U.S. 98 (1980) \\
2 & 1987 & BOTTOM & 65 L.Ed.2d 633 & -0.05574 & \textit{Rawlings v. Kentucky}, 448 U.S. 98 (1980) \\
2 & 1987 & BOTTOM & 448 U.S. 98 & -0.05624 & \textit{Rawlings v. Kentucky}, 448 U.S. 98 (1980) \\

\midrule

3 & 1992 & TOP & 102 S.Ct. 2157 & 0.05455 & \textit{United States v. Ross}, 456 U.S. 798 (1982) \\
3 & 1992 & TOP & 72 L.Ed.2d 572 & 0.05443 & \textit{United States v. Ross}, 456 U.S. 798 (1982) \\
3 & 1992 & TOP & 456 U.S. 798 & 0.05280 & \textit{United States v. Ross}, 456 U.S. 798 (1982) \\
3 & 1992 & TOP & 89 S.Ct. 1394 & 0.04452 & \textit{Davis v. Mississippi}, 394 U.S. 721 (1969) \\
3 & 1992 & TOP & 22 L.Ed.2d 676 & 0.04446 & \textit{Davis v. Mississippi}, 394 U.S. 721 (1969) \\
3 & 1992 & TOP & 394 U.S. 721 & 0.04378 & \textit{Davis v. Mississippi}, 394 U.S. 721 (1969) \\
3 & 1992 & TOP & 83 S.Ct. 1381 & 0.04149 & \textit{Lopez v. United States}, 373 U.S. 427 (1963) \\
3 & 1992 & TOP & 10 L.Ed.2d 462 & 0.04149 & \textit{Lopez v. United States}, 373 U.S. 427 (1963) \\
3 & 1992 & TOP & 373 U.S. 427 & 0.04149 & \textit{Lopez v. United States}, 373 U.S. 427 (1963) \\
3 & 1992 & TOP & 115 L.Ed.2d 389 & 0.03994 & \textit{Florida v. Bostick}, 501 U.S. 429 (1991) \\
\addlinespace
3 & 1992 & BOTTOM & 85 S.Ct. 741 & -0.04382 & \textit{United States v. Ventresca}, 380 U.S. 102 (1965) \\
3 & 1992 & BOTTOM & 362 U.S. 257 & -0.04417 & \textit{Jones v. United States}, 362 U.S. 257 (1960) \\
3 & 1992 & BOTTOM & 13 L.Ed.2d 684 & -0.04477 & \textit{United States v. Ventresca}, 380 U.S. 102 (1965) \\
3 & 1992 & BOTTOM & 80 S.Ct. 725 & -0.04590 & \textit{Jones v. United States}, 362 U.S. 257 (1960) \\
3 & 1992 & BOTTOM & 4 L.Ed.2d 697 & -0.04725 & \textit{Jones v. United States}, 362 U.S. 257 (1960) \\
3 & 1992 & BOTTOM & 94 S.Ct. 467 & -0.05239 & \textit{United States v. Robinson}, 414 U.S. 218 (1973) \\
3 & 1992 & BOTTOM & 38 L.Ed.2d 427 & -0.05239 & \textit{United States v. Robinson}, 414 U.S. 218 (1973) \\
3 & 1992 & BOTTOM & 414 U.S. 218 & -0.05266 & \textit{United States v. Robinson}, 414 U.S. 218 (1973) \\
3 & 1992 & BOTTOM & 66 L.Ed.2d 621 & -0.05578 & \textit{United States v. Cortez}, 449 U.S. 411 (1981) \\
3 & 1992 & BOTTOM & 101 S.Ct. 690 & -0.05688 & \textit{United States v. Cortez}, 449 U.S. 411 (1981) \\

\end{longtable}
\end{footnotesize}
\endgroup

Mode~3 is a focused criminal procedure mode. Its three leading level change points occur in 1987, 1992, and 2001, and the high attribution nodes are concentrated in Fourth Amendment search and seizure doctrine and related police litigation doctrine. Unlike Mode~1, this mode does not span broad constitutional fields. The attribution table instead shows a more local structure inside police conduct law.

The positive side of Mode~3 generally contains cases about police citizen encounters, third party consent, home entry, automobile or field searches, bus sweep encounters, and qualified immunity. The negative side generally contains cases about Fourth Amendment standing, warrant review, custodial or investigative detention, exclusionary rule limits, search incident to arrest, and older search and seizure doctrine. The mode therefore separates different groups of cases used to define the legal boundary of police conduct.

The 1987 change point gives the first version of this separation. The positive side is led by \textit{United States v. Mendenhall}, \textit{United States v. Matlock}, \textit{Payton v. New York}, \textit{Kolender v. Lawson}, and \textit{Coolidge v. New Hampshire}. These cases concern police citizen encounters, third party consent, home entry for arrest, stop and identify doctrine, and search and seizure limits. The negative side is led by \textit{Rawlings v. Kentucky}, \textit{Ker v. California}, \textit{Boyd v. United States}, and \textit{United States v. Calandra}. These cases emphasize Fourth Amendment standing, state search and seizure standards, older search and self incrimination doctrine, and limits on the exclusionary rule. Thus, the 1987 peak contrasts encounter, consent, and home entry cases with standing, older search doctrine, and exclusionary rule cases.

The 1992 change point remains within Fourth Amendment doctrine but shifts to a different set of police conduct cases. The positive side is dominated by \textit{United States v. Ross}, \textit{Davis v. Mississippi}, \textit{Lopez v. United States}, and \textit{Florida v. Bostick}. These cases involve automobile searches, investigative detention for fingerprinting, participant monitoring, and bus sweep seizure analysis. The negative side is dominated by \textit{United States v. Cortez}, \textit{United States v. Robinson}, \textit{Jones v. United States}, and \textit{United States v. Ventresca}, which concern reasonable suspicion, search incident to arrest, Fourth Amendment standing, and warrant affidavit review. The 1992 peak therefore contrasts automobile search and mobile encounter cases with reasonable suspicion, search incident, standing, and warrant review cases.

The 2001 change point adds an explicit police litigation component. The positive side is led by \textit{Harlow v. Fitzgerald}, \textit{Mitchell v. Forsyth}, \textit{Rawlings v. Kentucky}, \textit{Delaware v. Prouse}, and \textit{Saucier v. Katz}, with \textit{Proffitt v. Florida} appearing as a smaller capital sentencing component. This side combines qualified immunity doctrine with Fourth Amendment standing and automobile stop doctrine. The negative side is led by \textit{United States v. Ross}, \textit{Dunaway v. New York}, \textit{Johnson v. United States}, \textit{United States v. Ventresca}, \textit{Carafas v. LaVallee}, and \textit{United States v. Place}. These cases concern automobile searches, custodial detention, warrant requirements, warrant affidavit review, habeas mootness, and investigative detention of luggage. The 2001 peak therefore differs from the 1987 and 1992 peaks because qualified immunity cases become central to the positive side of the mode.

The timing of these peaks is legally plausible. The 1987 peak occurs in the same year as \textit{Anderson v. Creighton}, where the Court applied qualified immunity to a Fourth Amendment warrantless search claim and held that the allegedly violated right must be clearly established at a factually specific level~\citep{anderson1987creighton}. The 1992 peak is consistent with the reorganization of police encounter and automobile search doctrine around this period. Contemporary commentary described the Court as reshaping the boundary between regulated seizures and unregulated police citizen encounters~\citep{maclin1989locomotion}, and the Court had recently clarified automobile search and bus encounter doctrine in \textit{California v. Acevedo} and \textit{Florida v. Bostick}~\citep{acevedo1991,bostick1991}. The 2001 peak aligns with \textit{Saucier v. Katz}, where the Court required courts to ask first whether the facts show a constitutional violation and then whether the right was clearly established~\citep{saucier2001katz}. Later scholarship describes this Wilson--Saucier sequencing rule as shaping how lower courts articulated constitutional rights in qualified immunity cases~\citep{hughes2009sequencing}. More broadly, Kinports characterizes the Supreme Court's qualified immunity jurisprudence as a ``quiet expansion,'' which supports reading the 1987 and 2001 peaks as part of an incremental doctrinal development rather than as a response to a single triggering event~\citep{kinports2016quiet}.

This provides a network interpretation of the mode. Mode~3 is not simply detecting Fourth Amendment doctrine as a single block. Instead, it shows that different Fourth Amendment case groups are reorganized with different neighboring cases over time. In 1987, encounter, consent, and entry cases separate from standing, older search doctrine, and exclusionary rule cases. In 1992, automobile search and mobile encounter cases separate from reasonable suspicion, search incident, standing, and warrant review cases. By 2001, qualified immunity cases such as \textit{Harlow}, \textit{Mitchell}, and \textit{Saucier} appear on the same side as Fourth Amendment police conduct cases such as \textit{Rawlings} and \textit{Prouse}. The resulting network change is that Fourth Amendment cases are no longer organized only around merits questions about searches and seizures; they also become linked to the civil litigation question whether the relevant right was clearly established enough to support damages liability against officers.

Overall, Mode~3 is best interpreted as a Fourth Amendment and police litigation reweighting. The positive side generally emphasizes encounter, consent, home entry, automobile search, field search, and qualified immunity precedents. The negative side generally emphasizes standing, warrant review, detention, exclusionary rule limits, search incident to arrest, and older search and seizure precedents. The main finding is that Mode~3 separates citation neighborhoods inside criminal procedure and shows how the network organization of police conduct doctrine changes: encounter and consent cases cluster apart from standing and warrant cases; automobile and mobile encounter cases cluster apart from reasonable suspicion and search incident cases; and, by 2001, Fourth Amendment police conduct cases are linked to qualified immunity cases governing the clearly established law requirement.

\subsubsection{Mode 5}

\begingroup
\setlength{\tabcolsep}{3pt}
\renewcommand{\arraystretch}{1.08}
\begin{footnotesize}

\begin{longtable}{c c l p{2.0cm} r p{5.8cm}}
\caption{Mode 5 attribution table. For each change point, TOP rows report the most positively attributed reporter nodes and BOTTOM rows report the most negatively attributed reporter nodes.}
\label{tab:mode4_attributions}\\
\toprule
CP rank & Year & Side & Reporter cite & Score & Case \\
\midrule
\endfirsthead

\toprule
CP rank & Year & Side & Reporter cite & Score & Case \\
\midrule
\endhead

\midrule
\multicolumn{6}{r}{\emph{Continued on next page}}\\
\midrule
\endfoot

\bottomrule
\endlastfoot

1 & 2018 & TOP & 112 S.Ct. 475 & 0.04112 & \textit{Estelle v. McGuire}, 502 U.S. 62 (1991) \\
1 & 2018 & TOP & 103 L.Ed.2d 249 & 0.03544 & \textit{DeShaney v. Winnebago County Department of Social Services}, 489 U.S. 189 (1989) \\
1 & 2018 & TOP & 109 S.Ct. 998 & 0.03497 & \textit{DeShaney v. Winnebago County Department of Social Services}, 489 U.S. 189 (1989) \\
1 & 2018 & TOP & 116 L.Ed.2d 385 & 0.03422 & \textit{Estelle v. McGuire}, 502 U.S. 62 (1991) \\
1 & 2018 & TOP & 140 L.Ed.2d 1043 & 0.03008 & \textit{County of Sacramento v. Lewis}, 523 U.S. 833 (1998) \\
1 & 2018 & TOP & 3 L.Ed.2d 1217 & 0.02998 & \textit{Napue v. Illinois}, 360 U.S. 264 (1959) \\
1 & 2018 & TOP & 121 S.Ct. 2151 & 0.02985 & \textit{Saucier v. Katz}, 533 U.S. 194 (2001) \\
1 & 2018 & TOP & 150 L.Ed.2d 272 & 0.02980 & \textit{Saucier v. Katz}, 533 U.S. 194 (2001) \\
1 & 2018 & TOP & 107 S.Ct. 2095 & 0.02977 & \textit{United States v. Salerno}, 481 U.S. 739 (1987) \\
1 & 2018 & TOP & 95 L.Ed.2d 697 & 0.02895 & \textit{United States v. Salerno}, 481 U.S. 739 (1987) \\
\addlinespace
1 & 2018 & BOTTOM & 8 L.Ed.2d 758 & -0.02110 & \textit{Robinson v. California}, 370 U.S. 660 (1962) \\
1 & 2018 & BOTTOM & 99 S.Ct. 970 & -0.02121 & \textit{Montana v. United States}, 440 U.S. 147 (1979) \\
1 & 2018 & BOTTOM & 96 L.Ed.2d 440 & -0.02386 & \textit{Booth v. Maryland}, 482 U.S. 496 (1987) \\
1 & 2018 & BOTTOM & 71 L.Ed.2d 1 & -0.02493 & \textit{Eddings v. Oklahoma}, 455 U.S. 104 (1982) \\
1 & 2018 & BOTTOM & 415 U.S. 452 & -0.02547 & \textit{Steffel v. Thompson}, 415 U.S. 452 (1974) \\
1 & 2018 & BOTTOM & 99 S.Ct. 645 & -0.02660 & \textit{Parklane Hosiery Co. v. Shore}, 439 U.S. 322 (1979) \\
1 & 2018 & BOTTOM & 58 L.Ed.2d 552 & -0.02660 & \textit{Parklane Hosiery Co. v. Shore}, 439 U.S. 322 (1979) \\
1 & 2018 & BOTTOM & 102 S.Ct. 869 & -0.02919 & \textit{Eddings v. Oklahoma}, 455 U.S. 104 (1982) \\
1 & 2018 & BOTTOM & 56 L.Ed.2d 611 & -0.03155 & \textit{Monell v. Department of Social Services}, 436 U.S. 658 (1978) \\
1 & 2018 & BOTTOM & 98 S.Ct. 2018 & -0.03156 & \textit{Monell v. Department of Social Services}, 436 U.S. 658 (1978) \\

\midrule

2 & 2016 & TOP & 161 L.Ed.2d 1 & 0.04108 & \textit{Roper v. Simmons}, 543 U.S. 551 (2005) \\
2 & 2016 & TOP & 96 S.Ct. 2978 & 0.03862 & \textit{Woodson v. North Carolina}, 428 U.S. 280 (1976) \\
2 & 2016 & TOP & 71 L.Ed.2d 1 & 0.03833 & \textit{Eddings v. Oklahoma}, 455 U.S. 104 (1982) \\
2 & 2016 & TOP & 428 U.S. 153 & 0.03504 & \textit{Gregg v. Georgia}, 428 U.S. 153 (1976) \\
2 & 2016 & TOP & 49 L.Ed.2d 944 & 0.03498 & \textit{Woodson v. North Carolina}, 428 U.S. 280 (1976) \\
2 & 2016 & TOP & 125 S.Ct. 1183 & 0.03289 & \textit{Roper v. Simmons}, 543 U.S. 551 (2005) \\
2 & 2016 & TOP & 485 U.S. 112 & 0.03227 & \textit{City of St. Louis v. Praprotnik}, 485 U.S. 112 (1988) \\
2 & 2016 & TOP & 478 U.S. 570 & 0.03139 & \textit{Rose v. Clark}, 478 U.S. 570 (1986) \\
2 & 2016 & TOP & 96 S.Ct. 2909 & 0.03015 & \textit{Gregg v. Georgia}, 428 U.S. 153 (1976) \\
2 & 2016 & TOP & 49 L.Ed.2d 859 & 0.03000 & \textit{Gregg v. Georgia}, 428 U.S. 153 (1976) \\
\addlinespace
2 & 2016 & BOTTOM & 971 S.W.2d 402 & -0.03203 & \textit{Maritime Overseas Corp. v. Ellis}, 971 S.W.2d 402 (Tex. 1998) \\
2 & 2016 & BOTTOM & 879 S.W.2d 10 & -0.03219 & \textit{Transportation Insurance Co. v. Moriel}, 879 S.W.2d 10 (Tex. 1994) \\
2 & 2016 & BOTTOM & 92 S.Ct. 2294 & -0.03223 & \textit{Grayned v. City of Rockford}, 408 U.S. 104 (1972) \\
2 & 2016 & BOTTOM & 50 L.Ed.2d 471 & -0.03231 & \textit{Mt. Healthy City School District Board of Education v. Doyle}, 429 U.S. 274 (1977) \\
2 & 2016 & BOTTOM & 17 L.Ed.2d 374 & -0.03241 & \textit{Hoffa v. United States}, 385 U.S. 293 (1966) \\
2 & 2016 & BOTTOM & 87 S.Ct. 408 & -0.03241 & \textit{Hoffa v. United States}, 385 U.S. 293 (1966) \\
2 & 2016 & BOTTOM & 244 S.W.2d 660 & -0.03367 & \textit{In re King's Estate}, 244 S.W.2d 660 (Tex. 1951) \\
2 & 2016 & BOTTOM & 650 S.W.2d 61 & -0.03376 & \textit{Kindred v. Con/Chem, Inc.}, 650 S.W.2d 61 (Tex. 1983) \\
2 & 2016 & BOTTOM & 715 S.W.2d 629 & -0.03630 & \textit{Pool v. Ford Motor Co.}, 715 S.W.2d 629 (Tex. 1986) \\
2 & 2016 & BOTTOM & 922 S.W.2d 126 & -0.03719 & \textit{Clewis v. State}, 922 S.W.2d 126 (Tex. Crim. App. 1996) \\

\midrule

3 & 1973 & TOP & 93 S.Ct. 1278 & 0.04852 & \textit{San Antonio Independent School District v. Rodriguez}, 411 U.S. 1 (1973) \\
3 & 1973 & TOP & 36 L.Ed.2d 16 & 0.04798 & \textit{San Antonio Independent School District v. Rodriguez}, 411 U.S. 1 (1973) \\
3 & 1973 & TOP & 411 U.S. 1 & 0.04597 & \textit{San Antonio Independent School District v. Rodriguez}, 411 U.S. 1 (1973) \\
3 & 1973 & TOP & 93 S.Ct. 1764 & 0.04308 & \textit{Frontiero v. Richardson}, 411 U.S. 677 (1973) \\
3 & 1973 & TOP & 411 U.S. 677 & 0.03881 & \textit{Frontiero v. Richardson}, 411 U.S. 677 (1973) \\
3 & 1973 & TOP & 36 L.Ed.2d 583 & 0.03850 & \textit{Frontiero v. Richardson}, 411 U.S. 677 (1973) \\
3 & 1973 & TOP & 411 U.S. 223 & 0.03843 & \textit{Brown v. United States}, 411 U.S. 223 (1973) \\
3 & 1973 & TOP & 93 S.Ct. 1565 & 0.03843 & \textit{Brown v. United States}, 411 U.S. 223 (1973) \\
3 & 1973 & TOP & 93 S.Ct. 2041 & 0.03679 & \textit{Schneckloth v. Bustamonte}, 412 U.S. 218 (1973) \\
3 & 1973 & TOP & 36 L.Ed.2d 854 & 0.03675 & \textit{Schneckloth v. Bustamonte}, 412 U.S. 218 (1973) \\
\addlinespace
3 & 1973 & BOTTOM & 4 L.Ed.2d 1688 & -0.02338 & \textit{Rios v. United States}, 364 U.S. 253 (1960) \\
3 & 1973 & BOTTOM & 355 U.S. 339 & -0.02387 & \textit{Lawn v. United States}, 355 U.S. 339 (1958) \\
3 & 1973 & BOTTOM & 84 S.Ct. 1774 & -0.02430 & \textit{Jackson v. Denno}, 378 U.S. 368 (1964) \\
3 & 1973 & BOTTOM & 308 U.S. 371 & -0.02501 & \textit{Chicot County Drainage District v. Baxter State Bank}, 308 U.S. 371 (1940) \\
3 & 1973 & BOTTOM & 60 S.Ct. 317 & -0.02501 & \textit{Chicot County Drainage District v. Baxter State Bank}, 308 U.S. 371 (1940) \\
3 & 1973 & BOTTOM & 2 L.Ed.2d 321 & -0.02547 & \textit{Lawn v. United States}, 355 U.S. 339 (1958) (1958) \\
3 & 1973 & BOTTOM & 84 L.Ed. 329 & -0.02666 & \textit{Chicot County Drainage District v. Baxter State Bank}, 308 U.S. 371 (1940) \\
3 & 1973 & BOTTOM & 67 S.Ct. 1672 & -0.02807 & \textit{Adamson v. California}, 332 U.S. 46 (1947) \\
3 & 1973 & BOTTOM & 91 L.Ed. 1903 & -0.02808 & \textit{Adamson v. California}, 332 U.S. 46 (1947) \\
3 & 1973 & BOTTOM & 332 U.S. 46 & -0.02981 & \textit{Adamson v. California}, 332 U.S. 46 (1947) \\

\end{longtable}
\end{footnotesize}
\endgroup

Mode~5 is broader than Modes~1 and~3, but the attribution table still shows a coherent pattern. Its three leading level change points occur in 1973, 2016, and 2018. Unlike Mode~3, this mode is not centered on Fourth Amendment police conduct. It instead groups cases about federal constitutional claims against government action: when such claims are reviewable, who can be held liable, and what limits apply to punishment or remedies.

The positive side of Mode~5 generally contains cases about habeas limits, substantive due process, state action, preventive detention, qualified immunity, capital punishment, equal protection, and selected criminal procedure rules. The negative side generally contains cases about municipal liability, preclusion, declaratory relief, punishment limits, incorporation, evidentiary sufficiency, and older constitutional criminal procedure. The mode therefore tracks how constitutional claims against government actors are routed through federal review, liability, remedy, and punishment doctrines.

The 1973 change point gives the earliest version of this structure. The positive side is led by \textit{San Antonio Independent School District v. Rodriguez}, \textit{Frontiero v. Richardson}, \textit{Brown v. United States}, and \textit{Schneckloth v. Bustamonte}. These cases combine equal protection, sex discrimination, Fourth Amendment standing, and consent searches. The negative side is led by \textit{Adamson v. California}, \textit{Chicot County Drainage District v. Baxter State Bank}, \textit{Jackson v. Denno}, \textit{Lawn v. United States}, and \textit{Rios v. United States}. These cases involve incorporation, finality of judgments, Eighth Amendment punishment, confession voluntariness, criminal procedure, and search and seizure timing. The 1973 peak therefore contrasts then current equal protection and criminal procedure cases with older constitutional criminal and federal courts cases.

The 2016 change point makes the punishment component of Mode~5 much more visible. The positive side is dominated by \textit{Roper v. Simmons}, \textit{Woodson v. North Carolina}, \textit{Eddings v. Oklahoma}, and \textit{Gregg v. Georgia}. These are central Eighth Amendment and capital sentencing cases. The same side also includes \textit{City of St. Louis v. Praprotnik}, a municipal liability case, and \textit{Rose v. Clark}, a harmless error case. Thus, the positive side links capital punishment doctrine with government liability and criminal review. The negative side is more heterogeneous. It contains several Texas state law sufficiency and evidence review cases, together with \textit{Hoffa v. United States}, \textit{Mt. Healthy City School District Board of Education v. Doyle}, and \textit{Grayned v. City of Rockford}. These cases involve factual sufficiency, informant evidence, retaliation, and First Amendment limits. The 2016 peak should therefore be read as a strong capital punishment and government liability signal on the positive side, against a more mixed procedural and evidentiary sufficiency neighborhood on the negative side.

The 2018 change point is the clearest recent expression of the mode. The positive side is led by \textit{Estelle v. McGuire}, \textit{DeShaney v. Winnebago County Department of Social Services}, \textit{County of Sacramento v. Lewis}, \textit{Napue v. Illinois}, \textit{Saucier v. Katz}, and \textit{United States v. Salerno}. These cases concern limits on federal habeas review, the absence of a general affirmative constitutional duty to protect, substantive due process, false testimony, qualified immunity, and preventive detention. This side is best read as a constitutional claim threshold neighborhood: it concerns when government conduct becomes constitutionally actionable and when federal courts will review or remedy it. The negative side is led by \textit{Monell v. Department of Social Services}, \textit{Eddings v. Oklahoma}, \textit{Parklane Hosiery Co. v. Shore}, \textit{Steffel v. Thompson}, \textit{Booth v. Maryland}, \textit{Montana v. United States}, and \textit{Robinson v. California}. These cases concern municipal liability, capital sentencing, preclusion, declaratory relief, and punishment limits. The 2018 peak therefore separates one group of cases about federal review and constitutional claim thresholds from another group about liability, preclusion, remedies, and punishment doctrine.

This gives Mode~5 a network interpretation. The mode does not detect a single doctrinal field in the way Mode~3 detects Fourth Amendment and police litigation structure. Instead, it shows a changing citation neighborhood around federal constitutional claims against government actors. In 1973, newer equal protection and criminal procedure cases separate from older incorporation, finality, punishment, and confession cases. In 2016, capital punishment cases cluster with municipal liability and criminal review cases, while state law sufficiency and evidentiary review cases sit on the opposite side. By 2018, habeas, substantive due process, qualified immunity, and detention cases are grouped together, while municipal liability, preclusion, declaratory relief, and punishment cases form the opposing neighborhood.

Overall, Mode~5 is best interpreted as a mixed constitutional criminal and remedies mode. The positive side generally emphasizes federal review, constitutional claim thresholds, due process, detention, qualified immunity, capital punishment, and equal protection. The negative side generally emphasizes municipal liability, preclusion, federal court remedies, evidentiary sufficiency, incorporation, and punishment limits. The main finding is that Mode~5 captures a broader reorganization than Mode~3: it links constitutional claims against government action to the doctrines that determine reviewability, liability, remedies, and punishment.

\subsection{Mode-wise change point analysis of the Enron email network} 
\label{app:enron}

\begin{figure}[htbp] 
\centering \includegraphics[width=1.0\textwidth]{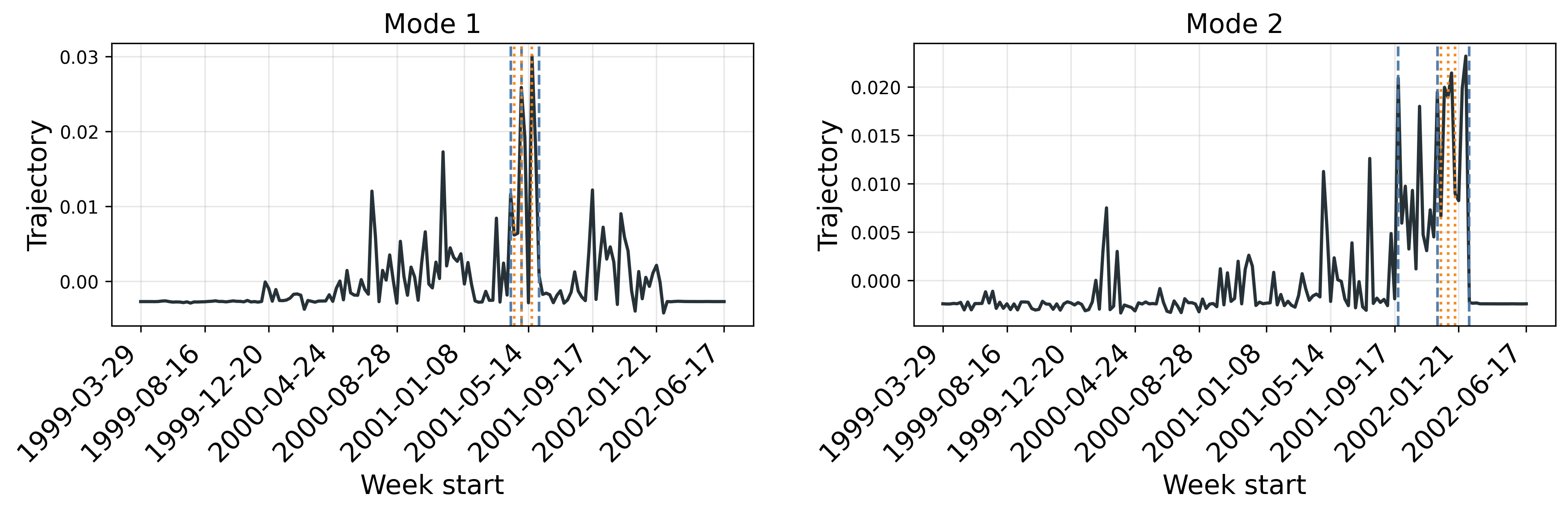} 
\caption{Trajectories of Enron.} 
\label{app:enron:trajectory} 
\end{figure} 

\begin{figure}[htbp] 
\centering 
\includegraphics[width=1.0\textwidth]{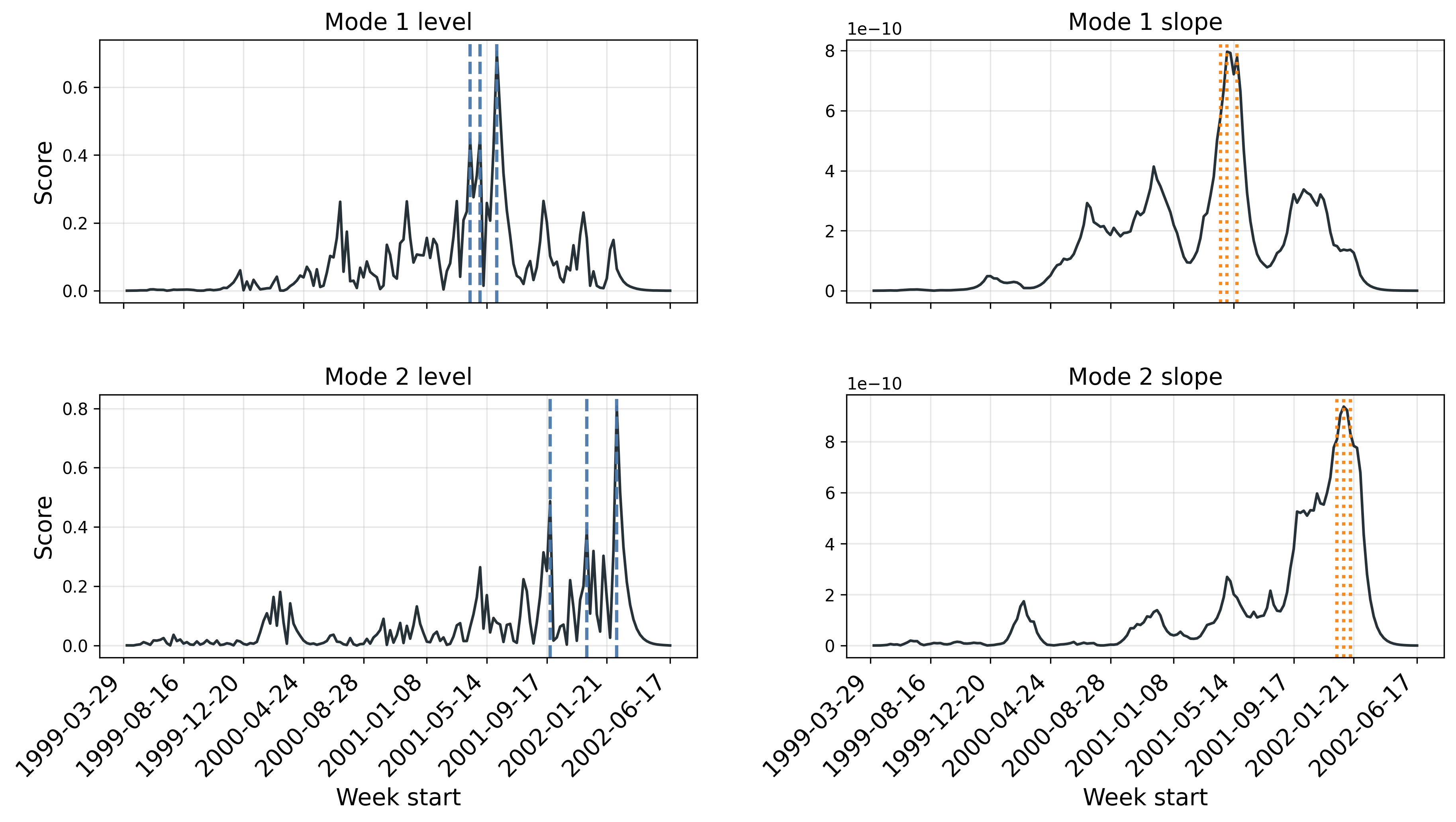} 
\caption{Mode-wise level (i.e., $0$th) and slope (i.e., $1$st) order change point scores for the first two Enron e-mail network modes.} 
\label{app:enron:score} 
\end{figure}

We also apply the same mode-wise change point analysis to the Enron email network. The network is constructed from the CMU version of the Enron email corpus~\citep{klimt2004enron}. Employee role labels are matched from the auxiliary employee metadata file \texttt{E-Mails-updated.csv} in Schulz's Enron-mails repository~\citep{schulz2012roles}. We use these role labels only as auxiliary metadata, and we cite external sources when making more specific claims about individuals' positions or functions. Because the dataset concerns real individuals, we report individual names only in the attribution tables needed to reproduce the analysis, and interpret the results as descriptive network patterns. Figure~\ref{app:enron:trajectory} shows the first two estimated mode-wise trajectories, and Figure~\ref{app:enron:score} reports the corresponding $0$th and $1$st order change point scores. The interpretation below focuses on $0$th order level changes, since Tables~\ref{app:enron:mode0_level} and~\ref{app:enron:mode1_level} report level change attributions.

Mode~1 captures an early shift in communication network structure during spring and early summer 2001. Its three largest level changes occur in the weeks of 2001-04-09, 2001-04-30, and 2001-06-04. These dates fall during Jeffrey Skilling's short tenure as Enron CEO and precede the firm's more visible public collapse period later in 2001~\citep{sec2004skillingcomplaint,time2001enronchronology}. We therefore interpret Mode~1 as a descriptive signal of early internal communication reorganization, rather than as a direct response to bankruptcy or later public investigations.

The April 30 peak provides the clearest example for Mode~1. On the positive side, the table includes employees with roles listed as Director, CEO, President, Vice President, Manager, Employee, and Managing Director, together with one employee whose role is not listed. On the negative side, the table includes employees with roles listed as President, Manager, Director, CEO, Employee, and Trader, together with several employees whose roles are not listed. Among the names with external support, Lavorato is described in contemporary reporting as running Enron's energy trading business~\citep{guardian2002lavoratoKitchen}, while Arnold is described in later reporting as a former Enron natural gas trader and Shankman as a prominent trader~\citep{wired2017arnold}. Kuykendall is listed as a Trader in the role metadata~\citep{schulz2012roles}. Thus, we interpret the April 30 peak conservatively as a separation between two sets of highly attributed employee nodes, with some evidence that the negative side contains several trading or commercially connected figures.

The April 9 and June 4 peaks support the same cautious interpretation while also showing that the signs of the learned mode should not be treated as fixed employee labels. Several employees appear on different sides across the three peaks. For example, Arnold appears on the positive side on April 9 but on the negative side on April 30 and June 4; Shankman appears on the positive side on April 9 and June 4 but on the negative side on April 30; and Holst appears on the negative side on April 9 but on the positive side on April 30. We therefore interpret Mode~1 as a spring 2001 rotation in relative communication prominence among overlapping employee groups, rather than as a stable partition of employees. Its main value is that it identifies an internal communication shift that precedes the more visible crisis period events later in 2001.

Mode~2 captures a later crisis period direction. Its largest level change occurs in the week of 2002-02-11, with additional large changes in the weeks of 2001-09-24 and 2001-12-10. These dates align with Enron's public crisis timeline. Skilling resigned in August 2001, Enron's disclosures and investigations intensified in fall 2001, the firm filed for bankruptcy in December 2001, and congressional and regulatory scrutiny continued into early 2002~\citep{time2001enronchronology,sec2001enronrecentevents,sec2002enronbankruptcy}.

The September 24 peak provides the clearest prebankruptcy Mode~2 separation. On the negative side, the table includes employees with roles listed as Director, Vice President, Manager, and Employee. On the positive side, the table includes employees with roles listed as Vice President, Trader, President, and Employee, together with several employees whose roles are not listed. Among the names with external support, Grigsby is described in contemporary reporting as Enron's former Vice President for natural gas trading, and Arnold is described in later reporting as a former Enron natural gas trader~\citep{thestreet2002grigsby,wired2017arnold}. Thus, the September peak appears to separate two highly attributed employee groups, with the negative side containing several employees whose listed roles are senior or managerial and at least two externally supported trading or commercial figures. We interpret this separation descriptively as a communication network pattern, not as evidence of individual responsibility or misconduct.

The December 10 peak occurs immediately after Enron's bankruptcy filing. On the negative side, the table includes employees with roles listed as Manager, Director, Trader, Vice President, and Employee. On the positive side, the table includes employees with roles listed as Vice President, Employee, Manager, and Director, together with several employees whose roles are not listed. Steffes is the largest positive attribution at this peak and is listed as a Vice President in the role metadata~\citep{schulz2012roles}. We therefore interpret the December 10 peak as a crisis period shift between two highly attributed groups of employee nodes, rather than assigning a more specific organizational interpretation beyond what the table and cited metadata support.

The dominant Mode~2 event occurs on February 11, 2002. At this point, the positive side contains employees with roles listed as Employee, President, Vice President, CEO, and Manager. The bottom side has much smaller absolute scores and includes roles listed as Employee, President, Director, Trader, and Vice President, together with several employees whose roles are not listed. Because the positive side has substantially larger attribution magnitudes, the February peak is best interpreted as a post collapse concentration of communication change around the positively attributed employee group.

Taken together, the Enron results illustrate the usefulness of mode-wise attribution. The score curves identify when each learned direction changes, while the attribution tables identify which employee nodes define those changes. Mode~1 isolates an earlier spring 2001 communication reallocation during Skilling's CEO period, whereas Mode~2 isolates a later crisis period reorganization around Enron's public collapse, bankruptcy, and post collapse investigations. The supported claim is not that the method identifies the cause of Enron's collapse, that each mode corresponds to a single department, or that highly attributed employees bear responsibility for events at Enron. Rather, the method separates distinct temporal directions of network change and attaches each direction to employee nodes whose listed roles can be checked in the attribution tables and auxiliary metadata.

\begingroup
\setlength{\tabcolsep}{4pt}
\renewcommand{\arraystretch}{1.10}
\begin{footnotesize}

\begin{longtable}{c c l p{3.0cm} p{2.7cm} r}
\caption{Mode 1 level change-point attributions for the Enron email network. For each change point, TOP rows report the most positively attributed employee/mailbox nodes and BOTTOM rows report the most negatively attributed nodes. Employee/mailbox labels are the normalized node labels used in the Enron email network. Roles are matched from auxiliary employee metadata when available.}
\label{app:enron:mode0_level}\\
\toprule
CP rank & Week & Side & Employee/mailbox & Role & Score \\
\midrule
\endfirsthead

\toprule
CP rank & Week & Side & Employee/mailbox & Role & Score \\
\midrule
\endhead

\midrule
\multicolumn{6}{r}{\emph{Continued on next page}}\\
\midrule
\endfoot

\bottomrule
\endlastfoot

1 & 2001-06-04 & Top & zipper-a & Vice President & +0.02285 \\
1 & 2001-06-04 & Top & taylor-m & Employee & +0.01607 \\
1 & 2001-06-04 & Top & donoho-l & Employee & +0.01497 \\
1 & 2001-06-04 & Top & shapiro-r & Vice President & +0.00981 \\
1 & 2001-06-04 & Top & williams-j & Vice President & +0.00924 \\
1 & 2001-06-04 & Top & watson-k &  & +0.00857 \\
1 & 2001-06-04 & Top & campbell-l & Employee & +0.00774 \\
1 & 2001-06-04 & Top & shankman-j & President & +0.00716 \\
1 & 2001-06-04 & Top & lokay-m & Employee & +0.00706 \\
1 & 2001-06-04 & Top & hayslett-r & Vice President & +0.00582 \\
\addlinespace
1 & 2001-06-04 & Bottom & maggi-m & Director & -0.01849 \\
1 & 2001-06-04 & Bottom & tycholiz-b & Vice President & -0.01930 \\
1 & 2001-06-04 & Bottom & presto-k & Vice President & -0.01971 \\
1 & 2001-06-04 & Bottom & arnold-j & Manager & -0.02007 \\
1 & 2001-06-04 & Bottom & whalley-l &  & -0.02035 \\
1 & 2001-06-04 & Bottom & jones-t &  & -0.02110 \\
1 & 2001-06-04 & Bottom & quigley-d & Trader & -0.02132 \\
1 & 2001-06-04 & Bottom & lavorato-j & CEO & -0.02835 \\
1 & 2001-06-04 & Bottom & storey-g & Director & -0.03984 \\
1 & 2001-06-04 & Bottom & lenhart-m & Employee & -0.04147 \\

\midrule

2 & 2001-04-30 & Top & holst-k & Director & +0.04364 \\
2 & 2001-04-30 & Top & sanchez-m &  & +0.03514 \\
2 & 2001-04-30 & Top & skilling-j & CEO & +0.02747 \\
2 & 2001-04-30 & Top & whalley-g & President & +0.02298 \\
2 & 2001-04-30 & Top & corman-s & Vice President & +0.02290 \\
2 & 2001-04-30 & Top & buy-r & Manager & +0.02138 \\
2 & 2001-04-30 & Top & grigsby-m & Manager & +0.02118 \\
2 & 2001-04-30 & Top & tycholiz-b & Vice President & +0.02000 \\
2 & 2001-04-30 & Top & beck-s & Employee & +0.01778 \\
2 & 2001-04-30 & Top & haedicke-m & Managing Director & +0.01702 \\
\addlinespace
2 & 2001-04-30 & Bottom & scott-s &  & -0.02503 \\
2 & 2001-04-30 & Bottom & shankman-j & President & -0.02759 \\
2 & 2001-04-30 & Bottom & arnold-j & Manager & -0.03399 \\
2 & 2001-04-30 & Bottom & motley-m & Director & -0.03675 \\
2 & 2001-04-30 & Bottom & badeer-r & Director & -0.03991 \\
2 & 2001-04-30 & Bottom & lavorato-j & CEO & -0.04071 \\
2 & 2001-04-30 & Bottom & south-s &  & -0.04296 \\
2 & 2001-04-30 & Bottom & gay-r & Employee & -0.04679 \\
2 & 2001-04-30 & Bottom & lenhart-m & Employee & -0.05000 \\
2 & 2001-04-30 & Bottom & kuykendall-t & Trader & -0.05205 \\

\midrule

3 & 2001-04-09 & Top & taylor-m & Employee & +0.04103 \\
3 & 2001-04-09 & Top & shankman-j & President & +0.04080 \\
3 & 2001-04-09 & Top & mccarty-d & Vice President & +0.04004 \\
3 & 2001-04-09 & Top & shively-h & Vice President & +0.02783 \\
3 & 2001-04-09 & Top & shapiro-r & Vice President & +0.02733 \\
3 & 2001-04-09 & Top & haedicke-m & Managing Director & +0.02675 \\
3 & 2001-04-09 & Top & arnold-j & Manager & +0.02365 \\
3 & 2001-04-09 & Top & donoho-l & Employee & +0.02183 \\
3 & 2001-04-09 & Top & fossum-d & Vice President & +0.02183 \\
3 & 2001-04-09 & Top & lokay-m & Employee & +0.02135 \\
\addlinespace
3 & 2001-04-09 & Bottom & mclaughlin-e & Employee & -0.01292 \\
3 & 2001-04-09 & Bottom & tholt-j & Vice President & -0.01292 \\
3 & 2001-04-09 & Bottom & wolfe-j &  & -0.01437 \\
3 & 2001-04-09 & Bottom & dasovich-j & Employee & -0.01590 \\
3 & 2001-04-09 & Bottom & reitmeyer-j & Employee & -0.01590 \\
3 & 2001-04-09 & Bottom & allen-p & Manager & -0.01634 \\
3 & 2001-04-09 & Bottom & staab-t & Employee & -0.01984 \\
3 & 2001-04-09 & Bottom & kitchen-l & President & -0.02156 \\
3 & 2001-04-09 & Bottom & lavorato-j & CEO & -0.02243 \\
3 & 2001-04-09 & Bottom & holst-k & Director & -0.04147 \\

\end{longtable}
\end{footnotesize}
\endgroup

\begingroup
\setlength{\tabcolsep}{4pt}
\renewcommand{\arraystretch}{1.10}
\begin{footnotesize}

\begin{longtable}{c c l p{3.0cm} p{2.7cm} r}
\caption{Mode 2 level change-point attributions for the Enron email network. For each change point, TOP rows report the most positively attributed employee/mailbox nodes and BOTTOM rows report the most negatively attributed nodes. Employee/mailbox labels are the normalized node labels used in the Enron email network. Roles are matched from auxiliary employee metadata when available.}
\label{app:enron:mode1_level}\\
\toprule
CP rank & Week & Side & Employee/mailbox & Role & Score \\
\midrule
\endfirsthead

\toprule
CP rank & Week & Side & Employee/mailbox & Role & Score \\
\midrule
\endhead

\midrule
\multicolumn{6}{r}{\emph{Continued on next page}}\\
\midrule
\endfoot

\bottomrule
\endlastfoot

1 & 2002-02-11 & Top & sager-e & Employee & +0.06423 \\
1 & 2002-02-11 & Top & gilbertsmith-d & Employee & +0.06109 \\
1 & 2002-02-11 & Top & whalley-g & President & +0.06085 \\
1 & 2002-02-11 & Top & presto-k & Vice President & +0.05944 \\
1 & 2002-02-11 & Top & davis-d & Vice President & +0.05907 \\
1 & 2002-02-11 & Top & steffes-j & Vice President & +0.05835 \\
1 & 2002-02-11 & Top & zufferli-j & Vice President & +0.05771 \\
1 & 2002-02-11 & Top & lavorato-j & CEO & +0.05514 \\
1 & 2002-02-11 & Top & grigsby-m & Manager & +0.05480 \\
1 & 2002-02-11 & Top & tycholiz-b & Vice President & +0.05480 \\
\addlinespace
1 & 2002-02-11 & Bottom & dickson-s & Employee & +0.00000 \\
1 & 2002-02-11 & Bottom & williams-w3 &  & -0.00010 \\
1 & 2002-02-11 & Bottom & semperger-c & Employee & -0.00058 \\
1 & 2002-02-11 & Bottom & watson-k &  & -0.00131 \\
1 & 2002-02-11 & Bottom & horton-s & President & -0.00178 \\
1 & 2002-02-11 & Bottom & hyatt-k & Director & -0.00567 \\
1 & 2002-02-11 & Bottom & lokay-m & Employee & -0.00599 \\
1 & 2002-02-11 & Bottom & keiser-k & Employee & -0.01048 \\
1 & 2002-02-11 & Bottom & ybarbo-p & Trader & -0.02262 \\
1 & 2002-02-11 & Bottom & corman-s & Vice President & -0.02479 \\

\midrule

2 & 2001-09-24 & Top & harris-s & Vice President & +0.01623 \\
2 & 2001-09-24 & Top & hayslett-r & Vice President & +0.01269 \\
2 & 2001-09-24 & Top & ybarbo-p & Trader & +0.01096 \\
2 & 2001-09-24 & Top & lokay-m & Employee & +0.00905 \\
2 & 2001-09-24 & Top & kitchen-l & President & +0.00886 \\
2 & 2001-09-24 & Top & ring-a &  & +0.00771 \\
2 & 2001-09-24 & Top & mclaughlin-e & Employee & +0.00727 \\
2 & 2001-09-24 & Top & dasovich-j & Employee & +0.00451 \\
2 & 2001-09-24 & Top & fossum-d & Vice President & +0.00412 \\
2 & 2001-09-24 & Top & gang-l &  & +0.00321 \\
\addlinespace
2 & 2001-09-24 & Bottom & maggi-m & Director & -0.05631 \\
2 & 2001-09-24 & Bottom & zipper-a & Vice President & -0.05795 \\
2 & 2001-09-24 & Bottom & tycholiz-b & Vice President & -0.06100 \\
2 & 2001-09-24 & Bottom & arnold-j & Manager & -0.07963 \\
2 & 2001-09-24 & Bottom & davis-d & Vice President & -0.08071 \\
2 & 2001-09-24 & Bottom & gilbertsmith-d & Employee & -0.08082 \\
2 & 2001-09-24 & Bottom & neal-s & Vice President & -0.08083 \\
2 & 2001-09-24 & Bottom & presto-k & Vice President & -0.08090 \\
2 & 2001-09-24 & Bottom & grigsby-m & Manager & -0.09069 \\
2 & 2001-09-24 & Bottom & martin-t & Vice President & -0.09145 \\

\midrule

3 & 2001-12-10 & Top & steffes-j & Vice President & +0.05076 \\
3 & 2001-12-10 & Top & rapp-b &  & +0.01941 \\
3 & 2001-12-10 & Top & geaccone-t & Employee & +0.00931 \\
3 & 2001-12-10 & Top & schoolcraft-d &  & +0.00599 \\
3 & 2001-12-10 & Top & griffith-j & Manager & +0.00562 \\
3 & 2001-12-10 & Top & maggi-m & Director & +0.00528 \\
3 & 2001-12-10 & Top & mckay-j & Director & +0.00285 \\
3 & 2001-12-10 & Top & bailey-s &  & +0.00264 \\
3 & 2001-12-10 & Top & lucci-p & Employee & +0.00142 \\
3 & 2001-12-10 & Top & forney-j & Manager & +0.00142 \\
\addlinespace
3 & 2001-12-10 & Bottom & arnold-j & Manager & -0.04975 \\
3 & 2001-12-10 & Bottom & badeer-r & Director & -0.04984 \\
3 & 2001-12-10 & Bottom & motley-m & Director & -0.05034 \\
3 & 2001-12-10 & Bottom & swerzbin-m & Trader & -0.05034 \\
3 & 2001-12-10 & Bottom & davis-d & Vice President & -0.05034 \\
3 & 2001-12-10 & Bottom & buy-r & Manager & -0.05034 \\
3 & 2001-12-10 & Bottom & storey-g & Director & -0.05077 \\
3 & 2001-12-10 & Bottom & arora-h & Vice President & -0.05082 \\
3 & 2001-12-10 & Bottom & zufferli-j & Vice President & -0.05173 \\
3 & 2001-12-10 & Bottom & gilbertsmith-d & Employee & -0.05284 \\

\end{longtable}
\end{footnotesize}
\endgroup

\section{Proofs}
\label{app:proof}
\subsection{Overview of proof strategy}

The purpose of this section is to connect the dynamic random dot product graph model

\[
\mathbf P(t)=\mathbf X\mathbf Y(t)^\top
\]

\noindent to the distance-based constructions developed in this paper.

The first goal is to show that the modified unfolded adjacency spectral embedding (UASE) recovers the time-varying latent positions \(\mathbf Y(t)\) \emph{uniformly over time}, up to a single orthogonal transformation. The second goal is to propagate this embedding error into uniform control of time pair dissimilarities and, ultimately, to consistency of the CMDS trajectory embedding of time.

\noindent\textbf{Step 1: Population identification (what unfolding targets).}
We study the rank \(d\) singular value decomposition of the unfolded \emph{population} matrix
\(
\mathbf P=[\mathbf P(1)\mid \cdots\mid \mathbf P(T)].
\)
This analysis shows that the block of right singular vectors \(\tilde{\mathbf V}(t)\) encodes \(\mathbf Y(t)\) in a shared coordinate system, up to a time independent linear map. Assumption~\ref{assump:isotropy} fixes the population second-moment structure and thereby restores the identifiability from general linear transformations upto a single orthogonal factor. Concretely, we obtain an alignment of the form
\(
\tilde{\mathbf V}(t)\tilde{\mathbf \Sigma}\approx n^{1/2}\mathbf Y(t)\mathbf W_1
\)
with \(\mathbf W_1\in\mathbb O(d)\), uniformly in \(t\).

\noindent\textbf{Step 2: Empirical perturbation (how replacing \(\mathbf P\) by \(\mathbf A\) affects the embedding).} We then compare the empirical unfolded adjacency matrix \(\mathbf A\) to its expectation \(\mathbf P\). Using multilayer spectral perturbation bounds, we control the deviation between the empirical singular factors \((\mathbf V(t),\mathbf \Sigma)\) and their population counterparts \((\tilde{\mathbf V}(t),\tilde{\mathbf \Sigma})\). This yields an orthogonal matrix \(\mathbf W_2\in\mathbb O(d)\) such that
\(
\mathbf V(t)\mathbf\Sigma\approx \tilde{\mathbf V}(t)\tilde{\mathbf \Sigma}\mathbf W_2
\)
uniformly in \(t\).

\noindent\textbf{Combining the steps.}
Composing the two alignments with \(\mathbf W=\mathbf W_1\mathbf W_2\in\mathbb O(d)\) and applying the triangle inequality gives uniform operator norm consistency of the modified UASE embeddings:
\[
\sup_{t\in\mathcal T}\|\hat{\mathbf Y}_{\mathrm{mod}}(t)-\mathbf Y(t)\mathbf W\|_2 =\mathcal O(\log n)\qquad \text{a.s.}
\]

\noindent This is formalized in Theorem~\ref{thm:modified-uase-paper}, which adapts the multilayer UASE analysis of \citet{jones2020multilayer} to the present isotropic setting.

\noindent\textbf{Downstream implication for the paper.}
Once \(\hat{\mathbf Y}_{\mathrm{mod}}(t)\) is uniformly close to \(\mathbf Y(t)\mathbf W\), orthogonal invariance ensures that the second-moment and distance quantities built from the embeddings are not subject to multiplicative geometric distortion. The remainder of the section converts the embedding bound into uniform control of squared dissimilarities, and then propagates this error through CMDS perturbation arguments to obtain consistency of the trajectory embedding of time.


\subsection{Consistency of modified UASE}
We now establish consistency of the modified unfolded adjacency spectral embedding. As discussed in the overview, the proof separates naturally into two components: a population identification step, which characterizes what the unfolded embedding recovers in the absence of noise, and an empirical perturbation step, which controls the deviation introduced by replacing the population matrix with its observed counterpart.

To prove Theorem~\ref{thm:modified-uase-paper}, we rely on two intermediate results. The first result identifies the population right singular vectors of the unfolded probability matrix $\mathbf P$ with the latent positions $\mathbf Y(t)$, up to an orthogonal transformation that is common across time. The second result shows that the corresponding empirical singular vectors concentrate around their population analogues at a controlled rate. Together, these results imply uniform consistency of the modified UASE embeddings over time.

\begin{lemma}\label{lem:pop-ident}
	There exists a (possibly random) orthogonal matrix \(\mathbf{W} \in \mathbb O(d)\) such that, for all $t\in\mathcal{T}$
	\[
	\left\|\tilde{\mathbf{V}}(t)\tilde{\mathbf{\Sigma}}-n^{1/2}\mathbf{Y}(t)\mathbf{W}\right\|_2 = \mathcal{O}\left(n^{1/2}\log n\right) \quad \text{a.s.}
	\]
\end{lemma}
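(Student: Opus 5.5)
The plan is to write the population singular factors directly in terms of $\mathbf X$ and $\mathbf Y(t)$, and then read off the orthogonal matrix from a polar decomposition. Let $\mathbf P=\tilde{\mathbf U}\tilde{\mathbf\Sigma}\tilde{\mathbf V}^\top$ be the rank-$d$ SVD of the unfolded probability matrix. Since $\mathbf P^\top\tilde{\mathbf U}=\tilde{\mathbf V}\tilde{\mathbf\Sigma}$ and $\mathbf P^\top=\mathbf Y\mathbf X^\top$, reading off the block for time $t$ gives the exact identity
\[
\tilde{\mathbf V}(t)\tilde{\mathbf\Sigma}=\mathbf Y(t)\,\mathbf X^\top\tilde{\mathbf U}.
\]
So it suffices to show that $\mathbf X^\top\tilde{\mathbf U}$ is within $\mathcal O(\log n)$ in operator norm of $n^{1/2}$ times an orthogonal matrix. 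We also need $\sup_t\|\mathbf Y(t)\|_2=\mathcal O(n^{1/2})$, which follows from bounded support: $\|\mathbf Y(t)\|_2\le\|\mathbf Y(t)\|_F\le Cn^{1/2}$.

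\textbf{Step 1 (rank and column space).} By bounded support and Assumption~\ref{assump:isotropy}, matrix Bernstein (or Hoeffding) for the i.i.d.\ sum $\mathbf X^\top\mathbf X=\sum_i\mathbf X_{i:}^\top\mathbf X_{i:}$ yields
\[
\Bigl\|\tfrac1n\mathbf X^\top\mathbf X-\mathbf I_d\Bigr\|_2=\mathcal O(n^{-1/2}\log n)\quad\text{a.s.},
\]
in the overwhelming-probability sense of the appendix. In particular $\mathbf X$ has full column rank. The same argument applied to $\mathbf Y(t)^\top\mathbf Y(t)/n\to\mathbb E[\phi(t)\phi(t)^\top]$, which is invertible, shows that some (indeed every) $\mathbf Y(t)$ has full column rank. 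Hence $\mathbf Y$ has rank $d$, so $\mathbf P$ has rank exactly $d$ and its column space equals that of $\mathbf X$. Therefore $\mathbf X=\tilde{\mathbf U}\mathbf B$ with $\mathbf B:=\tilde{\mathbf U}^\top\mathbf X$ invertible, and $\mathbf X^\top\tilde{\mathbf U}=\mathbf B^\top$, $\mathbf B^\top\mathbf B=\mathbf X^\top\mathbf X$.

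\textbf{Step 2 (polar decomposition).} Write $\mathbf B=\mathbf Q(\mathbf B^\top\mathbf B)^{1/2}$ with $\mathbf Q\in\mathbb O(d)$, so that $\mathbf B^\top=(\mathbf X^\top\mathbf X)^{1/2}\mathbf Q^\top$. Set $\mathbf W:=\mathbf Q^\top$; it is random and independent of $t$. Then
\[
\bigl\|\tilde{\mathbf V}(t)\tilde{\mathbf\Sigma}-n^{1/2}\mathbf Y(t)\mathbf W\bigr\|_2
\le\|\mathbf Y(t)\|_2\,\bigl\|(\mathbf X^\top\mathbf X)^{1/2}-n^{1/2}\mathbf I_d\bigr\|_2 .
\]
For a PSD matrix $\mathbf M$ one has $\|\mathbf M^{1/2}-\mathbf I\|_2\le\|\mathbf M-\mathbf I\|_2$, since $|\sqrt\lambda-1|\le|\lambda-1|$ for $\lambda\ge0$. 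Applying this with $\mathbf M=\mathbf X^\top\mathbf X/n$ bounds the second factor by $n^{1/2}\cdot\mathcal O(n^{-1/2}\log n)=\mathcal O(\log n)$. Multiplying by $\sup_t\|\mathbf Y(t)\|_2=\mathcal O(n^{1/2})$ gives the claimed $\mathcal O(n^{1/2}\log n)$ uniformly in $t$, since $T$ is fixed.

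\textbf{Main obstacle.} The only genuinely probabilistic step is the concentration of $\mathbf X^\top\mathbf X/n$, together with the accompanying rank argument that identifies $\mathrm{col}(\tilde{\mathbf U})$ with $\mathrm{col}(\mathbf X)$. I would handle it with matrix Bernstein for bounded summands, using deviation level $\sqrt{A\log n/n}$; this actually gives a slightly sharper bound than $\log n/\sqrt n$. A union bound over the finitely many time points and the full-rank events keeps the failure probability at $\mathcal O(n^{-A})$, which is compatible with the a.s.\ convention. The remaining steps are deterministic linear algebra.
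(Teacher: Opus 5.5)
Your proposal is correct and matches the paper's proof in substance. Both arguments establish the exact identity $\tilde{\mathbf V}(t)\tilde{\mathbf\Sigma}=\mathbf Y(t)(\mathbf X^\top\mathbf X)^{1/2}\mathbf W$ with a single $\mathbf W\in\mathbb O(d)$, and then bound $\|\mathbf Y(t)\|_2\,\|(\mathbf X^\top\mathbf X)^{1/2}-n^{1/2}\mathbf I_d\|_2$ using concentration of $\mathbf X^\top\mathbf X$ under anchor isotropy. The only difference is how the identity is obtained: the paper imports per-time factors $\mathbf Q(t)$ from Proposition~16 of \citet{jones2020multilayer} and uses the SVD relation plus full column rank to show $(\mathbf X^\top\mathbf X)^{1/2}\mathbf Q(t)$ is constant in $t$, whereas you read $\mathbf P^\top\tilde{\mathbf U}=\tilde{\mathbf V}\tilde{\mathbf\Sigma}$ blockwise and take a polar decomposition of $\tilde{\mathbf U}^\top\mathbf X$, which makes the time-independence of $\mathbf W$ automatic and keeps the argument self-contained.
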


\begin{proof}
	This lemma establishes a population level alignment between the right singular vectors of the unfolded probability matrix and the latent positions. The argument begins by controlling finite sample second-moments of the latent positions. Concentration of $\mathbf X^\top\mathbf X$ and $\mathbf Y(t)^\top\mathbf Y(t)$ around their expectations ensures that both matrices are well conditioned and of full rank, which is required to invoke singular value decompositions at the population level.

	First, by Hoeffding's inequality, we have

	\[
	\begin{aligned}
		&\left\|\mathbf X ^\top \mathbf X - n\mathbb{E}\left[\chi\chi^\top\right]\right\|_2=\mathcal{O}(n^{1/2}\log n) \qquad \text{a.s.}\\
		&\left\|\mathbf{Y}(t) ^\top \mathbf{Y}(t) - n\mathbb{E}\left[\phi(t)\phi(t)^\top\right]\right\|_2=\mathcal{O}(n^{1/2}\log n) \qquad \text{a.s.}
	\end{aligned}
	\]

	\noindent Since $\mathbb{E}\left[\chi\chi^\top\right]$ and $\mathbb{E}\left[\phi(t)\phi(t)^\top\right]$ are invertible, $\mathbf X$ and $\mathbf{Y}(t)$ are of rank $d$ almost surely.
	
	Assumption~\ref{assump:isotropy} fixes the scale and orientation of the population inner product structure. As a consequence, $(\mathbf X^\top\mathbf X)^{1/2}$ concentrates around $n^{1/2}\mathbf I_d$, restoring identifiability from general linear transformations upto orthogonal ones.
	Formally,

	\[
	\begin{aligned}
		\left\|(\mathbf X ^\top \mathbf X)^{1/2}-n^{1/2}\mathbf I_d\right\|_2 
		&= \max_{k=1,\dots,d}\left\lvert \lambda_k\left(\mathbf X ^\top \mathbf X\right)^{1/2}-n^{1/2}\right\rvert \\
		&\le \max_{k=1,\dots,d}\frac{\left\lvert \lambda_k\left(\mathbf X ^\top \mathbf X\right)-n\right\rvert}{ \lambda_k\left(\mathbf X ^\top \mathbf X\right)^{1/2}+n^{1/2}} \\
		&\le \frac{1}{n^{1/2}}\left\|\mathbf X ^\top \mathbf X-n\mathbf I_d\right\|_2 \\
		&=\mathcal{O}(\log n) \qquad \text{a.s.}
	\end{aligned}
	\]
	
	Similarly, we can bound $\left\|\mathbf{Y}(t)\right\|_2$ as 
	
	\[
	\begin{aligned}
		\left\|\mathbf{Y}(t)\right\|_2 
		&\le  \left\|\mathbf{Y}(t) ^\top \mathbf{Y}(t)\right\|_2^{1/2} \\
		&\le \left(\left\|\mathbf{Y}(t) ^\top \mathbf{Y}(t)- n\mathbb{E}\left[\phi(t)\phi(t)^\top\right]\right\|_2+ n\left\|\mathbb{E}\left[\phi(t)\phi(t)^\top\right]\right\|_2\right)^{1/2} \\
		&=\mathcal{O}\left(n^{1/2}\right) \qquad \text{a.s.}
	\end{aligned}
	\]
	
	Following the argument in Proposition 16 of \citet{jones2020multilayer}, since $\mathbf X$ and $\mathbf{Y}(t)$ are of rank $d$, there exists (possibly random) orthogonal matrices \(\mathbf{Q}, \mathbf{Q}(t) \in \mathbb O(d)\) such that
	\[
	\begin{aligned}
		& \tilde{\mathbf{U}}\tilde{\mathbf{\Sigma}} = \mathbf X \left( \sum_{t\in\mathcal{T}} \mathbf{Y}(t)^\top\mathbf{Y}(t) \right)^{1/2} \mathbf{Q}, \\
		& \tilde{\mathbf{V}}(t)\tilde{\mathbf{\Sigma}} = \mathbf Y(t)  (\mathbf X ^\top \mathbf X)^{1/2} \mathbf{Q}(t),
	\end{aligned}
	\]
	almost surely.
	From the singular value decomposition $\mathbf X \mathbf Y(t)^\top  = \tilde{\mathbf{U}}\tilde{\mathbf{\Sigma}}\tilde{\mathbf{V}}(t)^\top$, we obtain
	
	\[
	\mathbf X \left( \sum_{t\in\mathcal{T}} \mathbf{Y}(t)^\top\mathbf{Y}(t) \right)^{1/2} \mathbf{Q} \tilde{\mathbf{\Sigma}}^{-1} \mathbf{Q}(t)^\top(\mathbf X ^\top \mathbf X)^{1/2} \mathbf Y(t)^\top = \mathbf X \mathbf Y(t)^\top.
	\]
	
	\noindent Since $\mathbf X$ and $\mathbf{Y}(t)$ are of rank $d$, this implies
	
	\[
	\begin{aligned}
		&\left( \sum_{t\in\mathcal{T}} \mathbf{Y}(t)^\top\mathbf{Y}(t) \right)^{1/2} \mathbf{Q} \tilde{\mathbf{\Sigma}}^{-1} \mathbf{Q}(t)^\top(\mathbf X ^\top \mathbf X)^{1/2} = \mathbf I_d \\
		&\Leftrightarrow (\mathbf X ^\top \mathbf X)^{1/2} \mathbf{Q}(t)  =\left( \sum_{t\in\mathcal{T}} \mathbf{Y}(t)^\top\mathbf{Y}(t) \right)^{-1/2} \mathbf{Q} \tilde{\mathbf{\Sigma}}
	\end{aligned}
	\]
	
	\noindent Therefore, $(\mathbf X ^\top \mathbf X)^{1/2} \mathbf{Q}(t)$ does not depend on $t$.
	Hence, there exists a (possibly random) orthogonal matrix \(\mathbf{W} \in \mathbb O(d)\) such that
	\[
	\tilde{\mathbf{V}}(t)\tilde{\mathbf{\Sigma}} = \mathbf Y(t)  (\mathbf X ^\top \mathbf X)^{1/2} \mathbf{W}
	\]
	almost surely.
	
	Finally, using submultiplicativity of the spectral norm and \(\mathbf{W} \in \mathbb O(d)\), we have
	\[
	\begin{aligned}
		\left\|\tilde{\mathbf{V}}(t)\tilde{\mathbf{\Sigma}}-n^{1/2} \mathbf Y(t)\mathbf{W}\right\|_2 
		&= \left\|\mathbf Y(t)  (\mathbf X ^\top \mathbf X)^{1/2} \mathbf{W}-n^{1/2} \mathbf Y(t)\mathbf{W}\right\|_2 \\
		&\le \left\|\mathbf Y(t)\right\|_2\left\|  (\mathbf X ^\top \mathbf X)^{1/2} -n^{1/2}\mathbf I_d \right\|_2\\
		&= \mathcal{O}\left(n^{1/2}\log n\right) \quad \text{a.s.}
	\end{aligned}
	\]
	This completes the proof.
\end{proof}

\begin{lemma}\label{lem:emp-pert} 
	There exists a (possibly random) orthogonal matrix \(\mathbf{W} \in \mathbb O(d)\) such that, for all $t\in\mathcal{T}$,
	\[
	\left\|\mathbf{V}(t)\mathbf{\Sigma}-\tilde{\mathbf{V}}(t)\tilde{\mathbf{\Sigma}}\mathbf{W}\right\|_2 = \mathcal{O}\left(n^{1/2}(\log n)^{1/2}\right) \quad \text{a.s.}
	\]
\end{lemma}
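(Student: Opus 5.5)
The plan is to follow the perturbation argument for the unfolded SVD in \citet{jones2020multilayer}, adapted to the right factor scaled by $\mathbf\Sigma$ rather than $\mathbf\Sigma^{1/2}$. Write $\mathbf E=\mathbf A-\mathbf P=[\mathbf E(1)\mid\cdots\mid\mathbf E(T)]$, and let $\mathbf U,\mathbf\Sigma,\mathbf V$ denote the empirical rank-$d$ singular factors of $\mathbf A$. The key identity is that these factors satisfy
\[
\mathbf V(t)\mathbf\Sigma=\mathbf A(t)^\top\mathbf U,
\]
and the population factors satisfy $\tilde{\mathbf V}(t)\tilde{\mathbf\Sigma}=\mathbf P(t)^\top\tilde{\mathbf U}$. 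Using this, I will not perturb the right singular vectors directly. Instead I will transfer everything to the left singular subspace. For the alignment I take $\mathbf W$ to be the orthogonal factor in the polar decomposition of $\tilde{\mathbf U}^\top\mathbf U$: if $\tilde{\mathbf U}^\top\mathbf U=\mathbf W_1\mathbf D\mathbf W_2^\top$ is an SVD, set $\mathbf W=\mathbf W_1\mathbf W_2^\top$. This is the standard choice, and it does not depend on $t$.

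The first step records the basic scales.
\begin{itemize}
\item By the proof of Lemma~\ref{lem:pop-ident}, $\mathbf P=\mathbf X\mathbf Y^\top$ with $\mathbf X^\top\mathbf X\approx n\mathbf I_d$ and $\sum_t\mathbf Y(t)^\top\mathbf Y(t)\approx n\sum_t\mathbb E[\phi(t)\phi(t)^\top]$. Hence all $d$ nonzero singular values of $\mathbf P$ are of order $n$ almost surely.
\item Conditional on the latent positions, the entries of each $\mathbf E(t)$ are independent and bounded. Matrix Bernstein for symmetric matrices, combined with a union bound over the fixed $T$ blocks, gives $\|\mathbf E(t)\|_2=\mathcal O((n\log n)^{1/2})$. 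Consequently $\|\mathbf E\|_2=\mathcal O((n\log n)^{1/2})$ almost surely in the overwhelming-probability sense.
\item Weyl's inequality then shows that the singular values in $\mathbf\Sigma$ are also of order $n$.
\item Wedin's $\sin\Theta$ theorem gives $\|\mathbf U\mathbf U^\top-\tilde{\mathbf U}\tilde{\mathbf U}^\top\|_2=\mathcal O(n^{-1/2}(\log n)^{1/2})$. From this, the standard argument yields $\|\tilde{\mathbf U}^\top\mathbf U-\mathbf W\|_2=\mathcal O(\|\sin\Theta\|_2^2)=\mathcal O(n^{-1}\log n)$, and $\|\mathbf U-\tilde{\mathbf U}\mathbf W\|_2=\mathcal O(n^{-1/2}(\log n)^{1/2})$.
\end{itemize}

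The second step is the decomposition. Write
\[
\mathbf V(t)\mathbf\Sigma-\tilde{\mathbf V}(t)\tilde{\mathbf\Sigma}\mathbf W
=\mathbf E(t)^\top\tilde{\mathbf U}\mathbf W+\mathbf E(t)^\top(\mathbf U-\tilde{\mathbf U}\mathbf W)+\mathbf P(t)^\top(\mathbf U-\tilde{\mathbf U}\mathbf W),
\]
which holds because $\mathbf P(t)^\top\tilde{\mathbf U}\mathbf W=\tilde{\mathbf V}(t)\tilde{\mathbf\Sigma}\mathbf W$. I bound the three terms separately.
\begin{itemize}
\item The first term is at most $\|\mathbf E(t)\|_2=\mathcal O((n\log n)^{1/2})$. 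This is the leading term and sets the rate in the lemma.
\item The second term is at most $\mathcal O((n\log n)^{1/2})\cdot\mathcal O(n^{-1/2}(\log n)^{1/2})=\mathcal O(\log n)$.
\item For the third term, since $\mathbf P(t)^\top=\tilde{\mathbf V}(t)\tilde{\mathbf\Sigma}\tilde{\mathbf U}^\top$, it equals $\tilde{\mathbf V}(t)\tilde{\mathbf\Sigma}(\tilde{\mathbf U}^\top\mathbf U-\mathbf W)$. Its norm is at most $\|\tilde{\mathbf\Sigma}\|_2\,\|\tilde{\mathbf U}^\top\mathbf U-\mathbf W\|_2=\mathcal O(n)\cdot\mathcal O(n^{-1}\log n)=\mathcal O(\log n)$.
\end{itemize}
Summing the three bounds and taking the maximum over the finitely many $t$ gives the claim.

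I expect the main obstacle to be the third term. A naive bound $\|\mathbf P(t)\|_2\,\|\mathbf U-\tilde{\mathbf U}\mathbf W\|_2$ only gives $\mathcal O(n^{1/2}(\log n)^{1/2})$. That still suffices for the stated rate, but it is only barely sufficient. It also relies on the first-order subspace bound with the $\mathbf W$ chosen via the polar decomposition, rather than an arbitrary alignment. The delicate points are the two facts that feed it:
\begin{itemize}
\item $\|\tilde{\mathbf U}^\top\mathbf U-\mathbf W\|_2$ is quadratic in the $\sin\Theta$ distance, which holds because the singular values of $\tilde{\mathbf U}^\top\mathbf U$ are the cosines of the principal angles;
\item the eigengap of $\mathbf P$ is of order $n$, so that Wedin's theorem applies with denominator of order $n$. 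This in turn requires that $\sum_t\mathbb E[\phi(t)\phi(t)^\top]$ be invertible, which follows from the standing assumption that each $\mathbb E[\phi(t)\phi(t)^\top]$ is invertible.
\end{itemize}
I will verify both facts carefully before assembling the bound.
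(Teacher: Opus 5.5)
Your proof is correct, but it reaches the bound by a different route from the paper. The paper never uses the identity $\mathbf V(t)\mathbf\Sigma=\mathbf A(t)^\top\mathbf U$. Instead it imports two results from Theorem~2 and Proposition~15 of \citet{jones2020multilayer}: a two-to-infinity bound $\|\mathbf V(t)\mathbf\Sigma^{1/2}-\tilde{\mathbf V}(t)\tilde{\mathbf\Sigma}^{1/2}\mathbf W\|_{2\to\infty}=\mathcal O(n^{-1/2}(\log n)^{1/2})$ and a commutator bound $\|\mathbf W\mathbf\Sigma^{1/2}-\tilde{\mathbf\Sigma}^{1/2}\mathbf W\|_F=\mathcal O(n^{-1/2}\log n)$. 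It then splits the difference as $(\mathbf V(t)\mathbf\Sigma^{1/2}-\tilde{\mathbf V}(t)\tilde{\mathbf\Sigma}^{1/2}\mathbf W)\mathbf\Sigma^{1/2}+\tilde{\mathbf V}(t)\tilde{\mathbf\Sigma}^{1/2}(\mathbf W\mathbf\Sigma^{1/2}-\tilde{\mathbf\Sigma}^{1/2}\mathbf W)$. Passing from the row-wise norm to the operator norm costs a factor $n^{1/2}$, and together with $\|\mathbf\Sigma\|_2^{1/2}=\mathcal O(n^{1/2})$ this produces the rate $n^{1/2}(\log n)^{1/2}$.

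That route is short once the citations are granted. However, it spends row-wise perturbation theory on an operator-norm statement and then discards the extra strength, and it inherits the alignment $\mathbf W$ from the cited result rather than exhibiting it. Your argument needs only operator-norm concentration of $\mathbf E(t)$, $\sigma_d(\mathbf P)\asymp n$, and Wedin's theorem. It is self-contained and makes $\mathbf W$ explicit and visibly independent of $t$ (the polar factor of $\tilde{\mathbf U}^\top\mathbf U$). It also shows that the error is governed by $\max_t\|\mathbf E(t)\|_2$, with the alignment terms of lower order. With a sharper noise bound $\|\mathbf E(t)\|_2=\mathcal O(n^{1/2})$ it would even give $\mathcal O(n^{1/2})$. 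That would not improve Theorem~\ref{thm:modified-uase-paper}, where the $\mathcal O(n^{1/2}\log n)$ term of Lemma~\ref{lem:pop-ident} dominates.

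Two small points. First, as you observe, the crude bound $\|\mathbf P(t)\|_2\|\mathbf U-\tilde{\mathbf U}\mathbf W\|_2$ already suffices, and the first two terms can be merged via $\|\mathbf E(t)^\top\mathbf U\|_2\le\|\mathbf E(t)\|_2$. So the quadratic estimate $\|\tilde{\mathbf U}^\top\mathbf U-\mathbf W\|_2=\mathcal O(\|\sin\Theta\|_2^2)$ is optional rather than the crux. Second, since $\mathbf A_{ii}(t)=0$, $\mathbf E(t)$ also contains the deterministic term $-\mathrm{diag}(\mathbf P(t))$, of norm at most one. You should split it off before applying matrix Bernstein to the independent upper-triangular entries.
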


\begin{proof}
	By Theorem 2 and Proposition 15 of \citet{jones2020multilayer}, there exists a (possibly random) orthogonal matrix \(\mathbf{W} \in \mathbb O(d)\) such that
	\[
	\begin{aligned}
		&\left\|\mathbf{V}(t)\mathbf{\Sigma}^{1/2}-\tilde{\mathbf{V}}(t)\tilde{\mathbf{\Sigma}}^{1/2}\mathbf{W}\right\|_{2\to \infty} = \mathcal{O}\left(n^{-1/2}(\log n)^{1/2}\right) \quad \text{a.s.},\\
		&\left\|\mathbf{W} \mathbf{\Sigma}^{1/2}-\tilde{\mathbf{\Sigma}}^{1/2}\mathbf{W}\right\|_F = \mathcal{O}\left(n^{-1/2} \log n\right) \quad \text{a.s.},
	\end{aligned}
	\]
    where the two-to-infinity norm defined of a matrix $\mathbf X\in\mathbb{R}^{n \times d}$ is defined as $\|\mathbf X\|_{2 \to \infty} = \max_{i=1,\dots,n} \|\mathbf X_{i:}\|$.
	
	Moreover, by Propositions 7 and 10 of \citet{jones2020multilayer},
	\[
	\|\mathbf{\Sigma}\|_2 = \mathcal{O}(n), \quad \|\tilde{\mathbf{\Sigma}}\|_2 = \mathcal{O}(n) \quad \text{a.s.}
	\]
	
	To bound the spectral norm difference, we use the triangle inequality and submultiplicativity of the spectral norm,
	
	\[
	\begin{aligned}
		&\left\|\mathbf{V}(t)\mathbf{\Sigma}-\tilde{\mathbf{V}}(t)\tilde{\mathbf{\Sigma}}\mathbf{W}\right\|_2\\
		&\le \left\|\mathbf{V}(t)\mathbf{\Sigma}^{1/2}-\tilde{\mathbf{V}}(t)\tilde{\mathbf{\Sigma}}^{1/2}\mathbf{W}\right\|_{2}\left\|\mathbf{\Sigma}\right\|_2^{1/2}  + \left\|\tilde{\mathbf{V}}(t)\right\|_2 \left\|\tilde{\mathbf{\Sigma}}\right\|_2^{1/2}\left\|\mathbf{W\Sigma}^{1/2}-\tilde{\mathbf{\Sigma}}^{1/2}\mathbf{W}\right\|_2\\
		&\le \sqrt{n}\left\|\mathbf{V}(t)\mathbf{\Sigma}^{1/2}-\tilde{\mathbf{V}}(t)\tilde{\mathbf{\Sigma}}^{1/2}\mathbf{W}\right\|_{2\to \infty}\left\|\mathbf{\Sigma}\right\|_2^{1/2}  + \left\|\tilde{\mathbf{V}}(t)\right\|_2 \left\|\tilde{\mathbf{\Sigma}}\right\|_2^{1/2}\left\|\mathbf{W\Sigma}^{1/2}-\tilde{\mathbf{\Sigma}}^{1/2}\mathbf{W}\right\|_F\\
	\end{aligned}
	\]
	
	Since \(\tilde{\mathbf{V}}(t)\) consists of rows of a column orthonormal matrix, its spectral norm is bounded by 1.
	Combining these facts, we conclude that
	\[
	\left\|\mathbf{V}(t)\mathbf{\Sigma}-\tilde{\mathbf{V}}(t)\tilde{\mathbf{\Sigma}}\mathbf{W}\right\|_2 = \mathcal{O}\left(n^{1/2}(\log n)^{1/2}\right) \quad \text{a.s.}
	\]
\end{proof}

We now combine these two arguments.
\begin{proof}[Proof of Theorem~\ref{thm:modified-uase-paper}]
	
	The first lemma identifies the population right singular vectors with the latent positions $\mathbf Y(t)$ up to an orthogonal transformation. The second lemma controls the deviation between the estimated embedding and its population counterpart. Combining these two steps yields uniform consistency of the modified UASE embeddings across time.

	By the preceding lemmas, there exist (possibly random) orthogonal matrices \(\mathbf{W}_1,\mathbf{W}_2 \in \mathbb O(d)\) such that
	\[
	\begin{aligned}
		&\left\|\tilde{\mathbf{V}}(t)\tilde{\mathbf{\Sigma}}-n^{1/2}\mathbf{Y}(t)\mathbf{W}_1\right\|_2 = \mathcal{O}\left(n^{1/2}\log n\right) \quad \text{a.s.}, \\
		&\left\|\mathbf{V}(t)\mathbf{\Sigma}-\tilde{\mathbf{V}}(t)\tilde{\mathbf{\Sigma}}\mathbf{W}_2\right\|_2 = \mathcal{O}\left(n^{1/2}(\log n)^{1/2}\right) \quad \text{a.s.}
	\end{aligned}
	\]

	Let $\mathbf W =\mathbf W_1\mathbf W_2$, which is again an orthogonal matrix. Then, by the triangle inequality,
	
	\[
	\begin{aligned}
		\left\|\hat{\mathbf Y}_{\mathrm{mod}}(t)-\mathbf{Y}(t)\mathbf{W}\right\|_2
		&= \frac{1}{n^{1/2}} \left\|\mathbf{V}(t)\mathbf{\Sigma}-n^{1/2}\mathbf{Y}(t)\mathbf{W}\right\|_2 \\
		&\le \frac{1}{n^{1/2}}\left( \left\|\tilde{\mathbf{V}}(t)\tilde{\mathbf{\Sigma}}-n^{1/2}\mathbf{Y}(t)\mathbf{W}_1\right\|_2 + \left\|\mathbf{V}(t)\mathbf{\Sigma}-\tilde{\mathbf{V}}(t)\tilde{\mathbf{\Sigma}}\mathbf{W}_2\right\|_2\right).
	\end{aligned}
	\]

	Substituting the established bounds and taking the union over $t\in \mathcal{T}$ yields
	\[
	\sup_{t\in\mathcal{T}}\left\|\hat{\mathbf Y}_{\mathrm{mod}}(t)-\mathbf{Y}(t)\mathbf{W}\right\|_2 = \mathcal{O}(\log n)  \quad \text{a.s.}
	\]
\end{proof}


\subsection{From embedding error to dissimilarity control}
We next show how embedding level consistency translates into uniform control of pairwise dissimilarities and matrix level dissimilarity errors. Proposition~\ref{prop:D-rho-hatY}  combines the two stages at the dissimilarity level by showing that squared dissimilarities computed from estimated embeddings uniformly approximate population squared dissimilarities up to a controlled error, leading to a Frobenius norm control of the entire squared dissimilarity matrix, which is the natural input to CMDS.

\begin{proof}[Proof of Proposition~\ref{prop:D-rho-hatY}]
	The argument is a direct application of the triangle inequality, combining population approximation error with embedding error. 
	
	Now, observe that
	\[
	\begin{aligned}
		&\sup_{t,s\in\mathcal{T}}\left\lvert \hat{d}(\hat{\mathbf{Y}}(t),\hat{\mathbf{Y}}(s))^2 - d(\phi(t),\phi(s))^2 \right\rvert \\
		&\leq \sup_{t,s\in\mathcal{T}}\left(\left\lvert
		\hat{d}(\hat{\mathbf{Y}}(t),\hat{\mathbf{Y}}(s))^2
		-
		\hat{d}(\mathbf{Y}(t),\mathbf{Y}(s))^2
		\right\rvert + \left\lvert
		\hat{d}(\mathbf{Y}(t),\mathbf{Y}(s))^2 - d(\phi(t),\phi(s))^2
		\right\rvert \right) \\
		&\leq \sup_{t,s\in\mathcal{T}}\left\lvert
		\hat{d}(\hat{\mathbf{Y}}(t),\hat{\mathbf{Y}}(s))^2
		-
		\hat{d}(\mathbf{Y}(t),\mathbf{Y}(s))^2
		\right\rvert + \sup_{t,s\in\mathcal{T}}\left\lvert
		\hat{d}(\mathbf{Y}(t),\mathbf{Y}(s))^2 - d(\phi(t),\phi(s))^2
		\right\rvert .
	\end{aligned}
	\]
	
	By Assumptions~\ref{assump:phi-Y} and~\ref{assump:Y-hatY},
	\[
	\begin{aligned}
		&\sup_{t,s\in\mathcal{T}}\left\lvert
		\hat{d}(\mathbf{Y}(t),\mathbf{Y}(s))^2 - d(\phi(t),\phi(s))^2
		\right\rvert = \mathcal{O}(f(n)) \qquad \text{a.s.} \\
		&\sup_{t,s\in\mathcal{T}}\left\lvert
		\hat{d}(\hat{\mathbf{Y}}(t),\hat{\mathbf{Y}}(s))^2
		-
		\hat{d}(\mathbf{Y}(t),\mathbf{Y}(s))^2
		\right\rvert = \mathcal{O}(g(n)) \qquad \text{a.s.}
	\end{aligned}
	\]
	
	Therefore, we have
	\[
	\sup_{t,s\in\mathcal{T}}\left\lvert \hat{d}(\hat{\mathbf{Y}}(t),\hat{\mathbf{Y}}(s))^2 - d(\phi(t),\phi(s))^2 \right\rvert = \mathcal{O}(f(n)+g(n)) \qquad \text{a.s.} 
	\]
	
	Since $T$ is fixed, a uniform entrywise bound implies the same order in Frobenius norm.
\end{proof}

\subsection{CMDS perturbation and node-level to trajectory-level control}
We propagate the dissimilarity control obtained above through CMDS using standard CMDS perturbation arguments. CMDS can be viewed as a spectral procedure applied to the centered Gram matrix associated with the squared dissimilarity matrix; therefore, a perturbation bound for the squared dissimilarity matrix translates into a perturbation bound for the CMDS output coordinates.

\begin{proof}[Proof of Proposition~\ref{prop:psi-hatpsi-D}]
	Let $\nu := \|\mathbf D_\phi^{(2)} - \hat{\mathbf D}_{\hat{\mathbf Y}}^{(2)}\|_F$. Then
	\[
	\|\mathbf E_\phi - \hat{\mathbf E}_{\hat{\mathbf Y}}\|_F \le \frac{\nu}{2}.
	\]
	
	Since $\mathbf E_\phi$ and $\hat{\mathbf E}_{\hat{\mathbf Y}}$ are symmetric, let $\mathbf U, \hat{\mathbf U}$ denote the matrices whose columns are the eigenvectors corresponding to the $a$-th through $b$-th largest eigenvalues, and let $\mathbf S, \hat{\mathbf S}$ denote the corresponding diagonal matrices of eigenvalues. Then
	\[
	\left( \sum_{t=1}^T \|\hat{\psi}(t;a,b) - \mathbf R \psi(t;a,b)\|^2 \right)^{1/2}
	= \|\hat{\mathbf U} \hat{\mathbf S}^{1/2} - \mathbf U \mathbf S^{1/2} \mathbf R\|_F.
	\]
	
	We first control the deviation between the eigenspaces.
	By the Davis--Kahan theorem \citep{yu2015useful},
	\[
	\|\sin \Theta(\mathbf U, \hat{\mathbf U})\|_F
	\le \frac{2 \|\mathbf E_\phi - \hat{\mathbf E}_{\hat{\mathbf Y}}\|_F}{\mathrm{gap}(a,b)}
	\le \frac{\nu}{\mathrm{gap}(a,b)}.
	\]
	
	To align the two eigenspaces, consider the singular value decomposition
	\[
	\mathbf U^\top \hat{\mathbf U} = \mathbf W_1 \mathbf \Sigma \mathbf W_2^\top,
	\]
	and define $\mathbf R := \mathbf W_1 \mathbf W_2^\top \in \mathbb O(b-a+1)$. Writing $\sigma_i$ for the singular values and $\theta_i := \cos^{-1}(\sigma_i)$ for the principal angles, we obtain
	\[
	\begin{aligned}
		\|\mathbf U^\top \hat{\mathbf U} - \mathbf R\|_F 
		&= \mathbf\|\mathbf \Sigma - \mathbf I_{b-a+1}\mathbf\|_F 
		= \left( \sum_{i=1}^{b-a+1} (1 - \cos \theta_i)^2 \right)^{1/2} \\
		&\le  \sum_{i=1}^{b-a+1} (1 - \cos \theta_i) 
		\le  \sum_{i=1}^{b-a+1} (1 - \cos^2 \theta_i) \\
		&=  \sum_{i=1}^{b-a+1} \sin^2 \theta_i 
		= \|\sin \Theta(\mathbf U, \hat {\mathbf U})\|_F^2 \\
		&\le \frac{\nu^2}{\mathrm{gap}(a,b)^2}.
	\end{aligned}
	\]
	
	Moreover, since the nonzero eigenvalues of $\mathbf U \mathbf U^\top - \hat{\mathbf U} \hat{\mathbf U}^\top$ are $\pm \sin \theta_i$, we have
	\[
	\begin{aligned}
		&\|\mathbf{UU}^\top-\hat{\mathbf U}\hat{\mathbf U}^\top\|_F
		= \left( 2\sum \sin^2\theta_i \right)^{1/2}= 2^{1/2}\|\sin \Theta(\mathbf U, \hat {\mathbf U})\|_F \le \frac{2^{1/2}\nu}{\mathrm{gap}(a,b)}.
	\end{aligned}
	\]
	
	Next, we control the perturbation of the eigenvalues.
	By Weyl's inequality,
	\[
	\begin{aligned}
		&\|\hat{\mathbf S}\|_2 \le \|\mathbf S\|_2 + \|\mathbf E_\phi- \hat{\mathbf E}_{\hat{\mathbf Y}}\|_2 \le \lambda_a + \frac{\nu}{2},\\
		&\|\hat{\mathbf S}\|_2^{1/2}\le 
		\|\mathbf S\|_2^{1/2}+\frac{|\|\hat{\mathbf S}\|_2-\|\mathbf S\|_2|}{2\|\mathbf S\|_2^{1/2}} \le \lambda_a^{1/2}+\frac{\nu}{4\lambda_a^{1/2}}.
	\end{aligned}
	\]
	
	Using the triangle inequality,
	\[
	\begin{aligned}
		&\|\mathbf R \hat{\mathbf S} - \mathbf{SR}\|_F \\
		&\le \|(\mathbf R-\mathbf{U}^\top\hat{\mathbf U})\hat{\mathbf S}\|_F + \|\mathbf{U}^\top(\hat{\mathbf E}_{\hat{\mathbf Y}}-\mathbf E_\phi)\hat{\mathbf U}\|_F + \|\mathbf S(\mathbf{U}^\top\hat{\mathbf U}-\mathbf{R})\|_F \\
		&\le \|\mathbf{U}^\top\hat{\mathbf U}-\mathbf{R}\|_F(\|\hat{\mathbf S}\|_2+\|\mathbf S\|_2) + \|\mathbf E_\phi- \hat{\mathbf E}_{\hat{\mathbf Y}}\|_F \\
		&\le \frac{\nu^2}{\mathrm{gap}(a,b)^2} (2\lambda_a+\frac{\nu}{2}) +  \frac{\nu}{2} \\
		&= \frac{\nu}{2} + O(\nu^2) ~ (\nu\to 0).
	\end{aligned}
	\]
	
	To pass to square roots, we use an entrywise argument, which yields
	\[
	\begin{aligned}
		\|\mathbf R \hat{\mathbf S}^{1/2} - \mathbf S^{1/2}\mathbf R\|_F
		&\le \frac{\|\mathbf R \hat{\mathbf S} - \mathbf S \mathbf R\|_F}{\lambda_b^{1/2}} \\
		&= \frac{\nu}{2\lambda_b^{1/2}} + O(\nu^2)~ (\nu \to 0).
	\end{aligned}
	\]
	
	Decomposing the main term and applying the previously established bounds, we obtain
	\[
	\begin{aligned}
		&\|\hat{\mathbf U}\hat{\mathbf S}^{1/2}-\mathbf{US}^{1/2}\mathbf R\|_F \\
		&\le \|(\hat{\mathbf U}\hat{\mathbf U}^\top-\mathbf{UU}^\top)\hat{\mathbf U}\hat{\mathbf S}^{1/2}\|_F + \|\mathbf U(\mathbf U^\top\hat{\mathbf U}-\mathbf R)\hat{\mathbf S}^{1/2}\|_F + \|\mathbf U(\mathbf R\hat{\mathbf S}^{1/2}-\mathbf{S}^{1/2}\mathbf R)\|_F \\
		&\le ( \|\hat{\mathbf U}\hat{\mathbf U}^\top-\mathbf{UU}^\top\|_F + \|\mathbf U^\top \hat{\mathbf U} - \mathbf R\|_F )\|\hat{\mathbf S}\|_2^{1/2} + \|\mathbf R \hat{\mathbf S}^{1/2} - \mathbf S^{1/2}\mathbf R\|_F \\
		&= \frac{2^{1/2}\nu}{\mathrm{gap}(a,b)} \cdot \lambda_a^{1/2} + \frac{\nu}{2\lambda_b^{1/2}} + O(\nu^2) ~ (\nu\to 0) .
	\end{aligned}
	\]
	
	The leading terms can be bounded as
	\[
	\frac{2^{1/2}\nu}{\mathrm{gap}(a,b)} \cdot \lambda_a^{1/2} + \frac{\nu}{2\lambda_b^{1/2}} 
	\leq \left(\frac{2^{1/2}\lambda_a^{1/2}}{\mathrm{gap}(a,b)} + \frac{\lambda_a^{1/2}}{2\lambda_b}\right)\nu 
	\le \frac{2\lambda_a^{1/2}}{\min(\mathrm{gap(a,b),\lambda_b})}\nu 
	= \frac{2\kappa(a,b)}{\lambda_a^{1/2}}\nu
	\]
	
	Combining these bounds yields the desired result.
\end{proof}

Combining Proposition~\ref{prop:D-rho-hatY} and Proposition~\ref{prop:psi-hatpsi-D} immediately yeilds Theorem~\ref{thm:psi-hatpsi}.

\subsection{Verification for MV/TV/mode-wise variation distances}
We now verify that the abstract assumptions used in Theorem~\ref{thm:psi-hatpsi} hold for the MV, TV, and mode-wise variation distances. Before proving that Assumption~\ref{assump:phi-Y} holds for the MV, TV, and mode-wise variation distances, we first establish the consistency of the second-moment matrices.

\begin{lemma} \label{lem:sn-bound}
	The following statements hold:
	\begin{enumerate}
		\item For all $t,s\in\mathcal{T}$, $\left\|\mathbf M(\phi(t),\phi(s))\right\|_2$ is finite.
		\item 
		\[
		\sup_{t,s\in\mathcal{T}}\left\| \hat{\mathbf{M}}\left(\mathbf{Y}(t),\mathbf{Y}(s)\right) - \mathbf{M}\left(\phi(t),\phi(s)\right)\right\|_2 
		= \mathcal{O}( n^{-1/2}\log n) \quad \text{a.s.}
		\]
		\item 
		\[
		\sup_{t,s\in\mathcal{T}} \left\| \hat{\mathbf{M}}\left(\mathbf{Y}(t),\mathbf{Y}(s)\right)\right\|_2 
		= \mathcal{O}(1) \quad \text{a.s.}
		\]
	\end{enumerate}
\end{lemma}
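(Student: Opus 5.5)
The plan is to use the boundedness of the latent supports, which the paper assumes throughout (Appendix~\ref{app:setup-background}). All three claims then reduce to elementary moment bounds and a matrix Hoeffding-type concentration, applied to a finite number of time pairs.

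\textbf{Item 1.} Since $\chi$ and $\phi(t)$ have bounded support, say $\|\phi(t)\|\le B$ a.s.\ for every $t$, we have $\|\phi(t)-\phi(s)\|\le 2B$. Hence
\[
\|\mathbf M(\phi(t),\phi(s))\|_2\le \mathbb E\|\phi(t)-\phi(s)\|^2\le 4B^2<\infty .
\]
This already follows from $\phi(t)\in L^2$; boundedness is not even needed here.

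\textbf{Item 2.} Fix $(t,s)$ and write $\bm z_i:=\mathbf Y_{i:}(t)-\mathbf Y_{i:}(s)$. The rows of $(\mathbf Y(t),\mathbf Y(s))$ are i.i.d.\ copies of $(\phi(t),\phi(s))$, so the $\bm z_i$ are i.i.d.\ copies of $\phi(t)-\phi(s)$. Then
\[
\hat{\mathbf M}(\mathbf Y(t),\mathbf Y(s))=\frac1n\sum_{i=1}^n\bm z_i\bm z_i^\top ,
\]
an average of i.i.d.\ bounded symmetric $d\times d$ matrices with mean $\mathbf M(\phi(t),\phi(s))$. Each entry $(\bm z_i)_a(\bm z_i)_b$ is bounded by $4B^2$. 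Hoeffding's inequality, applied entrywise exactly as in the concentration of $\mathbf X^\top\mathbf X$ in the proof of Lemma~\ref{lem:pop-ident}, gives a deviation of at most $C\sqrt{A\log n/n}$ with probability $1-n^{-A}$. This is in fact stronger than the stated $n^{-1/2}\log n$ rate.

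To pass from entries to the operator norm I will use $\|\cdot\|_2\le\|\cdot\|_F\le d\max_{a,b}|\cdot|$. A union bound over the $d^2$ entries and the $T^2$ time pairs only changes constants, because $d$ and $T$ are fixed. This union bound with fixed $d,T$ is the only point needing care, and it is harmless under the paper's definition of $\mathcal O(\cdot)$ a.s.: for each $A$ one takes a slightly larger constant, and the failure probability $d^2T^2n^{-A'}$ can be absorbed by choosing $A'>A$.

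\textbf{Item 3.} This follows from the triangle inequality:
\[
\|\hat{\mathbf M}(\mathbf Y(t),\mathbf Y(s))\|_2\le\|\mathbf M(\phi(t),\phi(s))\|_2+\mathcal O(n^{-1/2}\log n)=\mathcal O(1).
\]
Alternatively, it holds deterministically, since $\|\hat{\mathbf M}\|_2\le\frac1n\sum_i\|\bm z_i\|^2\le4B^2$.

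I do not expect a genuine obstacle. The main point to state carefully is that boundedness of the latent supports is what licenses Hoeffding; if only $L^2$ were assumed, one would need a truncation argument instead.
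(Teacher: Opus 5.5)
Your proof is correct. Items 1 and 3 match the paper's argument (bounded support gives $\|\mathbf M(\phi(t),\phi(s))\|_2\le 4\beta^2$, and item 3 follows by the triangle inequality); the difference is in item 2.

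\textbf{Item 2.} The paper centers the summands as $\mathbf A(i)=\bm z_i\bm z_i^\top-\mathbf M(\phi(t),\phi(s))$. It bounds $\|\mathbf A(i)\|_2\le 8\beta^2$ and the matrix variance by $64\beta^4 n$. It then applies the matrix Bernstein inequality to control $\|\sum_i\mathbf A(i)\|_2$ directly in operator norm, followed by a union bound over the finitely many pairs $(t,s)$.

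You instead apply scalar Hoeffding to each of the $d^2$ entries, union-bound over entries and time pairs, and convert via $\|\cdot\|_2\le\|\cdot\|_F\le d\max_{a,b}|\cdot|$.

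\textbf{What each buys.} Your route is more elementary and, because $d$ and $T$ are fixed, costs only constants. The matrix Bernstein route avoids the polynomial-in-$d$ losses: the factor $d$ from the norm conversion and the $d^2$ in the union bound. It would therefore remain sharp if $d$ were allowed to grow with $n$. Both arguments actually deliver a $\sqrt{\log n/n}$ rate, so the stated $n^{-1/2}\log n$ is loose either way.

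\textbf{Item 3 alternative.} Your deterministic bound $\|\hat{\mathbf M}(\mathbf Y(t),\mathbf Y(s))\|_2\le\frac1n\sum_i\|\bm z_i\|^2\le 4B^2$ is cleaner than the paper's derivation. It does not rely on item 2 and holds on the full-probability event where all rows are bounded.

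\textbf{Joint i.i.d.\ structure.} You state explicitly that the rows of $(\mathbf Y(t),\mathbf Y(s))$ are jointly i.i.d.\ copies of $(\phi(t),\phi(s))$. The paper uses this only implicitly, when it asserts $\mathbb E[\mathbf A(i)]=0$ and independence across $i$.
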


\begin{proof}
	We prove each statement in turn.
	
	\paragraph{(1)} 
	Since $\chi$ and $\phi(t)$ have bounded support, there exists a constant $\beta>0$ such that $\|\phi(t)\|\le \beta$ almost surely for all $t\in\mathcal{T}$. By the triangle inequality,
	\[
	\begin{aligned}
		\left\| \mathbb{E}\left[(\phi(t)-\phi(s))(\phi(t)-\phi(s))^\top\right]\right\|_2 
		&\le \mathbb{E}\left[\left\| (\phi(t)-\phi(s))(\phi(t)-\phi(s))^\top\right\|_2 \right] \\
		&= \mathbb{E}\left[\left\| \phi(t)-\phi(s)\right\|^2 \right] \\
		&\le \mathbb{E}\left[(\left\| \phi(t)\right\|+\left\|\phi(s)\right\|)^2 \right] \\
		&\le 4\beta^2.
	\end{aligned}
	\]
	Thus, $\|\mathbf M(\phi(t),\phi(s))\|_2$ is finite.
	
	\paragraph{(2)} 
	Fix $t,s \in \mathcal{T}$. Define
	\[
	\begin{aligned}
		&\mathbf{A}(i) := (\mathbf{Y}_{i:}(t)-\mathbf{Y}_{i:}(s))(\mathbf{Y}_{i:}(t)-\mathbf{Y}_{i:}(s))^\top - \mathbf{M}\left(\phi(t),\phi(s)\right) ~(i=1,...,n),\\
		&\mathcal{A} := \sum_{i=1}^n \mathbf{A}(i).
	\end{aligned}
	\]
	Then $\mathbb{E}[\mathbf{A}(i)]=0$, and
	\[
	\hat{\mathbf{M}}\left(\mathbf{Y}(t),\mathbf{Y}(s)\right)
	- \mathbf{M}\left(\phi(t),\phi(s)\right)=  \frac{1}{n}\sum_{i=1}^n\mathbf{A}(i).
	\]
	Hence, it suffices to show that
	\[
	\left\|\mathcal{A}\right\|_2 = \mathcal{O}( n^{1/2}\log n) \quad \text{a.s.}
	\]
	Since $T$ is fixed, Applying a union bound over $t,s\in\mathcal{T}$ yields the desired result.
	
	By the triangle inequality and the bounded support assumption (i.e., $\|\phi(t)\|\le \beta$ almost surely, and similarly for $\mathbf{Y}_{i:}(t)$), we have
	\[
	\|\mathbf{A}(i)\|_2 \le 8\beta^2\quad\text{almost surely}, \quad \max\left\{\left\|\mathbb{E}[\mathcal{A}\mathcal{A}^\top]\right\|_2 , \left\|\mathbb{E}[\mathcal{A}^\top \mathcal{A}]\right\|_2 \right\} \leq 64\beta^4n.
	\]
	Therefore, we may apply the Matrix Bernstein inequality \citep{tropp2015introduction} to obtain the stated bound.
	
	\paragraph{(3)} 
	By the triangle inequality,
	\[
	\left\| \hat{\mathbf{M}}\left(\mathbf{Y}(t),\mathbf{Y}(s)\right)\right\|_2
	\le 
	\left\| \hat{\mathbf{M}}\left(\mathbf{Y}(t),\mathbf{Y}(s)\right) - \mathbf{M}\left(\phi(t),\phi(s)\right)\right\|_2
	+ \left\| \mathbf{M}\left(\phi(t),\phi(s)\right)\right\|_2.
	\]
	Taking the supremum over $t,s\in\mathcal{T}$ and combining (1) and (2) yields the claim.
\end{proof}

Using Lemma~\ref{lem:sn-bound}, we obtain Proposition~\ref{prop:phi-Y-MV}.

\begin{proof}[Proof of Proposition~\ref{prop:phi-Y-MV}]
	From Lemma~\ref{lem:sn-bound},
	\[
	d_{\mathrm{MV}}(\phi(t),\phi(s)) = \left\|\mathbf{M}\left(\phi(t),\phi(s)\right) \right\|_2^{1/2}
	\]
	is finite for all $t,s\in\mathcal{T}$.
	By the triangle inequality, we have
	\[
	\begin{aligned}
		\left\lvert \hat{d}_{\mathrm{MV}}(\mathbf{Y}(t),\mathbf{Y}(s))^2 - d_{\mathrm{MV}}(\phi(t),\phi(s))^2\right\rvert 
		&= \left\lvert \left\| \hat{\mathbf{M}}\left(\mathbf{Y}(t),\mathbf{Y}(s)\right)\right\|_2 -\left\|\mathbf{M}\left(\phi(t),\phi(s)\right) \right\|_2\right\rvert \\
		&\le \left\| \hat{\mathbf{M}}\left(\mathbf{Y}(t),\mathbf{Y}(s)\right) - \mathbf{M}\left(\phi(t),\phi(s)\right)\right\|_2.
	\end{aligned}
	\]
	Combining this inequality with Lemma~\ref{lem:sn-bound} yields the desired result.
\end{proof}

Next, in order to verify Assumption~\ref{assump:Y-hatY} for the modified UASE, we again establish consistency of the second-moment matrices. By Theorem~\ref{thm:modified-uase-paper}, we derive the following corollary.

\begin{corollary}\label{cor:jones}
	Assume Assumption~\ref{assump:isotropy} holds.
	Let the modified UASE embeddings $\hat{\mathbf{Y}}(t)$ be defined as Proposition~\ref{prop:uase-MV-whitened}. 
	Then, there exist (possibly random) matrix \(\mathbf{W} \in \mathbb O(d)\) such that
	\[
	\sup_{t,s\in\mathcal{T}}\left\|\hat{\mathbf{M}}\left(\hat{\mathbf{Y}}(t), \hat{\mathbf{Y}}(s)\right)- \hat{\mathbf{M}}\left(\mathbf{Y}(t)\mathbf{W}, \mathbf{Y}(s)\mathbf{W}\right)\right\|_2 = \mathcal{O}\left(n^{-1/2}\log n\right) \quad \quad \text{a.s.}
	\]   
\end{corollary}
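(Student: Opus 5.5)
The plan is to reduce Corollary~\ref{cor:jones} to Theorem~\ref{thm:modified-uase-paper} through an elementary perturbation identity for products. Take $\mathbf W\in\mathbb O(d)$ to be the orthogonal matrix supplied by Theorem~\ref{thm:modified-uase-paper}, which is shared across all $t\in\mathcal T$. For fixed $t,s$ write
\[
\hat\Delta:=\hat{\mathbf Y}(t)-\hat{\mathbf Y}(s),\qquad \Delta:=(\mathbf Y(t)-\mathbf Y(s))\mathbf W,\qquad \mathbf E:=\hat\Delta-\Delta .
\]
By the triangle inequality and the uniform bound of Theorem~\ref{thm:modified-uase-paper},
\[
\|\mathbf E\|_2\le \|\hat{\mathbf Y}(t)-\mathbf Y(t)\mathbf W\|_2+\|\hat{\mathbf Y}(s)-\mathbf Y(s)\mathbf W\|_2=\mathcal O(\log n)\quad\text{a.s.}
\]
This holds uniformly over the finitely many pairs $(t,s)$.

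Next, expand the difference of second-moment matrices:
\[
\hat{\mathbf M}(\hat{\mathbf Y}(t),\hat{\mathbf Y}(s))-\hat{\mathbf M}(\mathbf Y(t)\mathbf W,\mathbf Y(s)\mathbf W)=\frac1n\bigl(\Delta^\top\mathbf E+\mathbf E^\top\Delta+\mathbf E^\top\mathbf E\bigr).
\]
Taking spectral norms gives
\[
\|\cdot\|_2\le \frac1n\bigl(2\|\Delta\|_2\|\mathbf E\|_2+\|\mathbf E\|_2^2\bigr).
\]

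It then remains to control $\|\Delta\|_2$. Because $\mathbf W$ is orthogonal,
\[
\|\Delta\|_2=\|\mathbf Y(t)-\mathbf Y(s)\|_2=\bigl(n\,\|\hat{\mathbf M}(\mathbf Y(t),\mathbf Y(s))\|_2\bigr)^{1/2}.
\]
Part (3) of Lemma~\ref{lem:sn-bound} then gives $\sup_{t,s}\|\Delta\|_2=\mathcal O(n^{1/2})$ a.s. An alternative is the bound $\|\mathbf Y(t)\|_2=\mathcal O(n^{1/2})$ from the proof of Lemma~\ref{lem:pop-ident}. Substituting, the cross terms are $\mathcal O(n^{-1}\cdot n^{1/2}\log n)=\mathcal O(n^{-1/2}\log n)$. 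The quadratic term is $\mathcal O(n^{-1}(\log n)^2)$, which is of smaller order.

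The only point needing care is combining the overwhelming-probability statements. Each of the finitely many events (the $2T$ time points, and Lemma~\ref{lem:sn-bound} for the $T^2$ pairs) fails with probability at most $n^{-A}$ for suitable constants. Since $T$ is fixed, a union bound preserves the $\mathcal O(\cdot)$ a.s. convention after adjusting constants. No step is a genuine obstacle. The one subtlety is that the orthogonal matrix must be the single $\mathbf W$ common to all $t$, and Theorem~\ref{thm:modified-uase-paper} provides exactly that.
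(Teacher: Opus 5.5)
Your proposal is correct and follows essentially the same route as the paper: both take the single $\mathbf W$ from Theorem~\ref{thm:modified-uase-paper}, bound $\|\hat\Delta-(\mathbf Y(t)-\mathbf Y(s))\mathbf W\|_2=\mathcal O(\log n)$, and control $\|\mathbf Y(t)-\mathbf Y(s)\|_2=\mathcal O(n^{1/2})$ via Lemma~\ref{lem:sn-bound}. Both then bound the difference of second-moment matrices by $\frac1n(\|\mathbf E\|_2+2\|\Delta\|_2)\|\mathbf E\|_2$, which is exactly your cross-term-plus-quadratic-term estimate.
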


\begin{proof}
	For notational convenience, define
	\[
	\hat{\Delta}(t,s) = \hat{\mathbf{Y}}(t)- \hat{\mathbf{Y}}(s), \quad \Delta(t,s) = \mathbf{Y}(t)- \mathbf{Y}(s).
	\]
	By Lemma~\ref{lem:sn-bound}, we have
	\[
	\left\|\Delta(t,s)\right\|_2 = n^{1/2}\left\|\hat{\mathbf{M}}\left(\mathbf{Y}(t), \mathbf{Y}(s)\right)\right\|_2^{1/2} = \mathcal{O}\left(n^{1/2}\right) \quad  \text{a.s.}
	\]
	
	From Theorem~\ref{thm:modified-uase-paper} and the triangle inequality, we have
	\[
	\sup_{t,s\in\mathcal{T}}\left\|\hat{\Delta}(t,s)-\Delta(t,s)\mathbf{W}\right\|_2 \leq 
	2\sup_{t\in\mathcal{T}}\left\|\hat{\mathbf{Y}}(t)-\mathbf{Y}(t)\mathbf{W}\right\|_2  = \mathcal{O}\left(\log n\right) \quad \text{a.s.}
	\]
	By the triangle inequality and the submultiplicativity of the spectral norm, we have
	
	\[
	\begin{aligned}
		&\left\|\hat{\mathbf{M}}\left(\hat{\mathbf{Y}}(t), \hat{\mathbf{Y}}(s)\right)-\hat{\mathbf{M}}\left(\mathbf{Y}(t)\mathbf{W}, \mathbf{Y}(s)\mathbf{W}\right)\right\|_2 \\
		&=\frac{1}{n}\left\| \hat{\Delta}(t,s)^\top\hat{\Delta}(t,s)- \left(\Delta(t,s)\mathbf{W}\right)^\top \left(\Delta(t,s)\mathbf{W}\right)\right\|_2 \\
		&\le \frac{1}{n}\Bigg(\left\| \hat{\Delta}(t,s)^\top\hat{\Delta}(t,s)- \hat{\Delta}(t,s)^\top \left(\Delta(t,s)\mathbf{W}\right)\right\|_2 \\
		&\qquad + \left\| \hat{\Delta}(t,s)^\top\left(\Delta(t,s)\mathbf{W}\right)- \left(\Delta(t,s)\mathbf{W}\right)^\top \left(\Delta(t,s)\mathbf{W}\right)\right\|_2\Bigg) \\
		&\le \frac{1}{n}\left(\left\| \hat{\Delta}(t,s)- \Delta(t,s)\mathbf{W}\right\|_2 + 2\left\|\Delta(t,s)\right\|_2\right)\left\| \hat{\Delta}(t,s)- \Delta(t,s)\mathbf{W}\right\|_2
	\end{aligned}
	\]
	Thus, we have
	\[
	\sup_{t,s\in\mathcal{T}}\left\|\hat{\mathbf{M}}\left(\hat{\mathbf{Y}}(t), \hat{\mathbf{Y}}(s)\right)-\hat{\mathbf{M}}\left(\mathbf{Y}(t)\mathbf{W}, \mathbf{Y}(s)\mathbf{W}\right)\right\|_2 = \mathcal{O}\left(n^{-1/2}\log n\right) \quad  \text{a.s.}
	\]
\end{proof}

Using Corollary~\ref{cor:jones}, we obtain Proposition~\ref{prop:uase-MV-whitened}.

\begin{proof}[Proof of Proposition~\ref{prop:uase-MV-whitened}]
	By the triangle inequality,
	\[
	\begin{aligned}
		\left\lvert \hat{d}_{\mathrm{MV}}(\hat{\mathbf{Y}}(t),\hat{\mathbf{Y}}(s))^2-\hat{d}_{\mathrm{MV}}(\mathbf{Y}(t),\mathbf{Y}(s))^2 \right\rvert
		&=  \left\lvert\left\|\hat{\mathbf{M}}\left(\hat{\mathbf{Y}}(t), \hat{\mathbf{Y}}(s)\right)\right\|_2- \left\|\hat{\mathbf{M}}\left(\mathbf{Y}(t), \mathbf{Y}(s)\right)\right\|_2 \right\rvert \\
		&\le \left\|\hat{\mathbf{M}}\left(\hat{\mathbf{Y}}(t), \hat{\mathbf{Y}}(s)\right)-\hat{\mathbf{M}}\left(\mathbf{Y}(t)\mathbf{W}, \mathbf{Y}(s)\mathbf{W}\right)\right\|_2.
	\end{aligned}
	\]
	Here we used that \(\mathbf{W}\in \mathbb O(d)\) preserves the spectral norm.
	
	Combining this bound with Corollary~\ref{cor:jones}, we conclude that the embeddings $\hat{\mathbf{Y}}(t)$ satisfy Assumption~\ref{assump:Y-hatY} with $g(n) = n^{-1/2}\log n$.
\end{proof}

We obtain the node-level to trajectory-level control for MV based trajectories as a direct consequence of the strain bound for CMDS.

\begin{proof}[Proof of Theorem~ \ref{thm:local-global-MV}]
	Let $ \mathbf{Z}_{\mathrm{MV}}:= (\hat{\psi}_{\mathrm{MV}}(1),\dots, \hat{\psi}_{\mathrm{MV}}(T))^\top \in\mathbb{R}^{T\times c}$.
	By the strain bound for CMDS, we have
	\[
	\left\|\hat{\mathbf E}_{\mathrm{MV}}-  \mathbf{Z}_{\mathrm{MV}}\mathbf{Z}_{\mathrm{MV}}^\top\right\|_F
	= \left( \sum_{i=c+1}^T \lambda_i\left(\hat{\mathbf E}_{\mathrm{MV}}\right)^2 \right)^{1/2}.
	\]
	For any indices $t\neq s$, squared Euclidean distances can be written in terms of inner products as
	\[
	\begin{aligned}
		\|\hat{\psi}_{\mathrm{MV}}(t)-\hat{\psi}_{\mathrm{MV}}(s)\|^2
		&= ( \mathbf{Z}_{\mathrm{MV}}\mathbf{Z}_{\mathrm{MV}}^\top)_{tt} + ( \mathbf{Z}_{\mathrm{MV}}\mathbf{Z}_{\mathrm{MV}}^\top)_{ss} - 2( \mathbf{Z}_{\mathrm{MV}}\mathbf{Z}_{\mathrm{MV}}^\top)_{ts}, \\
		\hat d_{\mathrm{MV}}(\hat{\mathbf Y}(t),\hat{\mathbf Y}(s))^2
		&= (\hat{\mathbf E}_{\mathrm{MV}})_{tt} + (\hat{\mathbf E}_{\mathrm{MV}})_{ss} - 2(\hat{\mathbf E}_{\mathrm{MV}})_{ts}.
	\end{aligned}
	\]
	Taking the difference and applying the triangle inequality yields
	\[
	\begin{aligned}
		&
		\left|
		\|\hat{\psi}_{\mathrm{MV}}(t)-\hat{\psi}_{\mathrm{MV}}(s)\|^2
		-
		\hat d_{\mathrm{MV}}(\hat{\mathbf Y}(t),\hat{\mathbf Y}(s))^2
		\right| \\
		&\le
		|( \mathbf{Z}_{\mathrm{MV}}\mathbf{Z}_{\mathrm{MV}}^\top-\hat{\mathbf E}_{\mathrm{MV}})_{tt}|
		+
		|( \mathbf{Z}_{\mathrm{MV}}\mathbf{Z}_{\mathrm{MV}}^\top-\hat{\mathbf E}_{\mathrm{MV}})_{ss}|
		+
		2|( \mathbf{Z}_{\mathrm{MV}}\mathbf{Z}_{\mathrm{MV}}^\top-\hat{\mathbf E}_{\mathrm{MV}})_{ts}|.
	\end{aligned}
	\]
	
	Applying the Cauchy--Schwarz inequality and combining with the strain bound yields
	\[
	\begin{aligned}
		\left|
		\|\hat{\psi}_{\mathrm{MV}}(t)-\hat{\psi}_{\mathrm{MV}}(s)\|^2
		-
		\hat d_{\mathrm{MV}}(\hat{\mathbf Y}(t),\hat{\mathbf Y}(s))^2
		\right| 
		&\le 2\| \mathbf{Z}_{\mathrm{MV}}\mathbf{Z}_{\mathrm{MV}}^\top-\hat{\mathbf E}_{\mathrm{MV}}\|_F. \\
		&\le 2\left( \sum_{i=c+1}^T \lambda_i\left(\hat{\mathbf E}_{\mathrm{MV}}\right)^2 \right)^{1/2}.
	\end{aligned}
	\]
	For $s=t$, both squared distances vanish, and the inequality holds trivially.

	Finally, recall that
	\[
	\hat d_{\mathrm{MV}}(\hat{\mathbf Y}(t),\hat{\mathbf Y}(s))^2 
	= \left\|\hat{\mathbf M}(\hat{\mathbf Y}(t),\hat{\mathbf Y}(s))\right\|_2
	= \frac{1}{n}\left \|\hat{\mathbf Y}(t)-\hat{\mathbf Y}(s)\right\|_2^2
	\]
	Using the variational characterization of the spectral norm, we obtain
	\[
	\hat d_{\mathrm{MV}}(\hat{\mathbf Y}(t),\hat{\mathbf Y}(s))^2 = \frac{1}{n}\sup_{\|\bm u\|=1} \sum_{i=1}^n\left\{\left(\hat{\mathbf{Y}}_{i:}(t)-\hat{\mathbf{Y}}_{i:}(s)\right)^\top \bm u \right\}^2.
	\]
	Moreover, by the standard relation between the spectral and Frobenius norms,
	\[
	\frac{1}{nd}\sum_{i=1}^n \left\|\hat{\mathbf{Y}}_{i:}(t)-\hat{\mathbf{Y}}_{i:}(s)\right\|^2 \le 
	\hat d_{\mathrm{MV}}(\hat{\mathbf Y}(t),\hat{\mathbf Y}(s))^2 
	\le  \frac{1}{n}\sum_{i=1}^n \left\|\hat{\mathbf{Y}}_{i:}(t)-\hat{\mathbf{Y}}_{i:}(s)\right\|^2.
	\]
	Combining these bounds completes the proof.
\end{proof}

We can also obtain a node-level to trajectory-level control for the trajectories based on TV and mode-wise distances.

\begin{proof}[Proof Sketch of Theorem~ \ref{thm:local-global-TV}]
	As in the proof of Theorem~\ref{thm:local-global-MV}, applying the strain bound for CMDS gives
	\[
	\left\lvert \|\hat{\psi}_{\mathrm{TV}}(t)-\hat{\psi}_{\mathrm{TV}}(s)\|^2 - \hat d_{\mathrm{TV}} \left(\hat{\mathbf{Y}}(t),\hat{\mathbf{Y}}(s)\right)^2\right\rvert
	\le 2\left( \sum_{i=c+1}^T \lambda_i\left(\hat{\mathbf{E}}_{\mathrm{TV}}\right)^2 \right)^{1/2}
	\]
	Combining this with the node-level decomposition yields the result.
\end{proof}

\begin{proof}[Proof Sketch of Theorem~ \ref{thm:local-global-kV}]
	As in the proof of Theorem~\ref{thm:local-global-MV}, applying the strain bound for CMDS gives
	\[
	\left\lvert \|\hat{\psi}_{k}(t)-\hat{\psi}_{k}(s)\|^2 - \hat d_{k} \left(\hat{\mathbf{Y}}(t),\hat{\mathbf{Y}}(s)\right)^2\right\rvert
	\le 2\left( \sum_{i=c+1}^T \lambda_i\left(\hat{\mathbf{E}}_{k}\right)^2 \right)^{1/2}
	\]
	Combining this with the node-level decomposition yields the result.
\end{proof}

Moreover, we establish a tighter bound for the aggregated error.

\begin{proof}[Proof of Theorem~ \ref{thm:local-global-agg-TV}]
	Let $ \mathbf{Z}_{\mathrm{TV}}:= (\hat{\psi}_{\mathrm{TV}}(1),\dots, \hat{\psi}_{\mathrm{TV}}(T))^\top \in\mathbb{R}^{T\times c}$.
	By the same argument as in Theorem~\ref{thm:local-global-MV}, for any indices $t\neq s$,
	\[
	\begin{aligned}
		&
		\left|
		\|\hat{\psi}_{\mathrm{TV}}(t)-\hat{\psi}_{\mathrm{TV}}(s)\|^2
		-
		\hat d_{\mathrm{TV}}(\hat{\mathbf Y}(t),\hat{\mathbf Y}(s))^2
		\right| \\
		&\le
		|( \mathbf{Z}_{\mathrm{TV}}\mathbf{Z}_{\mathrm{TV}}^\top-\hat{\mathbf E}_{\mathrm{TV}})_{tt}|
		+
		|( \mathbf{Z}_{\mathrm{TV}}\mathbf{Z}_{\mathrm{TV}}^\top-\hat{\mathbf E}_{\mathrm{TV}})_{ss}|
		+
		2|( \mathbf{Z}_{\mathrm{TV}}\mathbf{Z}_{\mathrm{TV}}^\top-\hat{\mathbf E}_{\mathrm{TV}})_{ts}|.
	\end{aligned}
	\]
	
	Let $\mathcal E:=\mathbf{Z}_{\mathrm{TV}}\mathbf{Z}_{\mathrm{TV}}^\top-\hat{\mathbf E}_{\mathrm{TV}}$.
	Then, by the Cauchy–Schwarz inequality, the squared error satisfies
	\[
	\begin{aligned}
		\left|
		\|\hat{\psi}_{\mathrm{TV}}(t)-\hat{\psi}_{\mathrm{TV}}(s)\|^2
		-
		\hat d_{\mathrm{TV}}(\hat{\mathbf Y}(t),\hat{\mathbf Y}(s))^2
		\right|^2 
		&\leq \Big( |\mathcal E_{tt}| + |\mathcal E_{ss}| + |\mathcal E_{ts}| + |\mathcal E_{st}| \Big)^2 \\
		&\leq 4 \Big( |\mathcal E_{tt}|^2 + |\mathcal E_{ss}|^2 + |\mathcal E_{ts}|^2 + |\mathcal E_{st}|^2 \Big).
	\end{aligned}
	\]
	
	Taking the sum over $t<s$, we get
	\[
	\begin{aligned}
		&\sum_{t<s} \left|
		\|\hat{\psi}_{\mathrm{TV}}(t)-\hat{\psi}_{\mathrm{TV}}(s)\|^2
		-
		\hat d_{\mathrm{TV}}(\hat{\mathbf Y}(t),\hat{\mathbf Y}(s))^2
		\right|^2 \\
		&= \frac{1}{2}\sum_{t\neq s} \left|
		\|\hat{\psi}_{\mathrm{TV}}(t)-\hat{\psi}_{\mathrm{TV}}(s)\|^2
		-
		\hat d_{\mathrm{TV}}(\hat{\mathbf Y}(t),\hat{\mathbf Y}(s))^2
		\right|^2 \\
		&\le 2 \sum_{t\neq s} \Big( |\mathcal E_{tt}|^2 + |\mathcal E_{ss}|^2 + |\mathcal E_{ts}|^2 + |\mathcal E_{st}|^2 \Big) \\
		&= 2 \Big( \sum_{t\neq s} |\mathcal E_{tt}|^2 + \sum_{t\neq s} |\mathcal E_{ss}|^2 + \sum_{t\neq s} |\mathcal E_{ts}|^2 + \sum_{t\neq s} |\mathcal E_{st}|^2 \Big) \\
		&= 4(T-1) \sum_t |\mathcal E_{tt}|^2 + 4 \sum_{t\neq s} |\mathcal E_{ts}|^2 \\
		&\le 4 (T-1) \|\mathbf{Z}_{\mathrm{TV}}\mathbf{Z}_{\mathrm{TV}}^\top-\hat{\mathbf E}_{\mathrm{TV}}\|_F^2,
	\end{aligned}
	\]
	where we used $T\ge 2$.
	
	Combining with the strain bound and the node-level decomposition yields the result.
\end{proof}

\begin{proof}[Proof Sketch of Theorem~ \ref{thm:local-global-agg-kV}]
	The argument follows the same steps as in the proof of Theorem~\ref{thm:local-global-agg-TV}, and the result follows directly.
\end{proof}

We next verify that the two abstract assumptions required for Theorem~\ref{thm:psi-hatpsi} hold for the TV distance.
\begin{proof}[Proof of Proposition~ \ref{prop:phi-Y-TV}]
	We first note that
	\[
	d_{\mathrm{TV}}(\phi(t),\phi(s)) = \mathrm{tr}\left[ \mathbf{M}\left(\phi(t),\phi(s)\right) \right]^{1/2} \leq  d^{1/2}\left\| \mathbf{M}\left(\phi(t),\phi(s)\right) \right\|_2^{1/2}.
	\]
	Combining this with Lemma~\ref{lem:sn-bound}, we conclude that $d_{\mathrm{TV}}(\phi(t),\phi(s))$ is finite for all $t,s\in\mathcal{T}$.
	By the triangle inequality,
	\[
	\begin{aligned}
		& \left\lvert \hat{d}_{\mathrm{TV}}(\mathbf{Y}(t),\mathbf{Y}(s))^2 - d_{\mathrm{TV}}(\phi(t),\phi(s))^2\right\rvert \\
		&= \left\lvert \mathrm{tr}\left[\hat{\mathbf{M}}\left(\mathbf{Y}(t),\mathbf{Y}(s)\right)\right] -\mathrm{tr}\left[ \mathbf{M}\left(\phi(t),\phi(s)\right)\right]\right\rvert \\
		&\le\left\lvert \mathrm{tr}\left[ \hat{\mathbf{M}}\left(\mathbf{Y}(t),\mathbf{Y}(s)\right) -\mathbf{M}\left(\phi(t),\phi(s)\right)\right] \right\rvert \\
		&\le d \left\|  \hat{\mathbf{M}}\left(\mathbf{Y}(t),\mathbf{Y}(s)\right) -\mathbf{M}\left(\phi(t),\phi(s)\right)\right\|_2.
	\end{aligned}
	\]
	Combining this bound with Lemma~\ref{lem:sn-bound} yields the desired result.
\end{proof}

\begin{proof}[Proof Sketch of Proposition~\ref{prop:uase-TV-whitened}]
	By the triangle inequality,
	\[
	\begin{aligned}
		&\left\lvert \hat{d}_{\mathrm{TV}}(\hat{\mathbf{Y}}(t),\hat{\mathbf{Y}}(s))^2-\hat{d}_{\mathrm{TV}}(\mathbf{Y}(t),\mathbf{Y}(s))^2 \right\rvert\\
		&=  \left\lvert\mathrm{tr}\left[\hat{\mathbf{M}}\left(\hat{\mathbf{Y}}(t), \hat{\mathbf{Y}}(s)\right)\right]- \mathrm{tr}\left[\hat{\mathbf{M}}\left(\mathbf{Y}(t), \mathbf{Y}(s)\right)\right] \right\rvert \\
		&= \left\lvert\mathrm{tr}\left[\hat{\mathbf{M}}\left(\hat{\mathbf{Y}}(t), \hat{\mathbf{Y}}(s)\right)\right]- \mathrm{tr}\left[\hat{\mathbf{M}}\left(\mathbf{Y}(t)\mathbf{W}, \mathbf{Y}(s)\mathbf{W}\right)\right] \right\rvert  \\
		&= \left\lvert \mathrm{tr}\left[\hat{\mathbf{M}}\left(\hat{\mathbf{Y}}(t), \hat{\mathbf{Y}}(s)\right)-\hat{\mathbf{M}}\left(\mathbf{Y}(t)\mathbf{W}, \mathbf{Y}(s)\mathbf{W}\right)\right] \right\rvert\\
		&\le d \left\|\hat{\mathbf{M}}\left(\hat{\mathbf{Y}}(t), \hat{\mathbf{Y}}(s)\right)-\hat{\mathbf{M}}\left(\mathbf{Y}(t)\mathbf{W}, \mathbf{Y}(s)\mathbf{W}\right)\right\|_2 .
	\end{aligned}
	\]
	Here we used that \(\mathbf{W}\in \mathbb O(d)\) preserves the trace.
	
	Combining this bound with Corollary~\ref{cor:jones}, we conclude that the embeddings $\hat{\mathbf{Y}}(t)$ satisfy Assumption~\ref{assump:Y-hatY}
	with $g(n) = n^{-1/2}\log n$.
\end{proof}


We also verify that the two abstract assumptions required for Theorem~\ref{thm:psi-hatpsi} hold for the mode-wise variation distance. The verification parallels the TV case, with the additional feature that $d_k(\cdot,\cdot)$ depends on a direction (an eigenvector) extracted from an aggregated second-moment matrix.

Accordingly, the proof proceeds in three steps: (1) show finiteness of $d_k(\phi(t),\phi(s))$ by comparing it to the operator norm of $\mathbf M(\phi(t),\phi(s))$; (2) control the eigenvector error $\hat{\bm u}_{\mathbf Y}$ versus $\bm u_\phi$ via Davis--Kahan once $\hat{\mathcal M}_{\mathbf Y}$ is close to $\mathcal M_\phi$; and (3) combine these ingredients to bound the difference between $\hat d_k(\mathbf Y(t),\mathbf Y(s))^2$ and $d_k(\phi(t),\phi(s))^2$ uniformly.

\begin{proof}[Proof of Proposition~\ref{prop:phi-Y-kV}]
	For notational convenience, define
	\[
	\mathcal{M}_\phi 
	= \sum_{(t,s)\in\mathcal{S}} \mathbf{M}\left(\phi(t), \phi(s)\right), \qquad
	\hat{\mathcal{M}}_{\mathbf{Y}} 
	= \sum_{(t,s)\in\mathcal{S}} \hat{\mathbf{M}}\left(\mathbf{Y}(t), \mathbf{Y}(s)\right).
	\]
	Let \(\bm{u}_\phi\) and \(\hat{\bm{u}}_{\mathbf{Y}}\) denote the eigenvectors corresponding to the \(k\)th largest eigenvalues of
	\(\mathcal{M}_\phi\) and \(\hat{\mathcal{M}}_{\mathbf{Y}}\), respectively. Since both matrices are real symmetric, the eigenvectors are taken to be orthonormal.
	
	First, note that
	\[
	d_{k}(\phi(t),\phi(s)) = \left(\bm{u}_\phi ^\top\mathbf{M}\left(\phi(t),\phi(s)\right)\bm{u}_\phi \right)^{1/2} \leq \left\| \mathbf{M}\left(\phi(t),\phi(s)\right) \right\|_2^{1/2},
	\]
	where we used \(\left\|\bm{u}_\phi \right\|=1\). Combining this with Lemma~\ref{lem:sn-bound}, we conclude that $d_{k}(\phi(t),\phi(s))$ is finite for all $t,s\in\mathcal{T}$.
	By the Davis--Kahan theorem \citep{yu2015useful},
	\[
	\min_{q=\pm 1} \left\| \hat{\bm u}_\mathbf{Y}  - q\bm{u}_\phi \right\| \leq \frac{2^{3/2}\|\hat{\mathcal M}_\mathbf{Y}- \mathcal{M}_\phi\|_2}{\min(\lambda_{k-1}(\mathcal{M}_\phi)-\lambda_{k}(\mathcal{M}_\phi),\lambda_{k}(\mathcal{M}_\phi)-\lambda_{k+1}(\mathcal{M}_\phi))},
	\]
	with \(\lambda_0(\mathcal{M}_\phi)=\infty, \lambda_{d+1}(\mathcal{M}_\phi)=-\infty\).
	By the triangle inequality,
	\[
	\|\hat{\mathcal M}_\mathbf{Y}-\mathcal{M}_\phi\|_2 \le \sum_{(t,s)\in\mathcal{S}} \left\| \hat{\mathbf{M}}\left(\mathbf{Y}(t),\mathbf{Y}(s)\right)-\mathbf{M}\left(\phi(t),\phi(s)\right) \right\|.
	\]
	Lemma~\ref{lem:sn-bound} implies that
	\[
	\|\hat{\mathcal M}_\mathbf{Y}-\mathcal{M}_\phi\|_2 = \mathcal{O}( n^{-1/2}\log n) \quad \text{a.s.}
	\]
	Consequently,
	\[
	\min_{q=\pm 1} \left\| \hat{\bm u}_\mathbf{Y}  - q\bm{u}_\phi \right\| = \mathcal{O}( n^{-1/2}\log n) \quad \text{a.s.}.
	\]
	
	Finally, we bound the discrepancy between the empirical and population quadratic forms defining the $k$ mode distance. By the triangle inequality,
	\[
	\begin{aligned}
		& \left\lvert \hat{d}_{k}(\mathbf{Y}(t),\mathbf{Y}(s))^2 - d_{k}(\phi(t),\phi(s))^2\right\rvert \\
		&= \left\lvert \hat{\bm u}_\mathbf{Y} ^\top  \hat{\mathbf{M}}\left(\mathbf{Y}(t),\mathbf{Y}(s)\right) \hat{\bm u}_\mathbf{Y}  -\bm{u}_\phi^\top \mathbf{M}\left(\phi(t),\phi(s)\right)\bm{u}_\phi \right\rvert \\
		&\le \left\| \hat{\mathbf{M}}\left(\mathbf{Y}(t),\mathbf{Y}(s)\right)- \mathbf{M}\left(\phi(t),\phi(s)\right)\right\|_2 \\
		&\qquad +\left(\left\|\mathbf{M}\left(\phi(t),\phi(s)\right)\right\|_2+\left\| \hat{\mathbf{M}}\left(\mathbf{Y}(t),\mathbf{Y}(s)\right)\right\|_2\right)\min_{q=\pm 1}\left\| \hat{\bm u}_\mathbf{Y}  - q\bm{u}_\phi \right\|,
	\end{aligned}
	\]
	\noindent where we used \(\left\| \bm{u}_\phi  \right\|=1, \left\| \hat{\bm u}_\mathbf{Y}  \right\|=1\).

	The first term captures the direct second-moment estimation error, and the second term captures the effect of using $\hat{\bm u}_{\mathbf Y}$ instead of $\bm u_\phi$ in the quadratic form; it is scaled by operator norm sizes of the relevant second-moment matrices. Combining these bounds with Lemma~\ref{lem:sn-bound} yields the desired result.
\end{proof}

The next proposition checks the second abstract assumption (Assumption~\ref{assump:Y-hatY}) for the modified UASE embeddings under the mode-wise variation distance. The structure trajectories the MV and TV arguments: we compare the squared $k$ mode dissimilarities by (i) replacing $\hat{\mathbf Y}(t)$ with its population aligned counterpart $\mathbf Y(t)\mathbf W$ inside the second-moment matrices using Corollary~\ref{cor:jones}, and (ii) controlling the resulting change in the $k$th eigenvector through Davis--Kahan, using that the associated eigengap stays open under the perturbation.

\begin{proof}[Proof of Proposition~\ref{prop:uase-kV-whitened}]
	For notational convenience, define
	\[
	\hat{\mathcal{M}}_{\hat{\mathbf{Y}}} 
	= \sum _{(t,s)\in\mathcal{S}}\hat{\mathbf{M}}\left(\hat{\mathbf{Y}}(t),\hat{\mathbf{Y}}(s)\right).
	\]
	Let \(\hat{\bm{u}}_{\hat{\mathbf Y}}\) denote the eigenvectors corresponding to the \(k\)th largest eigenvalues of \(\hat{\mathcal{M}}_{\hat{\mathbf Y}}\). Since $\hat{\mathcal{M}}_{\hat{\mathbf{Y}}} $ is real symmetric, the eigenvectors are taken to be orthonormal.
	
	Since $\mathbf{W}\in\mathbb O(d)$, it follows that the eigenvectors corresponding to the \(k\)th largest eigenvalues of
	\[
	\sum _{(t,s)\in\mathcal{S}}\hat{\mathbf{M}}\left(\mathbf{Y}(t)\mathbf{W},\mathbf{Y}(s)\mathbf{W}\right) = \mathbf{W}^\top \hat{\mathcal M}_\mathbf{Y}\mathbf{W}
	\]
	\noindent can be chosen as $\mathbf{W}^\top\hat{\bm u}_\mathbf{Y}$, which therefore also form an orthonormal basis. This observation allows us to compare quadratic forms built from $\hat{\bm u}_{\hat{\mathbf Y}}$ and from the rotated vector $\mathbf W^\top \hat{\bm u}_{\mathbf Y}$, while keeping the relevant matrices in comparable coordinates.
	
	By the triangle inequality,
	\[
	\begin{aligned}
		&\left\lvert \hat{d}_{k}(\hat{\mathbf{Y}}(t),\hat{\mathbf{Y}}(s))^2-\hat{d}_{k}(\mathbf{Y}(t),\mathbf{Y}(s))^2 \right\rvert\\
		&=  \left\lvert\hat{\bm u}_{\hat{\mathbf Y}}^\top \hat{\mathbf{M}}\left(\hat{\mathbf{Y}}(t), \hat{\mathbf{Y}}(s)\right)\hat{\bm u}_{\hat{\mathbf Y}}- \hat{\bm u}_\mathbf{Y}^\top\hat{\mathbf{M}}\left(\mathbf{Y}(t), \mathbf{Y}(s)\right)\hat{\bm u}_\mathbf{Y}\right\rvert \\
		&\le \left\lvert\hat{\bm u}_{\hat{\mathbf Y}}^\top \hat{\mathbf{M}}\left(\hat{\mathbf{Y}}(t), \hat{\mathbf{Y}}(s)\right)\hat{\bm u}_{\hat{\mathbf Y}}- \left(\mathbf{W}^\top\hat{\bm u}_\mathbf{Y}\right)^\top \hat{\mathbf{M}}\left(\mathbf{Y}(t)\mathbf{W}, \mathbf{Y}(s)\mathbf{W}\right)\left(\mathbf{W}^\top\hat{\bm u}_\mathbf{Y}\right)\right\rvert.
	\end{aligned}
	\]
	Here we used that \(\mathbf{W}\in \mathbb O(d)\). The right hand side compares two quadratic forms: one based on the empirical pair $(\hat{\bm u}_{\hat{\mathbf Y}},\hat{\mathbf M}(\hat{\mathbf Y}(t),\hat{\mathbf Y}(s)))$ and one based on the population aligned pair $(\mathbf W^\top \hat{\bm u}_{\mathbf Y}, \hat{\mathbf M}(\mathbf Y(s)\mathbf W,\mathbf Y(t)\mathbf W))$.

	As in the proof of Proposition~\ref{prop:uase-kV-whitened}, we have
	\[
	\begin{aligned}
		&\left\lvert \hat{d}_{k}(\hat{\mathbf{Y}}(t),\hat{\mathbf{Y}}(s))^2-\hat{d}_{k}(\mathbf{Y}(t),\mathbf{Y}(s))^2 \right\rvert\\
		&\le \left\| \hat{\mathbf{M}}\left(\hat{\mathbf{Y}}(t), \hat{\mathbf{Y}}(s)\right)-\hat{\mathbf{M}}\left(\mathbf{Y}(t)\mathbf{W}, \mathbf{Y}(s)\mathbf{W}\right)\right\|_2 \\
		&\qquad +\left(\left\|\hat{\mathbf{M}}\left(\mathbf{Y}(t)\mathbf{W}, \mathbf{Y}(s)\mathbf{W}\right)\right\|_2+\left\| 
		\hat{\mathbf{M}}\left(\hat{\mathbf{Y}}(t), \hat{\mathbf{Y}}(s)\right)\right\|_2\right)\min_{q=\pm 1}\left\| \hat{\bm u}_\mathbf{\hat Y}  - q\mathbf{W}^\top\hat{\bm u}_\mathbf{Y} \right\|.
	\end{aligned}
	\]
	
	This is the same decomposition as in Proposition~\ref{prop:phi-Y-kV}: a direct matrix perturbation term, plus a term accounting for the perturbation of the eigenvector direction in the quadratic form.

	From the proof of Proposition~\ref{prop:phi-Y-kV},
	\[
	\|\hat{\mathcal M}_\mathbf{Y}-\mathcal{M}_\phi\|_2 = \mathcal{O}( n^{-1/2}\log n) \quad \text{a.s.}
	\]
	
	Hence, by Weyl's inequality, the eigenvalues of $\hat{\mathcal M}_\mathbf{Y}$ remain strictly separated almost surely. Since $\hat{\mathcal M}_\mathbf{Y}$ and $\mathbf{W}^\top \hat{\mathcal M}_\mathbf{Y} \mathbf{W}$ have identical eigenvalues, the eigenvalues of $\mathbf{W}^\top \hat{\mathcal M}_\mathbf{Y} \mathbf{W}$ also remain strictly separated almost surely. This separation is exactly what is needed to apply Davis--Kahan to compare $\hat{\bm u}_{\hat{\mathbf Y}}$ and $\mathbf W^\top \hat{\bm u}_{\mathbf Y}$ once we have an operator norm bound on $\hat{\mathcal M}_{\hat{\mathbf Y}}-\mathbf W^\top \hat{\mathcal M}_{\mathbf Y}\mathbf W$.

	Furthermore, by Lemma~\ref{lem:sn-bound},
	\[
	\sup_{t,s\in\mathcal{T}} \left\|\hat{\mathbf{M}}\left(\mathbf{Y}(t)\mathbf{W}, \mathbf{Y}(s)\mathbf{W}\right)\right\| _2 = \sup_{t,s\in\mathcal{T}} \left\|\hat{\mathbf{M}}\left(\mathbf{Y}(t), \mathbf{Y}(s)\right) \right\| _2  = \mathcal{O}(1) \quad \text{a.s.}
	\]
	Again by Corollary~\ref{cor:jones},
	\[
	\begin{aligned}
		\sup_{t,s\in\mathcal{T}}\left\|\hat{\mathbf{M}}\left(\hat{\mathbf{Y}}(t), \hat{\mathbf{Y}}(s)\right)-\hat{\mathbf{M}}\left(\mathbf{Y}(t)\mathbf{W}, \mathbf{Y}(s)\mathbf{W}\right)\right\|_2 = \mathcal{O}\left(n^{-1/2}\log n\right) \quad \quad \text{a.s.}
	\end{aligned}
	\]
	Therefore,
	\[
	\begin{aligned}
		&\sup_{t,s\in\mathcal{T}} \left\|\hat{\mathbf{M}}\left(\hat{\mathbf{Y}}(t), \hat{\mathbf{Y}}(s)\right) \right\| _2  = \mathcal{O}(1) \quad \text{a.s} \\
		&\|\mathcal{M}_{\hat{\mathbf{Y}}}-\mathbf{W}^\top \hat{\mathcal M}_\mathbf{Y} \mathbf{W}\|_2=\mathcal{O}\left(n^{-1/2}\log n\right) \quad \quad \text{a.s.}
	\end{aligned}
	\]
	
	Combining these bounds with the Davis--Kahan theorem \citep{yu2015useful}, as in the proof of Proposition~\ref{prop:phi-Y-kV}, the embeddings $\hat{\mathbf{Y}}(t)$ satisfy Assumption~\ref{assump:Y-hatY} with $g(n) = n^{-1/2}\log n$.
\end{proof}

\subsection{Change point localization}
We control the localization error of the estimated change point by comparing the estimated and population objective functions. Under Assumption~\ref{assump:Psi-error}, which ensures a linear separation of the population objective around the true change point, together with uniqueness, the localization error of the estimated change point can be controlled. The proof of Theorem~\ref{thm:cp-localization} follows a standard comparison argument: first relate $\hat Q(k)$ to $Q(k)$ through the $\ell^2$ error between $\hat\psi$ and $\psi$; then use that $\hat t$ minimizes $\hat Q$ to compare $Q(\hat t)$ and $Q(t^*)$; finally invoke Assumption~\ref{assump:Psi-error} to convert the result into a bound on $|\hat t-t^*|$.

\begin{proof}[Proof of Theorem~\ref{thm:cp-localization}]
	Fix $k \in \mathcal{K}$ and define
	\[
	\begin{aligned}
		\theta_k^*\in \arg \min_{\theta} \sum_{t=1}^T\left(\psi(t)-\Psi(t; k,\theta)\right)^2 \\
		\hat \theta_k\in \arg \min_{\theta}\sum_{t=1}^T\left(\hat \psi(t)-\Psi(t; k,\theta)\right)^2
	\end{aligned}
	\]
	By the triangle inequality,
	\[
	\begin{aligned}
		&\hat Q(k)^{1/2} -Q(k)^{1/2} \\
		&\le \left(\sum_{t=1}^T\left(\hat \psi(t)-\Psi(t; k,\theta_{k}^*)\right)^2\right)^{1/2} - \left(\sum_{t=1}^T\left( \psi(t)-\Psi(t; k,\theta_{k}^*)\right)^2\right)^{1/2}\\
		&\le \left(\sum_{t=1}^T\left(\hat \psi(t)-\psi(t)\right)^2\right)^{1/2}.
	\end{aligned}
	\]
	By symmetry,
	\[
	\begin{aligned}
		&Q(k)^{1/2} -\hat Q(k)^{1/2} \\
		&\le \left(\sum_{t=1}^T\left( \psi(t)-\Psi(t; k,\hat \theta_{k})\right)^2\right)^{1/2} - \left(\sum_{t=1}^T\left( \hat\psi(t)-\Psi(t; k,\hat \theta_{k})\right)^2\right)^{1/2}\\
		&\le \left(\sum_{t=1}^T\left(\psi(t)-\hat \psi(t)\right)^2\right)^{1/2}.
	\end{aligned}
	\]
	Hence, for all $k \in \mathcal{K}$, 
	\[
	\left\lvert Q(k)^{1/2} -\hat Q(k)^{1/2} \right\rvert \le \left(\sum_{t=1}^T\left(\hat \psi(t)-\psi(t)\right)^2\right)^{1/2}.
	\]
	This shows that the square root objectives $Q(k)^{1/2}$ and $\hat Q(k)^{1/2}$ are uniformly close, with discrepancy controlled by the global $\ell^2$ error in the trajectory coordinates.

	Since \(\hat t\) minimizes \(\hat Q\), we have
	\[
	\hat Q (\hat t) \le \hat Q (t^*).
	\]
	Combining the above inequalities yields
	\[
	\begin{aligned}
		Q (\hat t)^{1/2} - Q (t^*) ^{1/2}
		&\le \left(\hat Q (\hat t)^{1/2}+ \left(\sum_{t=1}^T\left(\hat \psi(t)-\psi(t)\right)^2\right)^{1/2} \right)  \\
		&\qquad- \left(\hat Q (t^*)^{1/2}- \left(\sum_{t=1}^T\left(\hat \psi(t)-\psi(t)\right)^2\right)^{1/2} 
		\right) \\
		&\le 2\left(\sum_{t=1}^T\left(\hat \psi(t)-\psi(t)\right)^2\right)^{1/2} .
	\end{aligned}
	\]
	
	\noindent Thus, the suboptimality of $\hat t$ under the population criterion is controlled by the same $\ell^2$ error term.

	Next, by the triangle inequality and Assumption~\ref{assump:Psi-error}, for all $k \in \mathcal{K}$,
	\[
	\begin{aligned}
		&Q(k)^{1/2} \\
		&= \min_{\theta} \left(\sum_{t=1}^T\left(\psi(t)-\Psi(t; k,\theta)\right)^2\right)^{1/2} \\
		&\ge \min_{\theta} \left[
		\left(\sum_{t=1}^T\left(\Psi(t; t^*,\theta^*)-\Psi(t; k,\theta)\right)^2\right)^{1/2}
		- \left(\sum_{t=1}^T\left(\psi(t)-\Psi(t; t^*,\theta^*)\right)^2\right)^{1/2}\right] \\
		&\ge \left(\alpha D(\theta^*)\left\lvert k - t^* \right\rvert\right)^{1/2}
		- Q(t^*)^{1/2}
	\end{aligned}
	\]
	
	The last step uses Assumption~\ref{assump:Psi-error} to lower bound the best achievable discrepancy between the true template $\Psi(\cdot;t^*,\theta^*)$ and any misspecified template $\Psi(\cdot;k,\theta)$ in terms of $|k-t^*|$.

	Applying this bound at \(k=\hat t\) and rearranging gives
	\[
	\left(\alpha D(\theta^*)\left\lvert \hat t - t^* \right\rvert\right)^{1/2}
	- 2Q(t^*)^{1/2} \le 2\left(\sum_{t=1}^T\left(\hat \psi(t)-\psi(t)\right)^2\right)^{1/2},
	\]
	
	\noindent which implies
	
	\[
	\left\lvert \hat t - t^* \right\rvert
	\le  \frac{4}{\alpha D(\theta^*)}\left\{ \left(\sum_{t=1}^{T}\left(\hat\psi(t)-\psi(t)\right)^2\right)^{1/2} + Q(t^*)^{1/2}\right\}^2.
	\]
\end{proof}

We now verify that Assumption~\ref{assump:Psi-error} is satisfied for both $0$th and $1$st order change points. The proofs of Propositions~\ref{prop:cp-separation-0} and~\ref{prop:cp-separation-1} proceed by fixing a candidate split point $k$ and explicitly minimizing the squared error between the true signal $\Psi(\cdot;t^*,\theta^*)$ and the best approximating template with split at $k$. In each case, the minimization can be carried out in closed form, and the resulting minimum is bounded below by a constant multiple of $D(\theta^*)|k-t^*|$; the constant is uniform over $k$ because the relevant index sets are finite and the derived expressions are strictly positive.

\begin{proof}[Proof of Proposition~\ref{prop:cp-separation-0}]
	Fix \(k\in\mathcal{K}^{(0)}\) and consider separately the cases $k=t^*$, \(k<t^*\) and \(k>t^*\).
	
	For $k=t^*$, the inequality holds for any constant $\alpha>0$.
	
	For $k <t^*$,
	\[
	\begin{aligned}
		&\min_{\theta}
		\sum_{t=1}^T
		\left(
		\Psi^{(0)}(t;t^*,\theta^*)
		-
		\Psi^{(0)}(t;k,\theta)
		\right)^2  \\
		&= \min_{a_L}\sum_{t=1}^{k-1}\left(a_L^*-a_L\right)^2 + 
		\min_{a_R}\sum_{t=k}^{T}\left(\Psi^{(0)}(t;t^*,\theta^*)-a_R\right)^2 \\
		&= \min_{a_R} \left[\sum_{t=k}^{t^*-1}\left(a_L^*-a_R\right)^2
		+ \sum_{t=t^*}^{T}\left(a_R^*-a_R\right)^2 \right] \\
		&= \frac{(t^*-k)(T-t^*+1)}{T-k+1} \lvert a_L^*-a_R^*\rvert^2 \\
		&\ge  \frac{T-t^*+1}{T-1} D(\theta^*) \lvert k-t^*\rvert.
	\end{aligned}
	\]
	
	For $k>t^*$,
	\[
	\begin{aligned}
		&\min_{\theta}
		\sum_{t=1}^T
		\left(
		\Psi^{(0)}(t;t^*,\theta^*)
		-
		\Psi^{(0)}(t;k,\theta)
		\right)^2  \\
		&= \min_{a_L}\sum_{t=1}^{k-1}\left(\Psi^{(0)}(t;t^*,\theta^*)-a_L\right)^2 + 
		\min_{a_R}\sum_{t=k}^{T}\left(a_R^*-a_R\right)^2 \\
		&= \min_{a_L} \left[\sum_{t=1}^{t^*-1}\left(a_L^*-a_L\right)^2
		+ \sum_{t=t^*}^{k-1}\left(a_R^*-a_L\right)^2 \right] \\
		&= \frac{(t^*-1)(k-t^*)}{k-1} \lvert a_L^*-a_R^*\rvert^2 \\
		&\ge  \frac{t^*-1}{T-1} D(\theta^*) \lvert k-t^*\rvert.
	\end{aligned}
	\]
	Therefore, Assumption~\ref{assump:Psi-error} holds with 
	\[
	\alpha:=\frac{\min(t^*-1, T-t^*+1)}{T-1}>0.
	\]
\end{proof}

\begin{proof}[Proof of Proposition~\ref{prop:cp-separation-1}]
	Fix \(k\in\mathcal{K}^{(1)}\) and consider separately the cases $k=t^*$, \(k<t^*\) and \(k>t^*\).
	
	For $k=t^*$, the inequality holds for any constant $\alpha>0$.
	
	For $k <t^*$,
	\[
	\begin{aligned}
		&\min_{\theta}
		\sum_{t=1}^T
		\left(
		\Psi^{(1)}(t;t^*,\theta^*)
		-
		\Psi^{(1)}(t;k,\theta)
		\right)^2  \\
		&\ge  \min_{a,b_L,b_R}\sum_{t=1}^{k-1}\left(a^*+b_L^*t-a-b_Lt\right)^2 + 
		\min_{a,b_L,b_R}\sum_{t=k}^{T}\left(\Psi^{(1)}(t;t^*,\theta^*)-a-b_Lk-b_R(t-k)\right)^2 \\
		&= \min_{a,b} \left[\sum_{t=k}^{t^*}\left(a^*+b_L^*t-a-bt\right)^2
		+ \sum_{t=t^*+1}^{T}\left(a^*+b_L^*t^*+b_R^*(t-t^*)-a-bt\right)^2 \right] \\
		&= \min_{a,b} \left[\sum_{t=k}^{t^*}\left((b_L^*-b_R^*)t-a-bt\right)^2
		+ \sum_{t=t^*+1}^{T}\left((b_L^*-b_R^*)t^*-a-bt\right)^2 \right]
	\end{aligned}
	\]
	Solving the optimization problem yields the lower bound
	\[
	\frac{x_1y_1(y_1+1)(2x_1y_1+x_1-y_1+1)}{6z_1(z_1^2-1)}D(\theta^*)\lvert t^*-k\rvert,
	\]
	where 
	\[
	\begin{aligned}
		&x_1 = t^*-k+1 \ge 2, \\
		&y_1 = T-t^*  \ge 1, \\
		&z_1 = T-k+1 \ge 3.
	\end{aligned}
	\]
	For fixed \(t^*\in \{3,\dots,T-1\}\), define
	\[
	\alpha_1 := \min_{2\le k \le t^*}\frac{x_1y_1(y_1+1)(2x_1y_1+x_1-y_1+1)}{6z_1(z_1^2-1)}
	\]
	Since the index set is finite and the expression is strictly positive, we have $\alpha_1>0$.
	
	For $k >t^*$,
	\[
	\begin{aligned}
		&\min_{\theta}
		\sum_{t=1}^T
		\left(
		\Psi^{(1)}(t;t^*,\theta^*)
		-
		\Psi^{(1)}(t;k,\theta)
		\right)^2  \\
		&\ge  \min_{a,b_L,b_R}\sum_{t=1}^{k}\left(\Psi^{(1)}(t;t^*,\theta^*)-a-b_Lt\right)^2 \\
		&\qquad + \min_{a,b_L,b_R}\sum_{t=k+1}^{T}\left(a^*+b_L^*t^*+b_R^*(t-t^*)-a-b_Lk-b_R(t-k)\right)^2 \\
		&= \min_{a,b} \left[\sum_{t=1}^{t^*}\left(a^*+b_L^*t-a-bt\right)^2
		+ \sum_{t=t^*+1}^{k}\left(a^*+b_L^*t^*+b_R^*(t-t^*)-a-bt\right)^2 \right] \\
		&= \min_{a,b} \left[\sum_{t=1}^{t^*}\left((b_L^*-b_R^*)t-a-bt\right)^2
		+ \sum_{t=t^*+1}^{k}\left((b_L^*-b_R^*)t^*-a-bt\right)^2 \right]
	\end{aligned}
	\]
	Solving the optimization problem yields the lower bound
	\[
	\frac{x_2y_2(y_2+1)(2x_2y_2+x_2-y_2+1)}{6z_2(z_2^2-1)}D(\theta^*)\lvert t^*-k\rvert,
	\]
	where 
	\[
	\begin{aligned}
		&x_2 = t^* \ge 2, \\
		&y_2 = k-t^*  \ge 1, \\
		&z_2 = k \ge 3.
	\end{aligned}
	\]
	For fixed \(t^*\in \{2,\dots,T-2\}\), define
	\[
	\alpha_2 := \min_{t^*+1\le k \le T-1}\frac{x_2y_2(y_2+1)(2x_2y_2+x_2-y_2+1)}{6z_2(z_2^2-1)}
	\]
	Since the index set is finite and the expression is strictly positive, we have $\alpha_2>0$.
	
	Therefore, Assumption~\ref{assump:Psi-error} holds with 
	\[
	\alpha:= \begin{cases}
		\alpha_2, & t^* =2,\\
		\min\{\alpha_1, \alpha_2\}, & 2 < t^* < T-1,\\
		\alpha_1, & t^* =T-1.
	\end{cases}
	\]
\end{proof}

\end{document}